\theoremstyle{plain}
\newtheorem{axiom}{Axiom}
\newtheorem{claim}[axiom]{Claim}
\newtheorem{theorem}{Theorem}[section]
\newtheorem{lemma}[theorem]{Lemma}
\newtheorem{corollary}[theorem]{Corollary}
\newtheorem{proposition}[theorem]{Proposition}
\theoremstyle{remark}
\newtheorem{definition}[theorem]{Definition}
\newtheorem{assumption}{Assumption}
\newtheorem{remark}[theorem]{Remark}
\newcommand*{\bm}{\boldsymbol}
\newcommand*{\dx}{\mathrm{d}}
\newcommand*{\NTK}{\mathtt{NT}}
\newcommand*{\Expc}{\mathbf{E}}
\newcommand*{\inner}{\mathtt{in}}
\newcommand{\bbE}{\mathbb{E}}
\newcommand{\bbR}{\mathbb{R}}
\newcommand{\bbS}{\mathbb{S}}
\newcommand{\calB}{\mathcal{B}}
\newcommand{\calD}{\mathcal{D}}
\newcommand{\calE}{\mathcal{E}}
\newcommand{\calF}{\mathcal{F}}
\newcommand{\calH}{\mathcal{H}}
\newcommand{\calL}{\mathcal{L}}
\newcommand{\calX}{\mathcal{X}}
\newcommand{\calZ}{\mathcal{Z}}
\title{Optimal Rate of Kernel Regression\\
in Large Dimensions}
\author{
  Weihao Lu, Haobo Zhang\thanks{Co-first author}, Yicheng Li, Manyun Xu \\
  Center for Statistical Science, Department of Industrial Engineering, Tsinghua University\\
  100084, Beijing, China\\
  \texttt{\{luwh19, zhang-hb21, liyc22, xumy20\}@mails.tsinghua.edu.cn} \\
  \And
  Qian Lin\thanks{Corresponding author} \\
  Center for Statistical Science, Department of Industrial Engineering, Tsinghua University\\
  100084, Beijing, China\\
  \texttt{qianlin@tsinghua.edu.cn}
}
\begin{document}
\maketitle
\begin{abstract}
We perform a study on kernel regression for large-dimensional data (where the sample size $n$ is polynomially depending on the dimension $d$ of the samples, i.e., $n\asymp d^{\gamma}$ for some $\gamma >0$ ).
 We first build a general tool to characterize the upper bound and the minimax lower bound of kernel regression for large dimensional data through the Mendelson complexity $\varepsilon_{n}^{2}$ and the metric entropy $\bar{\varepsilon}_{n}^{2}$  respectively. 
When the target function falls into the RKHS associated with a (general) inner product model defined on $\bbS^{d}$, we utilize the new tool to show that the minimax rate of the excess risk of kernel regression is $n^{-1/2}$  when $n\asymp d^{\gamma}$ for $\gamma =2, 4, 6, 8, \cdots$. 
We then further determine the optimal rate of the excess risk of kernel regression for all the $\gamma>0$ and find that the curve of optimal rate varying along $\gamma$ exhibits several new phenomena including the {\it multiple descent behavior} and the {\it periodic plateau behavior}.
As an application, for the neural tangent kernel (NTK), we also provide a similar explicit description of the curve of optimal rate. 
As a direct corollary, we know these claims hold for wide neural networks as well.
\end{abstract}

\keywords{kernel regression \and neural network \and high-dimensional statistics \and minimax rates}

\section{Introduction}\label{sec:intro}

Suppose we have observed $n$ i.i.d. samples $(X_{i},Y_{i})$ from a joint distribution $(X,Y)$ supported on $\mathbb{R}^{d+1}\times \mathbb{R}$. The regression problem, one of the most fundamental problems in statistics,  aims to find a function $\hat{f}_{n}$ based on these samples such that the {\it excess risk},
\begin{align*}
\left\|\hat{f}_{n} - f_{\star}\right\|_{L^2}^2
=\mathbb{E}_{X}\left[\left(f_{\star}(X)-\hat{f}_{n}(X)\right)^{2}\right],
\end{align*}
is small, where $f_{\star}(x)=\mathbb{E}[Y\vert x]$ is the {\it regression function}.
Many non-parametric regression methods are proposed to solve the regression problem, such as polynomial splines \cite{Stone_Polynomial_1994}, local polynomials \cite{cleveland1979robust, stone1977consistent}, the kernel methods \cite{Caponnetto2006OptimalRF, caponnetto2007optimal, caponnetto2010cross}, etc.
When the dimension $d$ of data is small, these methods produce reasonable results;
however, when $d$ is relatively large, the convergence rate of the excess risk
can be extremely slow. 
What's worse, though some additional assumptions such as low intrinsic dimensionality (that data falls into a subspace with dimension far smaller than $d$) and sparsity of features can improve the theoretical performance of certain non-parametric regression problems \cite{Laurent_Estimating_2015, Fukunaga_Algorithm_1971}, few successful real-world examples/applications have been reported. On the other hand, neural network methods have gained tremendous successes in many large-dimensional problems, such as computer vision \cite{he2016deep, krizhevsky2017imagenet} and natural language processing \cite{Devlin_BERT_2019}.
For example, the ILSVRC competition \cite{ILSVRC15} has a dataset of 1.2 million samples with a dimensionality of approximately 200K, while
 the pre-train dataset of the well-known language representation model, Bidirectional Encoder Representations from Transformers (BERT) \cite{Devlin_BERT_2019}, consists of 13 million samples with a dimensionality of approximately 400K.

Several groups of researchers tried to explain the superior performance of neural networks on large dimensional data
from various aspects. 
However, the highly non-linear dynamic of the differential equation associated with the gradient descent/flow of training the neural network\cite{goodfellow2016deep, lecun2015deep, pascanu2013difficulty} makes the analysis on the dynamic of training the neural network notoriously hard.
When the width of a neural network is sufficiently large, the training process falls into the `lazy regime', i.e., its parameters/weights stay in a small neighborhood of their initial position during the training process \cite{arora2019fine, Du_gradient_2019_b, Du_gradient_2019_a, li2018learning}.
Since \cite{Jacot_NTK_2018} observed that the time-varying neural network kernel (NNK) converges to a time-invariant neural tangent kernel (NTK) point-wisely as the width $m$ of the neural network $\rightarrow\infty$, it has been widely believed that the generalization ability of early-stopping kernel regression with NTK could be served as a proper surrogate of the generalization ability of neural networks in the `lazy regime' \cite{Arora_on_2019, Hu_Regularization_2021, Namjoon_Non_2022}.
Recently, a sequence of works \cite{jianfa2022generalization, li2023statistical} further showed that the NNK uniformly converges to the NTK as the width $m\rightarrow\infty$ which rigorously justified this belief. 
Thus, understanding the generalization ability of the kernel regression (with respect to NTK) in large dimensions will help us understand the superior performance of (wide) neural networks.

Kernel regression (or regression over an RKHS), as a classical topic,  has been studied since the 1990s. 
Most work 
imposes the polynomial eigenvalue decay assumption over a kernel $K$ (i.e.,  there exist constants $0<\mathfrak{c} \leq \mathfrak{C}<\infty$, such that the eigenvalues of the kernel satisfy $\mathfrak{c} j^{-\beta} \leq \lambda_j \leq \mathfrak{C} j^{-\beta}$ for some constant $\beta>1$) and assume that the target function $f_{\star}$ belongs to the RKHS associated with $K$ \cite{Caponnetto2006OptimalRF, caponnetto2007optimal, Lin_Optimal_2020, raskutti2014early}. They then showed that the minimax rate of the excess risk of regression over the corresponding RKHS is lower bounded by $n^{-{\beta}/({\beta+1})}$ and that some kernel methods (e.g., the kernel ridge regression and the early-stopping kernel regression) can produce estimators achieving this optimal rate. 
Thus, verifying that if an NTK satisfies the polynomial eigenvalue decay assumption and determining the 
eigenvalue decay rate 
of it becomes a natural strategy to discuss the generalization ability of the NTK  ( or equivalently, the wide neural networks ) regression.
When the NTK is defined on sphere $\mathbb S^{d}$, it is an inner product kernel.
Hence, the eigenvalues of NTK can be obtained through a detailed calculation with the help of the spherical harmonic polynomials.
It is shown in \cite{Bietti_deep_2021, Bietti_on_2019} that when $d$ is fixed, the eigenvalues of the NTK defined on $\mathbb S^{d}$ polynomially decayed at rate $({d+1})/{d}$. 
When the domain is other than a sphere, \cite{jianfa2022generalization, li2023statistical} further illustrated that the eigenvalues decay rate of NTK on any bounded open set in $\bbR^{d}$ is still $({d+1})/{d}$.
Some works then claimed that the optimal tuned neural network on $\bbS^{d}$ or on any bounded open set in $\bbR^{d}$ can achieve the optimal rate $n^{-({d+1})/({2d+1})}$ \cite{Hu_Regularization_2021, jianfa2022generalization, 
li2023statistical}.

When dimension $d$ is large, much less is known about the convergence rate of the excess risk of kernel methods. 
There are several works devoted to the high-dimensional setting where $n \asymp d$. For example, 
motivated by the linear approximation of kernel matrices in high dimensional data proposed by \cite{Karoui_spectrum_2010},  \cite{Liang_Just_2019} provided an upper bound on the excess risk of kernel interpolation and claimed that kernel interpolation generalizes well in high dimensions. 
Similar results for kernel ridge regression are proven in \cite{Liu_kernel_2021}. 
These results are widely interpreted as evidence of the benign overfitting phenomenon (e.g., \cite{beaglehole2022kernel, Bietti_on_2019, mallinar2022benign, sanyal2020benign}): overfitted  models can still generalize well.
Building on the work of \cite{Liang_Just_2019}, the benign overfitting phenomenon has been
extensively investigated in the literature, and we referred to \cite{bartlett2020benign, cao2022benign, 10.1214/21-AOS2133, 9051968, tsigler2020benign} for details.
There is another line of research considering the large dimensional setting where  $n \asymp d^{\gamma}$ for some ${\gamma}>0$. 
For example, \cite{ghorbani2021linearized} studied the square-integrable function space on the sphere $\bbS^{d}$ and proved that 
when ${\gamma}$ is a non-integer, 
kernel ridge regression is consistent if and only if the regression function is a polynomial with a fixed degree $\leq \gamma$.
Inspired by the techniques presented in \cite{ghorbani2021linearized}, 
several follow-up works extended the results to different settings \cite{aerni2023strong, Ghorbani_When_2021, Ghosh_three_2021,  mei2021learning, mei2022generalization, misiakiewicz_learning_2021}.
Additionally, 
\cite{Donhauser_how_2021} 
established an upper bound for kernel methods with specific kernels when ${\gamma}$ is an integer. 
{
Surprisingly, a recent work (\cite{canatar2021spectral}) numerically reported a `periodic plateau behavior' in Figure 5 (b) of their paper: when $\gamma$ varies within certain specific ranges, the excess risk of kernel regression decays very slowly.
All these inspirational works hint that determining the convergence rate of kernel regression in large dimensions is a hard but fruitful question, and we are probably to find many new phenomena if we can determine its convergence rate.
}

In this paper, we consider the generalization ability of kernel regression, especially kernel regression with inner product kernel defined on sphere $\bbS^{d}$, with respect to large-dimensional data where $n\asymp d^{\gamma}$. More precisely, assuming the target function $f_{\star}\in \calH^{\inner}$, the RKHS associated with an inner product kernel defined on $\bbS^{d}$, we will provide a sharp convergence rate of the excess risk of kernel regression with respect to data of large dimension. 
We will further show that this rate is actually (nearly) minimax optimal for any ${\gamma} > 0$.


\subsection{{Related works}}

The generalization ability of high dimensional kernel regression attracts increasing attentions recently. When $n\asymp d$, \cite{Karoui_spectrum_2010} discovered a linear approximation of the empirical kernel matrix,
$$
K(\bm{X}, \bm{X}) \approx \alpha_1 \bm{X}\bm{X}^\tau + \alpha_2 \mathbf{1}_n \mathbf{1}_n^\tau + \alpha_3\mathbf{I}_n,
$$
where the coefficients $\alpha_1$, $\alpha_2$, and $\alpha_3$ depend on the dimension $d$ and the inner-product kernel $K$. 
Inspired by this approximation, \cite{Liang_Just_2019} subsequently provided an upper bound $\mathbf{V}$ on the excess risk of kernel interpolation when $n\approx d$.
They further demonstrated that $\mathbf{V} \to 0$ when the data exhibits a low-dimensional structure. 
Under the same setting, \cite{Liu_kernel_2021} extends the upper bound of the excess risk to the kernel ridge regression with other choice of the regularization parameters. 
Furthermore, \cite{sahraee2022kernel} demonstrated that the fitting function of kernel ridge regression converges point-wisely to the one of a linear model with two penalized terms when $n \asymp d$.

In the large dimensional setting where $n \asymp d^{\gamma}$ for some non-integer ${\gamma}>0$, \cite{ghorbani2021linearized} 
develop the higher-order approximation for the empirical kernel matrix in the following forms:
\begin{equation}\label{eqn:approx_kernel_matrix}
\begin{aligned}
    K(\bm{X}, \bm{X}) &\approx
    ~\underbrace{ \sum_{k < r} \mu_k \bm{Y}_k(\bm{X})\bm{Y}_k(\bm{X})^\tau }_{\mathbf{I}} ~ +
    ~\underbrace{ \vphantom{\sum_{k = r}} \mu_r \bm{Y}_r(\bm{X})\bm{Y}_r(\bm{X})^\tau  }_{\mathbf{II}} ~ +
    ~\underbrace{ \sum_{k > r} \mu_k \bm{Y}_k(\bm{X})\bm{Y}_k(\bm{X})^\tau }_{\mathbf{III}},
\end{aligned} 
\end{equation}
where $r$ is an integer $\leq {\gamma}$, $\mu_k$'s are the eigenvalues of $K$, and $\bm{Y}_k(\bm{X})$ consists of spherical harmonic of degree $k$. 
They demonstrated that the term $\mathbf{III}$ in (\ref{eqn:approx_kernel_matrix}) can be approximated by an identity matrix.
By assuming that the regression function $f_{\star}$ is square-integrable on the sphere $\sqrt{d}\mathbb{S}^d$ with non-vanishing $L^2$ norm as $d \to \infty$, \cite{ghorbani2021linearized} then proved two results: (1) If $f_{\star}$ is a polynomial, then kernel ridge regression is consistent, and (2) If $f_{\star}$ is not a polynomial and if the model is noiseless, then all kernel methods are
inconsistent.
Several follow-up works have extended the results presented in \cite{ghorbani2021linearized}, and all of them adopted the square-integrable function space assumption. 
For example, \cite{Ghorbani_When_2021} consider the low-intrinsic-dimensional case; \cite{mei2022generalization} allows the degrees of the polynomials diverge with $d$; \cite{aerni2023strong, mei2021learning, misiakiewicz_learning_2021} analyze kernel ridge regression with invariance kernels and convolution kernels rather than inner-product kernels; 
\cite{Ghosh_three_2021} discuss the performance of early-stopping kernel regression;
while \cite{xiao2022precise} approximate the term $\mathbf{II}$ in (\ref{eqn:approx_kernel_matrix}) by $\bm{X}\bm{X}^\tau$
using the Marchenko–Pastur law when ${\gamma} \geq 1$ is an integer.
We discuss their assumptions regarding the function space and their results in Section \ref{subsection_4_2}.

In the work of \cite{liang2020multiple}, an upper bound on the convergence rate of the excess risk of kernel interpolation is provided when $n \asymp d^{\gamma}$, assuming ${\gamma}>1$ is fixed.
\cite{liang2020multiple} assume that the regression function can be expressed as $f_{\star}(x) = \langle K(x, \cdot), \rho_{\star}(\cdot) \rangle_{L^2}$, with $\|\rho_{\star}\|_{L^4}^4 \leq C$ for some constant $C>0$.
Then, they obtain the convergence rate $n^{-\beta({\gamma})}$, where $0 \leq \beta({\gamma}) := \min\left\{ \lceil {\gamma}\rceil / {\gamma} - 1, 1-  \lfloor {\gamma}\rfloor / {\gamma} \right\} \leq 1/ (2\lfloor {\gamma}\rfloor+1)$.
{
However, it remains uncertain whether other kernel methods with regularized terms, including early-stopping kernel regression, can achieve significantly better convergence rates than $n^{-\beta({\gamma})}$ in large dimensions.
As is recently reported in \cite{li2023kernel}, kernel interpolation generalizes much more poorly than early-stopping kernel regression in fixed dimensions. Therefore, it cannot be assumed that other kernel methods perform similarly to kernel interpolation in large dimensions.
Moreover, the results provided by \cite{liang2020multiple} are not sufficient to assert that kernel interpolation is optimal due to the absence of a corresponding minimax lower bound.
A detailed comparison of \cite{liang2020multiple} with our results and corresponding experiments are deferred to Section \ref{sec:5.2_comparison_liang_multiple}.
}

\subsection{Our contribution}

{Theories for kernel regression with polynomial eigenvalue decay rate have been well studied in the last several decades (e.g. \cite{caponnetto2007optimal, 
li2023kernel, 
raskutti2014early, 
steinwart2009optimal, 
zhang2023optimality_2,
zhang2023optimality})}.
When the dimension of data is large, because the eigenvalues of the kernel may depend on $d$ and the polynomial eigendecay property may not hold anymore, few results about the optimality of kernel regression for large dimensional data have been obtained. 
We list our contributions to the optimality of kernel regression on large dimensional data below.

{\it The upper and lower bound for the excess risk of the kernel regression for large dimensional data.}
Suppose that $K$ is a kernel defined on a $d$-dimensional space where $d$ is large.  Since the eigenvalues $\lambda_{j}$'s of $K$ may depend on $d$,  the existing arguments for the optimality of kernel regression are no longer applicable.
We first find that the {\it Mendelson complexity} $\varepsilon_n^2$ (defined in Definition \ref{def:pop_men_complexity}) and the metric entropy $\bar\varepsilon_n^2$ only depend on the eigenvalues of the kernel $K$. With the assumption  that $f_{\star}$ is in the unit ball of $\mathcal{H}$, where $\calH$ is 
the reproducing kernel Hilbert space associated with $K$, we further prove that the minimax rate of the excess risk is upper bounded by the Mendelson complexity $\varepsilon_{n}^{2}$ and lower bounded by the metric entropy $\bar{\varepsilon}_{n}^{2}$ (Theorem \ref{theorem:restate_norm_diff} and Theorem \ref{restate_lower_bound_m_complexity}).


As an application,  when $f_{\star}\in \mathcal{H}^{\inner}$, the reproducing kernel Hilbert space associated with  an inner product $K^{\inner}$ defined on $\bbS^{d}$, and the marginal distribution of $\bold{X}$ is uniformly distributed on the sphere $\mathbb S^{d}$, we can show that if $n\propto d^{\gamma}$, the following statements hold:
1. For any ${\gamma} > 0$, we prove that the  excess risk of properly early stopped gradient descent algorithm is upper bounded by $n^{-1/2}$;
2. If ${\gamma} = 2, 4, 6, \cdots$, we show that the minimax expected excess risk over $\mathcal{H}^{\inner}$ is lower bounded by $n^{-1/2}$ (Theorem \ref{thm:upper_bpund_inner} and Theorem \ref{thm:lower_inner_large_d}).

{\it Optimality of kernel regression for large dimensional data.} When $n \asymp d^{\gamma}$ for $\gamma \neq 2, 4, 6, \cdots$, 
the upper bound and lower bound provided by Mendelson complexity $\varepsilon_{n}^{2}$ and metric entropy $\bar{\varepsilon}_{n}^{2}$ are no-longer matching. We first resort to a new technical observation to derive a new upper bound of the excess risk which is tighter than the Mendeslson complexity. 
We then find that the richness condition proposed in \cite{Yang_Density_1999} does not longer hold, and propose a modification to derive a new minimax lower bound. Fortunately, all these efforts provide us the minimax rate of kernel regression in large dimension (i.e., $n\propto d^{\gamma}$) for all $\gamma>0$ (Theorem \ref{thm:near_lower_inner_large_d} and Theorem \ref{thm:near_upper_inner_large_d}).

{\it New phenomena in large-dimension kernel regression.}
The results obtained from Theorem \ref{thm:near_lower_inner_large_d} and Theorem \ref{thm:near_upper_inner_large_d} are visually illustrated in Figure \ref{fig:1}. This figure reveals two intriguing phenomena only observed in large-dimensional kernel regression.
$i)$ The first phenomenon is referred to as
the multiple descent behavior.
We plot the curve of the convergence rate ( with
respect to $n$ ) of the optimal excess risk of kernel regression. 
This curve achieves its peaks at $\gamma=2,4,6,\cdots$ and its isolated valleys at $\gamma=3,5,7,\cdots$.
$ii)$ We also report another noteworthy phenomenon, `periodic plateau behavior'.
We plot the curve of the convergence rate ( with
respect to $d$ ) of the optimal excess risk of kernel regression.
When $\gamma$ varies within certain specific ranges, we find that the value of this curve does not change.
This indicates that, in order to improve the rate of excess risk, one has to increase the sample size above a certain threshold. 
We believe that these interesting phenomena are worth further investigations.

\begin{figure*}[t!]
 \begin{center}
    \begin{subfigure}[t]{0.49\textwidth}
           \includegraphics[width=2.8in, keepaspectratio]{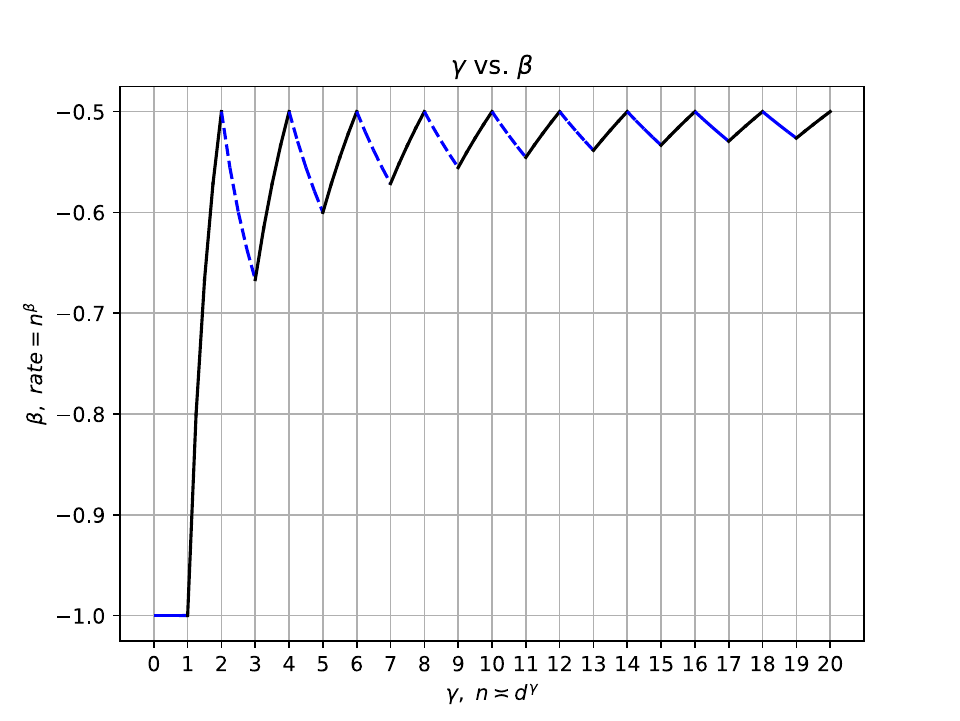}
\caption{Multiple descent behavior
}
    \end{subfigure}
    \begin{subfigure}[t]{0.49\textwidth}
           \includegraphics[width=2.8in, keepaspectratio]{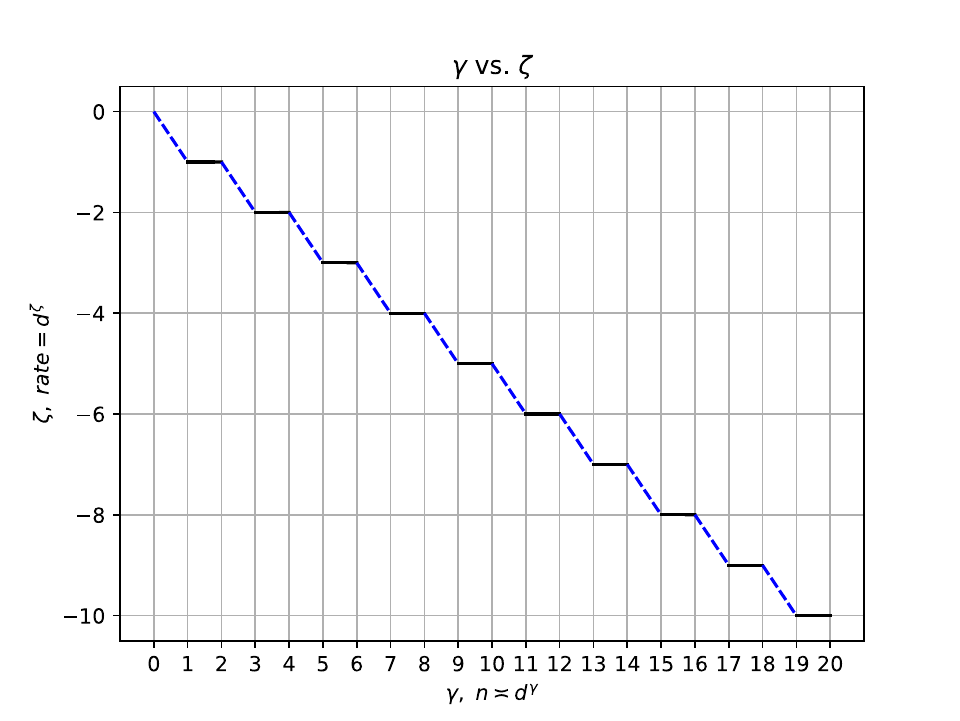}
\caption{Periodic plateau behavior
}
    \end{subfigure}
    \caption{A graphical representation of the minimax optimal rate of the excess risk of kernel regression with inner product kernels obtained from Theorem \ref{thm:near_lower_inner_large_d}, and Theorem \ref{thm:near_upper_inner_large_d}. 
    The solid black line represents the upper bound that matches the minimax lower bound up to a constant factor. The dashed blue line indicates that, for any $\epsilon > 0$, the ratio between the upper and lower bounds differs by at most $n^{-\epsilon}$.
    }
    \label{fig:1}
    \end{center}
\end{figure*}

\subsection{Notations}
oFor a real number $x\in\mathbb{R}$, denote by $\lceil x\rceil$ the smallest integer that is greater or equal to $x$ and by $\lfloor x\rfloor$ the greatest integer that is less or equal to $x$. 
For $\bm{v}\in\mathbb{R}^{d}$, denote by $\bm{v}_{(j)}$ the $j$-th component of $\bm{v}$ and denote the $\ell_{2}$ norm and supremum norm of $\bm{v}$ by $\|\bm{v}\|_{2}=(\sum_{j\in[d]}\bm{v}_{(j)}^{2})^{1/2}$ and $\|\bm{v}\|_{\infty}=\max_{j\in [d]}|\bm{v}_{(j)}|$ respectively. 
For a matrix $\bm{A} \in \mathbb{R}^{m\times n}$, denote by $a_{ij}$ the $(i,j)$-th component of $\bm{A}$ and denote the operator norm and the Frobenius norm of $\bm{A}$ by $\|\bm{A}\|_{\mathrm{op}}=\sup_{\bm{v}\in\mathbb{R}^{n}}\|\bm{A}\bm{v}\|_{2}/\|\bm{v}\|_{2}$ and $\|\bm{A}\|_{\mathrm{F}}=(\sum_{i\in[m],j\in[n]}a_{ij}^{2})^{1/2}$ respectively. 
Denote the $j$-th largest eigenvalues of the matrix $\bm{A}$ by $\lambda_j(\bm{A})$.
For a set $A$, denote by $|A|$ the number of elements $A$ contains. 
For a marginal distribution, $\rho_{\calX}$,  on $\mathcal{X}\subset \bbR^{d+1}$, 
we define the space $L^{2}(\mathcal{X}, \rho_{\calX}) = \{f:\mathcal{X} \to \mathbb{R} : \int_{\mathcal{X}} |f(\bm{x})|^2 \dx \rho_{\calX} <\infty \}$, and we denote $L^{2} = L^{2}(\mathcal{X}, \rho_{\calX})$ for simplicity. 

{
Throughout this paper, we will use the symbols $C, C_1, C_2, \dots$ to denote absolute constants, i.e., constants that have a fixed value and do not depend on any other parameters. 
Unless specified, the symbols $\mathfrak{C}, \mathfrak{C}_1, \mathfrak{C}_2, \cdots$ will denote constants that depend only on the variance $\sigma^2$ of the noise in (\ref{equation:true_model}), $\kappa$ defined in Assumption \ref{assu:trace_class}, and the constant in the asymptotic framework (\ref{Asym}), i.e.,  $c_1$, $c_2$, and $\gamma$.  
In different conclusions, we may use the same symbols, such as $C_1$, to represent different constants.
}

\section{Preliminaries}

Traditional technical tools for kernel regression are developed implicitly under the assumption that the dimension $d$ of the domain $\mathcal X$ is fixed or bounded. The recent successes of neural networks in high dimensional data urge us to investigate the convergence rate of the excess risk of the NTK regression for data with large $d$.

{
Suppose that we have observed $n$ i.i.d. samples $(X_{i}, Y_{i}), i \in [n]$ from the model:
\begin{equation}\label{equation:true_model}
    y=f_{\star}(\bm{x})+\epsilon,
\end{equation}
where $X_{i}$'s are sampled from $\rho_{\calX}$,  $\rho_{\calX}$ is the marginal distribution on $\mathcal{X}\subset \bbR^{d+1}$,
$f_{\star}$ is some function defined on a compact set $\mathcal{X}$, and $\epsilon \sim \mathcal{N}(0,\sigma^{2})$ for some fixed $\sigma>0$. Denote  the $n\times1$ data vector of ${Y_{i}}$'s  and the $n\times d$ data matrix of $X_{i}$'s by $\bm{y}$ and $\bm{X}$ respectively.
}

Let us make the following assumptions on the kernel $K$ and the candidate function class $\calB$ throughout this paper.

\begin{assumption}\label{assu:trace_class}
Suppose that $K$ is a continuous positive definite kernel function defined on $\calX \subset \bbR^{d}$ satisfying
$\max_{x \in \mathcal X} K(x, x) \leq \kappa$ for an absolute constant $\kappa>0$.
\end{assumption}

\begin{assumption}\label{assu:H_norm}
Let us assume that $f_\star$ is in the following family of candidate functions,
\begin{align}
    \calB=\left\{ f\in \calH~\bigg\vert ~\|f\|_{\mathcal H} \leq 1\right\},
\end{align}
where $\calH$ is the RKHS associated with the kernel $K$.
\end{assumption}

\begin{remark}
    The Assumption \ref{assu:trace_class} holds for a large class of kernels ( e.g. , the spherical NTK, Gaussian kernel,  Laplace kernel, etc.).
    The Assumption \ref{assu:H_norm} is merely a compact condition that is quite common and necessary regardless of the dimension $d$.
    Both of these two assumptions are commonly assumed in the literature on kernel methods \cite{Caponnetto2006OptimalRF, caponnetto2007optimal, raskutti2014early} when the dimension $d$ of the domain is fixed or bounded. 
\end{remark}


Given a positive definite kernel function $K$ and a positive measure $\rho_{\calX}$ on $\calX$, the integral operator $T_{K}$ defined by
\begin{align*}
    T_{K}(f)(x)=\int K(x, y) f(y) ~\mathsf{d} \rho_{\calX}(y)
\end{align*}
is a self-adjoint compact operator. 
The celebrated Mercer's decomposition theorem further assures  that 
\begin{align}\label{eqn:mercer_decomp}
    K(x,y)=\sum\nolimits_{j}\lambda_{j}\phi_{j}(x)\phi_{j}(y),
\end{align}
where $\{\lambda_{j},j=1,2,...\}$ and ortho-normal eigen-functions $\{\phi_{j}(x),j=1,2,...\}$ are the non-increasing ordered eigenvalues and corresponding eigen-functions of $T_{K}$. After a little bit of abuse of notations, we may call  $\{\lambda_{j},j=1,2,...\}$ and $\{\phi_{j}(x),j=1,2,...\}$ the eigenvalues and eigenvectors(or eigen-functions) of the kernel function $K$ as well.

Suppose that $f_{\star}\in \mathcal{H}$, a {\it reproducible kernel Hilbert space} (RKHS) \cite{cucker2002mathematical,
kohler2001nonparametric,  steinwart2008support} associated with a positive definite kernel function $K(\cdot, \cdot)$ defined on $\mathcal{X}$.
The gradient flow of the loss function 
$\mathcal{L}=\frac{1}{2n}\sum_{j}(y_{j}-f(X_{j}))^{2}$
induced a gradient flow in $\mathcal{H}$ which is given by 
\begin{equation}\label{ntk:f:flow}
    \begin{aligned}
    \frac{\mathsf{d}}{\mathsf{d} t}{f}_{t}(\bm{x})=-\frac{1}{n}K(\bm{x},\bm{X})(f_{t}(\bm{X})-\bm{y}),
    \end{aligned}
\end{equation}
where $\bm{X}=(X_1, \cdots, X_n)$, $\bm{y}=(Y_1, \cdots, Y_n)^\tau$. If we further assume that ${f}_{0}(\bm{x})=0$, then we have
\begin{equation}\label{solution:gradient:flow}
f_t(\bm{x})=K(\bm{x},\bm{X})K(\bm{X},\bm{X})^{-1}(\bm{I}_n-e^{-\frac{1}{n}K(\bm{X},\bm{X})t})\bm{y}.
\end{equation}
This $f_{t}(\bm{x})$ is referred to as the estimator given by kernel regression stopped at time $t$.

\section{Warm-ups: optimality of kernel regression with inner product kernels in large dimensions for $\gamma=2,4,6,\cdots$}\label{section_4_ntk_certain_asymp_frameworks}

In this section, as a warm-up, we will show that the optimal rate of kernel regression  with respect to the inner product kernel is $n^{-1/2}$ when $n \propto d^{\gamma}, \gamma=2,4,6,\cdots$.

We first specify the following large-dimensional scenario for kernel regression where we perform our analysis:

\begin{assumption}\label{assump_asymptotic}
    Suppose that there exist three positive constants $c_{1}$, $c_{2}$ and $\gamma$, such that
\begin{align}\label{Asym}
    c_{1}d^{\gamma}\leq n\leq c_{2}d^\gamma,
\end{align}
and we often assume that $d$ is sufficiently large.
\end{assumption}

In this paper, we only consider the inner product kernels defined on the sphere. 
An inner product kernel  is a kernel function $K$ defined on $\bbS^{d}$ 
such that there exists a function $\Phi:[-1,1] \to \mathbb{R}$
satisfying that for any $x, x^\prime \in \mathbb S^{d}$, we have $K(x, x^\prime) = \Phi(\left\langle x, x^\prime \right\rangle)$.
If we further assume that the marginal distribution $\rho_{\calX}$ is the uniform distribution on $\mathcal X=\bbS ^{d}$, then 
the Mercer's decomposition for  ${K}$ can be rewritten as
\begin{equation}\label{spherical_decomposition_of_inner_main}
\begin{aligned}
{K}(x,x^\prime) = \sum_{k=0}^{\infty} \mu_{k} \sum_{j=1}^{N(d, k)} Y_{k, j}(x) Y_{k, j}\left(x^\prime\right),
\end{aligned}
\end{equation}
where $Y_{k, j}$ for $j=1, \cdots, N(d, k)$ are spherical harmonic polynomials of degree $k$ and $\mu_{k}$'s are the eigenvalues of   $K$ with multiplicity 
$N(d,0)=1$; $N(d, k) = \frac{2k+d-1}{k} \cdot \frac{(k+d-2)!}{(d-1)!(k-1)!}, k =1,2,\cdots$. For more details of the inner product kernels, readers can refer to \cite{gallier2009notes}.

\begin{remark}\label{remark:sphere_data}
We consider the inner product kernels on the sphere mainly because the harmonic analysis is clear on the sphere ( e.g., properties of spherical harmonic polynomials are more concise than the orthogonal series on general domains). This makes Mercer's decomposition of the inner product more explicit rather than several abstract assumptions ( e.g., \cite{https://doi.org/10.1002/cpa.22008}).
    We also notice that very few results are available for Mercer's decomposition of a kernel defined on the general domain, especially when the dimension of the domain is taking into consideration. e.g., even the eigen-decay rate of the neural tangent kernels is only determined for the spheres. Restricted by this technical reason, most works analyzing the spectral algorithm in large-dimensional settings  
 focus on the inner product kernels on spheres \citep[etc.]{liang2020multiple, ghorbani2021linearized,  misiakiewicz_spectrum_2022, xiao2022precise}. 
 Though there might be several works that tried to relax the spherical assumption (e.g., \cite{liang2020multiple, aerni2023strong, barzilai2023generalization}, we can find that most of them (i) adopted a near-spherical assumption; (ii) adopted strong assumptions on the regression function, e.g., $f_{\star}(x) = x[1]x[2]\cdots x[L]$ for an integer $L>0$;
 or (iii) can not determine the convergence rate on the excess risk of the spectral algorithm.
\end{remark}

To avoid unnecessary notation, let us make the following assumption on the inner product kernel $K$.

\begin{assumption}\label{assu:coef_of_inner_prod_kernel} 
$\Phi(t) \in \mathcal{C}^{\infty} \left([-1,1]\right)$ is a fixed function independent of $d$ and there exists a sequence of absolute constants $\{a_j\}_{j \geq 0}$, such that we have
    \begin{displaymath}
        \Phi(t) = \sum_{j=0}^\infty a_j t^j, ~ a_{j} > 0, ~\text{for any}~ j = 0, 1, 2,\dots.
    \end{displaymath}
\end{assumption}

The purpose of Assumption \ref{assu:coef_of_inner_prod_kernel} is to keep the main results and proofs clean. 
Notice that, by Theorem 1.b in \cite{gneiting2013strictly}, the inner product kernel $K$ on the sphere is semi-positive definite for all dimensions if and only if all coefficients $\{a_{j},j=0,1,2,...\}$ are non-negative. 
One can easily extend our results in this paper when certain coefficients $a_k$'s are zero (e.g., one can consider the two-layer NTK defined as in Section \ref{sec:preliminary}, with $a_i=0$ for any $i=3,5,7, \cdots$).

With this assumption, we have the following lemma which is borrowed from  \cite{ghorbani2021linearized}.

\begin{lemma}\label{lemma:inner_edr}
    Suppose that Assumptions ~\ref{assu:trace_class}-\ref{assu:coef_of_inner_prod_kernel} hold.
    Suppose that $p \geq 0$ is any integer. There exist positive constants $\mathfrak{C}_1$, $\mathfrak{C}_2$, $\mathfrak{C}_3$, and $\mathfrak{C}_4$, such that for any $d \geq \mathfrak{C}$, we have
\begin{equation}
\begin{aligned}
{\mathfrak{C}_1}{d^{-k}} &\leq \mu_{k} \leq {\mathfrak{C}_2}{d^{-k}}, \quad k  = 0, 1, 2, \cdots, p, p+1\\
{\mathfrak{C}_3}{d^{k}} &\leq N(d, k) \leq {\mathfrak{C}_4}{d^{k}}, \quad k  = 0, 1, 2, \cdots, p, p+1.
\end{aligned}
\end{equation}
\end{lemma}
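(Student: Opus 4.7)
The plan is to prove the two conclusions separately: the multiplicities $N(d,k)$ are handled by direct asymptotic analysis of the explicit formula, while the eigenvalue bounds $\mu_k \asymp d^{-k}$ require Funk-Hecke together with the Taylor expansion of $\Phi$. Since $p$ is fixed, I only need to establish the asymptotics for the finite set $k \in \{0,1,\ldots,p+1\}$ and all implicit constants may depend on $p$.

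First I would dispatch the $N(d,k)$ part. For $k=0$ we have $N(d,0)=1$ trivially. For $k\geq 1$, rewriting
\[
N(d,k) \;=\; \frac{2k+d-1}{k}\cdot\frac{(k+d-2)(k+d-3)\cdots d}{(k-1)!}
\]
exhibits $N(d,k)$ as a polynomial in $d$ of exact degree $k$ with positive leading coefficient $2/k!$. For each fixed $k\leq p+1$ we therefore obtain constants $c_k,C_k>0$ (depending only on $p$) and a threshold $d_k$ such that $c_k d^k\leq N(d,k)\leq C_k d^k$ for $d\geq d_k$. Taking the minimum/maximum over $k\in\{0,\ldots,p+1\}$ and the largest threshold yields the uniform constants $\mathfrak{C}_3,\mathfrak{C}_4$ and the threshold $\mathfrak{C}$ for $d$.

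For the eigenvalue bounds I would invoke the Funk-Hecke formula for zonal kernels on $\mathbb{S}^d$: with an appropriate normalization constant $\omega_d$ and Gegenbauer polynomials $G_k^{(d)}$,
\[
\mu_k \;=\; \omega_d\!\int_{-1}^{1}\Phi(t)\,\frac{G_k^{(d)}(t)}{G_k^{(d)}(1)}(1-t^2)^{(d-2)/2}\,\dx t.
\]
Plugging in the Taylor series $\Phi(t)=\sum_{j\geq 0}a_j t^j$ (which converges absolutely and uniformly on $[-1,1]$ since $\Phi\in\mathcal{C}^{\infty}([-1,1])$ is fixed independent of $d$) gives $\mu_k=\sum_{j\geq 0}a_j\, M_{j,k}(d)$, where $M_{j,k}(d)$ is the corresponding moment integral. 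By orthogonality of Gegenbauer polynomials, $M_{j,k}(d)=0$ whenever $j<k$ or $j-k$ is odd. A standard computation (e.g.\ expressing $t^j$ in the Gegenbauer basis and using $\int G_k G_\ell w_d=0$ for $k\neq\ell$, together with the known formula $\int t^{2r}(1-t^2)^{(d-2)/2}\dx t=B(r+1/2,d/2)$) yields the sharp asymptotic
\[
M_{j,k}(d) \;=\; \theta_{j,k}\,d^{-k-(j-k)/2}\bigl(1+O(1/d)\bigr),\qquad j\geq k,\ j-k\text{ even},
\]
for strictly positive constants $\theta_{j,k}$ depending only on $j,k$; in particular $M_{k,k}(d)\asymp d^{-k}$.

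Combining these ingredients finishes the proof. For the upper bound, split the sum at some large cutoff $J$: the contribution of $j\leq J$ is bounded by $d^{-k}\sum_{j\leq J}|a_j|\,\theta_{j,k}$, and the tail is controlled using absolute convergence of the Taylor series together with the crude estimate $|M_{j,k}(d)|\leq \omega_d \int (1-t^2)^{(d-2)/2}\dx t=O(1)$ uniformly in $j$. For the lower bound, Assumption~\ref{assu:coef_of_inner_prod_kernel} guarantees $a_j>0$, so every term $a_j M_{j,k}(d)$ has the same sign and we can drop all but the leading one:
\[
\mu_k \;\geq\; a_k\,M_{k,k}(d)\;\geq\;\mathfrak{C}_1 d^{-k}.
\]
The main technical obstacle is the sharp moment estimate $M_{j,k}(d)\asymp d^{-k-(j-k)/2}$ with positive constants, but this is a classical computation once one expands $t^j$ in the Gegenbauer basis and tracks the normalizing Beta factors. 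Taking the worst constants over $k\in\{0,\ldots,p+1\}$ and increasing $\mathfrak{C}$ if necessary gives the uniform statement claimed in the lemma.
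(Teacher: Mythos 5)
Your route is genuinely different from the paper's: the paper simply imports the eigenvalue estimate from equation (22) of \cite{ghorbani2021linearized} (which gives $\Phi^{(k)}(0)d^{-k}\le \mu_k\le 2\Phi^{(k)}(0)d^{-k}$ for $k\le p+1$ and $d$ large, combined with $\Phi^{(k)}(0)=k!\,a_k>0$) and gets the multiplicity bounds from a Stirling-type lemma in the appendix, whereas you attempt a self-contained proof via Funk--Hecke. Your treatment of $N(d,k)$ as a degree-$k$ polynomial in $d$ with positive leading coefficient is correct and, for fixed $k\le p+1$, arguably simpler than the paper's Stirling computation. The lower bound on $\mu_k$ is also fine: the Gegenbauer coefficients of monomials are nonnegative, so dropping all terms except $a_kM_{k,k}(d)$ with $M_{k,k}(d)\asymp d^{-k}$ is legitimate (you should still cite or prove the nonnegativity of $M_{j,k}(d)$, but it is a classical fact).

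The genuine gap is in your upper bound. Splitting the series at a fixed cutoff $J$ and bounding the tail by the crude estimate $|M_{j,k}(d)|\le \frac{\omega_{d-1}}{\omega_d}\int_{-1}^{1}(1-t^2)^{(d-2)/2}\,\mathrm{d}t=O(1)$ only yields
\begin{equation*}
\mu_k \;\le\; C\,d^{-k}\sum_{j\le J}a_j\theta_{j,k}\;+\;\sum_{j>J}a_j,
\end{equation*}
and the second term is a constant independent of $d$ (it does not shrink as $d\to\infty$ for fixed $J$, and $J$ cannot depend on $d$ in your argument), so you do not obtain $\mu_k\le \mathfrak{C}_2 d^{-k}$. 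The elementary improvement $|t|^j\le|t|^{k+2}$ for $j\ge k+2$ only gives a tail of order $d^{-(k+2)/2}$, which is still larger than $d^{-k}$ once $k\ge 3$. What is actually needed is a bound on $M_{j,k}(d)$ that is uniform in $j$ and already of order $d^{-k}$ (or $o(d^{-k})$ for $j\ge k+2$), e.g. $0\le M_{j,k}(d)\le C_k\,d^{-k}$ for all $j\ge k$, so that the tail is $\le C_k d^{-k}\sum_{j}a_j$; establishing such a $j$-uniform estimate (via the explicit Gamma-function formula for the Gegenbauer coefficients of $t^j$, tracking the growth of $\theta_{j,k}$ in $j$) is precisely the nontrivial content of the result the paper cites, and it is the step your sketch currently glosses over. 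With that estimate supplied, your argument closes; without it, the upper half of the lemma is not proved.
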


Thanks to Lemma \ref{lemma:inner_edr}, we can now use Theorem \ref{theorem:restate_norm_diff} to provide an upper bound on the excess risk of kernel regression with the inner product kernel $K^{\inner}$ in large dimensions.

\begin{theorem}[Upper bound]\label{thm:upper_bpund_inner}
    Suppose that $\calH^{\inner}$ is an RKHS associated with $K^{\inner}$ defined on $\mathbb S^{d}$. 
    Let $f_{\widehat{T}}^{\inner}$ be the function defined in \eqref{solution:gradient:flow} where $\widehat{T}^{-1}=\widehat{\varepsilon}_n^{2}$ defined in (\ref{eqn:def_empirical_mendelson_complexity}) and $K=K^{\inner}$.
    Suppose further that Assumptions ~\ref{assu:trace_class}-\ref{assu:coef_of_inner_prod_kernel} hold with $\calH = \calH^{\inner}$.
    Then, 
    there exist constants $\mathfrak{C}_i$, $i=1, 2, 3$, such that
    for any $d \geq \mathfrak{C}$,
     we have
\begin{equation}
\begin{aligned}
\left\|{f}_{\widehat{T}}^{\inner} - f_{\star}\right\|_{L^2}^2
\leq \mathfrak{C}_1  n^{-\frac{1}{2}},
\end{aligned}
\end{equation}
with probability at least $1- \mathfrak{C}_2\exp \left\{ - \mathfrak{C}_3  n^{1/2} \right\}$.
\end{theorem}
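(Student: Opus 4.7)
My strategy is to reduce the problem to Theorem \ref{theorem:restate_norm_diff}, which upper bounds the excess risk of the early-stopped kernel regression estimator by (a constant multiple of) the empirical Mendelson complexity $\widehat{\varepsilon}_n^2$. Thus the task collapses to controlling $\widehat{\varepsilon}_n^2$ (or, by concentration, the population $\varepsilon_n^2$) for the spherical inner product kernel $K^{\inner}$ under $n\asymp d^\gamma$, and showing it is $O(n^{-1/2})$.

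I would start from the Mercer decomposition \eqref{spherical_decomposition_of_inner_main} and group eigenvalues by spherical-harmonic level, so that the defining relation $\sum_j \min(\lambda_j,\varepsilon^2)\asymp n\varepsilon^4$ for the critical radius becomes
\begin{equation*}
\sum_{k\ge 0} N(d,k)\min(\mu_k,\varepsilon^2)\;\asymp\;n\varepsilon^4 .
\end{equation*}
Fix an integer $p$ with $p+1>\gamma/2$ and invoke Lemma \ref{lemma:inner_edr} to get $\mu_k\asymp d^{-k}$ and $N(d,k)\asymp d^k$ for all $k\le p+1$. Guess $\varepsilon^2 = c\,n^{-1/2}=c\,d^{-\gamma/2}$; then the threshold level $k^\star$ above which $\mu_k<\varepsilon^2$ satisfies $k^\star\asymp \gamma/2$, and in particular $k^\star\le p+1$, so the hypotheses of Lemma \ref{lemma:inner_edr} cover the entire head.

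Split the sum at $k^\star$. For the head, $\sum_{k\le k^\star}N(d,k)\,\varepsilon^2\le C\varepsilon^2 d^{k^\star}\le C\,d^{k^\star-\gamma/2}=O(1)$. For the tail, rather than controlling each $\mu_k$ (which Lemma \ref{lemma:inner_edr} does not do for $k>p+1$), I would use the global trace bound coming from Assumption \ref{assu:trace_class}: since $\sum_k N(d,k)\mu_k=\int_{\mathbb{S}^d} K^{\inner}(x,x)\,\mathsf{d}\rho_{\calX}(x)\le \kappa$, one immediately gets $\sum_{k>k^\star}N(d,k)\mu_k\le \kappa$. Both pieces being $O(1)$ matches the right-hand side $n\varepsilon^4\asymp 1$ of the critical-radius equation, giving $\varepsilon_n^2\le \mathfrak{C}_1\,n^{-1/2}$ for $d$ sufficiently large. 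Finally I would transfer this to $\widehat{\varepsilon}_n^2$ by a concentration step for the empirical kernel matrix around the integral operator $T_{K^{\inner}}$: a Bernstein/matrix-concentration argument at the localized scale $1/\varepsilon_n^2\asymp n^{1/2}$ yields $\widehat{\varepsilon}_n^2\le 2\varepsilon_n^2$ on an event of probability at least $1-\mathfrak{C}_2\exp(-\mathfrak{C}_3 n^{1/2})$, and then Theorem \ref{theorem:restate_norm_diff} closes the argument.

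The main obstacle I anticipate is the \emph{tail} of the Mendelson sum: because the constants in Lemma \ref{lemma:inner_edr} depend on $p$, one has to first fix $p$ as a function of $\gamma$ only (not of $d$) and then absorb the part $k>p+1$ entirely through the crude trace bound; this is what makes the hypothesis ``$d$ sufficiently large'' necessary. A secondary subtlety is that the concentration step must be localized rather than uniform to yield the stated exponent $n^{1/2}$ in the probability bound; a naive uniform deviation inequality would produce the weaker rate $n^{1/(\text{polynomial in } \gamma)}$ and not match the theorem statement.
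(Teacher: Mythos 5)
Your overall route coincides with the paper's: reduce to Theorem \ref{theorem:restate_norm_diff} and estimate the (population) Mendelson complexity of $K^{\inner}$ by grouping eigenvalues into spherical-harmonic levels, using $\mu_k\asymp d^{-k}$, $N(d,k)\asymp d^k$ for the head (Lemma \ref{lemma:inner_edr}) and the trace bound for the tail — this is exactly the content of Lemma \ref{lemma:theorem_upper_inner_assist_summation} and Lemma \ref{lemma:inner_mendelson_interval_control}. The genuine gap is that you only establish the one-sided estimate $\varepsilon_n^2\le\mathfrak{C}_1 n^{-1/2}$, whereas both the hypothesis of Theorem \ref{theorem:restate_norm_diff} (for every absolute constant $C$ one needs $n\varepsilon_n^2\ge C$ eventually) and the claimed failure probability $\mathfrak{C}_2\exp\{-\mathfrak{C}_3 n^{1/2}\}$ require the matching \emph{lower} bound $\varepsilon_n^2\ge\mathfrak{c}\,n^{-1/2}$: the general theorem only gives failure probability $C_2\exp\{-C_3\,n\varepsilon_n^2\}$, which is not of order $\exp\{-c\,n^{1/2}\}$ if $\varepsilon_n^2\ll n^{-1/2}$ (e.g. $n\varepsilon_n^2\asymp\log n$ would give only a polynomial tail), and the crude trace bound you use for the tail cannot rule this out. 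The lower bound is precisely where the block structure and Assumption \ref{assu:coef_of_inner_prod_kernel} ($a_k>0$, hence $N(d,k)\mu_k\asymp 1$ for $k\le p+1$) are indispensable; the fix is short and is what the paper does: with $p=\lfloor\gamma/2\rfloor$ and $\varepsilon^2=\mathfrak{c}\,n^{-1/2}$ for a small constant $\mathfrak{c}$, for $d$ large one has $\mu_{p+1}\le\varepsilon^2$, so the single level $k=p+1$ already contributes $N(d,p+1)\mu_{p+1}\ge\mathfrak{C}_1^2$ to $\sum_k N(d,k)\min\{\mu_k,\varepsilon^2\}$, which exceeds $n\varepsilon^4/(4e^2\sigma^2)$ when $\mathfrak{c}$ is small; by the monotonicity criterion (Lemma \ref{lemma:sup_lemma1_for_lemma:bound_empirical_and_expected_mandelson_complexities}) this forces $\varepsilon_n^2\ge\mathfrak{c}\,n^{-1/2}$, and only then do the hypothesis and the stated probability follow.

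A secondary remark: your final ``transfer to $\widehat{\varepsilon}_n^2$'' is not needed. Theorem \ref{theorem:restate_norm_diff} is already stated for the estimator stopped at $\widehat{T}=\widehat{\varepsilon}_n^{-2}$ while its risk bound and probability are expressed through the population $\varepsilon_n$; the comparison $\widehat{\varepsilon}_n\asymp\varepsilon_n$ (Lemma \ref{lemma:bound_empirical_and_expected_mandelson_complexities}) is internal to its proof and is obtained there by localized empirical-process arguments (Mendelson-type comparison of Rademacher complexities plus Talagrand concentration), not by matrix concentration of the kernel matrix around $T_{K}$, which in the regime $N(d,p)\asymp d^{p}$ growing polynomially in $n$ would bring in dimension-dependent factors you would then have to remove.
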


{
Recall that the eigenvalues $\lambda_j$'s in (\ref{eqn:mercer_decomp}) are of non-increasing order, while the eigenvalues $\mu_k$'s in (\ref{spherical_decomposition_of_inner_main}) are not necessarily non-increasing. However, the minimax lower bound on the excess risk with respect to the RKHS is determined by large eigenvalues. 
Therefore, the following property of the eigenvalues $\{\mu_k\}_{k\geq 0}$ is crucial to determining the minimax lower bound of large-dimensional kernel regression.
}

\begin{lemma}\label{lemma:monotone_of_eigenvalues_of_inner_product_kernels}
    Suppose that Assumptions ~\ref{assu:trace_class}-\ref{assu:coef_of_inner_prod_kernel} hold.
    Fixed an integer $p \geq 0$.
    Then, for any $d \geq \mathfrak{C}$, we have
    \begin{equation*}
        \mu_j \leq \frac{\mathfrak{C}_2}{\mathfrak{C}_1} d^{-1} \mu_{p}, \quad j=p+1, p+2, \cdots,
    \end{equation*}
    where $\mathfrak{C}_1$ and $\mathfrak{C}_2$ are given in Lemma \ref{lemma:inner_edr}.
\end{lemma}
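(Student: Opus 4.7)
The plan is to split the range of $j$ at $p+1$: the case $j=p+1$ is immediate from Lemma~\ref{lemma:inner_edr}, whereas the regime $j\ge p+2$ lies outside the range of that lemma and requires a separate tail argument based on the trace-class identity.

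For $j=p+1$, Lemma~\ref{lemma:inner_edr} directly gives $\mu_{p+1}\le \mathfrak{C}_2 d^{-(p+1)}$ and $\mu_p\ge \mathfrak{C}_1 d^{-p}$; taking the ratio yields $\mu_{p+1}\le (\mathfrak{C}_2/\mathfrak{C}_1)\,d^{-1}\mu_p$, exactly the stated bound.

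For $j\ge p+2$ I would start from the Mercer decomposition (\ref{spherical_decomposition_of_inner_main}), set $x=x'$, integrate against $\rho_{\calX}$, and use the orthonormality of $\{Y_{k,i}\}$ in $L^2(\rho_{\calX})$ to obtain $\sum_{k\ge 0}\mu_k N(d,k) = \int_{\bbS^d} K(x,x)\,\dx\rho_{\calX}(x)\le \kappa$, where the final inequality is Assumption~\ref{assu:trace_class}. Since $a_j>0$ together with Mercer's theorem forces $\mu_k\ge 0$ for every $k$, we may drop every term except the $j$-th to get $\mu_j N(d,j)\le \kappa$. To convert this into a bound of the form $d^{-1}\mu_p$ I would next show, by a short algebraic manipulation of the explicit formula $N(d,k)=\tfrac{2k+d-1}{k}\binom{k+d-2}{d-1}$, that the ratio $N(d,k+1)/N(d,k)\ge 1$ for all $k\ge 1$ and $d\ge 2$, so that $N(d,\cdot)$ is non-decreasing. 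Consequently $N(d,j)\ge N(d,p+1)\ge \mathfrak{C}_3 d^{p+1}$ again by Lemma~\ref{lemma:inner_edr}, and combining with $\mu_p\ge \mathfrak{C}_1 d^{-p}$ produces $\mu_j\le \kappa/(\mathfrak{C}_1\mathfrak{C}_3)\cdot d^{-1}\mu_p$.

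Taking the larger of the two constants $\mathfrak{C}_2/\mathfrak{C}_1$ and $\kappa/(\mathfrak{C}_1\mathfrak{C}_3)$, and absorbing it into a re-labelling of the constants $\mathfrak{C}_1,\mathfrak{C}_2$ in Lemma~\ref{lemma:inner_edr}, yields a uniform bound over all $j\ge p+1$. The only step that requires any real care, and which I see as the main obstacle, is verifying the monotonicity $N(d,k+1)\ge N(d,k)$; everything else is an essentially one-line consequence of the trace-class identity combined with the pointwise estimates on $\mu_k$ and $N(d,k)$ at $k=p,p+1$ already supplied by Lemma~\ref{lemma:inner_edr}.
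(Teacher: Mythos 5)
Your proof is correct, but it takes a genuinely different route from the paper. The paper first proves an auxiliary monotonicity fact, $\mu_q > \mu_{q+2}$ for every $q\ge 0$, using the explicit ratio formula for $\mu_{k+2}/\mu_k$ of inner product kernels on the sphere (from the Gegenbauer/Legendre expansion), and then bounds $\mu_j \le \max\{\mu_{p+1},\mu_{p+2}\}$ together with the $d^{-(p+1)}$ and $d^{-(p+2)}$ estimates of Lemma~\ref{lemma:inner_edr}; this keeps exactly the constant $\mathfrak{C}_2/\mathfrak{C}_1$. You instead handle $j=p+1$ directly and control the entire tail $j\ge p+2$ via the trace identity $\sum_k \mu_k N(d,k)\le \kappa$ plus the monotonicity of the multiplicities, which is a correct and in some ways more elementary argument: the ratio computation $N(d,k+1)/N(d,k)=\frac{(2k+d+1)(k+d-1)}{(k+1)(2k+d-1)}\ge 1$ reduces to $(d-1)(d+2k)\ge 0$, so the step you flag as the main obstacle is easy, and you avoid invoking the explicit eigenvalue ratio formula altogether. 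The only price is the constant: your tail bound yields $\kappa/(\mathfrak{C}_1\mathfrak{C}_3)$ rather than $\mathfrak{C}_2/\mathfrak{C}_1$, so to recover the lemma verbatim you must enlarge $\mathfrak{C}_2$ in Lemma~\ref{lemma:inner_edr} (harmless, since it is only an upper-bound constant and is used qualitatively downstream), which you correctly acknowledge via relabelling.
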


We then use Theorem \ref{restate_lower_bound_m_complexity} to show that kernel regression with $K^{\inner}$ achieves the optimal rate under specific asymptotic frameworks.

\begin{theorem}[Minimax lower bound]
\label{thm:lower_inner_large_d}
Let ${\gamma} \in \{2, 4, 6, \cdots\}$ be a fixed integer.  
    There exist constants $\mathfrak{C}$ and $\mathfrak{C}_1$, such that
    for any $d \geq \mathfrak{C}$,
     we have
\begin{equation}\label{eqn:lower_bound_ntk}
\min _{\hat{f}} \max _{f_{\star} \in \mathcal B} \mathbb{E}_{(\bold{X}, \bold{y}) \sim \rho_{f_{\star}}^{\otimes n}}
\left\|\hat{f} - f_{\star}\right\|_{L^2}^2
\geq \mathfrak{C}_1 n^{-\frac{1}{2}},
\end{equation}
where
 $\rho_{f_{\star}}$ is the joint-p.d.f. of $x, y$ given by (\ref{equation:true_model}) with $f=f_{\star}$, $\mathcal B = \{ f_{\star}\in \calH^{\inner} ~ \mid ~\|f_{\star}\|_{\calH^{\inner}} \leq 1 \}$.
\end{theorem}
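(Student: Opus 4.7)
The plan is to invoke the general minimax lower bound of Theorem~\ref{restate_lower_bound_m_complexity}, which reduces the problem to showing that the metric entropy quantity $\bar\varepsilon_n^2$ of the unit ball $\calB \subset \calH^{\inner}$ (measured in $L^{2}$) is at least of order $n^{-1/2}$ when $\gamma = 2p$ for an integer $p \geq 1$. Because $\bar\varepsilon_n^2$ is determined by the solution of $\log N(\varepsilon,\calB,\|\cdot\|_{L^{2}}) \asymp n\varepsilon^{2}$, the task is purely one of estimating a packing number for the RKHS ball.

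First I would guess the scale. By Lemma~\ref{lemma:inner_edr}, the eigenvalues group as $\mu_{k} \asymp d^{-k}$ with multiplicities $N(d,k) \asymp d^{k}$. Taking $\varepsilon^{2} \asymp \mu_{p} \asymp d^{-p} \asymp n^{-1/2}$, one checks that $n\varepsilon^{2} \asymp d^{p}$, which matches the effective dimension $N(d,p) \asymp d^{p}$ one expects from the degree-$p$ spherical harmonics. This coincidence is precisely why $\gamma$ an even integer is the right setting: the critical scale lands exactly at a single harmonic layer.

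Next I would build an explicit $\varepsilon$-packing inside $\calB$ from the degree-$p$ harmonics only. Set $c^{2} := \mu_{p}/N(d,p)$ and consider
\begin{equation*}
f_{\alpha}(x) = c\sum_{j=1}^{N(d,p)} \alpha_{j}\, Y_{p,j}(x), \qquad \alpha \in \{-1,+1\}^{N(d,p)}.
\end{equation*}
Using the Mercer expansion \eqref{spherical_decomposition_of_inner_main}, the RKHS norm satisfies $\|f_{\alpha}\|_{\calH^{\inner}}^{2} = c^{2}N(d,p)/\mu_{p} = 1$, so $f_{\alpha}\in\calB$. The $L^{2}$ distance equals $\|f_{\alpha}-f_{\beta}\|_{L^{2}} = 2c\sqrt{d_{H}(\alpha,\beta)}$, where $d_{H}$ is the Hamming distance. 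Applying the Varshamov--Gilbert lemma yields a subset $\calA \subset \{-1,+1\}^{N(d,p)}$ with $|\calA| \geq \exp(N(d,p)/8)$ and pairwise Hamming distance at least $N(d,p)/4$, so the corresponding $f_{\alpha}$'s form a packing with pairwise $L^{2}$ separation $\geq c\sqrt{N(d,p)} = \sqrt{\mu_{p}}$. Combining Lemma~\ref{lemma:inner_edr} and the asymptotic framework $n\asymp d^{2p}$, one obtains $\log N(\varepsilon,\calB,\|\cdot\|_{L^{2}}) \gtrsim d^{p} \asymp n\varepsilon^{2}$ at the scale $\varepsilon^{2} \asymp n^{-1/2}$. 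Hence $\bar\varepsilon_{n}^{2} \gtrsim n^{-1/2}$, and Theorem~\ref{restate_lower_bound_m_complexity} yields the desired minimax lower bound.

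The main obstacle I anticipate is not the construction itself, which is standard once one guesses the critical layer, but the bookkeeping that shows the packing is tight enough to give a matching $\log N \asymp n\varepsilon^{2}$ (as opposed to merely $\gtrsim$), and to handle the contribution of the lower-degree harmonics $k<p$ if the hypotheses of Theorem~\ref{restate_lower_bound_m_complexity} require a two-sided estimate on the entropy. If the theorem demands a \emph{richness} condition in the sense of Yang--Barron, one also needs an $L^{\infty}$ control on the packing functions, which can be supplied by the uniform bound $\max_{x\in\bbS^{d}}\sum_{j}Y_{p,j}(x)^{2} = N(d,p)$ giving $\|f_{\alpha}\|_{\infty}\le c\,N(d,p) \asymp \sqrt{d^{p}}$; this is harmless for the argument but needs to be tracked. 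The rest of the proof is then a direct assembly of the pieces.
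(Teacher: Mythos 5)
Your packing construction at the degree-$p$ harmonic layer is sound, and the estimate it yields, $\bar\varepsilon_n^2 \gtrsim \mu_p \asymp n^{-1/2}$, is a legitimate Varshamov--Gilbert alternative to the route the paper takes (the paper instead lower-bounds the covering entropy by $K(\varepsilon)=\tfrac12\sum_{j:\lambda_j>\varepsilon^2}\log(\lambda_j/\varepsilon^2)$ via Lemma~\ref{lemma_entropy_of_RKHS} and the spectral estimates of Lemma~\ref{lemma:inner_edr}). The problem is in how you then invoke Theorem~\ref{restate_lower_bound_m_complexity}. That theorem does \emph{not} give the minimax bound $\gtrsim \bar\varepsilon_n^2$ unconditionally: besides $n\bar\varepsilon_n^2\ge 2\log 2$, it requires the entropy-comparison hypothesis \eqref{eqn:lower_condition_24}, namely $V_K(\bar\varepsilon_n,\calD)\le \tfrac15 V_2(\mathfrak{c}_2\bar\varepsilon_n,\calB)$ for some constant $\mathfrak{c}_2$. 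This hypothesis is exactly the paper's substitute for the Yang--Barron richness condition, which Proposition~\ref{prop:rc_violated_in_large_dimensions} shows \emph{fails} when $d$ grows; so it cannot be waved away, and your anticipated fix (an $L^\infty$ bound on the packing functions) is not relevant to it — no sup-norm control enters this condition at all.

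Concretely, the missing work is two-fold. First, you need a two-sided bound $\bar\varepsilon_n^2\asymp n^{-1/2}$: the upper bound does not follow from a packing and requires an entropy upper bound (the paper gets it from Lemma~\ref{lemma_entropy_of_RKHS} combined with Lemma~\ref{lemma:monotone_of_eigenvalues_of_inner_product_kernels}, in its Lemma~\ref{lemma:bound134_2}). Second, with those bounds in hand you must verify, for a sufficiently small constant $\mathfrak{c}_2$ depending only on $\gamma,c_1,c_2$,
\[
\sum_{k:\,\mu_k>\mathfrak{c}_2^2\bar\varepsilon_n^2/36} N(d,k)\,\log\!\left(\frac{36\,\mu_k}{\mathfrak{c}_2^2\bar\varepsilon_n^2}\right)\;\ge\; 10\, n\bar\varepsilon_n^2,
\]
which implies \eqref{eqn:lower_condition_24}. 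This works precisely because the degree-$p$ block alone contributes $\asymp N(d,p)\log(1/\mathfrak{c}_2^2)\asymp d^p\log(1/\mathfrak{c}_2)$ while $n\bar\varepsilon_n^2\asymp d^p$, so shrinking the radius by a constant factor buys the needed factor of five through the logarithm — the spectrum is only ``logarithmically rich,'' which is why the classical richness condition fails but this verification succeeds. Without this step the appeal to Theorem~\ref{restate_lower_bound_m_complexity} is unjustified, so what you label bookkeeping is in fact the essential missing part of the argument.
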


Notice that Theorem \ref{thm:upper_bpund_inner} and Theorem \ref{thm:lower_inner_large_d} only show the optimality of kernel regression with $K^{\inner}$ when $n \asymp d^{\gamma}$ for $\gamma = 2, 4, 6, \cdots$. 
In the next section, we will 
modify the existing tools
for bounding the excess risk of kernel regression with $K^{\inner}$, and show the optimality of kernel regression with $K^{\inner}$ when $n \asymp d^{\gamma}$ for any  $\gamma > 0$.

\section{Main results: optimality of kernel regression in large dimensions for all $\gamma>0$}\label{sec:more_intevals}

We have shown that in the large dimensional setting where $n\asymp d^{\gamma}, \gamma=2, 4, 6,\cdots$, the optimal rate of the kernel regression with $K^{\inner}$ for large dimensional data is $n^{-1/2}$. 

However, when $\gamma \neq 2, 4, 6,\cdots$,  Theorem \ref{restate_lower_bound_m_complexity} can not be applied to large-dimensional kernel regression. For example, when $\gamma \in (2p, 2p+1]$ for some integer $p \geq 0$, we have $\mathfrak{C}_4 d^{p-\gamma} \log(d) \leq \bar\varepsilon_n \leq \mathfrak{C}_3 d^{p-\gamma} \log(d) \ll n^{-1/2}$, where $\mathfrak{C}_3$ and $\mathfrak{C}_4$
are constants only depending on $\gamma$, $c_1$, and $c_2$ (see e.g., Remark  \ref{remark_control_metric_case_1_inner}). However, the inequality (\ref{eqn:lower_condition_24}) does not hold (see, e.g., Lemma \ref{lemma:(13)_violated_example}).
Furthermore, the upper bound $n^{-1/2}$ provided by Theorem \ref{theorem:restate_norm_diff} is no longer matching the metric entropy $\bar\varepsilon_{n}^2$.

The main focus of this section is trying to determine the optimal rate for all the $\gamma>0$. 
To construct a minimax lower bound for regression over the unit ball $\calB \subset \calH^{\inner}$, we need the following modification of 
 Proposition \ref{lemma_yang_lower_bound}.

\begin{lemma}\label{thm_lower_ultimate_tech}
Let $\mathfrak{c}\in (0,1)$ be  a constant only depending on $c_{1}$, $c_{2}$, and $\gamma$. For any $0<\tilde\varepsilon_1, \tilde\varepsilon_2<\infty$ 
only depending on $n$, $d$, $\{\lambda_j\}$, $c_{1}$, $c_{2}$, and $\gamma$
and satisfying
\begin{equation}
    \frac{V_K(\tilde\varepsilon_2, \calD) + n\tilde\varepsilon_2^2 + \log(2)}{V_K(\tilde\varepsilon_1 / \sqrt{2}\sigma, \calD)} \leq \mathfrak{c},
\end{equation}
 we have 
\begin{equation}
\min _{\hat{f}} \max _{f_{\star} \in \mathcal B} \mathbb{E}_{(\bold{X}, \bold{y}) \sim \rho_{f_{\star}}^{\otimes n}}
\left\|\hat{f} - f_{\star}\right\|_{L^2}^2
\geq \frac{1 - \mathfrak{c}}{4} \tilde\varepsilon_1^2,
\end{equation}
where $V_{K}(\varepsilon, \calD)$ is the $\varepsilon$-covering entropy of $(\calD, d^2=\text{ KL divergence })$,
$\calD$ is defined in (\ref{def_calD}),
and
$\rho_{f_{\star}}$ is the joint-p.d.f. of $x, y$ given by (\ref{equation:true_model}) with $f=f_{\star}$. 
\end{lemma}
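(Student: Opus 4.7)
The plan is to adapt the classical Fano/Yang--Barron minimax lower bound to the \emph{decoupled} setup in which the packing scale $\tilde\varepsilon_1$ and the covering scale $\tilde\varepsilon_2$ are chosen independently, rather than balanced as in Proposition~\ref{lemma_yang_lower_bound}. The key conceptual point is that in large dimensions the two optimal scales need not coincide, and keeping them separate gives the flexibility needed to produce tight lower bounds in regimes $\gamma \notin \{2,4,6,\ldots\}$ where the usual balance fails. The hypothesis
\[
\frac{V_K(\tilde\varepsilon_2,\calD) + n\tilde\varepsilon_2^2 + \log 2}{V_K(\tilde\varepsilon_1/\sqrt{2}\sigma, \calD)}\le\mathfrak{c}
\]
is precisely what is needed so that a properly constructed multiple-hypothesis test has error probability at least $1-\mathfrak{c}$.

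Concretely, the Gaussian noise model \eqref{equation:true_model} gives $\mathrm{KL}(P_f \| P_g)=\|f-g\|_{L^2}^2/(2\sigma^2)$, so the metric $d(f,g) = \sqrt{\mathrm{KL}(P_f\|P_g)}$ whose covering entropy is $V_K(\cdot,\calD)$ equals $\|f-g\|_{L^2}/(\sqrt 2\,\sigma)$. First, I would build a maximal packing of $\calD$ at $d$-distance $\tilde\varepsilon_1/\sqrt 2\sigma$, yielding $M\ge\exp(V_K(\tilde\varepsilon_1/\sqrt{2}\sigma,\calD))$ functions $f_1,\ldots,f_M\in\calB$ whose pairwise $L^2$-separation is $\ge\tilde\varepsilon_1$ (via the standard inequality $M(\varepsilon)\ge N(\varepsilon)$ between packing and covering numbers). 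Second, I would take a minimum $\tilde\varepsilon_2$-covering $\{Q_1,\ldots,Q_N\}$ of $\calD$ in $d$, so $\log N\le V_K(\tilde\varepsilon_2,\calD)$, place the uniform prior $\pi$ on $\{f_1,\ldots,f_M\}$, and apply the Yang--Barron bound
\[
I\bigl(\theta;(\bm X,\bm y)\bigr)\;\le\;\log N+n\tilde\varepsilon_2^2\;\le\;V_K(\tilde\varepsilon_2,\calD)+n\tilde\varepsilon_2^2,
\]
which follows by comparing $P_\theta^{\otimes n}$ to the mixture cover $\bar Q^{\otimes n}=(N^{-1}\sum_j Q_j)^{\otimes n}$ and using tensorisation of KL. Fano's inequality then yields
\[
\mathbb{P}(\hat\theta\neq\theta)\;\ge\;1-\frac{I(\theta;(\bm X,\bm y))+\log 2}{\log M}\;\ge\;1-\mathfrak{c}
\]
under the hypothesised inequality. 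Finally, the standard testing-to-squared-loss reduction --- if $\hat\theta$ denotes the nearest packing point to $\hat f$ in $L^2$, then $\{\hat\theta\neq\theta\}\subseteq\{\|\hat f-f_\theta\|_{L^2}\ge\tilde\varepsilon_1/2\}$, so $\mathbb{E}\|\hat f-f_\theta\|_{L^2}^2\ge(\tilde\varepsilon_1^2/4)\mathbb{P}(\hat\theta\neq\theta)$ --- upgrades this into the claimed minimax bound after replacing the maximum over $\calB$ by the average over the uniform prior on the packing.

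The main obstacle in adapting the standard Yang--Barron argument is not the Fano/covering machinery itself but the verification that the resulting packing actually sits inside $\calB$, the unit ball of $\calH^{\inner}$. This is exactly the role of the specific choice of $\calD$ in \eqref{def_calD}: using the explicit eigenvalue estimates of $K^{\inner}$ from Lemma~\ref{lemma:inner_edr}, one picks $\calD$ as a hypercube of coefficients along suitably chosen spherical harmonics so that simultaneously $V_K(\tilde\varepsilon_1/\sqrt 2\sigma,\calD)$ can be made large while $V_K(\tilde\varepsilon_2,\calD)+n\tilde\varepsilon_2^2$ remains small. The remaining pieces --- KL tensorisation $\mathrm{KL}(P^{\otimes n}\|Q^{\otimes n})=n\mathrm{KL}(P\|Q)$, the packing-versus-covering comparison, and the mixture-cover inequality underlying Yang--Barron --- are routine, and none of them use $\tilde\varepsilon_1=\tilde\varepsilon_2$, which is why the decoupling goes through cleanly.
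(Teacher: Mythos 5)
Your proposal is correct and follows essentially the same route as the paper: the paper's own proof likewise runs the Yang--Barron/Fano argument with the packing scale $\tilde\varepsilon_1$ (measured in $L^2$ on $\calB$, equivalently in KL on $\calD$ since $\mathrm{KL}(\rho_f\|\rho_g)=\|f-g\|_{L^2}^2/(2\sigma^2)$) decoupled from the KL-covering scale $\tilde\varepsilon_2$, obtains error probability at least $1-\mathfrak{c}$, and then converts this into the stated expectation bound via the standard testing-to-estimation reduction. Two cosmetic remarks: the comparison measure in the mutual-information step should be the mixture of product measures $N^{-1}\sum_j Q_j^{\otimes n}$ rather than $\bigl(N^{-1}\sum_j Q_j\bigr)^{\otimes n}$ (your stated bound $\log N+n\tilde\varepsilon_2^2$ is the correct one), and $\calD$ in (\ref{def_calD}) is simply the class of all $\rho_f$ with $f\in\calB$, so the packing automatically lies in $\calB$ and no hypercube construction along spherical harmonics is needed for this lemma (that kind of spectral bookkeeping only enters when the entropies are estimated in the later theorems).
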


We then have the following minimax lower bounds, which greatly extend the results given in Theorem \ref{thm:lower_inner_large_d}:

\begin{theorem}
\label{thm:near_lower_inner_large_d}
Let $\rho_{f_{\star}}$ be the joint-p.d.f. of $x, y$ given by (\ref{equation:true_model}) with $f_{\star}\in \mathcal B = \{ f\in \calH^{\inner} ~ \mid ~\|f\|_{\calH^{\inner}} \leq 1 \}$.
Let $\gamma > 0$ be a fixed real number and $p = \lfloor \gamma/2 \rfloor$.
Then we have the following statements.
\begin{itemize}

    \item[(i)] If $\gamma \in \{2, 4, 6, \cdots\}$, then, there exist constants $\mathfrak{C}_1>0$ and $\mathfrak{C}$, such that for any $d \geq \mathfrak{C}$, we have:
\begin{equation}
\min _{\hat{f}} \max _{f_{\star} \in \mathcal B} \mathbb{E}_{(\bold{X}, \bold{y}) \sim \rho_{f_{\star}}^{\otimes n}}
\left\|\hat{f} - f_{\star}\right\|_{L^2}^2
\geq \mathfrak{C}_1 n^{-\frac{1}{2}};
\end{equation}

    \item[(ii)] If $\gamma \in \bigcup_{j=0}^{\infty} (2j, 2j+1]$, then, for any $\epsilon>0$, there exist constants $\mathfrak{C}_1>0$ and $\mathfrak{C}$ only depending on  $c_1$, $c_2$, $\gamma$, and $\epsilon$, such that for any $d \geq \mathfrak{C}$, we have:
\begin{equation}
\min _{\hat{f}} \max _{f_{\star} \in \mathcal B} \mathbb{E}_{(\bold{X}, \bold{y}) \sim \rho_{f_{\star}}^{\otimes n}}
\left\|\hat{f} - f_{\star}\right\|_{L^2}^2
\geq \mathfrak{C}_1 n^{-\left(\frac{\gamma - p}{\gamma} + \epsilon\right)};
\end{equation}

    \item[(iii)] If $\gamma \in  \bigcup_{j=0}^{\infty} (2j+1, 2j+2)$, then, there exist constants $\mathfrak{C}_1>0$ and $\mathfrak{C}$, such that for any $d \geq \mathfrak{C}$, we have:  
\begin{equation}
\min _{\hat{f}} \max _{f_{\star} \in \mathcal B} \mathbb{E}_{(\bold{X}, \bold{y}) \sim \rho_{f_{\star}}^{\otimes n}}
\left\|\hat{f} - f_{\star}\right\|_{L^2}^2
\geq \mathfrak{C}_1 n^{-\frac{p+1}{\gamma}}.
\end{equation}

\end{itemize}   
\end{theorem}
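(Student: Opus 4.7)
Part (i) has already been proved in Theorem~\ref{thm:lower_inner_large_d}, so I focus on parts (ii) and (iii). The strategy is to apply the modified Yang--Barron bound of Lemma~\ref{thm_lower_ultimate_tech}, picking $\tilde\varepsilon_1$ so that $\tilde\varepsilon_1^{2}$ matches the target rate and $\tilde\varepsilon_2$ so that the entropy-ratio hypothesis is met. Because the noise is Gaussian, $\mathrm{KL}(\rho_f,\rho_g)=\|f-g\|_{L^2}^{2}/(2\sigma^{2})$, so $V_K(\varepsilon,\calD)=W(\sqrt{2}\sigma\varepsilon)$, where $W(r):=\log N(r,\calB,\|\cdot\|_{L^2})$. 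The whole problem therefore reduces to a sharp estimate of the $L^{2}$-covering entropy of $\calB$ at the two chosen scales.

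By Mercer's expansion~\eqref{spherical_decomposition_of_inner_main}, $\calB$ is an ellipsoid in $L^{2}(\bbS^{d})$ with semi-axes $\sqrt{\mu_k}$ of multiplicity $N(d,k)$, and Mityagin's classical formula gives $W(r)\asymp\sum_{k:\mu_k\ge r^{2}}N(d,k)\log(\sqrt{\mu_k}/r)$. Combining this with Lemma~\ref{lemma:inner_edr} (so that $\mu_k\asymp d^{-k}$, $N(d,k)\asymp d^{k}$ for $k\le p+1$) and Lemma~\ref{lemma:monotone_of_eigenvalues_of_inner_product_kernels} (to rule out contributions from levels beyond $p+1$ at the scales of interest), one obtains the clean estimate
\[
W(r)\asymp d^{k}(q-k)\log d\quad\text{when }r^{2}=d^{-q},\ q\in(k,k+1],\ 0\le k\le p+1.
\]
Thus $W$ grows only logarithmically in $r^{-1}$ inside each plateau $(\sqrt{\mu_{k+1}},\sqrt{\mu_k}]$, but jumps by a factor of order $d$ each time $r$ crosses one of the thresholds $\sqrt{\mu_k}$.

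\textbf{Part (ii), $\gamma=2p+\alpha$ with $\alpha\in(0,1]$.} Fix $\epsilon>0$ and set $\tilde\varepsilon_1^{2}=d^{-(p+\alpha)-\gamma\epsilon}$, $\varepsilon_2^{2}=d^{-(p+\alpha)}$, $\tilde\varepsilon_2=\varepsilon_2/(\sqrt{2}\sigma)$. In the subcase $\alpha+\gamma\epsilon<1$, both $\tilde\varepsilon_1$ and $\varepsilon_2$ lie in the plateau $(\sqrt{\mu_{p+1}},\sqrt{\mu_p}]$, so the formula above gives $W(\tilde\varepsilon_1)\asymp d^{p}(\alpha+\gamma\epsilon)\log d$, $W(\varepsilon_2)\asymp d^{p}\alpha\log d$, and $n\tilde\varepsilon_2^{2}\asymp d^{p}$; the ratio appearing in Lemma~\ref{thm_lower_ultimate_tech} reads
\[
\frac{W(\varepsilon_2)+n\tilde\varepsilon_2^{2}+\log 2}{W(\tilde\varepsilon_1)}=\frac{\alpha}{\alpha+\gamma\epsilon}+O\!\left(\frac{1}{\log d}\right),
\]
which lies below any fixed $\mathfrak{c}\in(\alpha/(\alpha+\gamma\epsilon),1)$ for $d$ large enough. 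In the complementary subcase $\alpha+\gamma\epsilon\ge 1$, $\tilde\varepsilon_1$ falls into $(\sqrt{\mu_{p+2}},\sqrt{\mu_{p+1}}]$ where $W(\tilde\varepsilon_1)\asymp d^{p+1}(\alpha+\gamma\epsilon-1)\log d$ is even larger and the same ratio vanishes as $d\to\infty$. Either way, Lemma~\ref{thm_lower_ultimate_tech} yields the lower bound $\gtrsim\tilde\varepsilon_1^{2}\asymp n^{-(\gamma-p)/\gamma-\epsilon}$.

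\textbf{Part (iii), $\gamma\in(2p+1,2p+2)$, and the main obstacle.} Now I exploit the jump of $W$ across $\sqrt{\mu_{p+1}}$: take $\tilde\varepsilon_1^{2}=e^{-1}d^{-(p+1)}$ (so $q=p+1+1/\log d$ yields $W(\tilde\varepsilon_1)\asymp d^{p+1}$) and $\varepsilon_2^{2}=\mu_{p+1}=d^{-(p+1)}$ (so the level-$(p+1)$ term vanishes and $W(\varepsilon_2)\asymp d^{p}\log d$). Because $\gamma<2p+2$, one also has $n\tilde\varepsilon_2^{2}\asymp d^{\gamma-p-1}\ll d^{p+1}$, so the ratio is $O(\log d/d)+O(d^{\gamma-2p-2})\to 0$ and Lemma~\ref{thm_lower_ultimate_tech} delivers the sharp rate $\gtrsim\tilde\varepsilon_1^{2}\asymp n^{-(p+1)/\gamma}$. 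The technical heart of the whole proof is exactly this piecewise-plateau description of $W$: in (iii) the two scales can be placed on opposite sides of the jump at $\sqrt{\mu_{p+1}}$, so the rate is sharp, whereas in (ii) both scales are trapped in a single plateau where $W$ changes only by $\log d$ factors, forcing the factor $d^{\gamma\epsilon}=n^{\epsilon}$ that manifests as the unavoidable $n^{-\epsilon}$ loss.
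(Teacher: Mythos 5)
Your overall strategy is the paper's: part (i) is Theorem \ref{thm:lower_inner_large_d}, and part (ii) is essentially the paper's argument — your $\tilde\varepsilon_1^2\asymp n^{-(\gamma-p)/\gamma-\epsilon}$ is exactly the paper's $n^{-1/2-\delta}$, your $\tilde\varepsilon_2^2\asymp d^{p}/n$ differs from the paper's $d^p\log(d)/n$ only by a harmless logarithmic factor, and the ratio in Lemma \ref{thm_lower_ultimate_tech} is controlled because numerator and denominator both carry the same factor $\tfrac12 N(d,p)\log d$. (The only blemish in (ii) is the boundary case $\alpha+\gamma\epsilon=1$, where your subcase-2 asymptotic "$W(\tilde\varepsilon_1)\asymp d^{p+1}(\alpha+\gamma\epsilon-1)\log d$" degenerates; the level-$p$ contribution $\tfrac12N(d,p)(\alpha+\gamma\epsilon)\log d$ still saves the bound, so this is cosmetic.)

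Part (iii), however, has a genuine gap, and it sits exactly where you locate "the technical heart": you place both scales at the spectral threshold. First, with $\varepsilon_2^2=\mu_{p+1}$ the claim $W(\varepsilon_2)\asymp d^p\log d$ is unsupported: the only available upper bound, $V_2(\varepsilon_2,\calB)\le K(\varepsilon_2/6)$ from Lemma \ref{lemma_entropy_of_RKHS} (and any standard ellipsoid-entropy estimate, which carries a dilation constant inside the logarithm), retains the level-$(p+1)$ term $\tfrac12 N(d,p+1)\log 36\asymp d^{p+1}$ because $\varepsilon_2^2/36<\mu_{p+1}$; a sharp no-slack formula at the exact threshold is precisely what is not available, and your "$\asymp$" version of Mityagin's formula is not valid uniformly near the semi-axes. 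Second, your lower bound $W(\tilde\varepsilon_1)\asymp d^{p+1}$ with $\tilde\varepsilon_1^2=e^{-1}d^{-(p+1)}$ requires $\tilde\varepsilon_1^2\le\mu_{p+1}$, i.e. $e^{-1}\le\mathfrak{C}_1$, which Lemma \ref{lemma:inner_edr} does not guarantee (the constant depends on the coefficients $a_k$ and may be small); if it fails, $W(\tilde\varepsilon_1)$ is only of order $d^{p}\log d$, while the numerator contains $n\tilde\varepsilon_2^2\asymp n\mu_{p+1}\asymp d^{\gamma-p-1}\gg d^p\log d$, so the ratio condition of Lemma \ref{thm_lower_ultimate_tech} cannot be verified. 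Both defects are repaired by separating the two scales polynomially in $d$ from the thresholds, which is what the paper does: take $\tilde\varepsilon_1^2=\tfrac12\mathfrak{C}_1 d^{-(p+1)}\le\mu_{p+1}$ (so $\log(\mu_{p+1}/\tilde\varepsilon_1^2)\ge\log 2$ and $V_2(\tilde\varepsilon_1,\calB)\ge\tfrac{\log 2}{2}N(d,p+1)$), and $\tilde\varepsilon_2^2\asymp d^{p+1}/n$, which lies a factor $d^{2p+2-\gamma}$ above $\mu_{p+1}$ and a factor $d^{\gamma-2p-1}$ below $\mu_p$, so the level-$(p+1)$ term genuinely drops out of $K(\sqrt{2}\sigma\tilde\varepsilon_2/6)$ while $n\tilde\varepsilon_2^2\asymp d^{p+1}$ can be made a small constant fraction of $N(d,p+1)$; the ratio is then at most $1/2$ and the rate $\tilde\varepsilon_1^2\asymp n^{-(p+1)/\gamma}$ follows.
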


Since the upper bound provided by the Mendelson complexity is no longer a tight upper bound, we have to improve the claims in Theorem \ref{thm:upper_bpund_inner}. Fortunately, thanks to a nontrivial technical observation, we then present new upper bounds on the excess risk of kernel regression in large dimensions which (nearly) match the minimax lower bounds given in Theorem \ref{thm:near_lower_inner_large_d}.

\begin{theorem}
\label{thm:near_upper_inner_large_d}
Suppose that $\calH^{\inner}$ is an RKHS associated with $K^{\inner}$ defined on $\mathbb S^{d}$. 
    Let $f_{\widehat{T}}^{\inner}$ be the function defined in \eqref{solution:gradient:flow} where $\widehat{T}^{-1}=\widehat{\varepsilon}_n^{2}$ defined in (\ref{eqn:def_empirical_mendelson_complexity}) and $K=K^{\inner}$.
    Let $\gamma > 0$ be a fixed real number and $p = \lfloor \gamma/2 \rfloor$.
    Suppose further that Assumptions ~\ref{assu:trace_class}-\ref{assu:coef_of_inner_prod_kernel} hold with $\calH = \calH^{\inner}$.
Then, we have the following statements:
\begin{itemize}

    \item[(i)] If $\gamma \in \{2, 4, 6, \cdots\}$, then, there exist constants $\mathfrak{C}$ and $\mathfrak{C}_{i}$, where $i=1,2,3$, such that for any $d \geq \mathfrak{C}$,  we have
\begin{equation}
\begin{aligned}
\left\|{f}_{\widehat{T}}^{\inner} - f_{\star}\right\|_{L^2}^2
\leq \mathfrak{C}_1 n^{-\frac{1}{2}},
\end{aligned}
\end{equation}
holds with probability at least $1 -\mathfrak{C}_2\exp\{-\mathfrak{C}_3 n^{1/2} \}$.

    \item[(ii)] If $\gamma \in \bigcup_{j=0}^{\infty} (2j, 2j+1]$, then, for any $\delta>0$, there exist constants $\mathfrak{C}$ and $\mathfrak{C}_{i}$, where $i=1,2,3$, only depending on $\gamma$, $\delta$, $c_1$, and $c_2$, such that for any $d \geq \mathfrak{C}$,  we have 
\begin{equation}
\begin{aligned}
\left\|{f}_{\widehat{T}}^{\inner} - f_{\star}\right\|_{L^2}^2
\leq \mathfrak{C}_1 n^{-\frac{\gamma - p}{\gamma}} \log(n),
\end{aligned}
\end{equation}
holds with probability at least $1-\delta-\mathfrak{C}_2\exp\{-\mathfrak{C}_3 n^{p/\gamma} \log(n)\}$.

    \item[(iii)] If $\gamma \in \bigcup_{j=0}^{\infty} (2j+1, 2j+2)$, then, for any $\delta>0$, there exist constants $\mathfrak{C}$ and $\mathfrak{C}_{i}$, where $i=1,2,3$, only depending on $\gamma$, $\delta$, $c_1$, and $c_2$, such that for any $d \geq \mathfrak{C}$,  we have  
\begin{equation}
\begin{aligned}
\left\|{f}_{\widehat{T}}^{\inner} - f_{\star}\right\|_{L^2}^2
\leq \mathfrak{C}_1 n^{-\frac{p+1}{\gamma}},
\end{aligned}
\end{equation}
holds with probability at least $1-\delta-\mathfrak{C}_2\exp\{-\mathfrak{C}_3 n^{1 - (p+1)/\gamma}\}$.

\end{itemize}   
\end{theorem}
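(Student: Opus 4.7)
My plan is to sharpen the Mendelson-complexity bound of Theorem \ref{theorem:restate_norm_diff} through a \emph{block-wise} bias--variance decomposition that exploits the explicit spectral structure of $K^{\inner}$ on $\bbS^d$ encoded by Lemma \ref{lemma:inner_edr} and Lemma \ref{lemma:monotone_of_eigenvalues_of_inner_product_kernels}. Part (i) is exactly Theorem \ref{thm:upper_bpund_inner}, so the new work is concentrated in (ii) and (iii), where the vanilla Mendelson bound yields only $n^{-1/2}$ and is therefore insufficient.

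Starting from the closed form \eqref{solution:gradient:flow}, I would write $f_{\widehat{T}}^{\inner}-f_\star$ as a bias piece governed by the residual filter $e^{-T T_{K^{\inner}}}$ acting on $f_\star$ plus a noise-driven variance piece linear in $\bm{\epsilon}$. Inserting Mercer's decomposition \eqref{spherical_decomposition_of_inner_main} lets both pieces split along spherical-harmonic blocks of degree $k=0,1,2,\ldots$. Writing $f_\star=\sum_{k,j}c_{k,j}Y_{k,j}$, the source condition $\|f_\star\|_{\calH^{\inner}}\leq 1$ gives $\sum_j c_{k,j}^2\leq\mu_k$ for each $k$, so the block-$k$ contribution to the squared $L^2$ bias is at most $e^{-2T\mu_k}\mu_k$. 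Using Lemma \ref{lemma:inner_edr}, the blocks with $k\leq p$ contribute an exponentially small bias as soon as $T\mu_p\gg 1$, while Lemma \ref{lemma:monotone_of_eigenvalues_of_inner_product_kernels} collapses the tail $k\geq p+1$ into $\mathcal{O}(\mu_{p+1})=\mathcal{O}(d^{-(p+1)})$.

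For the variance, the population-level formula $(\sigma^2/n)\sum_k N(d,k)\mu_k(1-e^{-T\mu_k})^2$ already evaluates to $\mathcal{O}(1/n)$ using $N(d,k)\mu_k\asymp 1$ for $k\leq p+1$ together with the trace bound from Assumption \ref{assu:trace_class} for the tail. The difficulty is that the empirical estimator depends on the matrix $K(\bm{X},\bm{X})/n$ whose concentration to $T_{K^{\inner}}$ is delicate when $\gamma<2(p+1)$, so I would complement the population computation by a block-wise matrix-Bernstein argument; the extra $\log n$ in case (ii) is the price paid in that concentration step. The empirical stopping rule $\widehat T^{-1}=\widehat\varepsilon_n^{2}$ would then be pinned down, via concentration of $\widehat\varepsilon_n$ around its population counterpart $\varepsilon_n$, to a level of order $\mu_{p+1}$ in case (iii) and an intermediate level of order $d^{p-\gamma}$ in case (ii); plugging these into the bias and variance estimates above yields the claimed excess-risk rates together with the stated high-probability exponents.

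The main obstacle is the empirical variance control for the critical block $k=p+1$ in the regime $\gamma<2(p+1)$: there the effective dimension $N(d,p+1)\asymp d^{p+1}$ is comparable to or even exceeds $n$, so the empirical kernel matrix is not spectrally close to its expectation and the textbook Mendelson estimate is too lossy. The paper's ``nontrivial technical observation'' is presumably an algebraic cancellation that lets the bias bound $\mu_{p+1}$ absorb this empirical fluctuation up to at most one $\log n$ factor; identifying and verifying that cancellation is where I expect the bulk of the technical work to go, and it is also what drives the phase transition between cases (ii) and (iii) as $\gamma$ crosses each half-integer.
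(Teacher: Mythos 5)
Your plan founders on a factual point that is load-bearing for cases (ii) and (iii): you assert that the data-driven stopping level $\widehat{T}^{-1}=\widehat{\varepsilon}_n^{2}$ "would be pinned down \ldots to a level of order $\mu_{p+1}$ in case (iii) and $d^{p-\gamma}$ in case (ii)." It is not. Because $\mu_k N(d,k)\asymp 1$ for $k\le p+1$, the population Mendelson complexity satisfies $\varepsilon_n^2\asymp \mu_p\sqrt{d^{2p}/n}\asymp n^{-1/2}$ for \emph{every} $\gamma\in[2p,2p+2)$ (Lemma \ref{lemma:inner_mendelson_interval_control}), and $\widehat{\varepsilon}_n\asymp\varepsilon_n$ (Lemma \ref{lemma:bound_empirical_and_expected_mandelson_complexities}); the paper even remarks that $\widehat{T}=\Theta(n^{1/2})$ independently of $\gamma$. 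The improvement over $n^{-1/2}$ therefore cannot come from a finer stopping level, and your own decomposition breaks if you use the level you propose: in case (iii), taking $T^{-1}\asymp\mu_{p+1}$ makes the degree-$(p+1)$ block contribute variance of order $N(d,p+1)/n\asymp d^{p+1-\gamma}$, which exceeds the target $d^{-(p+1)}$ whenever $\gamma<2p+2$. With the correct $T\asymp n^{1/2}$ this block is still in the "bias-dominated" regime ($T\mu_{p+1}\ll1$) and its variance contribution collapses to $d^{-(p+1)}$ — that cancellation between the block multiplicity $d^{p+1}$ and the squared filter $(T\mu_{p+1})^2$ at the \emph{unchanged} stopping time is precisely the mechanism, not an adaptive $\widehat{T}$.

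The second gap is the empirical-process step. The estimator evolves with the empirical matrix $K(\bm X,\bm X)/n$, and for the critical and supercritical harmonic blocks ($N(d,k)\gtrsim n$, i.e.\ $k\ge p+1$) no matrix-Bernstein argument can make the empirical Gram concentrate around the population operator; what is true, and what the paper proves (Lemma \ref{condition_inner_3}, via the harmonic-analytic RMT inputs of Ghorbani et al.\ and the Marchenko--Pastur law), is an \emph{empirical eigenvalue gap} $\widehat{\lambda}_{N(p)+1}<4\mu_{p+1}<\mu_p/4<\widehat{\lambda}_{N(p)}$, so that $\widehat{T}^{-1}\asymp n^{-1/2}$ lands inside the gap. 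This feeds the sharpened in-sample bounds $\mathbf{B}_{\widehat T}^2\le (\widehat{T}^{2}\widehat{\lambda}_{N(p)})^{-1}+\widehat{\lambda}_{N(p)+1}$ and $\mathbf{V}_{\widehat T}\lesssim N(p)/n+\widehat{T}^{2}\widehat{\lambda}_{N(p)+1}/n+\delta_2$ (Lemmas \ref{lemma_B_t:thm:empirical_loss_modified}--\ref{lemma_V_t:thm:empirical_loss_modified}), and the passage from $\|\cdot\|_n$ to $\|\cdot\|_{L^2}$ is done by a covering argument at scale $\bar{\varepsilon}_n$ together with Lemma \ref{lemma:li_a.4}; the $\log n$ in case (ii) enters through the metric-entropy estimate $\bar{\varepsilon}_n^2\lesssim d^{p-\gamma}\log d$ (Remark \ref{remark_control_metric_case_1_inner}), not through a concentration penalty in a matrix-Bernstein step as you conjecture. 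Until you replace the adaptive-stopping-level claim by the fixed $T\asymp n^{1/2}$ analysis and supply an argument of the eigenvalue-gap type for the empirical spectrum (plus the entropy-based norm transfer), the proposal does not yield the stated rates.
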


{
We notice that the above periodic behavior is very much linked to the spectral structure of inner product kernels for uniform data on a large-dimensional sphere.
Recall that from Lemma \ref{lemma:inner_edr}, eigenvalues of order $d^{-k}$ is of  multiplicity $d^{k}$, $k=0, \cdots, p+1$.
Such a strong block structure of the spectrum makes both the bias and variance terms of the (empirical) excess risk decrease with gaps when $\gamma$ increases.
(see, e.g., Lemma \ref{lemma_B_t:thm:empirical_loss} and \ref{lemma_V_t:thm:empirical_loss},
and their modified version Lemma \ref{lemma_B_t:thm:empirical_loss_modified} and \ref{lemma_V_t:thm:empirical_loss_modified}
)
}

{
\begin{remark}
    Denote $\mathrm{P}_{>0}$ as the projection onto
    linear space of spherical harmonics with degree $>0$.
    In the region $n \ll d$ (i.e. $\gamma \in(0,1)$), since $\left\|\mathrm{P}_{>0} f_*\right\|_{L^2}^2 \leq \mu_1\left\|f_*\right\|_{\mathcal{H}}^2 \lesssim 1 / d$, one is essentially fitting a trivial predictor (a constant), which has excess risk indeed $O(1 / n)$. Hence, such a region is in fact not very interesting, though the convergence rate is faster (as a function of $n$ ).
\end{remark}

\begin{remark}
    In Theorem \ref{thm:near_upper_inner_large_d}, a data-driven optimal stopping time is given, and we show that kernel regression stopped at $\widehat{T}$ is minimax optimal for any $\gamma>0$. 
    Moreover, the order of $\widehat{T}$ is $\Theta(n^{1/2})$ (see, e.g., Lemma \ref{lemma:bound_empirical_and_expected_mandelson_complexities} and Lemma \ref{lemma:inner_mendelson_interval_control}), which is independent to $\gamma$.
\end{remark}
}


Figure \ref{fig:1} illustrates the results obtained by Theorem \ref{thm:near_lower_inner_large_d} and Theorem \ref{thm:near_upper_inner_large_d}. From these figures, we observe some interesting phenomena. 

\paragraph*{Multiple descent behavior}
The curve in figure \ref{fig:1} (a) shows how the convergence rate (in terms of the sample size $n$) of the optimal excess risk of kernel regression fluctuates as $\gamma > 0$ grows.
We find that this curve is non-monotone and exhibits the following multiple descent behavior: 
this curve achieves its peaks at $\gamma=2,4,6,\cdots$ and its isolated valleys at $\gamma=3,5,7,\cdots$.  A similar multiple descent phenomenon has been reported in \cite{liang2020multiple}, where they consider the excess risk of the kernel interpolation in large-dimensional settings. Though they only provided the upper bound of the excess risk of kernel interpolation, their results and our observation strongly suggest that there might be a significant difference between kernel regression in large dimensional data and fixed dimensional data.

\vspace{5pt}

Figure \ref{fig:1} (b) provides an alternative representation of our results, 
and the curve in it shows how the convergence rate (in terms of the dimension $d$) of the optimal excess risk of kernel regression fluctuates as $\gamma > 0$ grows.
From Figure \ref{fig:1} (b), we can find that the curve of this convergence rate decreases when the scaling $\gamma$ (recall that we have $n = d^{\gamma}$) increases, indicating that the performance of kernel regression becomes better when the sample size $n$ grows.
Moreover, from Figure \ref{fig:1} (b), we observe another interesting phenomenon:

\paragraph*{Periodic plateau behavior}
In Figure \ref{fig:1} (b),
when $\gamma$ varies within certain specific ranges, $\zeta$, the vertical axis in Figure \ref{fig:1} (b), does not change. 
In other words, if we fix a large dimension $d$ and increase $\gamma$ (or equivalently,  increase the sample size $n$), the optimal rate of excess risk in kernel regression stays invariant in certain ranges (e.g., $\gamma\in (1,2)\cup(3,4)\cup(5,6)\cup(7,8)\cdots$).
This `periodic plateau behavior' was numerically reported in Figure 5 (b) in \cite{canatar2021spectral}: when $\gamma$ varies within certain specific ranges, the excess risk of kernel regression decays very slowly.

\vspace{5pt}

Therefore, in order to improve the rate of excess risk, one has to increase the sample size above a certain threshold.
For example,  when $d = 10$, even when the sample size $n$ ranges from ten million ($10^{7}$) to hundred million ($10^{8}$), the convergence speed of excess risk stays invariant, and is proportional to $10^{-4}$.

\section{Applications in Wide Neural Network}\label{sec:preliminary}

In this section, we apply our results to large-dimensional neural networks based on recent work (\cite{li2023statistical}). Most of the notations in this section follow those in \cite{li2023statistical}.

Let us consider the square loss function
\begin{align}
    \mathcal{L}=\frac{1}{2n}\sum_{j=1}^{n}(Y_{j}-f(X_{j}; {\bm{\theta}}))^{2},
\end{align}
where $
      f(\boldsymbol{x}; {\bm{\theta}})$  is a {\it ReLU neural network} with $L\geq 2$ hidden layers defined as in Section 3.1 in \cite{li2023statistical}, and we use $\boldsymbol{\theta}$ to represent the collection of all parameters flatten as a column vector. Furthermore, assume for simplicity that the widths of all layers of the neural network equal $m$.


The loss function $\calL$ induced  
a gradient flow on  $\mathcal{F}^{m}$, the space  of all the two-layer neural networks with width $m$,  which is given by
\begin{equation}\label{eqn:nn_fit}
    \frac{\mathsf{d}}{\mathsf{d} t}f(\boldsymbol{x}; {\bm{\theta}(t)}) = -\frac{1}{n} \nabla_{\bm{\theta}(t)} f(\boldsymbol{x}; {\bm{\theta}(t)}) \nabla_{\bm{\theta}(t)} f(\boldsymbol{X}; {\bm{\theta}(t)})^{\top} (f(\boldsymbol{X}; {\bm{\theta}(t)})-\bm{y}).
\end{equation}
If we introduce a time-varying kernel function
$K_{\bm{\theta}(t)}^{m}(\bm{x},\bm{x}'):=\nabla_{\bm{\theta}(t)}f(\boldsymbol{x}; {\bm{\theta}(t)})\nabla_{\bm{\theta}(t)}f(\boldsymbol{x}^{\prime}; {\bm{\theta}(t)})$, 
which is referred to as the {\it neural network kernel} (NNK) in this paper, the gradient flow on $\calF^{m}$ can be written as 
\begin{align*}
    \frac{\mathsf{d}}{\mathsf{d} t}f(\boldsymbol{x}; {\bm{\theta}(t)}) = -\frac{1}{n} K_{\bm{\theta}(t)}^{m}(\bm{x},\bm{X})(f(\boldsymbol{X}; {\bm{\theta}(t)})-\bm{y}).
\end{align*}

The celebrated work \cite{Jacot_NTK_2018} observed that as $m\rightarrow\infty$, the neural network kernel  $K_{\bm{\theta}(t)}^{m}(\bm{x},\bm{x}')$
point-wisely converges to a time-invariant kernel $K^{\NTK}(\bm{x},\bm{x'})$ which is now referred to as the neural tangent kernel (NTK) in literature(see, e.g., \cite{Jacot_NTK_2018, Bietti_deep_2021}). Thus, they considered the regressor ${f}_{t}^{\NTK}$, which is also known as the estimator produced by the early-stopping kernel regression with NTK, given by the following gradient flow
\begin{equation}\label{eqn:ntk:evo_flow}
    \begin{aligned}
    \frac{\mathsf{d}}{\mathsf{d} t}{f}_{t}^{\NTK}(\bm{x})&=-\frac{1}{n}K^{\NTK}(\bm{x},\bm{X})(f_{t}^{\NTK}(\bm{X})-\bm{y}).
    \end{aligned}
\end{equation}
In the remainder of this article, we will abbreviate early-stopping kernel regression with NTK to `NTK regression' where it will not cause confusion.

Suppose that $f^{\NTK}_{0}(\bm{x})=0$. Recently, \cite{jianfa2022generalization, li2023statistical} demonstrated that, with the "mirror initialization" such that $f(\boldsymbol{X}; {\bm{\theta}(0)})=0$ \cite{Chizat_lazy_2019, Hu_simple_2020}, the excess risk of a wide multi-layer neural network uniformly converges to the excess risk of NTK regression for any values of $d$ and $n$. The following proposition reiterates their findings.

\begin{proposition}[A direct result of Lemma 12 in \cite{li2023statistical}]\label{thm:nn_uniform_convergence}
Suppose that 
$\mathcal X$ is a bounded subset of $\mathbb R^{d+1}$.
If we further assume that $f^{\NTK}_{0}=0$ and the neural network is initialized symmetrically, then for any $\epsilon, \delta>0$,  there exists $M$ such that for any $m \geq M$, we have
\begin{equation}\label{convergence:excess:risk}
\begin{aligned}
\sup_{t \geq 0} \left|
\left\|f(\boldsymbol{X}; {\bm{\theta}(t)}) - f_{\star}\right\|_{L^2} - \left\|{f}_{t}^{\NTK} - f_{\star}\right\|_{L^2}
\right| \leq \epsilon.
\end{aligned}
\end{equation}
\end{proposition}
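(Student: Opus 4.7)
The plan is to reduce the claim about excess risks to a claim about $L^2$-closeness of the two predictors and then invoke the uniform-in-time convergence result of \cite{li2023statistical}. Specifically, by the reverse triangle inequality applied to $\|\cdot\|_{L^2}$,
\begin{equation*}
\left|\,\|f(\cdot; \bm{\theta}(t)) - f_{\star}\|_{L^2} - \|{f}_{t}^{\NTK} - f_{\star}\|_{L^2}\,\right|
\;\leq\; \|f(\cdot; \bm{\theta}(t)) - {f}_{t}^{\NTK}\|_{L^2}.
\end{equation*}
Taking the supremum over $t \geq 0$, it therefore suffices to prove that for every $\epsilon, \delta > 0$ there exists $M$ so that for all $m \geq M$,
\begin{equation*}
\sup_{t \geq 0}\,\|f(\cdot; \bm{\theta}(t)) - {f}_{t}^{\NTK}\|_{L^2} \;\leq\; \epsilon,
\end{equation*}
which is precisely the content of Lemma~12 in \cite{li2023statistical}.

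For the reader's benefit I would sketch why such a uniform-in-time bound holds. Both $f(\cdot; \bm{\theta}(t))$ and $f_t^{\NTK}$ start at $0$ because of the mirror initialization and the assumption $f_0^{\NTK}=0$. They satisfy gradient flows driven by the time-varying NNK $K^m_{\bm{\theta}(t)}$ and the constant NTK $K^{\NTK}$ respectively. Subtracting the two ODEs and viewing the difference $g_t := f(\cdot; \bm{\theta}(t)) - f_t^{\NTK}$ as the solution to a linear ODE perturbed by $(K^{\NTK} - K^m_{\bm{\theta}(t)})(f_t^{\NTK}(\bm X) - \bm y)/n$, one can invoke a Duhamel/Gronwall estimate and the uniform boundedness of both trajectories (the labels $\bm y$ are bounded and the NTK flow is non-expansive in its residual). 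The missing ingredient is the uniform kernel approximation
\begin{equation*}
\sup_{t \geq 0}\; \sup_{\bm x, \bm x' \in \calX}\, \bigl|K^m_{\bm{\theta}(t)}(\bm x, \bm x') - K^{\NTK}(\bm x, \bm x')\bigr| \;\longrightarrow\; 0 \quad \text{as } m \to \infty,
\end{equation*}
which is established in \cite{jianfa2022generalization, li2023statistical} by combining the initialization convergence $K^m_{\bm{\theta}(0)} \to K^{\NTK}$ with a lazy-training argument showing that $\bm{\theta}(t)$ stays in a small neighborhood of $\bm{\theta}(0)$ uniformly over $t$, so that $K^m_{\bm{\theta}(t)}$ stays close to $K^m_{\bm{\theta}(0)}$.

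The hard part is precisely this uniform-in-time control of the kernel drift: naive Gronwall bounds yield a prefactor that grows with $t$, so one must exploit the contractive structure of the NTK flow on $\bm X$ (the residual $f_t^{\NTK}(\bm X) - \bm y$ decays exponentially at rate $\lambda_{\min}(K^{\NTK}(\bm X, \bm X))/n$) to close the argument for all $t \geq 0$. Since \cite{li2023statistical} has already carried out this analysis carefully for the multi-layer ReLU architecture under the mirror/symmetric initialization, the present proof only needs to quote Lemma~12 there and chain it with the reverse triangle inequality displayed above.
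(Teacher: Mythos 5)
Your proposal is correct and matches the paper's treatment: the paper gives no separate argument for this proposition, treating it exactly as you do — a reverse-triangle-inequality reduction of the difference of excess risks to $\sup_{t\geq 0}\|f(\cdot;\bm{\theta}(t))-f_t^{\NTK}\|_{L^2}$, which is then supplied directly by Lemma~12 of \cite{li2023statistical} under the symmetric (mirror) initialization and $f_0^{\NTK}=0$. Your additional sketch of why that lemma holds (lazy training plus uniform NNK-to-NTK convergence) is background the paper also only cites, not reproves; the only cosmetic point is that the $\delta$ in the statement refers to the probability over the random initialization in Lemma~12, which your write-up leaves implicit.
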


Thanks to the Proposition \ref{thm:nn_uniform_convergence}, we can focus on the generalization ability of the kernel regression with respect to NTK in large dimensions instead of that of wide neural networks.



It can be shown that when the number of hidden layers $L \geq 2$, $K^{\NTK}$ satisfies Assumption \ref{assu:trace_class} and \ref{assu:coef_of_inner_prod_kernel} (see, e.g., Proposition \ref{ntk_satisfies_assumption_1} and Proposition 9 in \cite{li2023statistical}).
Therefore, an application of Proposition \ref{thm:nn_uniform_convergence} and Theorem \ref{thm:near_upper_inner_large_d} provides an upper bound and minimax lower bound on the excess risk of NTK regression in large dimensions.

\begin{theorem}
\label{thm:near_upper_ntk_large_d}
Suppose that $\calH^{\NTK}$ is an RKHS associated with the neural tangent kernel $K^{\NTK}$ defined on $\mathbb S^{d}$. 
    Let $f(\boldsymbol{X}; {\bm{\theta}(\widehat{T})})$ be the function defined in \eqref{eqn:ntk:evo_flow} where $\widehat{T}^{-1}=\widehat{\varepsilon}_n^{2}$ defined in (\ref{eqn:def_empirical_mendelson_complexity}) and $K=K^{\NTK}$.
    Let $\gamma > 0$ be a fixed real number and $p = \lfloor \gamma/2 \rfloor$.
    Suppose further that Assumption ~\ref{assu:H_norm} and ~\ref{assump_asymptotic} hold with $\calH = \calH^{\NTK}$.
Then, we have the following statements:
\begin{itemize}

    \item[(i)] If $\gamma \in \{2, 4, 6, \cdots\}$, then, there exist constants $\mathfrak{C}$ and $\mathfrak{C}_{i}$, where $i=1,2,3$, such that for any $d \geq \mathfrak{C}$, when $m$ is sufficiently large, we have
\begin{equation}
\begin{aligned}
\left\|f(\boldsymbol{X}; {\bm{\theta}(\widehat{T})}) - f_{\star}\right\|_{L^2}^2
\leq \mathfrak{C}_1 n^{-\frac{1}{2}},
\end{aligned}
\end{equation}
holds with probability at least $1 -\mathfrak{C}_2\exp\{-\mathfrak{C}_3 n^{1/2} \}$.

    \item[(ii)] If $\gamma \in \bigcup_{j=0}^{\infty} (2j, 2j+1]$, then, for any $\delta>0$, there exist constants $\mathfrak{C}$ and $\mathfrak{C}_{i}$, where $i=1,2,3$, only depending on $\gamma$, $\delta$, $c_1$, and $c_2$, such that for any $d \geq \mathfrak{C}$, when $m$ is sufficiently large,  we have 
\begin{equation}\label{thm:new_upper_2}
\begin{aligned}
\left\|f(\boldsymbol{X}; {\bm{\theta}(\widehat{T})}) - f_{\star}\right\|_{L^2}^2
\leq \mathfrak{C}_1 n^{-\frac{\gamma - p}{\gamma}} \log(n),
\end{aligned}
\end{equation}
holds with probability at least $1-\delta-\mathfrak{C}_2\exp\{-\mathfrak{C}_3 n^{p/\gamma} \log(n)\}$.

    \item[(iii)] If $\gamma \in \bigcup_{j=0}^{\infty} (2j+1, 2j+2)$, then, for any $\delta>0$, there exist constants $\mathfrak{C}$ and $\mathfrak{C}_{i}$, where $i=1,2,3$, only depending on $\gamma$, $\delta$, $c_1$, and $c_2$, such that for any $d \geq \mathfrak{C}$, when $m$ is sufficiently large, we have  
\begin{equation}\label{thm:new_upper_3}
\begin{aligned}
\left\|f(\boldsymbol{X}; {\bm{\theta}(\widehat{T})}) - f_{\star}\right\|_{L^2}^2
\leq \mathfrak{C}_1 n^{-\frac{p+1}{\gamma}},
\end{aligned}
\end{equation}
holds with probability at least $1-\delta-\mathfrak{C}_2\exp\{-\mathfrak{C}_3 n^{1 - (p+1)/\gamma}\}$.

\end{itemize}   
\end{theorem}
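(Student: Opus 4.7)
The plan is to deduce the theorem as a corollary of Theorem \ref{thm:near_upper_inner_large_d} applied to the NTK, together with the uniform approximation result of Proposition \ref{thm:nn_uniform_convergence}. The bridge is that, on $\mathbb{S}^{d}$ with uniform marginal, the NTK of an $L$-hidden-layer ($L\geq 2$) ReLU network is an inner product kernel of exactly the form covered by Assumptions \ref{assu:trace_class} and \ref{assu:coef_of_inner_prod_kernel}. This is referenced in the excerpt as Proposition \ref{ntk_satisfies_assumption_1} and Proposition 9 of \cite{li2023statistical}: $K^{\NTK}$ is continuous with bounded diagonal, and its profile function $\Phi^{\NTK}$ admits a Taylor expansion on $[-1,1]$ with strictly positive coefficients $a_{j}>0$. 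So I just need to cite those to instantiate Theorem \ref{thm:near_upper_inner_large_d} with $K^{\inner}=K^{\NTK}$ and $\calH^{\inner}=\calH^{\NTK}$.

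First, apply Theorem \ref{thm:near_upper_inner_large_d} verbatim with $K=K^{\NTK}$. For each of the three ranges of $\gamma$ one obtains, with the prescribed probability, a bound of the form
\begin{equation}
\left\|{f}_{\widehat{T}}^{\NTK} - f_{\star}\right\|_{L^2}^2 \leq \mathfrak{C}_1 R(n,\gamma),
\end{equation}
where $R(n,\gamma)$ equals $n^{-1/2}$, $n^{-(\gamma-p)/\gamma}\log n$, or $n^{-(p+1)/\gamma}$ respectively, and the constants depend only on $\gamma,\delta,c_{1},c_{2}$. Note that the stopping time $\widehat{T}=\widehat{\varepsilon}_n^{-2}$ is data-driven and identical to the one appearing in the NTK gradient flow \eqref{eqn:ntk:evo_flow}.

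Next, transfer the bound from ${f}_{\widehat{T}}^{\NTK}$ to the neural-network predictor $f(\,\cdot\,;{\bm{\theta}(\widehat{T})})$. Since the symmetric ``mirror'' initialization gives $f(\,\cdot\,;{\bm{\theta}(0)})=0=f^{\NTK}_{0}$ and the domain $\mathbb{S}^{d}$ is bounded, Proposition \ref{thm:nn_uniform_convergence} applies: for any $\varepsilon>0$ there is $M$ such that, for all $m\geq M$,
\begin{equation}
\sup_{t\geq 0}\Bigl|\|f(\,\cdot\,;{\bm{\theta}(t)}) - f_{\star}\|_{L^2} - \|{f}_{t}^{\NTK} - f_{\star}\|_{L^2}\Bigr| \leq \varepsilon.
\end{equation}
Specializing to $t=\widehat{T}$ and choosing $\varepsilon$ of smaller order than $\sqrt{R(n,\gamma)}$ (e.g.\ $\varepsilon = R(n,\gamma)$), the triangle inequality and squaring yield $\|f(\,\cdot\,;{\bm{\theta}(\widehat{T})}) - f_{\star}\|_{L^2}^2 \leq (1+o(1))\mathfrak{C}_1 R(n,\gamma)$, which is absorbed into the constant $\mathfrak{C}_1$. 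The probability is unchanged, since Proposition \ref{thm:nn_uniform_convergence} is a deterministic-$m$ statement conditional on the data once $m$ is large enough.

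The only subtle point I want to flag is that $\widehat{T}$ is random (it depends on the empirical Mendelson complexity), so I need the uniform-in-$t$ form of the approximation rather than a pointwise one; the supremum over $t\geq 0$ in Proposition \ref{thm:nn_uniform_convergence} is exactly what handles this and is the reason the proposition is stated that way. With this in hand, combining the three cases of Theorem \ref{thm:near_upper_inner_large_d} with the uniform approximation completes the proof of parts (i)--(iii). No new probabilistic arguments or spectral estimates are needed; the entire content has been localized to (a) NTK satisfies the inner-product-kernel assumptions and (b) wide networks track NTK regression uniformly in time.
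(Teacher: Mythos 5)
Your proposal is correct and follows the same route as the paper: verify via Proposition \ref{ntk_satisfies_assumption_1} and Proposition 9 of \cite{li2023statistical} that $K^{\NTK}$ (with $L\geq 2$ hidden layers) satisfies Assumptions \ref{assu:trace_class} and \ref{assu:coef_of_inner_prod_kernel}, apply Theorem \ref{thm:near_upper_inner_large_d} with $K=K^{\NTK}$, and transfer the bound to the wide network through the uniform-in-$t$ convergence of Proposition \ref{thm:nn_uniform_convergence}, which—as you rightly note—is what handles the randomness of $\widehat{T}$.
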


\section{Bounds for large dimensional kernel regression}\label{sec:main_results}

{
In this section, we present technical results that build upon the findings discussed in Section \ref{section_4_ntk_certain_asymp_frameworks}. These results pertain to general kernel learning rate bounds and are applicable to a continuous kernel $K$ defined on a compact space $\mathcal{X}$ (not necessarily $\mathbb{S}^d$ ).
We believe these results may hold independent interest.
}


We first introduce the (population and empirical) Mendelson complexity, the key quantities in determining the minimax rate of regression over $\calB$.

\begin{definition}[Mendelson complexity]\label{def:pop_men_complexity}
Suppose that $K$ is a kernel function satisfying the Assumption \ref{assu:trace_class}, we then  introduce:

\begin{itemize}

    \item[i)]({\it Population Mendelson complexity}) Let $\lambda_i$'s be the eigenvalues of $K$ given in (\ref{eqn:mercer_decomp}) and 
$\mathcal{R}_{K}(\varepsilon):=\left[\frac{1}{n} \sum_{j=1}^{\infty} \min \left\{\lambda_j, \varepsilon^2\right\}\right]^{1 / 2}
$.
The population Mendelson complexity is given by
\begin{equation}\label{eqn:def_population_mendelson_complexity}
 \begin{aligned}
{\varepsilon}_n :=\arg \min _{\varepsilon}\left\{{\mathcal{R}}_{{K}}(\varepsilon) \leq \varepsilon^2 /(2e \sigma)\right\}.
\end{aligned}   
\end{equation}

\item[ii)]({\it Empirical Mendelson complexity})
Let $\widehat \lambda_i$'s be the eigenvalues of $\frac{1}{n} K(\bm X,\bm X)$ and 
$\widehat R_{K}(\varepsilon) =\left[\frac{1}{n}\sum_{j=1}^{n} \min\{\widehat \lambda_{j},\varepsilon^{2}\}\right]^{1/2}$.
The empirical Mendelson complexity is given by
\begin{equation}\label{eqn:def_empirical_mendelson_complexity}
 \begin{aligned}
\widehat{\varepsilon}_n :=\arg \min _{\varepsilon}\left\{\widehat{\mathcal{R}}_{{K}}(\varepsilon) \leq \varepsilon^2 /(2e \sigma)\right\}.
\end{aligned}   
\end{equation}

\end{itemize}
\end{definition}

{
\begin{remark}
    From the monotony of $R_{K}(\cdot)$ and $\widehat R_{K}(\cdot)$, one can show the existence and uniqueness of
    ${\varepsilon}_n$ and $\widehat{\varepsilon}_n$ (also refer to \cite{raskutti2014early}).
\end{remark}
}

\vspace{4mm}
\noindent {\it Upper bound of the excess risk of kernel regression.} \quad The Mendelson complexity is closely related to the upper bound of the excess risk of  kernel regression.


\begin{theorem}[Upper bound]
\label{theorem:restate_norm_diff}
Suppose that Assumptions ~\ref{assu:trace_class} and~\ref{assu:H_norm} hold.
Let $f_{\widehat{T}}$ be the function defined in \eqref{solution:gradient:flow} where $\widehat{T}=1/\widehat{\varepsilon}_n^{2}$.  
Suppose for any absolute constant $C$, there exists a constant $\mathfrak{C}$, such that for any $n \geq \mathfrak{C}$, we have $n\varepsilon_n^{2} \geq C$.
  Then
there exist absolute constants $C_1$, $C_2$, and $C_3$, and a constant $\mathfrak{C}_0$, such that for any $n \geq \mathfrak{C}_0$, we have
\begin{equation}
\begin{aligned}
\left\|{f}_{\widehat{T}} - f_{\star}\right\|_{L^2}^2
 \leq C_1 \varepsilon_n^2,
\end{aligned}
\end{equation}
with probability at least $1-C_2\exp \left\{ - C_3 n  \varepsilon_n^2  \right\}$.
\end{theorem}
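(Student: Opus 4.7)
The plan is to follow a bias--variance decomposition of the gradient-flow estimator combined with concentration between empirical and population Mendelson complexities, broadly in the spirit of \cite{raskutti2014early}. Using the closed-form solution \eqref{solution:gradient:flow}, I would diagonalize $\tfrac{1}{n}K(\bm X, \bm X)$ in its empirical eigenbasis with eigenvalues $\widehat\lambda_j$ and rewrite the residual $f_{\widehat T}(\bm X) - f_\star(\bm X)$ as the sum of a \emph{bias} term, whose $j$-th coordinate is $e^{-\widehat\lambda_j \widehat T}$ applied to the projection of $f_\star$ onto that eigenvector, and a \emph{variance} term, whose $j$-th coordinate is $(1-e^{-\widehat\lambda_j\widehat T})/\widehat\lambda_j$ times the corresponding noise coordinate.

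First, I would control the empirical $L^2$ excess risk $\|f_{\widehat T} - f_\star\|_n^2 := \tfrac{1}{n}\|f_{\widehat T}(\bm X) - f_\star(\bm X)\|_2^2$. Using $\|f_\star\|_\calH \le 1$, the bias contribution reduces to $\sup_j \widehat\lambda_j e^{-2\widehat\lambda_j\widehat T}$ times the RKHS norm, which is of order $\widehat T^{-1}=\widehat\varepsilon_n^2$. The variance contribution, after a sub-Gaussian concentration bound for the Gaussian noise, is controlled by $\tfrac{\sigma^2}{n}\sum_j \min\{(\widehat\lambda_j\widehat T)^2, 1\}$; applying $(1-e^{-x})^2 \le \min\{x^2,1\}$ and the defining inequality $\widehat{\mathcal R}_K(\widehat\varepsilon_n)\le \widehat\varepsilon_n^2/(2e\sigma)$ from \eqref{eqn:def_empirical_mendelson_complexity} pins this also at $O(\widehat\varepsilon_n^2)$. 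With the prescribed stopping time $\widehat T = 1/\widehat\varepsilon_n^2$, this produces $\|f_{\widehat T}-f_\star\|_n^2 \le C\widehat\varepsilon_n^2$ on an event of probability at least $1-C_2\exp(-C_3 n\widehat\varepsilon_n^2)$.

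Second, I would transfer this empirical-norm bound into the population norm. One can show a deterministic RKHS norm bound $\|f_{\widehat T}\|_\calH \lesssim 1$ from the same spectral representation, so that $f_{\widehat T}-f_\star$ lies in a fixed multiple of the unit ball of $\calH$. On such a localized class, a standard Talagrand-type uniform concentration inequality over kernel ellipsoids (with critical radius governed precisely by the Mendelson complexity) gives $\|f\|_{L^2}^2 \le 2\|f\|_n^2 + C\varepsilon_n^2$ uniformly. Composing with the empirical bound above upgrades the conclusion to $\|f_{\widehat T}-f_\star\|_{L^2}^2 \lesssim \widehat\varepsilon_n^2 + \varepsilon_n^2$.

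Finally, I would replace $\widehat\varepsilon_n^2$ by the population $\varepsilon_n^2$. The comparison $\widehat\varepsilon_n \asymp \varepsilon_n$ with high probability follows from applying Bernstein/Koltchinskii--Lounici concentration to the truncated spectral sums $\sum_j \min\{\lambda_j,\varepsilon^2\}$ that define $\mathcal R_K$, evaluated at $\varepsilon$ of order $\varepsilon_n$; the hypothesis that $n\varepsilon_n^2$ can be made as large as one wishes ensures the deviation is a constant fraction of the target, which is what lets both directions of the comparison close. Putting the three pieces together yields $\|f_{\widehat T}-f_\star\|_{L^2}^2 \le C_1 \varepsilon_n^2$ with the claimed exponential tail. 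The main technical obstacle I anticipate is the final step: controlling both tail directions of $\widehat\varepsilon_n$ around $\varepsilon_n$ with a failure probability depending only on $n\varepsilon_n^2$ and with constants not depending on $d$ or on the decay rate of $\{\lambda_j\}$, since this dimension-free character is exactly what makes the bound usable in the large-dimensional applications of Sections \ref{section_4_ntk_certain_asymp_frameworks}--\ref{sec:more_intevals}.
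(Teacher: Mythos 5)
Your outline follows the same architecture as the paper's proof: a bias--variance decomposition of the gradient-flow estimator in the empirical eigenbasis to bound $\|f_{\widehat T}-f_\star\|_n^2$ by $\widehat\varepsilon_n^2$, a localized uniform comparison of $\|\cdot\|_n$ and $\|\cdot\|_{L^2}$ over a multiple of the unit ball $\calB$ with critical radius $\varepsilon_n$, and finally the two-sided comparison $\widehat\varepsilon_n\asymp\varepsilon_n$. However, two steps as you state them do not go through. First, there is no \emph{deterministic} bound $\|f_{\widehat T}\|_{\calH}\lesssim 1$: the estimator is a function of $\bm y=f_\star(\bm X)+\boldsymbol e$ with unbounded Gaussian noise, and from the spectral representation one only gets $\|f_{\widehat T}\|_{\calH}^2\le \tfrac{\widehat T}{n}\|\bm y\|_2^2$, which is random. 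The paper's Lemma \ref{lemma: empirical_loss_bound} makes exactly this point: the cross term and the quadratic noise term in $\|f_{\widehat T}-f_\star\|_{\calH}^2$ must be controlled by Gaussian linear-form and Hanson--Wright-type concentration, and the resulting bound $\|f_{\widehat T}-f_\star\|_{\calH}\le 3$ holds only with probability $1-C\exp\{-n(\min\{\widehat\varepsilon_n,\varepsilon_n\})^2\}$; this event must be intersected with the others, which your argument omits.

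Second, the step you yourself flag as the main obstacle --- showing $C_1\varepsilon_n\le\widehat\varepsilon_n\le C_2\varepsilon_n$ with failure probability $\exp\{-cn\varepsilon_n^2\}$ and \emph{absolute} constants --- is only gestured at, and the tool you propose (Bernstein/Koltchinskii--Lounici concentration of the truncated spectral sums $\tfrac1n\sum_j\min\{\widehat\lambda_j,\varepsilon^2\}$) is not obviously adequate: operator-norm concentration of the empirical kernel matrix comes with effective-rank-dependent deviations, so the resulting constants and tails would in general depend on $d$ and on the profile of $\{\lambda_j\}$, which is precisely what the theorem must avoid. The paper's Lemma \ref{lemma:bound_empirical_and_expected_mandelson_complexities} takes a different route entirely: it sandwiches both $\widehat{\mathcal R}_K$ and $\mathcal R_K$ between the corresponding localized Rademacher averages $\widehat{\mathcal Q}_n$ and $\mathcal Q_n$ (Lemma \ref{lemma:eqn:example_7_of_kol}, whose empirical half requires its own argument), uses Talagrand's concentration for convex Lipschitz functions of the Rademacher signs to replace averages by realizations (Lemma \ref{lemma:talagrand_inequ_rademacher}), and uses the localized norm-comparison event to pass between the $\|\cdot\|_n$-ball and the $\|\cdot\|_{L^2}$-ball in the suprema (Lemma \ref{lemma:relation_between_Z_and_hat_Z}); the hypothesis that $n\varepsilon_n^2$ can be taken larger than any absolute constant is used there to ensure $\varepsilon_n^2\ge 1/n$ so that the two-sided Mendelson comparison applies. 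Without either this machinery or a worked-out dimension-free substitute, your proposal leaves the decisive comparison unproved, so as written it has a genuine gap.
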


Similar results have been  claimed in \cite{raskutti2014early} for fixed $d$ ( see e.g., the Theorem 2 of \cite{raskutti2014early}),
 the contributions here is that we demonstrate that the constants $C_1$,
$C_2$, and $C_3$ are absolute constants. 
Thus, we could apply it to the large-dimensional scenario. 

\begin{remark}
Since $\varepsilon_n$ should be much slower than the typical parametric rate $n^{-1/2}$ \cite{Chen_Convergence_1988, Heckman_Spline_1986},
previous works have commonly assumed the existence of constants $\mathfrak{C}$ and $C$, such that for any $n \geq \mathfrak{C}$, we have $n\varepsilon_n^2 \geq C$ (e.g., \cite{raskutti2014early}).
However, most of these works implicitly assumed that $d$ is bounded and   $\{\lambda_{j}\}$ are polynomially decayed and ignored the dependence of the constant $\mathfrak{C}$ on $\{\lambda_{j}\}$ and $d$.
 Theorem \ref{theorem:restate_norm_diff} explicitly requires that $\mathfrak{C}$ only depends on $c_{1},c_{2}$, and $\gamma$.
\end{remark}

\noindent {\it Lower bound of the excess risk of kernel regression.}
Suppose that $(\calZ,d)$ is a topological space with a compatible loss function $d$, which are mappings from $ \calZ\times  \calZ$ to  $\bbR_{\geq 0}$ with $d(f, f)=0$ and $d(f, f^{\prime}) >0$ for $f \neq f^{\prime}$. 
We introduce the packing entropy and covering entropy below:

\begin{definition}[Packing entropy]
A finite set $N_{\varepsilon} \subset \mathcal Z$ is said to be an $\varepsilon$-packing set in $\mathcal Z$ with separation $\varepsilon>0$, if for any $f, f^{\prime} \in N_{\varepsilon}, f \neq f^{\prime}$, we have $d\left(f, f^{\prime}\right)>\varepsilon$. The logarithm of the maximum cardinality of $\varepsilon$-packing set is called the $\varepsilon$-packing entropy or Kolmogorov capacity of $\calZ$ with distance $d$ and is denoted by 
$M_{d}(\varepsilon,\calZ)$.
\end{definition}

\begin{definition}[Covering entropy]\label{def:covering_entropy}
A set $G_{\varepsilon} \subset \mathcal Z$ is said to be an $\varepsilon$-net for $\mathcal Z$ if for any $\tilde{f} \in \mathcal Z$, there exists an $f_0 \in G_{\varepsilon}$ such that $d(\tilde{f}, f_0) \leq \varepsilon$. The logarithm of the minimum cardinality of $\varepsilon$-net is called the $\varepsilon$-covering entropy of $\mathcal Z$ and is denoted by
$V_{d}(\varepsilon,\calZ)$.
\end{definition}

Let $M_{2}(\varepsilon,\calB)$ be the $\varepsilon$-packing entropy of $(\calB, d^2=\|\cdot\|_{L^2}^2)$ and $V_{2}(\varepsilon,\calB)$ be the $\varepsilon$-covering entropy of $(\calB, d^2=\|\cdot\|_{L^2}^2)$.
It is easy to verify that 
$
    M_{2}(2\varepsilon, \calB) \leq V_{2}(\varepsilon, \calB) \leq M_{2}(\varepsilon, \calB)
$ ( see, e.g., Lemma \ref{lemma_M_2_and_V_2} ).
If we further introduce
\begin{align}\label{def_calD}
    \calD=\left\{ \rho_{f}~\bigg\vert~ \mbox{ joint distribution of $(y,x$) where } x\sim \rho_{\calX}, y=f(x)+\epsilon, \epsilon\sim N(0,\sigma^{2}),
    f\in \calB \right\},
\end{align}
and let
 $V_{K}(\varepsilon,\calD)$ be the $\varepsilon$-covering entropy of $(\calD, d^2=\text{ KL divergence })$. Then it is easy to verify that $V_{2}(\varepsilon, \calB ) = V_{K}({\varepsilon}/{(\sqrt{2}\sigma)}, \calD)$ ( see, e.g., Lemma \ref{claim:d_K_and_d_2} ).

The following minimax lower bound is introduced in \cite{Yang_Density_1999}.




\begin{proposition}[Theorem 1 and Corollary 1 in \cite{Yang_Density_1999}]\label{lemma_yang_lower_bound}
Let $\bar{\varepsilon}_{n}$ and $\underline\varepsilon_{n}$ be given by
    $
        n\bar\varepsilon_n^2 = V_{K}(\bar\varepsilon_n, \mathcal \calD)$ and  $M_{2}(\underline\varepsilon_{n}, \calB)=4n\bar\varepsilon_n^2 +2\log2
    $.
Suppose further that $M_{2}(\varepsilon, \calB) \geq 2 \log 2$ for sufficiently small $\varepsilon$.  Then we have the following statements.
\begin{itemize}
    \item[i)]  For sufficiently large $n$, we have 
    $\underline\varepsilon_{n}<\infty$
   and  the minimax risk for estimating $f_{\star} \in \mathcal B$ satisfies
\begin{equation}\label{eqn:16}
\min _{\hat{f}} \max _{f_{\star} \in \mathcal B} 
\mathbb{E}_{(\bold{X}, \bold{y}) \sim \rho_{f_{\star}}^{\otimes n}}
\left\|\hat{f} - f_{\star}\right\|_{L^2}^2
\geq (1 / 8)\underline{\varepsilon}_{n}^2 ;
\end{equation}

\item[ii)] If the  \textit{richness condition} $\liminf _{\varepsilon \rightarrow 0} M_2(\alpha \varepsilon, \calB) / M_2(\varepsilon, \calB) = 1+\delta $ holds for some $0<\alpha<1$ and some $\delta>0$,  then we have
\begin{equation}\label{eqn_17_in_prop_3.7}
\mathfrak{c}_1 \bar{\varepsilon}_{n}^2 \leq
(1 / 8)\underline{\varepsilon}_{n}^2 \leq
\min _{\hat{f}} \max _{f_{\star} \in \mathcal B} 
\mathbb{E}_{(\bold{X}, \bold{y}) \sim \rho_{f_{\star}}^{\otimes n}}
\left\|\hat{f} - f_{\star}\right\|_{L^2}^2
\leq \mathfrak{c}_2 \bar{\varepsilon}_{n}^2,
\end{equation}
where $\mathfrak{c}_1$ 
and 
$\mathfrak{c}_2$ 
are constants only depending on $\alpha$ and $\delta$.

\end{itemize}
\end{proposition}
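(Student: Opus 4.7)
The plan is to follow the classical Fano-style argument of Yang and Barron, combining an $L^2$-packing of $\calB$ with a KL-covering of $\calD$ to control the mutual information of the $n$ observations.

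For part (i), I would first argue that $\underline{\varepsilon}_n$ is well defined for all sufficiently large $n$. Since $M_2(\varepsilon, \calB)$ is non-increasing in $\varepsilon$ and blows up as $\varepsilon \searrow 0$ (by the assumption $M_2(\varepsilon, \calB) \geq 2\log 2$ for small $\varepsilon$, together with total boundedness of $\calB$), and since $4n\bar\varepsilon_n^2 + 2\log 2$ is finite because $\bar\varepsilon_n < \infty$, the defining equation for $\underline\varepsilon_n$ admits a (smallest) finite solution. Next, extract a maximal $\underline\varepsilon_n$-packing $\{f_1,\dots,f_M\} \subset \calB$ with $\log M \geq M_2(\underline\varepsilon_n, \calB)$. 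Let $J$ be uniform on $[M]$ and let $Z_n = (X_i,Y_i)_{i=1}^n \sim \rho_{f_J}^{\otimes n}$. Any estimator $\hat f$ induces a test $\hat J$ by rounding to the nearest $f_j$; whenever $\hat J \neq J$, separation of the packing and the triangle inequality yield $\|\hat f - f_J\|_{L^2}^2 \geq \underline\varepsilon_n^2/4$. Fano's inequality then gives
\begin{equation*}
  \Prob(\hat J \neq J) \;\geq\; 1 - \frac{I(J; Z_n) + \log 2}{\log M}.
\end{equation*}

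The heart of the argument is bounding $I(J;Z_n)$ by the Yang--Barron KL-covering trick: take an $\bar\varepsilon_n$-net $\{Q_1,\dots,Q_N\}$ of $\calD$ in KL, with $\log N \leq V_K(\bar\varepsilon_n, \calD)$, and use the mixture $\bar Q = N^{-1}\sum_k Q_k^{\otimes n}$ as a reference. Then $I(J;Z_n) \leq \bbE_J \mathrm{KL}(\rho_{f_J}^{\otimes n}\,\|\,\bar Q) \leq \log N + n \bar\varepsilon_n^2 = 2n\bar\varepsilon_n^2$, where the last step uses $n\bar\varepsilon_n^2 = V_K(\bar\varepsilon_n,\calD)$ together with the tensorization of KL. Substituting $\log M = 4n\bar\varepsilon_n^2 + 2\log 2$ yields $\Prob(\hat J \neq J) \geq 1/2$, and Markov's inequality converts this into the minimax risk lower bound $(1/8)\underline\varepsilon_n^2$.

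For part (ii), the lower bound follows from (i) once we show $\underline\varepsilon_n \geq \mathfrak{c}_1'\bar\varepsilon_n$. Using the identity $V_K(\bar\varepsilon_n, \calD) = V_2(\sqrt{2}\sigma\bar\varepsilon_n, \calB) \leq M_2(\sqrt{2}\sigma\bar\varepsilon_n, \calB)$, the defining relation gives
\begin{equation*}
  M_2(\underline\varepsilon_n, \calB) \;=\; 4 n\bar\varepsilon_n^2 + 2\log 2 \;\leq\; 4 M_2(\sqrt{2}\sigma\bar\varepsilon_n, \calB) + 2\log 2.
\end{equation*}
Iterating the richness condition $k$ times yields $M_2(\alpha^k \varepsilon, \calB) \geq (1+\delta/2)^k M_2(\varepsilon,\calB)$ for small enough $\varepsilon$; choosing $k$ depending only on $(\alpha,\delta)$ so that $(1+\delta/2)^k \geq 5$ forces $\underline\varepsilon_n \geq \alpha^k \sqrt{2}\sigma\, \bar\varepsilon_n$, and part~(i) completes the argument with $\mathfrak{c}_1 = \alpha^{2k}\sigma^2/4$. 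For the matching upper bound, I would construct a sieve/aggregation estimator over an $\bar\varepsilon_n$-net of $\calB$ in $L^2$ (of log-cardinality $\asymp n\bar\varepsilon_n^2$), and invoke the classical sieve risk bound for Gaussian regression which yields expected squared $L^2$ risk of order $\bar\varepsilon_n^2 + (\log N)/n \asymp \bar\varepsilon_n^2$.

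The main obstacle is the information-theoretic step $I(J;Z_n) \lesssim n\bar\varepsilon_n^2$: a naive pairwise KL bound of Yang-style Fano is too weak here, and one really must use the KL-covering $\bar Q$ of $\calD$ to get the entropy-free $\log N$ contribution. The other subtlety is in part~(ii): the richness condition is an asymptotic statement, so one must ensure $n$ is large enough that $\bar\varepsilon_n$ is in the regime where $M_2(\alpha\varepsilon)/M_2(\varepsilon)$ already exceeds $1+\delta/2$, which is why the resulting constants $\mathfrak{c}_1, \mathfrak{c}_2$ depend only on $\alpha,\delta$ (and $\sigma$ through the KL-to-$L^2$ conversion).
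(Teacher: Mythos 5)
Your reconstruction is correct and takes essentially the same route the paper relies on: the proposition is imported from \cite{Yang_Density_1999} without an in-paper proof, and your Fano argument with the KL-covering mixture bound on $I(J;Z_n)$ for part (i), together with the iterated richness argument (forcing $\underline\varepsilon_n \gtrsim \bar\varepsilon_n$) and the net-based estimator for the upper bound in part (ii), is exactly the Yang--Barron argument that the paper itself re-uses in Lemma~\ref{thm_lower_ultimate_tech} and Proposition~\ref{prop:remark:rc_to_18}. The only cosmetic discrepancy is that your $\mathfrak{c}_1$ picks up a $\sigma^2$ factor from the KL-to-$L^2$ conversion, which the paper's constant conventions absorb since $\sigma$ is fixed.
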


{
\begin{remark}
    From the monotony of $V_{K}$ and $M_2$, one can show the existence and uniqueness of
    $\bar{\varepsilon}_n$ and $\underline{\varepsilon}_n$.
\end{remark}
}

If the richness condition holds, \cite{Yang_Density_1999} has shown that 
$\underline{\varepsilon}_{n} \geq 8 \mathfrak{c}_1 \bar\varepsilon_n$ and demonstrated that $\bar{\varepsilon}_{n}^{2}$ can be served as a minimax lower bound for several function classes.
The constant $\mathfrak c_{1}$ depends on $\delta$ and $\alpha$ will be very small provided that  $\delta$ is small enough ( referred to Lemma 4 in \cite{Yang_Density_1999}).
Unfortunately, if one plans to apply the Proposition \ref{lemma_yang_lower_bound} into the RKHS with large $d$, we have 
the following proposition 
showing that for the RKHS associated with inner product kernels, $\delta$ can be arbitrarily small when $d$ is large:

\begin{proposition}\label{prop:rc_violated_in_large_dimensions}
    Let $
    \calB=\{ f_{\star}\in \calH^{\inner}~\vert ~\|f_{\star}\|_{\mathcal H^{\inner}} \leq 1\}$,
where
$\calH^{\inner}$ is the RKHS associated with $K^{\inner}$.
    For any $0<\alpha<1$ and any $\delta>0$,
    there exists a sequence $\{\tilde{\varepsilon}_d\}_{d=1}^\infty$, such that $\liminf_{d \rightarrow \infty} \tilde{\varepsilon}_d = 0$, and we have
    $$
    \liminf _{d \rightarrow \infty} \frac{M_2(\alpha \tilde{\varepsilon}_d, \calB)}{M_2(\tilde{\varepsilon}_d, \calB)} \leq 1+\delta.
    $$
\end{proposition}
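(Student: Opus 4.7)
\textbf{Plan for Proposition~\ref{prop:rc_violated_in_large_dimensions}.} The intuition is that the block structure of the spectrum of $K^{\inner}$ (Lemma~\ref{lemma:inner_edr}: $\mu_k \asymp d^{-k}$ with multiplicity $N(d,k)\asymp d^k$) creates ``plateau'' scales on which the packing entropy of $\calB$ grows only logarithmically in $1/\varepsilon$. Multiplying $\varepsilon$ by a constant $\alpha$ then changes the entropy by only an $O(1/\log d)$ factor, which vanishes as $d\to\infty$. Concretely, we fix any integer $p\ge 1$ and any $\beta\in(0,1/2)$ (for concreteness $p=1$, $\beta=1/4$), set $\tilde\varepsilon_d := \sqrt{\mu_p}\,d^{-\beta}$, and aim to show $M_2(\alpha\tilde\varepsilon_d,\calB)/M_2(\tilde\varepsilon_d,\calB)\to 1$. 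Since $\tilde\varepsilon_d = \Theta(d^{-p/2-\beta})\to 0$ by Lemma~\ref{lemma:inner_edr}, the condition $\liminf\tilde\varepsilon_d=0$ is automatic.

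To lower bound $M_2(\tilde\varepsilon_d,\calB)$, we restrict to the level-$p$ spherical-harmonic subspace $V_p$ of dimension $N(d,p)$. A function $f\in V_p$ has $\|f\|_{\calH^{\inner}}\le 1$ exactly when $\|f\|_{L^2}\le \sqrt{\mu_p}$, so $\calB\cap V_p$ is the Euclidean ball of radius $\sqrt{\mu_p}$ in $V_p$. The volume-based packing bound for Euclidean balls then gives
\begin{equation*}
M_2(\tilde\varepsilon_d,\calB) \;\geq\; M_2(\tilde\varepsilon_d,\calB\cap V_p) \;\geq\; N(d,p)\log(\sqrt{\mu_p}/\tilde\varepsilon_d) \;=\; \beta\,N(d,p)\log d.
\end{equation*}

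To upper bound $M_2(\alpha\tilde\varepsilon_d,\calB)$, Lemma~\ref{lemma:monotone_of_eigenvalues_of_inner_product_kernels} yields $\sup_{f\in\calB}\|P_{>p}f\|_{L^2}^2 \le (\mathfrak{C}_2/\mathfrak{C}_1) d^{-1}\mu_p =: \tau_d^2$, hence $\tau_d/\tilde\varepsilon_d = O(d^{\beta-1/2})\to 0$. For any $(\alpha\tilde\varepsilon_d)$-packing $\{f_i\}\subset\calB$, Pythagoras gives $\|P_{\le p}(f_i-f_j)\|_{L^2}^2 \ge \|f_i-f_j\|_{L^2}^2 - 4\tau_d^2$, so $\{P_{\le p}f_i\}$ is a packing of the finite-dimensional ellipsoid $P_{\le p}\calB\subset V_{\le p}$ at scale $\varepsilon' := \sqrt{(\alpha\tilde\varepsilon_d)^2-4\tau_d^2} = (1-o(1))\alpha\tilde\varepsilon_d$. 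We then cover $P_{\le p}\calB$ by the product of block-wise Euclidean $\varepsilon_k$-covers of the balls $\{c_k\in\mathbb{R}^{N(d,k)}: \|c_k\|\le\sqrt{\mu_k}\}$, with $\varepsilon_k := (\varepsilon'/2)\sqrt{N(d,k)/D_p}$ and $D_p := \sum_{k=0}^p N(d,k)=\Theta(d^p)$, so that $\sum_k\varepsilon_k^2 = (\varepsilon'/2)^2$. Since a Euclidean ball of radius $r$ in $\mathbb{R}^N$ admits an $\eta$-cover of size $(3r/\eta)^N$, this yields
\begin{equation*}
M_2(\alpha\tilde\varepsilon_d,\calB) \;\leq\; \sum_{k=0}^{p} N(d,k)\,\log(\sqrt{\mu_k}/\varepsilon') \;+\; O(D_p) \;+\; O(d^{p-1}\log d).
\end{equation*}
The dominant $k=p$ term equals $N(d,p)\bigl[\beta\log d + \log(1/\alpha) + O(1)\bigr]$; all other contributions are $O(d^p) + O(d^{p-1}\log d) = o(N(d,p)\log d)$. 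Combining with the lower bound,
\begin{equation*}
\frac{M_2(\alpha\tilde\varepsilon_d,\calB)}{M_2(\tilde\varepsilon_d,\calB)} \;\leq\; 1 + \frac{\log(1/\alpha) + O(1)}{\beta\log d} + O(1/\log d) + O(1/d) \;\xrightarrow{d\to\infty}\; 1,
\end{equation*}
which is $\le 1+\delta$ for any prescribed $\delta>0$ once $d$ is large.

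The main obstacle will be obtaining an ellipsoid packing upper bound whose leading order matches the single-block lower bound. A crude Minkowski/volume argument would introduce an extra $-\log V_{D_p}\asymp D_p\log D_p\asymp d^p\log d$ term of the same order as the main term, preventing the ratio from tending to $1$. The block-proportional product-cover above exploits the multiplicity structure of the spectrum to confine the cover overhead to $\sum_k N(d,k)\log(D_p/N(d,k)) = O(d^{p-1}\log d)$, which is safely $o(d^p\log d)$ and therefore harmless.
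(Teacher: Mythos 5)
Your proposal is correct, but it reaches the conclusion by a more self-contained route than the paper. The paper picks $\tilde{\varepsilon}_d = 13\sqrt{\mu_2}/\alpha$ (so that both $\tilde{\varepsilon}_d$ and $\alpha\tilde{\varepsilon}_d/12$ sit in the gap between $\mu_2$ and $\mu_1$) and then sandwiches $M_2$ between the entropy functional $K(\varepsilon)=\tfrac12\sum_{k:\mu_k>\varepsilon^2}N(d,k)\log(\mu_k/\varepsilon^2)$ using its pre-established lemmas (the packing/covering comparison, Lemma \ref{lemma_M_2_and_V_2}, and the Carl--Stephani-type bound, Lemma \ref{lemma_entropy_of_RKHS}); since both scales count exactly the same eigenvalues, the numerator exceeds the denominator only by an additive term of order $N(d,1)\log(12/\alpha)$, which is swallowed by $K(\tilde{\varepsilon}_d)\gtrsim N(d,1)\log(\mu_1/\tilde{\varepsilon}_d^2)$ as $d\to\infty$. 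You instead place the scale $\sqrt{\mu_p}\,d^{-\beta}$ in the same spectral gap, get the lower bound from a volumetric packing of the single degree-$p$ eigenblock ball, and get the upper bound by truncating the tail via Lemma \ref{lemma:monotone_of_eigenvalues_of_inner_product_kernels} and covering the truncated ellipsoid with block-proportional product covers --- in effect re-deriving by hand the two-sided entropy estimates the paper imports from its appendix. What each buys: the paper's argument is shorter given its existing machinery; yours is elementary, makes the mechanism explicit (the $\beta\log d$ plateau factor in the denominator versus the per-dimension $O(1)$ shift from rescaling by $\alpha$), produces a sequence $\tilde{\varepsilon}_d$ that does not depend on $\alpha$, and avoids the Carl--Stephani lemma entirely. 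The only details to supply when writing it out are routine: the standard relations packing $\le$ covering at half the scale and covering of a Euclidean ball of radius $r$ at scale $\eta\le r$ by $(3r/\eta)^N$ points (your $\varepsilon_k\le\sqrt{\mu_k}$ holds for large $d$ by Lemma \ref{lemma:inner_edr}), and the positivity of $(\alpha\tilde{\varepsilon}_d)^2-4\tau_d^2$ for large $d$, all of which you have implicitly arranged.
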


The Proposition \ref{prop:rc_violated_in_large_dimensions} reveals an essential difficulty in determining the minimax lower bound of kernel regression with large dimensional data: when $d$ is very large, the lower bound in Proposition \ref{lemma_yang_lower_bound} may become vague. 
To avoid potential confusion, we specify the large dimensional scenario for kernel regression where we perform our analysis as in Assumption \ref{assump_asymptotic}. The following theorem provides a minimax lower bound of kernel regression in large dimensions.

\begin{theorem}[Minimax lower bound]
\label{restate_lower_bound_m_complexity}
Let $\bar{\varepsilon}_{n}$ be given by $n\bar\varepsilon_n^2 = V_{K}(\bar\varepsilon_n, \mathcal D)$. 
Assume that there exists a constant $\mathfrak{C}$, such that for any $n \geq \mathfrak{C}$, we have $n\bar\varepsilon_n^2 \geq 2\log 2$.
Then for any constant $\mathfrak{c}_2 > 0$
such that the inequality 
\begin{equation}\label{eqn:lower_condition_24}
\begin{aligned}
        V_{K}( \bar\varepsilon_n, \mathcal \calD) \leq \frac{1}{5} V_{2}(\mathfrak{c}_{2} \bar\varepsilon_n, \mathcal B)
    \end{aligned}
\end{equation}
holds for any $n \geq \mathfrak{C}$,  we have
\begin{equation}\label{eqn:lower_bound_main_thm}
\min _{\hat{f}} \max _{f_{\star} \in \mathcal B} \mathbb{E}_{(\bold{X}, \bold{y}) \sim \rho_{f_{\star}}^{\otimes n}}
\left\|\hat{f} - f_{\star}\right\|_{L^2}^2
\geq \frac{1}{2}\left(\frac{\mathfrak{c}_2}{12}\right)^2 \bar\varepsilon_n^2,
\end{equation}
where $\rho_{f_{\star}}$ is the joint-p.d.f. of $x, y$ given by (\ref{equation:true_model}) with $f=f_{\star}$.
\end{theorem}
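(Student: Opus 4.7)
The plan is to adapt the Yang--Barron style Fano argument (Proposition~\ref{lemma_yang_lower_bound}) to the large-dimensional regime, where the richness condition on $M_{2}(\cdot,\mathcal{B})$ provably fails (Proposition~\ref{prop:rc_violated_in_large_dimensions}). The key idea is to replace that qualitative ``richness'' hypothesis by the explicit two-scale inequality~(\ref{eqn:lower_condition_24}), which directly compares the KL-covering entropy $V_{K}(\bar\varepsilon_{n},\mathcal{D})$ to the $L^{2}$-covering entropy $V_{2}(\mathfrak{c}_{2}\bar\varepsilon_{n},\mathcal{B})$ at exactly the two scales appearing in the Fano computation.

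I would first perform the standard estimation-to-testing reduction. Using $M_{2}(\varepsilon,\mathcal{B})\geq V_{2}(\varepsilon,\mathcal{B})$ from Lemma~\ref{lemma_M_2_and_V_2}, extract a $\mathfrak{c}_{2}\bar\varepsilon_{n}$-separated packing $\{f_{1},\ldots,f_{M}\}\subset\mathcal{B}$ with $\log M\geq V_{2}(\mathfrak{c}_{2}\bar\varepsilon_{n},\mathcal{B})$, let $V$ be uniform on $[M]$, and set $\hat V=\arg\min_{i}\|\hat f-f_{i}\|_{L^{2}}$. By the triangle inequality, a small $L^{2}$ error forces $\hat V=V$, so via Markov we get a lower bound on $\max_{i}\mathbb{E}\|\hat f-f_{i}\|_{L^{2}}^{2}$ proportional to $\bar\varepsilon_{n}^{2}\cdot\mathbb{P}(\hat V\neq V)$, and the constant $1/12$ in the final bound is absorbed here to accommodate the small losses below.

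Step two applies Fano combined with Yang--Barron's KL-mixture trick to control $I(V;\mathbf{y})$. Let $\{g_{1},\ldots,g_{L}\}\subset\mathcal{D}$ be a minimal $\bar\varepsilon_{n}$-covering in KL, with $L=\exp\{V_{K}(\bar\varepsilon_{n},\mathcal{D})\}$, and set $Q=\frac{1}{L}\sum_{j}g_{j}^{\otimes n}$. Then for each $i$, $\mathrm{KL}(\rho_{f_{i}}^{\otimes n}\|Q)\leq n\bar\varepsilon_{n}^{2}+\log L$, and by convexity,
\begin{equation*}
I(V;\mathbf{y})\leq n\bar\varepsilon_{n}^{2}+V_{K}(\bar\varepsilon_{n},\mathcal{D})=2n\bar\varepsilon_{n}^{2},
\end{equation*}
where the last equality uses the definition $n\bar\varepsilon_{n}^{2}=V_{K}(\bar\varepsilon_{n},\mathcal{D})$. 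Plugging in hypothesis~(\ref{eqn:lower_condition_24}) gives $\log M\geq 5V_{K}(\bar\varepsilon_{n},\mathcal{D})=5n\bar\varepsilon_{n}^{2}$, and together with $n\bar\varepsilon_{n}^{2}\geq 2\log 2$ Fano delivers $\mathbb{P}(\hat V\neq V)\geq 1-(2n\bar\varepsilon_{n}^{2}+\log 2)/(5n\bar\varepsilon_{n}^{2})\geq 1/2$. Combining with step one yields~(\ref{eqn:lower_bound_main_thm}).

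The main conceptual obstacle is step~three's replacement of richness. In fixed-dimensional RKHS problems one implicitly exploits $V_{2}(\alpha\varepsilon,\mathcal{B})\sim V_{2}(\varepsilon,\mathcal{B})$ as $\varepsilon\downarrow0$, which fails when $d$ scales with $n$ because the spectrum of $K^{\inner}$ develops block structure (Lemma~\ref{lemma:inner_edr}), and this causes huge jumps in $V_{2}$ across nearby scales. The quantitative two-scale inequality~(\ref{eqn:lower_condition_24}) is the right surrogate: it bypasses any multiplicative-scaling ratio and only asks that the $L^{2}$ entropy at $\mathfrak{c}_{2}\bar\varepsilon_{n}$ exceed the KL entropy at $\bar\varepsilon_{n}$ by a fixed multiplicative factor (here, $5$), chosen to beat the mutual-information constant. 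The remaining technical care is in constant tracking---the factor $1/12$ (rather than the optimal $1/2$) leaves enough slack to absorb the passage from $V_{2}$ to $M_{2}$, the triangle-inequality buffer, and the $\log 2$ Fano additive term under the stated regime $n\bar\varepsilon_{n}^{2}\geq 2\log 2$; verifying~(\ref{eqn:lower_condition_24}) in the inner-product kernel setting of Theorems~\ref{thm:lower_inner_large_d} and~\ref{thm:near_lower_inner_large_d} is then the separate spectral computation enabled by Lemmas~\ref{lemma:inner_edr} and~\ref{lemma:monotone_of_eigenvalues_of_inner_product_kernels}.
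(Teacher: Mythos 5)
Your argument is correct, but it follows a different route from the paper's own proof of this theorem. The paper does not rerun Fano: it invokes Proposition \ref{lemma_yang_lower_bound}(i) (Yang--Barron) as a black box, so the whole work is to show $\underline{\varepsilon}_{n} \geq \mathfrak{c}_2\bar\varepsilon_n/6$, where $\underline{\varepsilon}_{n}$ is defined by $M_{2}(\underline\varepsilon_{n}, \calB)=4n\bar\varepsilon_n^2 +2\log2$. That comparison is carried out through the ellipsoid entropy $K(\varepsilon)=\tfrac12\sum_{j:\lambda_j>\varepsilon^2}\log(\lambda_j/\varepsilon^2)$ via Lemma \ref{lemma_entropy_of_RKHS}, i.e. the chain $V_2(\underline\varepsilon_n,\calB)\leq 5n\bar\varepsilon_n^2\leq K(\mathfrak{c}_2\bar\varepsilon_n/6)\leq V_2(\mathfrak{c}_2\bar\varepsilon_n/6,\calB)$, which costs the factor $6$ in scale and yields exactly $\tfrac12(\mathfrak{c}_2/12)^2\bar\varepsilon_n^2$ through $(1/8)\underline\varepsilon_n^2$. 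You instead run the Fano argument with the Yang--Barron KL-mixture bound directly: an $L^2$ packing at scale $\mathfrak{c}_2\bar\varepsilon_n$ with $\log M\geq V_2(\mathfrak{c}_2\bar\varepsilon_n,\calB)\geq 5n\bar\varepsilon_n^2$ by (\ref{eqn:lower_condition_24}) and Lemma \ref{lemma_M_2_and_V_2}, the mutual-information bound $I(V;\mathbf{y})\leq n\bar\varepsilon_n^2+V_K(\bar\varepsilon_n,\calD)=2n\bar\varepsilon_n^2$, and $n\bar\varepsilon_n^2\geq 2\log 2$ to get error probability at least $1/2$; the minimum-distance reduction then gives $\max_i\mathbb{E}\|\hat f-f_i\|_{L^2}^2\geq \tfrac12(\mathfrak{c}_2\bar\varepsilon_n/2)^2=\mathfrak{c}_2^2\bar\varepsilon_n^2/8$, which dominates the stated constant $\mathfrak{c}_2^2\bar\varepsilon_n^2/288$, so your final constant-tracking remark, though left implicit, has ample slack. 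In effect you have reproved the paper's Lemma \ref{thm_lower_ultimate_tech} (used there only for Theorem \ref{thm:near_lower_inner_large_d}) and applied it with $\tilde\varepsilon_1=\mathfrak{c}_2\bar\varepsilon_n$, $\tilde\varepsilon_2=\bar\varepsilon_n$. What your route buys: it bypasses $\underline\varepsilon_n$ and the RKHS-specific entropy estimate of Lemma \ref{lemma_entropy_of_RKHS} entirely (only the generic packing--covering comparison and Lemma \ref{claim:d_K_and_d_2} are needed), and it delivers a sharper constant. What the paper's route buys: it leans on the citable Yang--Barron statement rather than redoing the information-theoretic argument, at the price of the scale-$6$ loss and the extra spectral lemma.
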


\begin{remark}\label{remark:rc_to_18}
    When the richness condition holds for some constants $\delta>0$ and $0<\alpha<1$, 
    let $N$ be the smallest integer satisfying $(1+\delta)^N>5 $. 
  One can show that \eqref{eqn:lower_condition_24} holds for $\mathfrak{c}_2=(1+\delta)^N \sigma / \sqrt{2}$ (see, e.g., Proposition \ref{prop:remark:rc_to_18}). In other words, the scope where the Theorem \ref{restate_lower_bound_m_complexity} can be applied is larger than the Proposition   \ref{lemma_yang_lower_bound}.
%
\end{remark}

\section{What Can We Expect from Kernel Regression for Large Dimensional Data}\label{sec:discuss}

Since \cite{Jacot_NTK_2018} introduced the NTK, studying the generalization performance of kernel methods has become a natural surrogate for studying the generalization performance of neural networks. 
In the past several years, lots of works have been done in kernel regression with fixed-dimension
(e.g. \cite{caponnetto2007optimal, 
li2023kernel, 
raskutti2014early, 
steinwart2009optimal, 
zhang2023optimality_2,
zhang2023optimality}).
Though these works greatly extend our understanding of kernel regression, they also raise more natural problems for us. 
For example, \cite{li2023kernel} showed that fixed-dimensional kernel interpolation generalized poorly, which conflicts with the widely observed `benign overfitting' phenomenon. 
Some researchers then speculated that in certain scenarios, the `benign overfitting phenomenon' might be due to the large dimensionality of data. 
This urges researchers to study the kernel regression over
large dimensional data (i.e., $n \asymp d^{\gamma}$ for some $\gamma>0$) 
( see, e.g., 
\cite{Donhauser_how_2021,ghorbani2021linearized,
Liang_Just_2019, liang2020multiple, Liu_kernel_2021, sahraee2022kernel,  xiao2022precise}). 

In this section, we gather some recent findings and compare them with Theorem \ref{thm:near_lower_inner_large_d} and Theorem \ref{thm:near_upper_inner_large_d}. These great works and our results strongly suggest that there might be other deeper structures hidden in the kernel regression on large dimensional data.

\subsection{Consistency of kernel regression when $n \asymp d^{\gamma}$, $\gamma > 0$}\label{subsection_4_2}

We term a non-parametric regression method {\it consistent} if its estimator's excess risk converges to zero as $n \to \infty$, and {\it inconsistent} otherwise. 
We note that some literature has discussed the inconsistency of kernel methods with inner product kernels when $n \asymp d^{\gamma}$ for some non-integer $\gamma$ (\cite{Ghorbani_When_2021,
ghorbani2021linearized,
Ghosh_three_2021,
mei2022generalization,  misiakiewicz_spectrum_2022}).
Let us first replicate some notations from \cite{ghorbani2021linearized}.
Denote 
$R_{\mathrm{KR}}\left(f_{\star}, \boldsymbol{X}, \lambda\right)$ as the excess risk of kernel ridge regression, and
$R_K\left(g\right):= \min _a \mathbb{E}_x\left\{\left(g(x)-\sum_{i=1}^n a_i K\left( x_i, x\right)\right)^2\right\}$ as a lower bound on the prediction error of general kernel methods with regression function $g$, $\mathrm{P}_{\leq \ell}$ as the projection onto polynomials with degree $\leq \ell$, and $\mathrm{P}_{>\ell}$ as the projection onto polynomials with degree $>\ell$.

\begin{remark}
    For functions defined on $\mathbb{S}^{d}$, 
    $\mathrm{P}_{\leq \ell}$ is the projection onto
    linear space of spherical harmonics with degree $\leq \ell$ (see, e.g., Definition 1.1.1 in \cite{dai2013approximation}). These spherical harmonics form an orthonormal basis of $L^2\left(\mathbb{S}^d, \rho_{\calX}\right)$, and thus can represent functions in any RKHS $\calH \subset L^2\left(\mathbb{S}^d, \rho_{\calX}\right)$. For example, $\mathcal{H}^{\NTK}$ is spanned by spherical polynomials with degree $\ell=0, 1, 2, 4, \cdots$ (\cite{Bietti_on_2019}).
\end{remark}

The following two propositions restate the results in \cite{ghorbani2021linearized}:

\begin{proposition}[Restate Theorem 3 in \cite{ghorbani2021linearized}]\label{restate_thm_3_in_linear}
Suppose there exists an integer $\ell \in \{0, 1, \cdots\}$, and a constant $0<\delta < 1$, such that $n \asymp d^{\ell+1-\delta}$.
Assume that $f_{\star}$ is  square-integrable in $\sqrt{d} \mathbb S^d$ with bounded $L^2$ norm.
Suppose further that $\sigma^2=0$.
Then, 
    for any $\varepsilon>0$, with high probability we have
\begin{equation}
    \left|R_K\left(f_{\star}\right)-R_K\left(\mathrm{P}_{\leq \ell} f_{\star}\right)-\left\|\mathrm{P}_{>\ell} f_{\star}\right\|_{L^2}^2\right| \leq \varepsilon\left\|f_{\star}\right\|_{L^2}\left\|\mathrm{P}_{>\ell} f_{\star}\right\|_{L^2}.
\end{equation}
\end{proposition}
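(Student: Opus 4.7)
The plan is to exploit the block structure of Mercer's decomposition (\ref{spherical_decomposition_of_inner_main}) together with the empirical-kernel approximation (\ref{eqn:approx_kernel_matrix}) and the eigenvalue estimates in Lemma \ref{lemma:inner_edr}. Under the scaling $n\asymp d^{\ell+1-\delta}$, we have
\[
\sum_{k\leq \ell} N(d,k) = O(d^{\ell}) \ll n \ll d^{\ell+1} \asymp N(d,\ell+1),
\]
so the subspace spanned by spherical harmonics of degree $\leq \ell$ has dimension far smaller than $n$, while the degree-$(\ell+1)$ harmonics alone form a block whose dimension vastly exceeds $n$. Intuitively, the kernel has ``enough room'' to fit every degree-$\leq\ell$ mode and essentially no room to fit higher degrees.

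First I would rewrite $R_K(g) = \|g - \Pi_K g\|_{L^2}^2$, where $\Pi_K$ is the $L^2$-orthogonal projection onto $\mathrm{span}\{K(x_i,\cdot)\colon i\leq n\}$, and split $f_\star = g_{\leq} + g_>$ with $g_{\leq}=\mathrm{P}_{\leq\ell}f_\star$ and $g_>=\mathrm{P}_{>\ell}f_\star$. Using the identity
\[
\|f_\star - \Pi_K f_\star\|_{L^2}^2 = \|g_\leq - \Pi_K g_\leq\|_{L^2}^2 + \|g_> - \Pi_K g_>\|_{L^2}^2 + 2\langle g_\leq - \Pi_K g_\leq,\, g_> - \Pi_K g_>\rangle_{L^2},
\]
the desired estimate reduces to two claims: (i) $\|g_{\leq} - \Pi_K g_{\leq}\|_{L^2}^2 = R_K(g_{\leq})$ (immediate from the definition); (ii) $\|\Pi_K g_>\|_{L^2}^2 \leq \varepsilon\, \|g_>\|_{L^2}^2$ with high probability, which yields $\|g_> - \Pi_K g_>\|_{L^2}^2 = \|g_>\|_{L^2}^2 - \|\Pi_K g_>\|_{L^2}^2 = \|g_>\|_{L^2}^2 + O(\varepsilon\,\|g_>\|_{L^2}^2)$ and, via Cauchy--Schwarz, a cross term of size $O(\varepsilon\, \|f_\star\|_{L^2}\|g_>\|_{L^2})$.

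Both claims follow from the three-block approximation (\ref{eqn:approx_kernel_matrix}). Block $\mathbf{I}$, the degree-$\leq\ell$ part, has rank $O(d^{\ell})\ll n$ and, by Lemma \ref{lemma:inner_edr}, eigenvalues $\mu_k N(d,k) \asymp 1$ for $k\leq \ell$; hence $\Pi_K$ acts as an approximate identity on the low-frequency subspace, confirming (i) after absorbing negligible remainders. Block $\mathbf{III}$ concentrates around $\alpha\,\mathbf{I}_n$ with $\alpha=\sum_{k>\ell}\mu_k N(d,k) = \Theta(1)$, while block $\mathbf{II}$ is negligible compared with $\mathbf{III}$ since $\delta>0$ keeps $n$ bounded away from $d^{\ell+1}$. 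Inverting the approximate structure, the coefficients $a_i$ in the best approximation of $g_>$ by $\sum a_i K(x_i,\cdot)$ behave like uncorrelated fluctuations of size $O(1/\sqrt{n})\|g_>\|_{L^2}$, so their $L^2$ contribution is only $O(\varepsilon)\|g_>\|_{L^2}^2$, giving (ii).

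The main obstacle will be the operator-norm concentration $\|\mathbf{III} - \alpha\,\mathbf{I}_n\|_{\mathrm{op}} = o_p(\alpha)$ in the sharp regime $n \asymp d^{\ell+1-\delta}$, since $\delta$ is an arbitrary positive constant and the bound must be uniform. This amounts to controlling sums of products $Y_{k,j}(x_i)Y_{k,j}(x_{i'})$ for $k>\ell$, which in turn relies on hypercontractivity for polynomials on $\mathbb{S}^d$ together with a truncation of very high frequencies whose contribution is of smaller order. Once this concentration is in hand, the remainder of the argument reduces to linear algebra on a well-conditioned low-frequency block and an application of Cauchy--Schwarz to absorb the cross term.
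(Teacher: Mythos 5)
You should first note that the paper contains no proof of Proposition \ref{restate_thm_3_in_linear}: it is quoted verbatim from Theorem 3 of \cite{ghorbani2021linearized}, so what you are proposing is a reconstruction of that external argument rather than an alternative to anything proved here. Your reduction is sound as far as it goes: with $\Pi_K$ the $L^2$-orthogonal projection onto $\mathrm{span}\{K(x_i,\cdot)\}$ one has $R_K(g)=\|g-\Pi_K g\|_{L^2}^2$, and writing $f_\star=g_{\leq}+g_{>}$ gives $R_K(f_\star)-R_K(g_{\leq})-\|g_{>}\|_{L^2}^2=-\|\Pi_K g_{>}\|_{L^2}^2+2\langle (I-\Pi_K)g_{\leq},\,g_{>}\rangle$, and the cross term is bounded by $2\|g_{\leq}\|_{L^2}\|\Pi_K g_{>}\|_{L^2}$, so everything reduces to showing $\|\Pi_K g_{>}\|_{L^2}\leq \varepsilon\|g_{>}\|_{L^2}$ with high probability. (Your claim (i) is purely definitional; no "approximate identity" property of the low-frequency block is needed for it.)

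The genuine gap is that this remaining claim is the entire content of the theorem, and your argument for it is both misdirected and then asserted. The projection $\Pi_K$ in $L^2$ is governed not by the empirical kernel matrix $K(\bm{X},\bm{X})$ — which is what the decomposition (\ref{eqn:approx_kernel_matrix}) and your statement "$\mathbf{III}\approx\alpha \mathbf{I}_n$ with $\alpha=\Theta(1)$" describe — but by the Gram matrix $G$ with entries $G_{ij}=\langle K(x_i,\cdot),K(x_j,\cdot)\rangle_{L^2}$, i.e.\ the kernel whose eigenvalues are $\mu_k^2$; indeed $\|\Pi_K g_{>}\|_{L^2}^2=v^{\tran}G^{-1}v$ with $v_i=(T_K g_{>})(x_i)$. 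What is actually needed is (a) $\lambda_{\min}(G)\gtrsim \mu_{\ell+1}^2 N(d,\ell+1)$, which requires the operator-norm concentration of $\boldsymbol{Y}_{\ell+1}\boldsymbol{Y}_{\ell+1}^{\tran}/N(d,\ell+1)$ around $\mathbf{I}_n$ in the regime $n\ll d^{\ell+1-\delta}$ (Proposition \ref{prop:prop_3_in_montanari}); (b) a high-probability bound $\|v\|_2^2\lesssim n\,(\sup_{k>\ell}\mu_k^2)\,\|g_{>}\|_{L^2}^2$ together with $\sup_{k>\ell}\mu_k\lesssim\mu_{\ell+1}$ (cf.\ Lemma \ref{lemma:monotone_of_eigenvalues_of_inner_product_kernels}), after which $v^{\tran}G^{-1}v\lesssim (n/N(d,\ell+1))\|g_{>}\|_{L^2}^2\asymp d^{-\delta}\|g_{>}\|_{L^2}^2$. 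You explicitly defer the concentration step ("the main obstacle") to unspecified hypercontractivity estimates, and you never identify the squared-kernel Gram matrix as the relevant object; moreover, in the regime $n\asymp d^{\ell+1-\delta}$ there is no intermediate block $\mathbf{II}$ to dismiss, since degree $\ell+1$ already belongs to the identity-like part. As written, the proposal is therefore an outline of the argument in \cite{ghorbani2021linearized}, not a proof: the steps you leave open are precisely where the theorem lives.
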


\begin{proposition}[Restate Theorem 4 in \cite{ghorbani2021linearized}]\label{restate_thm_4_in_linear}
Suppose there exists an integer $\ell \in \{0, 1, \cdots\}$, and a constant $0<\delta < 1$, such that $n \asymp d^{\ell+1-\delta}$.
Assume that $f_{\star}$ is  square-integrable in $\sqrt{d} \mathbb S^d$ with bounded $L^2$ norm.
Suppose further that Assumption 3 in \cite{ghorbani2021linearized} holds for the kernel $K$.
Then, 
    for any $\varepsilon>0$, and any regularization parameter $0<\lambda<\lambda^*$ with high probability we have
\begin{equation}
    \left|R_{\mathrm{KR}}\left(f_{\star}, \boldsymbol{X}, \lambda\right)-\left\|\mathrm{P}_{>\ell} f_{\star}\right\|_{L^2}^2\right| \leq \varepsilon\left(\left\|f_{\star}\right\|_{L^2}^2+\sigma^2\right),
\end{equation}
where $\lambda^*$ is defined as (20) in \cite{ghorbani2021linearized}.
\end{proposition}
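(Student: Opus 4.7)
The plan is to follow the spectral decomposition strategy sketched in equation~(\ref{eqn:approx_kernel_matrix}). Using the Mercer decomposition $K(x,x') = \sum_k \mu_k \sum_{j=1}^{N(d,k)} Y_{k,j}(x) Y_{k,j}(x')$, I would split the empirical kernel matrix $K(\bm{X},\bm{X})$ into the three blocks $\mathbf{I}$ (low-degree, $k\le\ell$), $\mathbf{II}$ (critical, $k=\ell+1$), and $\mathbf{III}$ (high-degree, $k\ge\ell+2$). In the regime $n\asymp d^{\ell+1-\delta}$ one has $\sum_{k\le\ell} N(d,k) = o(n)$ while $N(d,\ell+1) \asymp d^{\ell+1} \gg n$, so $\mathbf{I}$ is low-rank, $\mathbf{II}$ is overcomplete, and $\mathbf{III}$ is expected to self-average to a scaled identity $\tau I_n$ with $\tau \asymp \sum_{k\ge\ell+2} \mu_k N(d,k)$. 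Assumption~3 in \cite{ghorbani2021linearized} is precisely what quantifies this tail profile and makes the self-averaging claim uniform in $d$.

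The next step is to write the KRR estimator $\hat f_\lambda(x) = K(x,\bm{X})(K(\bm{X},\bm{X}) + \lambda I_n)^{-1}\bm{y}$ and decompose the excess risk into bias and variance contributions, each of which further splits along the orthogonal decomposition $L^2(\mathbb{S}^d) = \bigoplus_k \mathcal{V}_k$ into spherical harmonic subspaces. On the low-frequency piece $\mathrm{P}_{\le\ell} f_\star$, I would argue that because $\mathbf{I}$ alone contains enough signal (its rank is $o(n)$ with well-separated effective singular values $\mu_k N(d,k) \gg \tau+\lambda$ by Lemma~\ref{lemma:inner_edr}), KRR reconstructs $\mathrm{P}_{\le\ell} f_\star$ with error $o(1)$. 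On the high-frequency piece $\mathrm{P}_{>\ell} f_\star$, the effective ridge $\tau+\lambda$ dominates $\mu_k N(d,k)$ for $k>\ell$, so the predictor collapses to zero on that subspace, contributing the full $\|\mathrm{P}_{>\ell} f_\star\|_{L^2}^2$ to the excess risk. The threshold $\lambda^*$ in (20) of \cite{ghorbani2021linearized} is exactly the boundary beyond which the low-frequency piece would also be over-regularized and killed. The variance contribution $\sigma^2$ is controlled by bounding $\mathrm{tr}[(K(\bm{X},\bm{X})+\lambda I_n)^{-2} K(\bm{X},\bm{X})^2]$ via the same spectral picture, yielding the $\varepsilon \sigma^2$ error term.

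The main obstacle will be the rigorous control of the critical block $\mathbf{II}$ and the sharp self-averaging of $\mathbf{III}$. For $\mathbf{III}$, one needs $\|\mathbf{III} - \tau I_n\|_{\mathrm{op}} = o(\tau+\lambda)$, which reduces to concentration inequalities for sums of spherical harmonics using the addition formula for Gegenbauer polynomials together with moment estimates over the uniform measure on $\mathbb{S}^d$. For $\mathbf{II}$, whose rank $N(d,\ell+1)$ and sample size $n$ are polynomially comparable (but not equal, thanks to the $\delta>0$ margin), one needs Marchenko--Pastur-type spectral control of $\bm{Y}_{\ell+1}(\bm{X}) \bm{Y}_{\ell+1}(\bm{X})^\tau / n$ so that its contribution to the effective regularization is subdominant to $\tau+\lambda$ on the relevant subspace. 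Once both of these spectral facts are in place, a deterministic linear-algebraic reduction to finite-dimensional ridge regression on the polynomial subspace of degree $\le\ell$ completes the proof, and the $\varepsilon$-slack accommodates all the $o(1)$ error terms from concentration.
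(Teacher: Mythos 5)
You should first note that the paper does not prove this proposition at all: it is labeled and used as a verbatim restatement of Theorem~4 of \cite{ghorbani2021linearized}, imported as background for the discussion in Section~\ref{subsection_4_2}, so there is no in-paper proof to match. Your proposal is therefore best read as an outline of the argument in that reference, and at that level it does capture the correct overall route: harmonic (Mercer) decomposition of the empirical kernel matrix as in (\ref{eqn:approx_kernel_matrix}), self-averaging of the high-degree part to a ridge-like multiple of the identity, and reduction of kernel ridge regression to regression on polynomials of degree $\leq \ell$, with the high-frequency component of $f_\star$ effectively killed and contributing $\|\mathrm{P}_{>\ell}f_\star\|_{L^2}^2$.

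There is, however, a concrete structural slip. In the stated regime $n \asymp d^{\ell+1-\delta}$ with $\delta>0$ there is no ``critical'' block $\mathbf{II}$: since $N(d,\ell+1)\asymp d^{\ell+1}$ exceeds $n$ by a polynomial factor, the matrix $\bm{Y}_{\ell+1}\bm{Y}_{\ell+1}^{\tau}/N(d,\ell+1)$ concentrates on $\mathbf{I}_n$ in operator norm (this is exactly Proposition~3 of \cite{ghorbani2021linearized}, restated as Proposition~\ref{prop:prop_3_in_montanari} in the appendix), so the degree-$(\ell+1)$ contribution folds into the self-averaging tail together with all $k\geq \ell+2$. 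The Marchenko--Pastur analysis you invoke for $\bm{Y}_{\ell+1}$ is only needed at integer exponents $n\asymp d^{\ell+1}$, which is a different result (\cite{xiao2022precise}) and lies outside the hypothesis; the $\delta$-margin is precisely what removes that critical regime, and building the proof around MP control of block $\mathbf{II}$ would not go through as described. Beyond this, the steps that carry all the weight --- the operator-norm concentration $\|K_{>\ell}-\tau \mathbf{I}_n\|_{\mathrm{op}}=o_{d,\mathbb{P}}(\tau+\lambda)$, the invertibility of the low-degree Gram block (Proposition~\ref{prop:lemma_11_in_montanari}), and the bias/variance bookkeeping that converts these spectral facts into the risk identity, including what Assumption~3 of \cite{ghorbani2021linearized} actually requires of the kernel and how the explicit threshold $\lambda^*$ in their (20) enters --- are deferred rather than carried out, so the proposal remains a sketch of the cited proof rather than a self-contained argument.
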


By assuming that the regression function falls into the square-integrable function
space, we can summarize their results (and what they claimed as their main contributions) as following three points:
\begin{itemize}

    \item[(1)] 
    When $f_{\star}$ is a polynomial with a degree at most $\ell \geq 0$,  Proposition \ref{restate_thm_4_in_linear} demonstrates that under specific regularization parameters, kernel ridge regression is consistent when $n \asymp d^{\gamma}$ for some non-integer $\gamma>\ell$. 

    \item[(2)] 
    When $f_{\star}$ is not a polynomial with a degree at most $\ell \geq 0$, if the noise term is always zero, then Proposition \ref{restate_thm_3_in_linear} shows that all kernel methods are inconsistent when $n \asymp d^{\gamma}$ for some non-integer $\gamma<\ell+1$.

\item[(3)] 
They claimed that "kernel methods can fit at most a degree-$\ell$ polynomial".

\end{itemize}

We notice that they merely assume the regression function falls into the square-integrable function space, which is too large and seldom considered in most non-parametric regression problems. 
In practice, researchers often consider sub-spaces of the square-integrable function space that possess better properties.
For instance, \cite{Stone_Additive_1985} and \cite{Stone_Polynomial_1994} prove the optimality of additive regression and polynomial splines by assuming that the regression functions are square-integrable with specific smoothness conditions.
Moreover, when dealing with kernel methods, researchers often assume that the regression function falls into the RKHS associated with the kernel \cite{Caponnetto2006OptimalRF, caponnetto2007optimal, caponnetto2010cross}, instead of merely assuming that the regression function is square-integrable.

In our study, we also adopt the more reasonable assumption that the regression function falls into the RKHS $\mathcal{H}^{\inner}$.
By modifying tools of the empirical process and calculating the covering number of $\mathcal{H}^{\inner}$, we attained the optimality, and thus consistency, of kernel regression when $n \asymp d^{\gamma}$ for $\gamma>0$.
In contrast, tools of the empirical process do not apply to the square-integrable function class, since the covering number of the square-integrable function class is unbounded.
Therefore, for the square-integrable function class, it is difficult to attain optimality results of kernel regression in large dimensions.

\begin{remark}
    Notice that Proposition \ref{restate_thm_4_in_linear} can be applied to functions in $\calB$ since $\left\|\mathrm{P}_{>\ell} f_{\star}\right\|_{L^2}^2 \leq \mu_{\ell+1}\left\|f_{\star}\right\|_{[\mathcal{H}]^{s}}^2$. However, we have that $\left\|\mathrm{P}_{>\ell} f_{\star}\right\|_{L^2}^2 = o_d(1)$, hence Proposition \ref{restate_thm_4_in_linear} is not precise enough to provide a convergence rate (the r.h.s. is basically $\Theta_{d}(1)$) and in fact $\left\|\mathrm{P}_{>\lfloor\gamma\rfloor} f_{\star}\right\|_{L^2}^2$ is not the right quantity determining the convergence, the analysis in the paper rather suggests $\left\|\mathrm{P}_{>q} f_{\star}\right\|_{L^2}^2$, $q=p-1, p$ as a pivotal role.
\end{remark}

\begin{figure*}[t!]
        \begin{subfigure}[t]{0.32\textwidth}
        \includegraphics[width=1.8in]{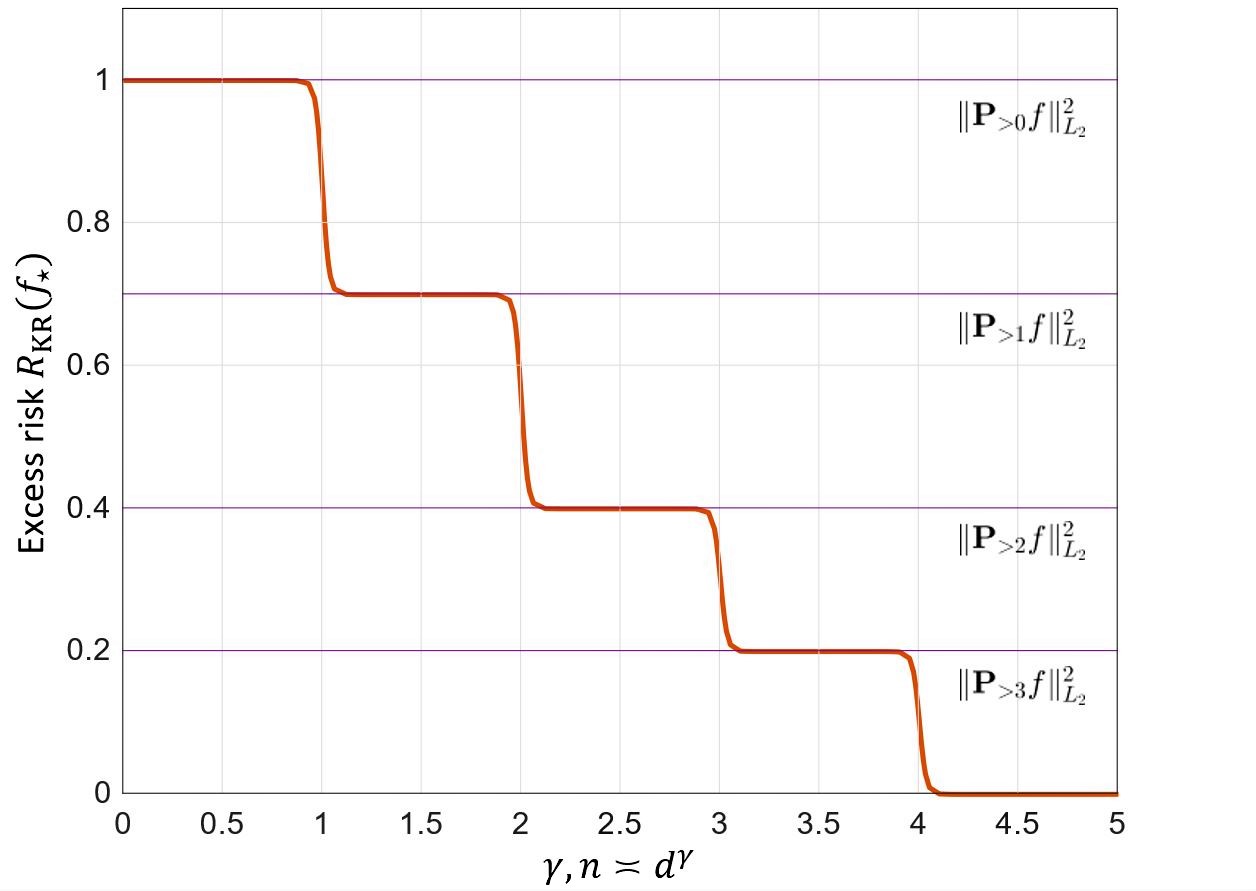}
        \captionsetup{width=0.7\textwidth}
        \caption*{(a)}
   \label{fig:montanari_cartoon}
    \end{subfigure}
    \begin{subfigure}[t]{0.32\textwidth}
           \includegraphics[width=1.8in, keepaspectratio]{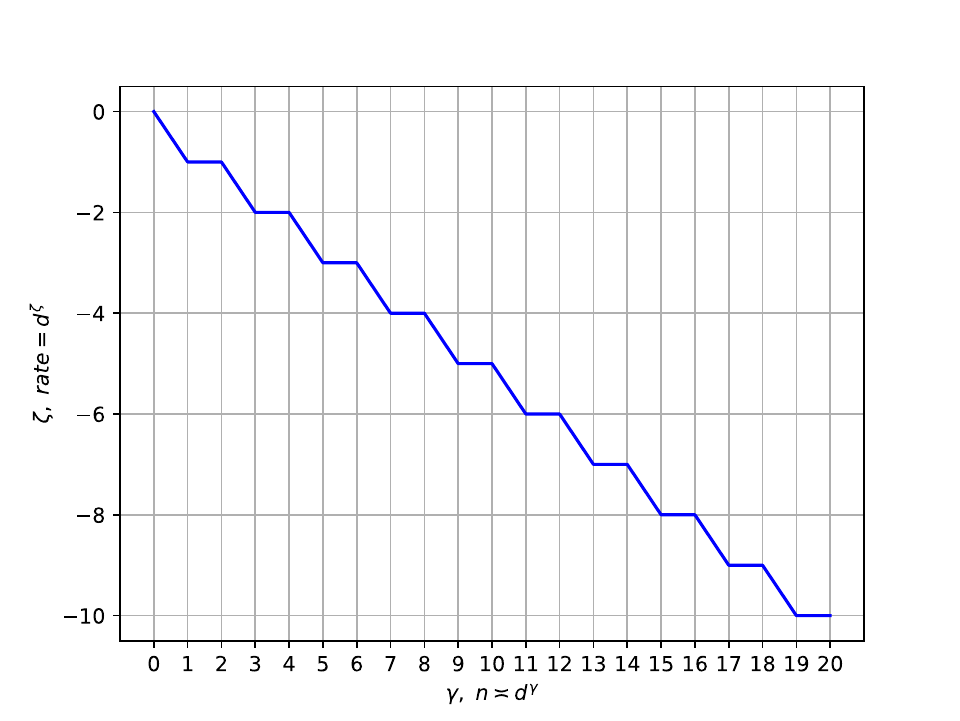}
    \captionsetup{width=0.7\textwidth} 
        \caption*{(b)}
    \end{subfigure}
    \begin{subfigure}[t]{0.32\textwidth}
           \includegraphics[width=1.8in, keepaspectratio]{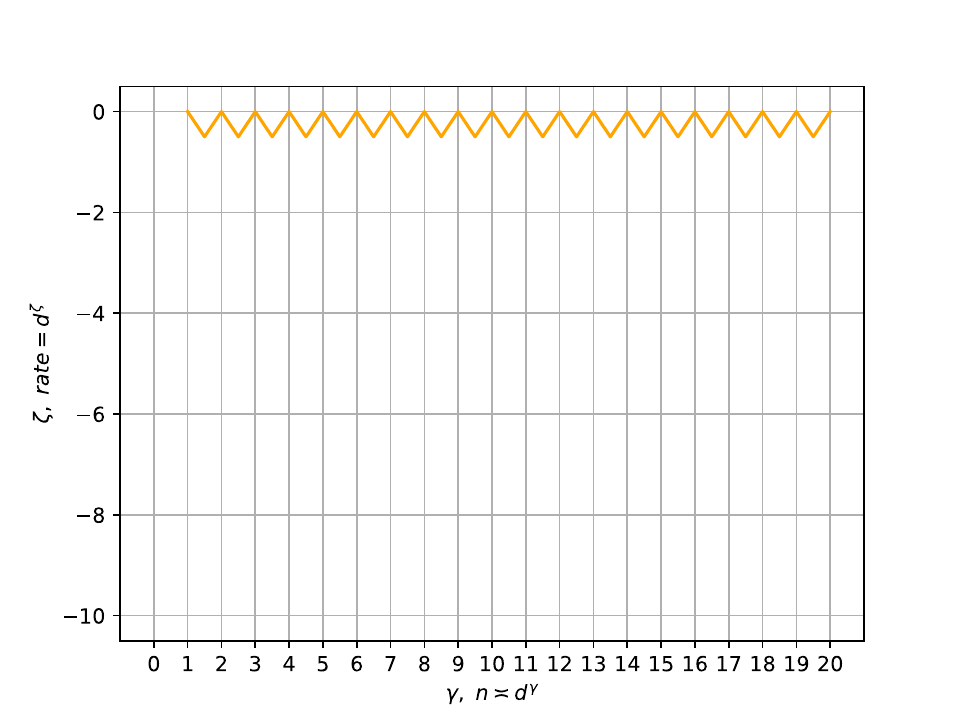}
    \captionsetup{width=0.7\textwidth} 
    \caption*{(c)}
    \end{subfigure}
\caption{
{\bf (a)} A cartoon of the excess risk of kernel ridge regression when $f_{\star}$ is square-integrable. Borrowed from \cite{ghorbani2021linearized}.
{\bf (b)} The excess risk of early-stopping kernel regression when $f_{\star} \in \calH^{\inner}$. Obtained from Theorem \ref{thm:near_lower_inner_large_d} and Theorem \ref{thm:near_upper_inner_large_d}.
{\bf (c)} The excess risk of kernel interpolation when $f_{\star} \in \calH^{\inner}$. Obtained from results in \cite{liang2020multiple}.
    } 
    \label{fig:3_comparison}
\end{figure*}

\subsection{{Kernel regressions generalize better than kernel interpolation in large dimensions}}\label{sec:5.2_comparison_liang_multiple}

Recent findings reported in \cite{li2023kernel} indicate that kernel interpolation exhibits poorer generalization compared to early-stopping kernel regression in fixed dimensions.
In this subsection, we will show that kernel interpolation generalizes more poorly than kernel regression in large dimensions.

We notice that \cite{liang2020multiple} have obtained an upper bound on the convergence rate of the excess risk of kernel interpolation. The following proposition restates their main results:

\begin{proposition}[Restate Theorem 1 in \cite{liang2020multiple}]\label{restate_thm_1_in_liang}
    Suppose there exists a constant $\gamma>1$, such that $n \asymp d^{\gamma}$. Suppose further that the regression function can be expressed as $f_{\star}(x) = \langle K(x, \cdot), \rho_{\star}(\cdot) \rangle_{L^2}$, with $\|\rho_{\star}\|_{L^4}^4 \leq C$ for some constant $C>0$. 
    Let $f_{\infty}$ be the function defined in \eqref{solution:gradient:flow} with $t = \infty$.
    Define $\ell = \lfloor \gamma\rfloor$, and $\eta(\gamma) = \min\left\{ (\ell+1) / \gamma - 1, 1-  \ell / \gamma \right\}$.
    Then, under some specific conditions on the distribution of the samples and the kernel $K$, there exists a constant $\mathfrak{C}_1$ not depending on $n$ and $d$, such that we have
    \begin{equation}
        \left\|{f}_{\infty} - f_{\star}\right\|_{L^2}^2
\leq \mathfrak{C}_1 n^{-\eta(\gamma)}
    \end{equation}
    with probability at least $1-\delta - \exp\{n / d^{\ell}\}$.
\end{proposition}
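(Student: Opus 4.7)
The plan is to recover the bound by starting from the explicit form of the minimum-norm interpolant $f_\infty(x) = K(x,\bm{X}) K(\bm{X},\bm{X})^{-1} \bm{y}$ and splitting the excess risk into a bias term (in which $\bm{y}$ is replaced by $f_\star(\bm{X})$) and a variance term (driven by the noise $\bm{\epsilon}$). Both terms can be written in closed form in terms of $K(\bm{X},\bm{X})$; the whole difficulty then reduces to a sharp spectral control of the empirical kernel matrix under the scaling $n \asymp d^\gamma$ with $\ell < \gamma < \ell+1$.

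The pivotal step is to apply the block decomposition of $K(\bm X, \bm X)$ from (\ref{eqn:approx_kernel_matrix}) with $r=\ell+1$. Under the Assumptions on $\Phi$, Lemma \ref{lemma:inner_edr} gives $\mu_k \asymp d^{-k}$ with multiplicity $N(d,k) \asymp d^k$, so the three blocks scale in operator norm as follows: Block $\mathbf{I}$ (degrees $k \le \ell$) has rank $\sum_{k\le\ell} N(d,k) \asymp d^\ell \ll n$ and operator norm $\Theta(n/d^\ell)$; Block $\mathbf{II}$ (degree $\ell+1$) has rank $\asymp d^{\ell+1}$, which is $\ll n$ if $\gamma > \ell+1$ and $\gg n$ if $\gamma < \ell+1$, with operator norm $\Theta(n/d^{\ell+1})$; and Block $\mathbf{III}$ (degrees $k \ge \ell+2$) concentrates around $\tau \bm I_n$ for $\tau = \sum_{k\ge \ell+2} \mu_k \asymp d^{-(\ell+2)}$ (this concentration uses the $L^4$-type moment control of spherical harmonics at degrees growing with $d$, together with a Bernstein/Hanson--Wright argument on the off-diagonal). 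The net effect is that in the interpolation regime $\gamma \in (\ell,\ell+1)$, the inverse $K(\bm X,\bm X)^{-1}$ is essentially $(\tau \bm I + \mu_{\ell+1} Y_{\ell+1}Y_{\ell+1}^\top + \mu_\ell Y_\ell Y_\ell^\top)^{-1}$ restricted appropriately, with a mild perturbation one controls by a resolvent expansion.

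Using this surrogate, I would write $f_\star = \mathrm{P}_{\le \ell} f_\star + \mathrm{P}_{>\ell} f_\star$ and expand the bias. The ``fit'' on the degrees $\le \ell$ is essentially perfect up to an error of order $n/d^{\ell+1}$ (because the signal sub-matrix has a well-conditioned inverse), so the bias contribution from these degrees scales like $(n/d^{\ell+1})^2 \|\mathrm{P}_{\le\ell} f_\star\|^2 \asymp n^{2(1 - (\ell+1)/\gamma)}$. The degrees $>\ell$ are not fit; using the $L^4$ assumption on $\rho_\star$ (which gives $\|\mathrm P_{>\ell} f_\star\|_{L^2}^2 \lesssim \mu_{\ell+1} \asymp d^{-(\ell+1)} \asymp n^{-(\ell+1)/\gamma}$, via Mercer and a Cauchy--Schwarz / hypercontractivity bound on the tail), they contribute a bias of order $n^{-(\ell+1)/\gamma}$ after the noise floor ``washes out'' part of it. The variance term is analysed similarly: $\sigma^2 \operatorname{tr}[K(\bm X,\bm X) K_*(\bm X,\bm X)K(\bm X,\bm X)^{-2}]/n$ is controlled by summing, over each block, the ratio $\lambda_j(K) / (\tau + \lambda_j(K))^2$. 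Block $\mathbf{II}$ contributes the leading variance $\asymp d^{\ell+1}/n \asymp n^{-1 + (\ell+1)/\gamma}$, and Blocks $\mathbf{I},\mathbf{III}$ give lower-order terms. Balancing the two dominant terms bias $\asymp n^{-2(1-\ell/\gamma)}$ (after including the cross term) and variance $\asymp n^{-(1 - (\ell+1)/\gamma)}$ yields exactly the rate $n^{-\eta(\gamma)}$ with $\eta(\gamma)=\min\{(\ell+1)/\gamma - 1,\; 1 - \ell/\gamma\}$.

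The main obstacle is the uniform concentration that upgrades the approximation (\ref{eqn:approx_kernel_matrix}) into operator-norm control on the relevant resolvents; this is where the $L^4$ bound on $\rho_\star$ and the side assumptions on the kernel enter (they allow a Marchenko--Pastur-type argument to identify the bulk of $\mathbf{II}$ and a matrix Bernstein argument to identify $\mathbf{III}$ with $\tau \bm I$). A secondary obstacle is integer values of $\gamma$: at $\gamma = \ell$ or $\gamma = \ell+1$ the dimensions of Blocks $\mathbf{I},\mathbf{II}$ become comparable to $n$, the resolvent ceases to be well conditioned, and the argument degenerates; this is exactly why Proposition \ref{restate_thm_1_in_liang} is stated for $\gamma \notin \mathbb N$ and why $\eta$ vanishes as $\gamma$ approaches an integer, matching the ``multiple descent'' profile visible in Figure \ref{fig:3_comparison}(c).
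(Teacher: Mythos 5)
You should first note that the paper itself contains no proof of this statement: Proposition \ref{restate_thm_1_in_liang} is a verbatim restatement of Theorem 1 of \cite{liang2020multiple}, imported solely so that its rate $n^{-\eta(\gamma)}$ can be compared with Theorem \ref{thm:near_upper_inner_large_d}; the proof lives entirely in that reference. Your sketch is therefore an attempted reconstruction of the external argument, and its roadmap (write the minimum-norm interpolant in closed form, split into bias and variance, and control the spectrum of $K(\bm{X},\bm{X})$ block-by-block in the degrees $\le \ell$, $=\ell+1$, $\ge \ell+2$) does mirror the strategy of \cite{liang2020multiple}. Note, however, that the original theorem is proved for kernels $h(\langle x,x'\rangle/d)$ and i.i.d.-coordinate data under their own assumptions, not via the spherical-harmonic machinery of Lemma \ref{lemma:inner_edr} and (\ref{eqn:approx_kernel_matrix}); restricting to the uniform sphere is a genuine narrowing of the hypothesis ``under some specific conditions on the distribution of the samples and the kernel $K$.''

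Within your sketch there are concrete quantitative gaps that would prevent it from yielding the stated rate as written. First, the residual block $\mathbf{III}$ concentrates around $\bigl(\sum_{k\ge\ell+2}\mu_k N(d,k)\bigr)\bm{I}_n$, which is $\Theta(1)$ by Lemma \ref{lemma:inner_edr} (each $\mu_k N(d,k)\asymp 1$ and the trace is bounded by $\kappa$); your $\tau=\sum_{k\ge\ell+2}\mu_k\asymp d^{-(\ell+2)}$ drops the multiplicities and understates the self-induced ridge by a factor of order $d^{\ell+2}$, which invalidates every resolvent estimate downstream. Similarly, in the regime $n\ll d^{\ell+1}$ one has $\mu_{\ell+1}\bm{Y}_{\ell+1}\bm{Y}_{\ell+1}^{\top}\approx \mu_{\ell+1}N(d,\ell+1)\bm{I}_n=\Theta(1)\cdot\bm{I}_n$, not operator norm $\Theta(n/d^{\ell+1})$. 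Second, the exponent bookkeeping is off: for $\gamma\in(\ell,\ell+1)$ the quantity $d^{\ell+1}/n=n^{(\ell+1)/\gamma-1}$ diverges and cannot be the leading variance; the two terms that must appear are $d^{\ell}/n=n^{-(1-\ell/\gamma)}$ and $n/d^{\ell+1}=n^{-((\ell+1)/\gamma-1)}$, whose maximum is exactly $n^{-\eta(\gamma)}$. Your final balancing of ``bias $\asymp n^{-2(1-\ell/\gamma)}$'' against ``variance $\asymp n^{-(1-(\ell+1)/\gamma)}$'' (the latter exponent is in fact positive) does not reproduce $\eta(\gamma)=\min\{(\ell+1)/\gamma-1,\,1-\ell/\gamma\}$. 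So while the plan is in the right family, the spectral and exponent bookkeeping would have to be redone — or, as the paper does, the result simply cited — before the claimed bound and its probability statement are justified.
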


Let's compare the results presented in Proposition \ref{restate_thm_1_in_liang} with the findings stated in Theorem \ref{thm:near_upper_inner_large_d}:
\begin{itemize}
    \item[(1)] It is clear that $\eta(\gamma)\leq \eta(3/2)=1/3<1/2$. Therefore, when $n\asymp d^{\gamma}$ for $\gamma>1$, the convergence rate of kernel interpolation, which is $n^{-\eta(\gamma)}$, is slower compared to the convergence rate of kernel regression given in Theorem \ref{thm:near_upper_inner_large_d}.
    
    \item[(2)] For $0< \gamma \leq 1$, the convergence rate of the estimators produced by kernel regression is $n^{-1}$, while the convergence rate of the estimators produced by kernel interpolation is missing in the  \cite{liang2020multiple}.
\end{itemize}

Moreover, in Figure \ref{fig:3_comparison}, we plot the upper bound results in \cite{liang2020multiple} for kernel interpolation, represented by the orange line, together with the upper and lower bound results in Theorem \ref{thm:near_lower_inner_large_d} and Theorem \ref{thm:near_upper_inner_large_d} for kernel regression, represented by the blue line. We can observe that the rate of the blue line is significantly faster than that of the orange line for all $\gamma > 1$. From the above discussion, we can conclude that kernel interpolation ($t = \infty$) generalizes much more poorly than
early-stopping kernel regression ($t = \widehat{T}<\infty$) in large dimensions.

\subsection{Numerical Experiments}

In this subsection, our objective is to experimentally verify that when $n \asymp d^{\gamma}$ for some fixed $\gamma>0$, and considering functions $f_{\star}$ in $\calH$ with bounded norms, 
the early-stopping
kernel regression algorithms, defined as (\ref{solution:gradient:flow}), can achieve a convergence rate given in Theorem \ref{thm:near_lower_inner_large_d} and Theorem \ref{thm:near_upper_inner_large_d}, while the kernel interpolation algorithms can not (when $\gamma >1$).

We consider the following two inner product kernels:
\begin{itemize}
    
    \item The neural tangent kernel of a two-layer ReLU neural network:
    \begin{displaymath}
        K^{\NTK}(x, x^\prime) := \Phi(\langle x, x^{\prime} \rangle), ~~x, x^{\prime} \sim \mathbb{S}^{d}.
    \end{displaymath}
    where $\Phi(t)=\left[\sin{(\arccos t)}+2(\pi-\arccos t)t\right]/ (2 \pi)$.

    \item The RBF kernel with a fixed bandwidth:
    \begin{displaymath}
        K^{\mathrm{rbf}}(x,x^{\prime}) = \exp{\left(-\frac{\|x-x^{\prime}\|_{2}^{2}}{2}\right)}, ~~x, x^{\prime} \sim \mathbb{S}^{d}.
    \end{displaymath}
    
\end{itemize}

For any dimension $d$, let $\rho_{\calX}$ be the uniform distribution on $\mathcal X=\mathbb S^d$.
We construct a function $f_{\star}$ in $\calH$ as follows:
\begin{equation}\label{experiment true function}
    f^{*}(x) = k(x, u_{1}) + k(x, u_{2}) + k(x, u_{3}),
\end{equation}
where $u_1$, $u_2$, and $u_3$ are sampled from $\rho_{\calX}$.
Then, we consider the data generation process with the model given by Equation (\ref{equation:true_model}), which can be expressed as:
\begin{equation}
    y=f_{\star}(\bm{x})+\epsilon,
\end{equation}
where $\epsilon \sim \mathcal{N}(0, 1)$.
We construct the estimators of the kernel 
 regression and kernel interpolation (KI) ${f}_{\widehat{T}}$ and ${f}_{t_\infty}$ using Equation (\ref{solution:gradient:flow}), where the stopping time $\widehat{T}$ is set to $Cn^{-1/2}$ with a constant $C$ and $t_\infty = \infty$.
We consider four different settings to simulate results under different asymptotic frameworks of $n\asymp d^{\gamma}$, where $\gamma>0$:
\begin{itemize}
    \item $\gamma = 0.5:$ $n$ from 100 to 200, with intervals 5, $d = n^{2}$.
    \item $\gamma = 0.8:$ $n$ from 500 to 1000, with intervals 10, $d = n^{5/4}$.
    \item $\gamma = 1.5:$ $n$ from 1000 to 5000, with intervals 200, $d = n^{2/3}$.
    \item $\gamma = 1.8:$ $n$ from 1000 to 5000, with intervals 200, $d = n^{5/9}$.
\end{itemize}

We numerically approximate the excess risk $\|{f}_{t} - f_{\star}\|_{L^2}^2$ by $\sum_{i=1}^{N}({f}_{t}(z_i) - f_{\star}(z_i))^2/N$, where $N=1000$ and $z_i$'s are test data drawn i.i.d. from $\rho_{\calX}$.
For each combination of $(n, d)$, we repeat the experiments $20$ times and compute the average excess risk.
To visualize the convergence rate $r$, we perform
logarithmic least-squares $\log \text{risk} = r \log n + b$ to fit the excess risk with respect to
the sample size and display the value of $r$.

\begin{figure*}[h!]
    \begin{subfigure}[t]{0.48\textwidth}
           \includegraphics[width=2.8in, keepaspectratio]{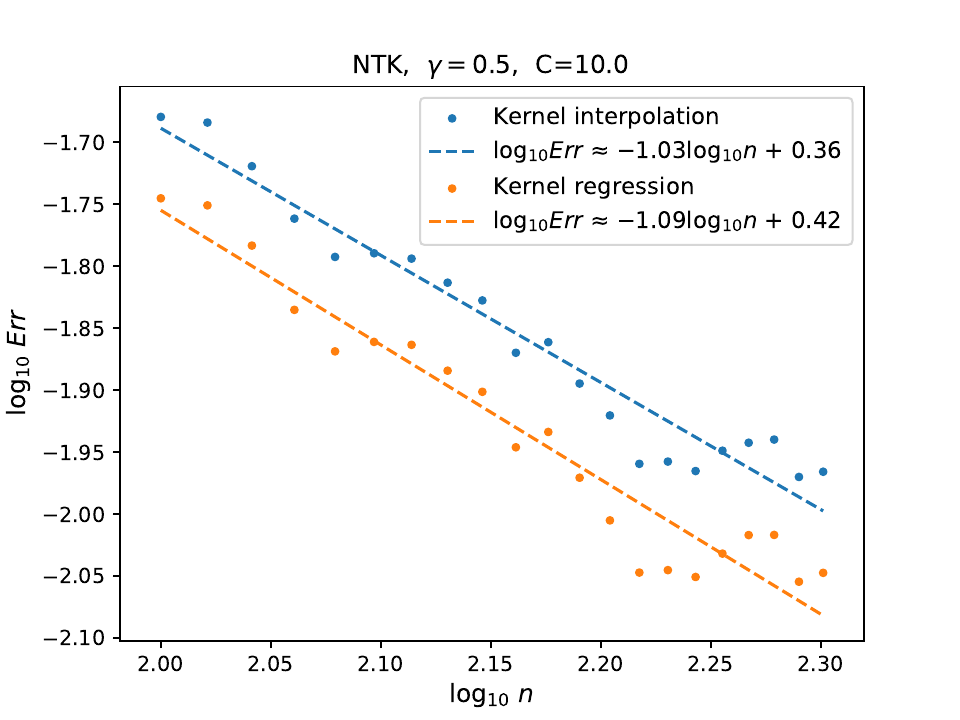}
\caption{$n = d^{0.5}$, kernel regression theoretical rate $=n^{-1}$}
    \end{subfigure}
    \begin{subfigure}[t]{0.48\textwidth}
            \includegraphics[width=2.8in, keepaspectratio]{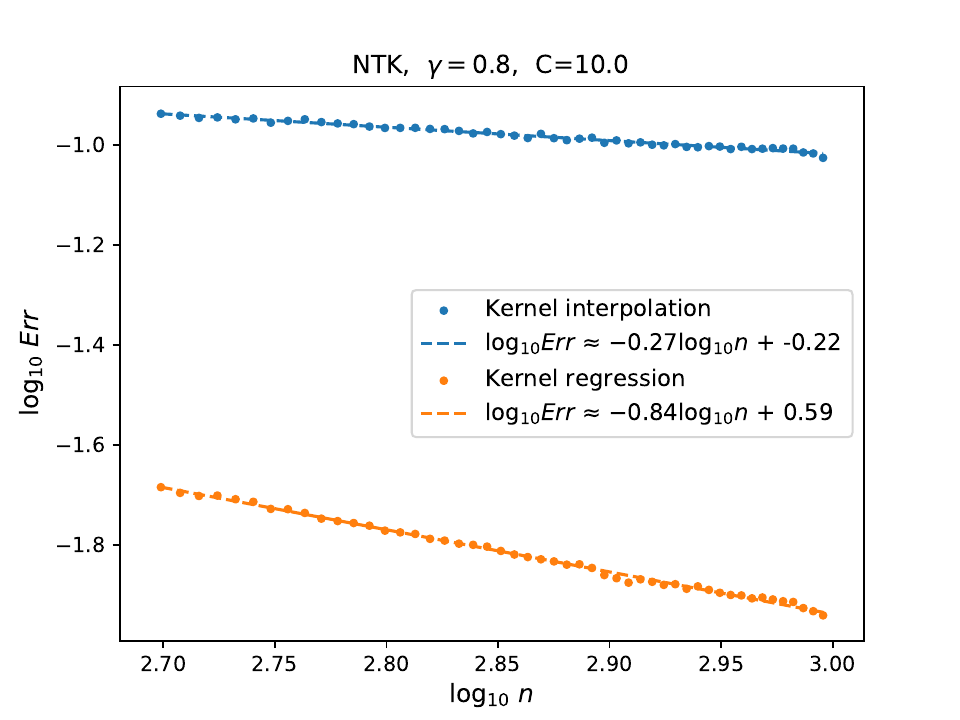}
\caption{$n = d^{0.8}$, kernel regression theoretical rate $=n^{-1}$}
    \end{subfigure}
    
        \begin{subfigure}[t]{0.48\textwidth}
            \includegraphics[width=2.8in, keepaspectratio]{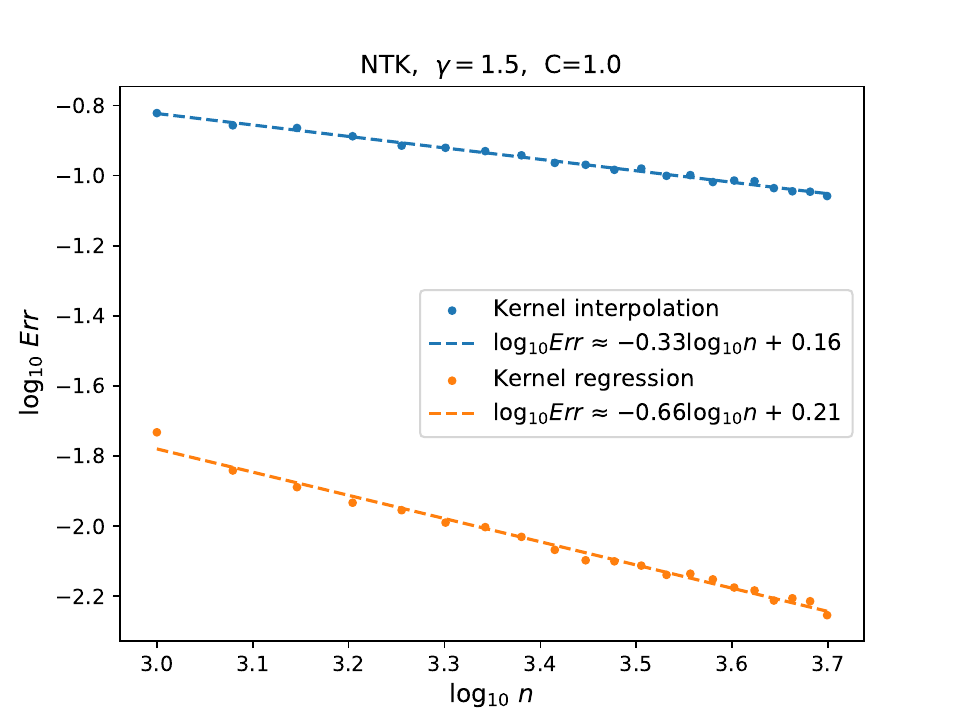}
\caption{$n = d^{1.5}$, kernel regression theoretical rate $=n^{-2/3}$}
    \end{subfigure}
        \begin{subfigure}[t]{0.48\textwidth}
            \includegraphics[width=2.8in, keepaspectratio]{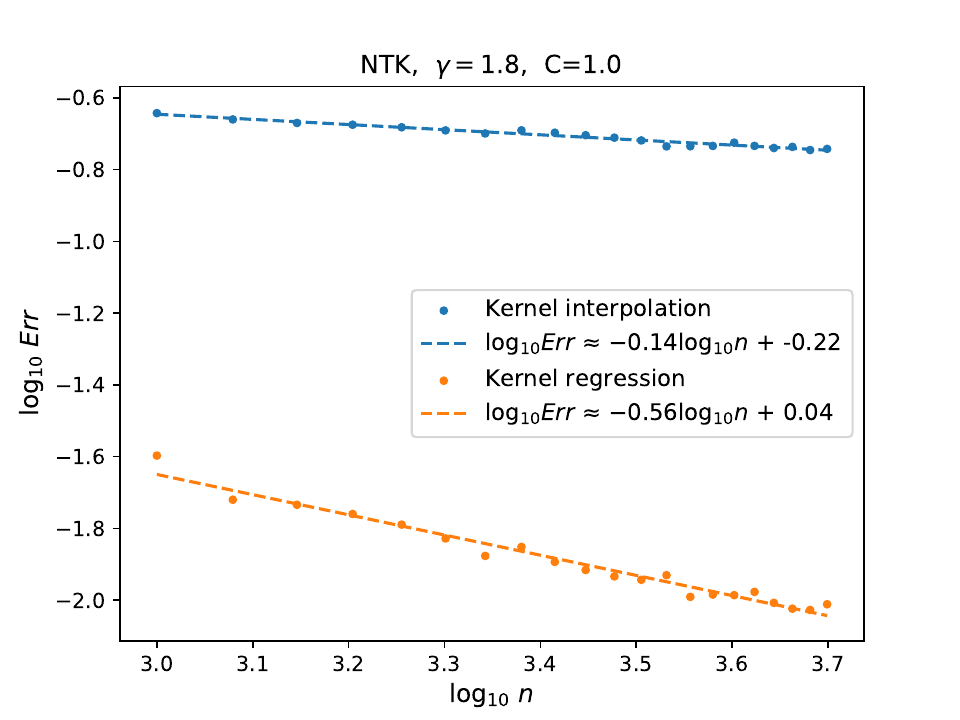}
\caption{$n = d^{1.8}$, kernel regression theoretical rate $=n^{-5/9}$}
    \end{subfigure}
    \caption{
    Log excess risk decay curves of kernel regression and kernel interpolation with NTK under different asymptotic frameworks $n \asymp d^\gamma$.
    The blue curves represent the average excess risks computed from 20 trials.
    The dashed black lines are obtained through logarithmic least-squares regression, with the slopes indicating the convergence rates denoted as $r$.
    The four sub-figures from left to right and from top to bottom correspond to the settings where $n$ is set to be equal to $d^{0.5}$, $d^{0.8}$, $d^{1.5}$, and $d^{1.8}$ respectively. In each setting, the constant $C$ is chosen from $\{0.001, 0.01, 0.1, 1, 10, 100, 1000\}$, and we report our numerical results under the best choice of $C$.
    }
    \label{fig:3_1}
\end{figure*}

\begin{figure*}[h!]
    \begin{subfigure}[t]{0.48\textwidth}
           \includegraphics[width=2.8in, keepaspectratio]{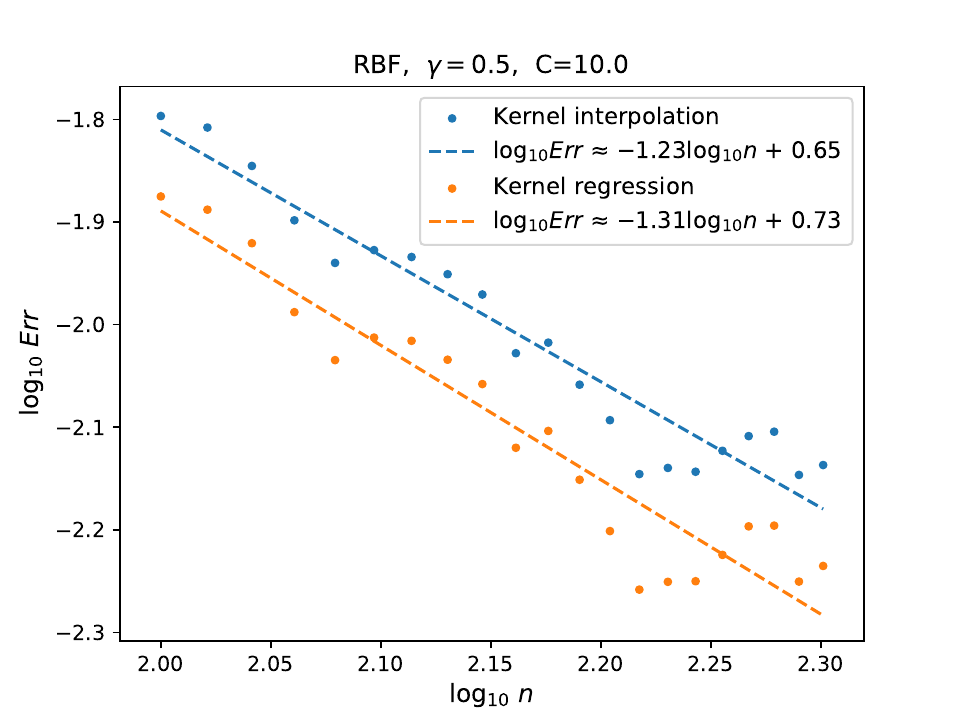}
\caption{$n = d^{0.5}$, kernel regression theoretical rate $=n^{-1}$}
    \end{subfigure}
    \begin{subfigure}[t]{0.48\textwidth}
            \includegraphics[width=2.8in, keepaspectratio]{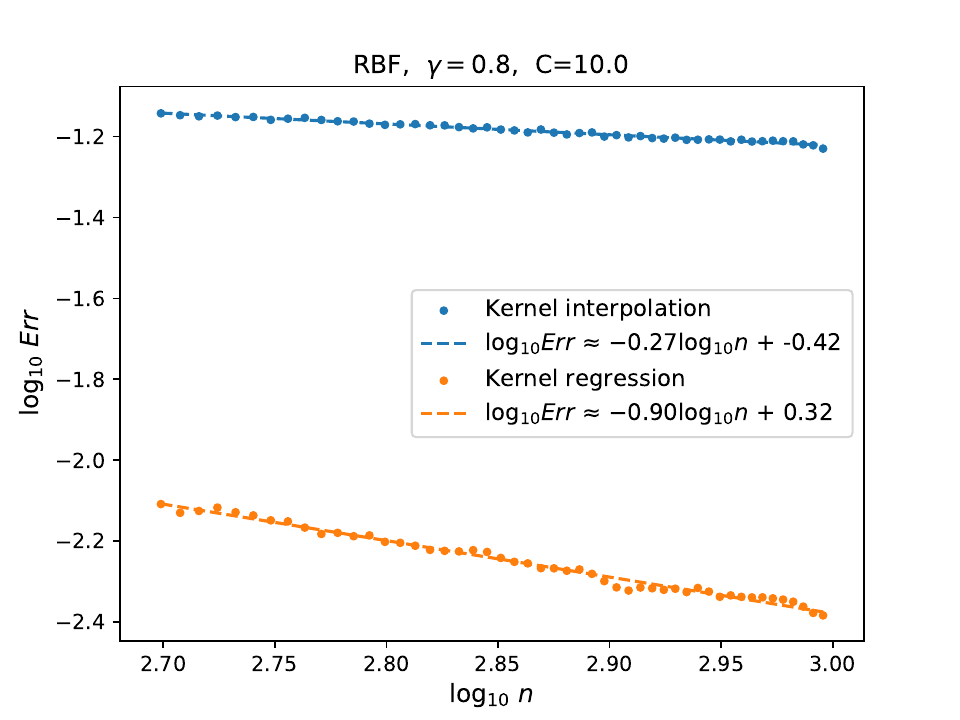}
\caption{$n = d^{0.8}$, kernel regression theoretical rate $=n^{-1}$}
    \end{subfigure}
    
        \begin{subfigure}[t]{0.48\textwidth}
            \includegraphics[width=2.8in, keepaspectratio]{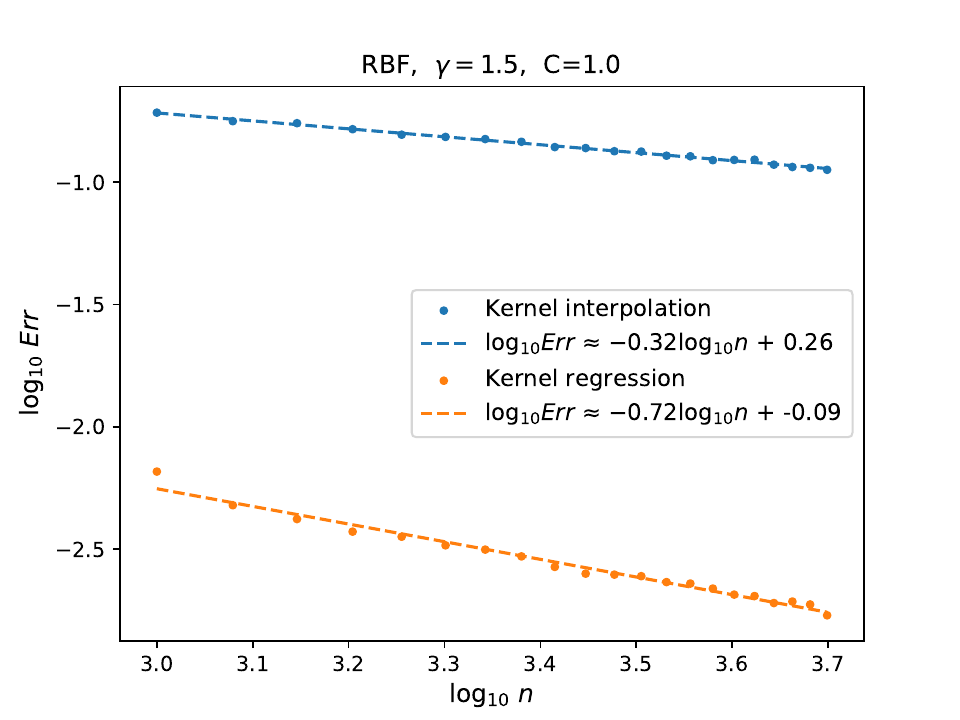}
\caption{$n = d^{1.5}$, kernel regression theoretical rate $=n^{-2/3}$}
    \end{subfigure}
        \begin{subfigure}[t]{0.48\textwidth}
            \includegraphics[width=2.8in, keepaspectratio]{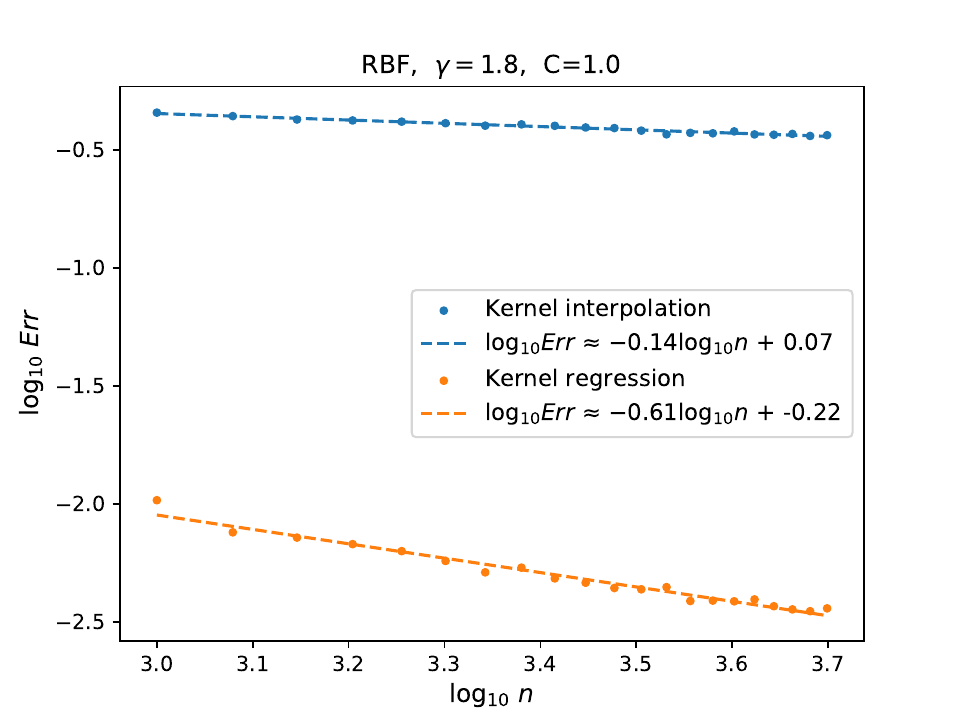}
\caption{$n = d^{1.8}$, kernel regression theoretical rate $=n^{-5/9}$}
    \end{subfigure}
    \caption{
    A similar plot as Figure \ref{fig:3_1}, but with the RBF kernel.
    }
    \label{fig:3_2}
\end{figure*}

We try different values of the constant 
$C \in \{0.001, 0.01, 0.1, 1, 10, 100, 1000\}$ 
for the stopping time $\widehat{T}$, and we report our numerical results in Figure \ref{fig:3_1} and Figure \ref{fig:3_2} under the best choice of $C$. For each setting, we observe that the convergence rates of the excess risk in kernel regression algorithms are consistently close to the theoretical rate as given in Theorem \ref{thm:near_lower_inner_large_d} and Theorem \ref{thm:near_upper_inner_large_d}.
Moreover, we find that KI is comparative to kernel regression when $\gamma = 0.5$, and is worse than kernel regression when $\gamma = 0.8, 1.5$, or $1.8$.

\section{Conclusion and Future Works}\label{sec:conclusion}

In this paper, we built a set of technical tools to study kernel regression in large dimensions (where the sample size $n$ was polynomially depending on the dimensionality $d$, i.e., $n \asymp d^{\gamma}$ for some $\gamma >0$). We have shown that a properly chosen early stopping rule results in a fitting function with its excess risk (generalization error) upper bounded by the Mendelson complexity $\varepsilon_{n}$ and the minimax lower bound of the generalization error is bounded below by the metric entropy $\bar{\varepsilon}_{n}$.
We then examined the spherical data. 
 Provided that $f_{\star}$ fell into the unit ball of  $\mathcal{H}^{\inner}$, the RKHS associated with an inner product kernel $K^{\inner}$, we showed in Theorem \ref{thm:upper_bpund_inner} and Theorem \ref{thm:lower_inner_large_d} that the minimax rate of the excess risk of kernel regression with $K^{\inner}$ is $n^{-1/2}$ when $n \asymp d^{\gamma}$ for any $\gamma=2, 4, 6, \cdots$.
Then, in Section \ref{sec:more_intevals}, we determined the minimax rate of kernel regression with $K^{\inner}$ when $n \asymp d^{\gamma}$ for any $\gamma >0$.
We also found some intriguing phenomena exhibited in large-dimension kernel regression, which were referred to as the `multiple descent behavior' and the `periodic plateau behavior'.

This periodic behavior has been observed in a variety of research. For example, there are some works discussing the inconsistency of  kernel methods with inner product kernels when $n \asymp d^{\gamma}$ for some non-integer $\gamma$ ( see e.g., \cite{Ghorbani_When_2021,
ghorbani2021linearized,
Ghosh_three_2021,
mei2022generalization,  misiakiewicz_spectrum_2022}).
Denote 
$R_{\mathrm{KR}}\left(f_{\star}, \boldsymbol{X}, \lambda\right)$ as the excess risk of kernel ridge regression
and $\mathrm{P}_{>\ell}$ as the projection onto polynomials with degree $>\ell$.
\cite{ghorbani2021linearized} showed that 
    for any $\varepsilon>0$ and any regularization parameter $0<\lambda<\lambda^*$ with high probability, one has 
\begin{equation}
    \left|R_{\mathrm{KR}}\left(f_{\star}, \boldsymbol{X}, \lambda\right)-\left\|\mathrm{P}_{>\ell} f_{\star}\right\|_{L^2}^2\right| \leq \varepsilon\left(\left\|f_{\star}\right\|_{L^2}^2+\sigma^2\right),
\end{equation}
where $\ell = \lfloor \gamma \rfloor$ and $\lambda^*$ is defined as (20) in \cite{ghorbani2021linearized}.
They provided a cartoon representation of their results ( we replicated it in Figure \ref{fig:3_comparison} (a)). 

Furthermore, there is also another line of work that obtained an upper bound on the convergence rate of the excess risk of kernel interpolation \cite{liang2020multiple}. 
With the assumption that  the regression function can be expressed as $f_{\star}(x) = \langle K(x, \cdot), \rho_{\star}(\cdot) \rangle_{L^2}$, with $\|\rho_{\star}\|_{L^4}^4 \leq C$ for some constant $C>0$, they showed that with probability at least $1-\delta - \exp\{n / d^{\ell}\}$,
    \begin{equation}
        \left\|{f}_{\infty}^{\inner} - f_{\star}\right\|_{L^2}^2
\leq \mathfrak{C}_1 n^{-\eta(\gamma)},
    \end{equation}
where $\ell = \lfloor \gamma\rfloor$, and
$\eta(\gamma) = \min\left\{ (\ell+1) / \gamma - 1, 1-  \ell / \gamma \right\}$.
In Figure \ref{fig:3_comparison} (c), we plot the upper bound results in \cite{liang2020multiple} for kernel interpolation, represented by the orange line. It is clear that this curve also exhibits similar periodic behavior.

The new periodic phenomena exhibited in kernel regression with large dimensional data might be an interesting research direction.  Motivated by recent work in kernel regression with fixed dimensions, we believe that there might be a uniform explanation for this periodic behavior of kernel regression with respect to the inner product kernels. In particular, whether the periodic plateau behavior holds for more general classes of kernels defined on some domain other than $\mathbb S^{d}$ would be of great interest.

\newpage

\begin{appendix}

\end{appendix}

\section*{Acknowledgements}
The authors gratefully acknowledge {\it the National Natural Science Foundation of China (Grant 11971257), Beijing Natural Science Foundation (Grant Z190001), National Key R\&D Program of China (2020AAA0105200), and Beijing
Academy of Artificial Intelligence.}
Part of the work in this paper was done while the authors visited the Center of Statistical Research, School of Statistics, Southwestern University of Finance and Economics.
The authors would like to thank the anonymous referees, the Associate Editor, and the Editor for their constructive comments that improved the quality of this paper.



\bibliographystyle{plain} 
\bibliography{reference.bib}

\newpage

\title{Supplement to "Optimal Rate of Kernel Regression
in Large Dimensions"}

\section{Proof of Theorems in Section \ref{sec:main_results}}

\subsection{Proof of Theorem \ref{theorem:restate_norm_diff}}\label{proof_3.2}

 The proof is divided into four lemmas below:

\begin{lemma}
\label{thm:empirical_loss}
Let $\widehat{\varepsilon}_n$ be the empirical Mendelson complexity defined in \eqref{eqn:def_empirical_mendelson_complexity}.
There exist absolute constants $C_2$ and $C_3$
such that we have
\begin{equation}\label{eqn:thm:empirical_loss}
\left\|f_{\widehat T} - f_{\star} \right\|_n^2
\leq \frac{\sigma^2+1}{\sigma^2} \widehat{\varepsilon}_n^2,
\end{equation}
with probability at least $1-C_2 \exp \left(-C_3 n \widehat{\varepsilon}_n^2\right)$, where $\|g\|^{2}_{n}=\frac{1}{n}\sum_{j \leq n}g(x_{j})^{2}$, and the randomness comes from the noise term $\bm{y} -f_{\star}(\bm{X})$.
\end{lemma}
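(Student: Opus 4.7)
The plan is to perform a bias--variance decomposition of $f_{\widehat T}(\bm X)-f_\star(\bm X)$ in the eigenbasis of the empirical kernel matrix, bound each piece by a multiple of $\widehat\varepsilon_n^2$, and then apply a Gaussian quadratic-form concentration inequality to convert the expectation bound on the variance into a high-probability bound with exponent $n\widehat\varepsilon_n^2$.

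Concretely, set $\bm K_n := K(\bm X,\bm X)/n$ with eigendecomposition $\bm K_n=\bm U\mathrm{diag}(\widehat\lambda_i)\bm U^\top$. From \eqref{solution:gradient:flow} evaluated at $\bm X$ we have $f_{\widehat T}(\bm X)=(\bm I_n-e^{-\bm K_n\widehat T})\bm y$, and substituting $\bm y=f_\star(\bm X)+\epsilon$ yields
\begin{equation*}
f_{\widehat T}(\bm X)-f_\star(\bm X) \;=\; -\,e^{-\bm K_n\widehat T}f_\star(\bm X) \;+\; (\bm I_n-e^{-\bm K_n\widehat T})\epsilon.
\end{equation*}
Writing $\widehat f_i := u_i^\top f_\star(\bm X)/\sqrt n$ and $\tilde\epsilon_i:=u_i^\top\epsilon$, this decomposes as $\|f_{\widehat T}-f_\star\|_n^2 \le (1+\delta)\,B + (1+1/\delta)\,V$ for any $\delta>0$, where $B:=\sum_i e^{-2\widehat T\widehat\lambda_i}\widehat f_i^{\,2}$ and $V:=\tfrac1n\sum_i(1-e^{-\widehat T\widehat\lambda_i})^2\tilde\epsilon_i^{\,2}$; $\delta$ will be tuned at the end to produce the stated constant $(\sigma^2+1)/\sigma^2$.

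For the bias $B$, the Mercer expansion $f_\star=\sum_k\sqrt{\lambda_k}\alpha_k\phi_k$ with $\sum_k\alpha_k^2=\|f_\star\|_{\mathcal H}^2\le 1$ combined with the factorization $\bm K_n=\Psi\Psi^\top$ for $\Psi=[\sqrt{\lambda_k}\phi_k(X_i)]/\sqrt n$ gives $\widehat f_i=\sqrt{\widehat\lambda_i}\,(V^\top\alpha)_i$ (using the SVD of $\Psi$), whence $\sum_i \widehat f_i^{\,2}/\widehat\lambda_i\le 1$. Using the elementary inequality $xe^{-2\widehat T x}\le 1/(2e\widehat T)$ for $x\ge 0$, we conclude
\begin{equation*}
B \;=\; \sum_i \widehat\lambda_i e^{-2\widehat T\widehat\lambda_i}\cdot\frac{\widehat f_i^{\,2}}{\widehat\lambda_i} \;\le\; \frac{1}{2e\widehat T} \;=\; \frac{\widehat\varepsilon_n^{\,2}}{2e}.
\end{equation*}

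For the variance $V$, the key step is to control the expectation conditionally on $\bm X$: since $\tilde\epsilon_i\sim\mathcal N(0,\sigma^2)$ i.i.d., $\mathbb E[V\mid \bm X]=\frac{\sigma^2}{n}\sum_i(1-e^{-\widehat T\widehat\lambda_i})^2$, and using $(1-e^{-x})^2\le\min(x,1)$ together with $\widehat T=1/\widehat\varepsilon_n^{\,2}$ yields $\mathbb E[V\mid\bm X]\le\sigma^2\widehat R_K(\widehat\varepsilon_n)^2/\widehat\varepsilon_n^{\,2}\le\widehat\varepsilon_n^{\,2}/(4e^2)$ by the defining relation \eqref{eqn:def_empirical_mendelson_complexity}. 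To pass from expectation to high probability, observe that $V$ is a Gaussian quadratic form with coefficients $c_i:=(1-e^{-\widehat T\widehat\lambda_i})^2/n\le 1/n$ and $\sum_i c_i\le\widehat R_K(\widehat\varepsilon_n)^2/(\sigma^2\widehat\varepsilon_n^{\,2})\cdot\sigma^2/\widehat\varepsilon_n^{\,2}\cdot\widehat\varepsilon_n^{\,2}\lesssim 1$; the Hanson--Wright/Laurent--Massart inequality then gives $V\le 2\mathbb E[V\mid\bm X]+t/n\le C\widehat\varepsilon_n^{\,2}$ with probability at least $1-C_2\exp(-C_3 n\widehat\varepsilon_n^{\,2})$, the exponent coming from $\max_i c_i\le 1/n$ and the expected size of $V$.

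The remaining step is bookkeeping: with the bias bound $B\le\widehat\varepsilon_n^{\,2}/(2e)$ and the high-probability variance bound $V\lesssim\widehat\varepsilon_n^{\,2}$, we choose $\delta$ so that the coefficient of $V$ absorbs the $1/\sigma^2$ factor that appears when tracing the absolute constants through the Mendelson-complexity identity $\widehat R_K(\widehat\varepsilon_n)=\widehat\varepsilon_n^{\,2}/(2e\sigma)$, arriving at $\|f_{\widehat T}-f_\star\|_n^2\le\tfrac{\sigma^2+1}{\sigma^2}\widehat\varepsilon_n^{\,2}$. I expect the main obstacle to be the empirical-RKHS identity $\sum_i\widehat f_i^{\,2}/\widehat\lambda_i\le\|f_\star\|_{\mathcal H}^2$ and the matching concentration exponent: both require that all constants be tracked as \emph{absolute} (independent of $\{\lambda_j\}$ and $d$), so one must avoid any step that would invoke a polynomial eigendecay hypothesis, which is precisely what distinguishes the large-dimensional setting of this paper from earlier fixed-$d$ arguments.
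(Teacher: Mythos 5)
Your proposal follows essentially the same route as the paper's proof: the same bias--variance split of $f_{\widehat T}(\bm X)-f_\star(\bm X)$ in the eigenbasis of $\frac{1}{n}K(\bm X,\bm X)$, the same empirical-RKHS identity $\sum_i \widehat f_i^{\,2}/\widehat\lambda_i\le \|f_\star\|_{\mathcal H}^2$ via the operator factorization, the same elementary exponential inequalities combined with the defining relation $\widehat{\mathcal R}_K(\widehat\varepsilon_n)=\widehat\varepsilon_n^2/(2e\sigma)$, and the same Gaussian quadratic-form (Hanson--Wright/Wright) concentration yielding the exponent $n\widehat\varepsilon_n^2$. The only cosmetic differences are your $\delta$-weighted Young decomposition in place of the paper's factor-$2$ split and slightly different elementary bounds for the bias; the constant bookkeeping closes exactly as in the paper.
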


\begin{lemma}
\label{corollary:norm_diff}
Let $\varepsilon_n$ be the population Mendelson complexity defined in \eqref{eqn:def_population_mendelson_complexity}. 
There exist absolute constants $C_1$, $C_2$, and $C_3$, such that
for any $M>0$, let $M\mathcal B := \left\{  g \in \mathcal H \mid  \|g\|_{\mathcal H} \leq M  \right\}$, then we have
\begin{align}
\left|\|g\|_n^2-\|g\|_{L^2}^2\right| \leq \frac{1}{2}\|g\|_{L^2}^2+C_1{M^2 \kappa \varepsilon_n^2} \quad \text { for all } g \in M\calB,
  \end{align}
holds with probability at least $1-C_2 e^{-C_3 n\varepsilon_n^2}$, where the randomness comes from $n$ samples $x_1, \cdots, x_n$.
\end{lemma}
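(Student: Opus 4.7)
}
My strategy is the standard peeling/localization argument from empirical process theory, tailored so that every constant I peel off is truly absolute. First I reduce to the unit ball: writing $g = M \tilde g$ with $\tilde g \in \mathcal B$ gives $\|g\|_n^2 - \|g\|_{L^2}^2 = M^2(\|\tilde g\|_n^2 - \|\tilde g\|_{L^2}^2)$, and $\|\tilde g\|_\infty \le \sqrt{\kappa}\,\|\tilde g\|_{\mathcal H} \le \sqrt\kappa$ by Assumption \ref{assu:trace_class} together with the reproducing property. So it suffices to prove the inequality for $M=1$ and $\kappa = 1$ (the general case then follows by rescaling, producing the $M^2\kappa$ prefactor).

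Next I set up the peeling. Let $\delta_0^2 := \varepsilon_n^2$ and $S_k := \{\tilde g \in \mathcal B : 2^{k-1}\delta_0^2 \le \|\tilde g\|_{L^2}^2 \le 2^k \delta_0^2\}$ for $k = 1, 2, \ldots$, together with $S_0 := \{\tilde g \in \mathcal B : \|\tilde g\|_{L^2}^2 \le \delta_0^2\}$. On each shell I apply Talagrand's concentration inequality to the empirical process
\begin{equation*}
Z_k := \sup_{\tilde g \in S_k}\bigl|\|\tilde g\|_n^2 - \|\tilde g\|_{L^2}^2\bigr|.
\end{equation*}
The functions $\tilde g^2$ are uniformly bounded by $1$ and have variance at most $\|\tilde g\|_\infty^2 \|\tilde g\|_{L^2}^2 \le 2^k\delta_0^2$ on $S_k$, so Talagrand's bound yields, with probability at least $1 - e^{-t}$,
\begin{equation*}
Z_k \;\lesssim\; \mathbb E[Z_k] + \sqrt{\,2^k \delta_0^2\, t/n\,} + t/n.
\end{equation*}

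The core step is to bound $\mathbb E[Z_k]$ by the Mendelson complexity. By symmetrization and the contraction principle (using the Lipschitz map $u \mapsto u^2$ on $[-1,1]$, with $\tilde g \in \mathcal B$ bounded by $1$), $\mathbb E[Z_k]$ is controlled by twice the Rademacher complexity of $S_k$. For an RKHS ball of radius $1$ intersected with an $L^2$ ball of radius $r$, the local Rademacher complexity is at most $\bigl[\tfrac{1}{n}\sum_{j}\min\{\lambda_j, r^2\}\bigr]^{1/2} = \mathcal R_K(r)$. Taking $r^2 = 2^k\delta_0^2$ gives $\mathbb E[Z_k] \lesssim \mathcal R_K(\sqrt{2^k}\,\delta_0)$, and since $\mathcal R_K(\varepsilon)/\varepsilon^2$ is nonincreasing in $\varepsilon$ together with the fixed-point definition $\mathcal R_K(\varepsilon_n) \le \varepsilon_n^2/(2e\sigma)$, one gets $\mathbb E[Z_k] \lesssim 2^{k/2}\varepsilon_n^2$, and in fact (after a more careful bookkeeping using the $\min\{\lambda_j, r^2\}$ structure) $\mathbb E[Z_k] \le c\, 2^k \delta_0^2/4$ for a suitable absolute $c$.

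With $t \asymp 2^k n \delta_0^2$ in the Talagrand bound for shell $k$, both the variance term and $t/n$ are also absorbed into $2^k\delta_0^2 /4$, so that $Z_k \le 2^{k-2}\delta_0^2 \le \tfrac14(\|\tilde g\|_{L^2}^2 \vee \delta_0^2)$ holds on $S_k$ with probability at least $1 - C_2 e^{-C_3\, 2^k n\varepsilon_n^2}$. Summing the failure probabilities over $k \ge 0$ uses $\sum_k e^{-C_3 2^k n\varepsilon_n^2} \le 2 e^{-C_3 n\varepsilon_n^2}$ and gives a total failure probability of $C_2 e^{-C_3 n\varepsilon_n^2}$. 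On the complement, $|\|\tilde g\|_n^2 - \|\tilde g\|_{L^2}^2| \le \tfrac12 \|\tilde g\|_{L^2}^2 + C_1 \varepsilon_n^2$ uniformly in $\tilde g \in \mathcal B$; rescaling back produces the claimed bound. The main obstacle is precisely the last paragraph: the expected supremum, the variance-scaled Gaussian tail, and the wild term in Talagrand's bound all need to be tuned against the radius $2^{k/2}\delta_0$ of the shell so that they jointly remain below $\tfrac14 \|\tilde g\|_{L^2}^2$, and the fixed-point property of $\varepsilon_n$ is what makes this possible with absolute constants $C_1, C_2, C_3$ that do not depend on $d$, $n$, or the eigenvalue sequence $\{\lambda_j\}$.
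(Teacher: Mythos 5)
Your plan is correct and is essentially the paper's argument: the paper bounds $\|g\|_\infty \le M\sqrt{\kappa}$, checks $Q_n(\varepsilon_n)\le \sqrt{2}\,\mathcal{R}_K(\varepsilon_n)=\sqrt{2}\,\varepsilon_n^2/(2e\sigma)$ via the local Rademacher comparison (Lemma \ref{theorem:1}), and then invokes the localization result restated as Lemma \ref{lemma:wainwright14_1} (Wainwright, Theorem 14.1), whose proof is exactly the peeling/Talagrand/symmetrization--contraction argument you reconstruct. The only substantive differences are packaging (you reprove the cited theorem rather than citing it) and a cosmetic imprecision in your reduction to $\kappa=1$ (rescaling the kernel perturbs the fixed point $\varepsilon_n$ because $\sigma$ is fixed), which is avoided by simply carrying $\kappa$ through as the uniform bound on the class, as the paper does.
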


\begin{lemma}
\label{lemma:bound_empirical_and_expected_mandelson_complexities}
Let $\widehat{\varepsilon}_n$ be the empirical Mendelson complexity defined in \eqref{eqn:def_empirical_mendelson_complexity} and $\varepsilon_n$ be the population Mendelson complexity defined in \eqref{eqn:def_population_mendelson_complexity}. 
Under the same assumptions as Theorem \ref{theorem:restate_norm_diff}, 
there exist absolute constants $C_1$, $C_2$, $C_3$, $C_4$, and a constant $\mathfrak{C}_0$, such that for any $n \geq \mathfrak{C}_0$,
we have
\begin{equation}\label{eqn:lemma:bound_empirical_and_expected_mandelson_complexities}
    \begin{aligned}
    C_1 {\varepsilon}_n \leq
    \widehat{\varepsilon}_n
    \leq C_2 {\varepsilon}_n,
    \end{aligned}
\end{equation}
holds with probability at least $1-C_3 \exp \left\{ -C_4 n \varepsilon_n^2\right\}$, 
where the randomness comes from $n$ samples $x_1, \cdots, x_n$.
\end{lemma}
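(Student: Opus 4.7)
The statement compares two critical radii defined via monotone fixed-point equations, so the natural strategy (following Raskutti--Wainwright--Yu) has two ingredients: monotonicity of the ratio $\mathcal{R}_K(\varepsilon)/\varepsilon^2$, which reduces the comparison $\widehat{\varepsilon}_n \asymp \varepsilon_n$ to checking the defining inequality at a single well-chosen scale, and a two-sided concentration estimate for $\widehat{\mathcal{R}}_K(\varepsilon)$ around $\mathcal{R}_K(\varepsilon)$ at that scale. The novelty relative to existing expositions is that all constants must remain absolute (independent of $\{\lambda_j\}$ and $d$), so the lemma can be fed into the large-dimensional applications in Sections \ref{section_4_ntk_certain_asymp_frameworks} and \ref{sec:more_intevals}.

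First I would observe that $\varepsilon \mapsto \mathcal{R}_K(\varepsilon)^2/\varepsilon^2 = n^{-1}\sum_j \min\{\lambda_j/\varepsilon^2, 1\}$ is nonincreasing in $\varepsilon$, and likewise for $\widehat{\mathcal{R}}_K$. Consequently, $\varepsilon_n$ and $\widehat\varepsilon_n$ are characterized by the unique equalities $\mathcal{R}_K(\varepsilon_n) = \varepsilon_n^2/(2e\sigma)$ and $\widehat{\mathcal{R}}_K(\widehat\varepsilon_n) = \widehat\varepsilon_n^2/(2e\sigma)$, and the bound $\widehat\varepsilon_n \leq C_2\varepsilon_n$ is equivalent to verifying $\widehat{\mathcal{R}}_K(C_2\varepsilon_n) \leq (C_2\varepsilon_n)^2/(2e\sigma)$, with the symmetric reduction for the lower bound.

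Next I would prove the two-sided concentration
$$\tfrac{1}{4}\,\mathcal{R}_K(\varepsilon)^2 - c\,\kappa\,\varepsilon_n^2 \;\leq\; \widehat{\mathcal{R}}_K(\varepsilon)^2 \;\leq\; 4\,\mathcal{R}_K(\varepsilon)^2 + c\,\kappa\,\varepsilon_n^2,$$
valid uniformly over $\varepsilon$ in a constant-factor neighborhood of $\varepsilon_n$ with probability at least $1 - C_3 e^{-C_4 n \varepsilon_n^2}$. The starting point is the variational identity $\mathcal{R}_K(\varepsilon)^2 \asymp \min_{m \geq 0}\{m\varepsilon^2/n + n^{-1}\sum_{j > m}\lambda_j\}$ and its empirical analogue, together with the Courant--Fischer characterization of $\sum_{j>m}\lambda_j$ as the infimum of $\sup_{g \perp V,\,\|g\|_{\mathcal H}\leq 1}\|g\|_{L^2}^2$ over $m$-dimensional subspaces $V$ (and likewise for $\widehat\lambda_j$ with $\|\cdot\|_n$ replacing $\|\cdot\|_{L^2}$). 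Lemma \ref{corollary:norm_diff} applied with $M=1$ gives uniform two-sided comparability of $\|g\|_n^2$ and $\|g\|_{L^2}^2$ on $\mathcal{B}$, and this comparability propagates through the two infima to yield the displayed inequality.

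To conclude, I would plug $\varepsilon = C_2\varepsilon_n$ into the concentration estimate and use $\mathcal{R}_K(\varepsilon_n) = \varepsilon_n^2/(2e\sigma)$ together with monotonicity (so $\mathcal{R}_K(C_2\varepsilon_n) \leq \mathcal{R}_K(\varepsilon_n)$); this yields $\widehat{\mathcal{R}}_K(C_2\varepsilon_n) \leq (C_2\varepsilon_n)^2/(2e\sigma)$ once $C_2$ is a large absolute constant, and hence $\widehat\varepsilon_n \leq C_2\varepsilon_n$ by the reduction above. The matching lower bound is obtained symmetrically at a small absolute $C_1$. The main obstacle, and the whole point of the statement, is absoluteness of constants. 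Two points require care: (i) the additive slack $c\kappa\varepsilon_n^2$ must be absorbed into the multiplicative side of the fixed-point equation, which is exactly where the hypothesis ``$n\varepsilon_n^2 \geq C$ for any absolute $C$, eventually'' enters---it makes $\kappa\varepsilon_n^2$ negligible compared to $\varepsilon_n^2/(2e\sigma)$ at the relevant scale---and (ii) the constants in Lemma \ref{corollary:norm_diff} and in the Courant--Fischer comparison must be traced carefully to ensure they depend only on the absolute bound $\max_x K(x,x) \leq \kappa$ and not on the decay profile of $\{\lambda_j\}$ or on the dimension $d$.
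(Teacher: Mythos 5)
Your reduction step is the same as the paper's (Lemma \ref{lemma:sup_lemma1_for_lemma:bound_empirical_and_expected_mandelson_complexities}): by monotonicity it suffices to verify $\widehat{\mathcal{R}}_{K}(C_2\varepsilon_n)\leq (C_2\varepsilon_n)^2/(2e\sigma)$ and the symmetric inequality at $C_1\varepsilon_n$. The gap is in your concentration step, which is mis-scaled and cannot close the fixed-point argument. At the critical scale the quantity you must control satisfies $\mathcal{R}_K(\varepsilon_n)^2=\varepsilon_n^4/(4e^2\sigma^2)$, i.e.\ it is of order $\varepsilon_n^4$, whereas your additive slack $c\kappa\varepsilon_n^2$ is of order $\varepsilon_n^2\gg\varepsilon_n^4$. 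In fact your claimed upper bound is essentially vacuous there, since deterministically $\widehat{\mathcal{R}}_K(\varepsilon)^2=\frac1n\sum_{j\le n}\min\{\widehat\lambda_j,\varepsilon^2\}\leq\varepsilon^2$, which is already below $c\kappa\varepsilon_n^2$ for $\varepsilon$ in a constant-factor neighborhood of $\varepsilon_n$. Concretely, taking square roots your estimate only gives $\widehat{\mathcal{R}}_K(C_2\varepsilon_n)\leq 2\,\mathcal{R}_K(C_2\varepsilon_n)+\sqrt{c\kappa}\,\varepsilon_n$, and the term $\sqrt{c\kappa}\,\varepsilon_n$ cannot be absorbed into $(C_2\varepsilon_n)^2/(2e\sigma)\asymp\varepsilon_n^2$ for any \emph{absolute} $C_2$ once $\varepsilon_n\to0$ (which is exactly the regime of Sections \ref{section_4_ntk_certain_asymp_frameworks} and \ref{sec:more_intevals}, where $\varepsilon_n^2\asymp n^{-1/2}$). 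Your closing remark that "$\kappa\varepsilon_n^2$ is negligible compared to $\varepsilon_n^2/(2e\sigma)$" compares two quantities of the same order; the hypothesis $n\varepsilon_n^2\geq C$ lower-bounds $\varepsilon_n^2$ by $C/n$ and does not repair this. What you would actually need is a comparison of $\widehat{\mathcal{R}}_K$ and $\mathcal{R}_K$ with additive error of order $\varepsilon_n^2$ at the \emph{unsquared} level (equivalently $\varepsilon_n^4$ at the squared level).

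There is also an issue inside the route you sketch for the concentration: $\sup_{g\perp V,\;\|g\|_{\calH}\leq1}\|g\|_{L^2}^2$ over $m$-dimensional $V$ characterizes the single eigenvalue $\lambda_{m+1}$ (Courant--Fischer), not the tail sum $\sum_{j>m}\lambda_j$; for tails you need the Ky Fan/trace principle $\sum_{j>m}\lambda_j=\min_{\dim V=m}\operatorname{tr}\bigl((I-P_V)T_K(I-P_V)\bigr)$, and then the per-function additive error $C_1\kappa\varepsilon_n^2$ of Lemma \ref{corollary:norm_diff} gets multiplied by the number of basis functions entering the trace (infinitely many for the tail, of order $n\varepsilon_n^2$ for the head at the critical scale), so it does not "propagate" at the size you claim. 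The paper sidesteps all of this by never comparing the two complexity functionals through eigenvalues: it uses Mendelson's equivalences $\widehat{\mathcal{Q}}_n(t)\asymp\widehat{\mathcal{R}}_K(t)$ and $\mathcal{Q}_n(t)\asymp\mathcal{R}_K(t)$ (Lemma \ref{lemma:eqn:example_7_of_kol}), relates the empirically localized and population localized Rademacher suprema via Lemma \ref{corollary:norm_diff} together with Massart's concentration inequality (Lemma \ref{lemma:relation_between_Z_and_hat_Z}), and concentrates the suprema over the Rademacher signs by Talagrand's inequality for convex Lipschitz functions (Lemma \ref{lemma:talagrand_inequ_rademacher}); in that chain every additive error is of order $\varepsilon_n^2$ for the unsquared functionals, which matches the scale $\varepsilon^2/(2e\sigma)$ of the fixed point and is absorbed by choosing $C_2$ a large absolute constant. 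As written, your proposal does not establish the lemma.
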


\begin{lemma}\label{lemma: empirical_loss_bound}
There exists an absolute constant $C_1$, such that 
\begin{equation}\label{eqn:lemma: empirical_loss_bound}
	    \begin{aligned}
\|{f}_{\widehat{T}} - f_{\star}\|_{\mathcal H} & \leq 3,
	    \end{aligned}
\end{equation}
holds with probability at least $1-C_1\exp \left\{ - n (\min\{\widehat{\varepsilon}_n, \varepsilon_n\})^2  \right\}$, where the randomness comes from the noise term $\bm{y} -f_{\star}(\bm{X})$.
\end{lemma}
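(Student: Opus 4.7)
My plan is to split $f_{\widehat T}$ into a ``bias'' part $f_{\widehat T}^{B}$ and a ``variance'' part $f_{\widehat T}^{V}$ by replacing $\bm y$ in \eqref{solution:gradient:flow} by $f_\star(\bm X)$ and by $\bm\epsilon$ respectively. Then two triangle inequalities give
\begin{equation*}
\|f_{\widehat T}-f_\star\|_{\mathcal H}
\;\le\;\|f_{\widehat T}^{B}\|_{\mathcal H}+\|f_\star\|_{\mathcal H}+\|f_{\widehat T}^{V}\|_{\mathcal H}
\;\le\;\|f_{\widehat T}^{B}\|_{\mathcal H}+1+\|f_{\widehat T}^{V}\|_{\mathcal H},
\end{equation*}
so it is enough to show $\|f_{\widehat T}^{B}\|_{\mathcal H}\le 1$ deterministically and $\|f_{\widehat T}^{V}\|_{\mathcal H}\le 1$ on an event of the required exponentially small probability. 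The key point is that the noise $\bm\epsilon$ only affects $f_{\widehat T}^{V}$, while $\widehat\varepsilon_n$ (and hence $\widehat T$) depends only on $\bm X$ and may be treated as deterministic for the noise-randomness.

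\textbf{Bias bound.} Introduce the sampling operator $S:\mathcal H\to\mathbb R^n$, $(Sf)_i=f(x_i)/\sqrt n$, whose SVD I write as $S=U\Sigma V^{*}$. A straightforward rewriting of \eqref{solution:gradient:flow} gives $f_t^{B}=V(I-e^{-t\Sigma^{2}})V^{*}f_\star$; since $V$ is an isometry onto $\mathcal H_n:=\mathrm{span}\{K(\cdot,x_i)\}$ and the spectral filter $I-e^{-t\Sigma^{2}}$ has operator norm $\le 1$,
\begin{equation*}
\|f_{\widehat T}^{B}\|_{\mathcal H}\le \|V^{*}f_\star\|=\|P_{\mathcal H_n}f_\star\|_{\mathcal H}\le\|f_\star\|_{\mathcal H}\le 1.
\end{equation*}
This uses only $\|f_\star\|_{\mathcal H}\le 1$ and is deterministic.

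\textbf{Variance bound.} Diagonalising $\widehat K=U\,\mathrm{diag}(n\widehat\lambda_j)\,U^{\top}$ and evaluating the RKHS norm of the representer expansion yields
\begin{equation*}
\|f_{\widehat T}^{V}\|_{\mathcal H}^{2}
=\sum_{j=1}^{n}\frac{(1-e^{-\widehat T\widehat\lambda_j})^{2}}{n\widehat\lambda_j}\,(U^{\top}\bm\epsilon)_j^{2}
=\sum_{j}\tilde w_j\,\xi_j,
\end{equation*}
where $\xi_j\stackrel{\text{i.i.d.}}{\sim}\sigma^{2}\chi^{2}_1$ and $\tilde w_j=(1-e^{-\widehat T\widehat\lambda_j})^{2}/(n\widehat\lambda_j)$. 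The elementary bound $(1-e^{-u})^{2}\le u\min(u,1)$ together with $\widehat T=1/\widehat\varepsilon_n^{2}$ and the defining identity $\widehat{\mathcal R}_K(\widehat\varepsilon_n)^{2}=\widehat\varepsilon_n^{4}/(4e^{2}\sigma^{2})$ from \eqref{eqn:def_empirical_mendelson_complexity} gives the three deterministic estimates
\begin{equation*}
\mathbb E\|f_{\widehat T}^{V}\|_{\mathcal H}^{2}=\sigma^{2}\sum_j\tilde w_j\le\tfrac{1}{4e^{2}},\qquad
\sigma^{2}\|\tilde w\|_\infty\le\frac{\sigma^{2}}{n\widehat\varepsilon_n^{2}},\qquad
\sigma^{4}\|\tilde w\|_2^{2}\le\frac{\sigma^{2}}{4e^{2}n\widehat\varepsilon_n^{2}}.
\end{equation*}
A Laurent--Massart tail bound for weighted $\chi^{2}$ variables, applied with $x=cn\widehat\varepsilon_n^{2}/\sigma^{2}$ for a small absolute $c$, then gives $\|f_{\widehat T}^{V}\|_{\mathcal H}^{2}\le 1$ with probability at least $1-e^{-cn\widehat\varepsilon_n^{2}/\sigma^{2}}$. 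Combining with the bias estimate and the outer triangle inequality yields $\|f_{\widehat T}-f_\star\|_{\mathcal H}\le 3$. Finally, since $\widehat\varepsilon_n\ge\min\{\widehat\varepsilon_n,\varepsilon_n\}$ and the $\sigma^{-2}$ in the exponent is an absolute quantity under the standing convention on $\sigma$, the probability can be rewritten in the stated form $1-C_1\exp\{-n(\min\{\widehat\varepsilon_n,\varepsilon_n\})^{2}\}$; the ``$\min$'' is useful only to allow one later to invoke Lemma~\ref{lemma:bound_empirical_and_expected_mandelson_complexities} and switch to the deterministic quantity $\varepsilon_n$.

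\textbf{Main obstacle.} The only delicate part is squeezing the variance term into the absolute constant~$1$. Both the weights $\tilde w_j$ and the scaling $\widehat T$ depend on the noise level $\sigma$ and on the random spectrum $\{\widehat\lambda_j\}$, so one has to exploit the defining equation of $\widehat\varepsilon_n$ twice (once for $\sum\tilde w_j$ and once through $\|\tilde w\|_\infty$) and then choose the Laurent--Massart deviation parameter~$x$ so that neither the Gaussian nor the sub-exponential term in the tail bound blows up. Everything else (the operator-theoretic bias estimate and the triangle inequalities) is routine.
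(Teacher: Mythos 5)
Your proof is correct, but it is organized differently from the paper's. The paper works directly with $\|f_{\widehat T}\|_{\mathcal H}^2$, rewriting it via the representer/kernel-matrix identity as $\tfrac1n f_{\widehat T}(\bm X)^\tau\bigl[\tfrac1n K(\bm X,\bm X)\bigr]^{-1}f_{\widehat T}(\bm X)$ and then expanding $\bm y=f_\star(\bm X)+\boldsymbol{e}$ into three terms: a signal quadratic form $C_{\widehat T}\le 1$ (the same projection bound you use for the bias), a cross term $A_{\widehat T}$ controlled by a one-dimensional Gaussian tail bound with variance $\nu^2\le 4\widehat T$, and a noise quadratic form $B_{\widehat T}$ controlled by Wright's Hanson--Wright-type inequality (Lemma \ref{lemma:bound_in_Wright}), all with exponents of order $n\widehat\varepsilon_n^2$; this gives $\|f_{\widehat T}\|_{\mathcal H}\le 2$ and hence the constant $3$ after one triangle inequality. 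You instead split the estimator itself into bias and variance functions, kill the cross term by an extra triangle inequality in $\mathcal H$-norm, bound the bias part deterministically by the spectral-filter/isometry argument, and handle the variance part as a weighted $\chi^2$ sum with a Laurent--Massart tail, using the defining inequality of $\widehat\varepsilon_n$ exactly as the paper does (for $\sum_j\tilde w_j$ and $\|\tilde w\|_\infty$, hence $\|\tilde w\|_2$). Your route buys a cleaner argument (no separate cross-term bound, a single standard concentration inequality), at the cost of a slightly larger triangle-inequality budget, which still lands on the stated constant $3$; the paper's route gives the marginally stronger intermediate statement $\|f_{\widehat T}\|_{\mathcal H}\le 2$. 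On the probability: both your bound and the paper's carry a $\sigma$-dependent constant in the exponent and involve only $\widehat\varepsilon_n$, so your observations that $\widehat\varepsilon_n\ge\min\{\widehat\varepsilon_n,\varepsilon_n\}$ and that $\sigma$ is fixed by the paper's conventions are exactly the same (mild) adjustments the paper itself implicitly makes when stating the lemma.
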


It is a tedious work to show that these constants are absolute constants. We defer the details of the proofs
to Appendix \ref{appendix_thm_3.3_lemmas}.
Now let's begin the proof of Theorem \ref{theorem:restate_norm_diff}.  Thanks to the Lemma \ref{lemma:bound_empirical_and_expected_mandelson_complexities}, we know the following three statements hold with probability at least $1-C_{2}\exp\{-C_{3}n\varepsilon_{n}^{2}\}$ for some absolute constants $C_{1}$, $C_{2}$, and $C_{3}$.
\begin{itemize}
    \item[a)] Lemma  \ref{thm:empirical_loss} and \ref{lemma:bound_empirical_and_expected_mandelson_complexities} imply that $\|f_{\widehat{T}}-f_{\star}\|_{n}^2\leq \frac{\sigma^2+1}{\sigma^2}\widehat{\varepsilon}_{n}^2\leq C_{1}\varepsilon_{n}^2 $.
    
    \item[b)] Lemma \ref{lemma: empirical_loss_bound} guarantees 
$
\frac{1}{3}\left( {f}_{\widehat{T}} - f_{\star} \right) \in \mathcal B
$.

\item[c)] Lemma \ref{corollary:norm_diff} then guarantees
    \begin{align}\label{eqn:47_norm_diff}
    \frac{1}{2}\|{f}_{\widehat{T}} - f_{\star}\|_{L^2}^2
    \leq
\|{f}_{\widehat{T}} - f_{\star}\|_n^2 + {9C_1 \kappa\varepsilon_n^2},
  \end{align}
\end{itemize}

Conditioning on the event that both (\ref{eqn:thm:empirical_loss}), (\ref{eqn:lemma:bound_empirical_and_expected_mandelson_complexities}), (\ref{eqn:lemma: empirical_loss_bound}), and (\ref{eqn:47_norm_diff}) hold.
we have
\begin{align*}
    \|f_{\widehat{T}}-f_{\star}\|^{2}_{L^{2}}
    {\leq}
    2\|{f}_{\widehat{T}} - f_{\star}\|_n^2 + {C_1\varepsilon_n^2}
    \leq
    3C_1\varepsilon_n^2,
\end{align*}
holds with probability at least $1-C_2\exp \left\{ - C_3 n  \varepsilon_n^2  \right\}$.
\hfill $\square$

\subsection{Proof of Proposition \ref{prop:rc_violated_in_large_dimensions}}
Recall that each eigenvalue $\mu_k$ has multiplicity $N(d, k)$ (see, e.g., Appendix \ref{app:eigenvalues_of_kernels}).

For each $d \geq 1$, let $\tilde{\varepsilon}_d = 13\sqrt{\mu_2} / \alpha$, where $\mu_2 = \mu_2(d)$ is the eigenvalue of $K^{\inner}$ defined on $\mathbb{S}^d$. Then we have $(\alpha\tilde{\varepsilon}_d/12)^2 > \mu_2$ for any $d \geq 1$. From results in Appendix \ref{app:eigenvalues_of_kernels}, when $d \geq \mathfrak{C}$, a sufficiently large constant only depending on $\alpha$ and $\delta$, we further have
$$
\tilde{\varepsilon}_d^2 =\frac{169}{\alpha^2}{\mu_2}  < \frac{\mu_1}{2}, \quad K(\sqrt{\mu_1 / 2}) \geq \frac{1}{2}N(d, 1) \log(2) > \frac{1}{\delta}\log\left(\frac{12}{\alpha}\right),
$$
where $K(\varepsilon)=1/2\sum_{k: \mu_k > \varepsilon^2} N(d, k) \log\left({\mu_k}/{\varepsilon^2}\right)$.

From the monotonicity of $K(\cdot)$, when $d \geq \mathfrak{C}$, we have $K(\tilde{\varepsilon}_d) \geq K(\sqrt{\mu_1 / 2})$. Therefore, from Lemma \ref{lemma_entropy_of_RKHS} and Lemma \ref{lemma_M_2_and_V_2}, we have
\begin{equation*}
    \begin{aligned}
        &~ 
        \liminf _{d \rightarrow \infty} \frac{M_2(\alpha \tilde{\varepsilon}_d, \calB)}{M_2(\tilde{\varepsilon}_d, \calB)}
        \leq 
        \sup_{d \geq \mathfrak{C}} \frac{K(\alpha \tilde{\varepsilon}_d / 12)}{K(\tilde{\varepsilon}_d)}
        = 
         \sup_{d \geq \mathfrak{C}} \left(  \frac{K(\tilde{\varepsilon}_d) + \log\left(\frac{12}{\alpha}\right)}{K(\tilde{\varepsilon}_d)}
         \right)
         \leq
        1+\delta.
    \end{aligned}
\end{equation*}
\hfill $\square$

\subsection{Proof of Theorem \ref{restate_lower_bound_m_complexity}}

    Suppose that there exists a constant $\mathfrak{C}$ only depending on  $c_{1},c_{2}$ and $\gamma$, such that for any $n \geq \mathfrak{C}$, we have $n\bar\varepsilon_n^2 \geq 2\log 2$.
Then for any $n \geq \mathfrak{C}$, 
(1) for any $\varepsilon \leq \sqrt{2}\sigma\bar\varepsilon_n$, we have $M_{2}(\varepsilon, \calB) \geq V_K(\varepsilon / (\sqrt{2}\sigma), \calD) \geq V_K(\bar\varepsilon_n, \calD) = n\bar\varepsilon_n^2 \geq 2\log 2$  (see, e.g., Appendix \ref{subsec:properties:metric:entropy}), and (2) we have $\underline\varepsilon_{n}<\infty$ since $M_{2}(\underline\varepsilon_{n}, \calB)=4n\bar\varepsilon_n^2 +2\log2 \geq 10\log2$. Therefore, we have actually verified that all conditions in Proposition \ref{lemma_yang_lower_bound} hold.

Thanks to the Proposition \ref{lemma_yang_lower_bound}, now we only need to verify that $\underline{\varepsilon}_{n} \geq \mathfrak{c}_2\bar\varepsilon_n / 6$. 
In fact, thanks to the properties of metric entropy of $\calB$ in subsection \ref{subsec:properties:metric:entropy}, we have
\begin{equation}\label{eqn:127_modified_condition_of_lower_bound}
\begin{aligned}
        n\bar\varepsilon_n^2 
        &= 
        V_{K}(\bar\varepsilon_n,\calD) 
\overset{(\ref{eqn:lower_condition_24})}{\leq} 
        \frac{1}{5} V_{2}(\mathfrak{c}_{2}\bar{\varepsilon}_{n},\calB)
        \overset{(\ref{eqn:137})}{\leq}
        \frac{1}{10} \sum_{j: \lambda_j > \mathfrak{c}_2^2\bar\varepsilon_n^2/36} \log\left(\frac{\lambda_j}{\mathfrak{c}_2^2\bar\varepsilon_n^2/36}\right)\\
        & \leq 
        \frac{1}{10} \sum_{j: \lambda_j > \mathfrak{c}_2^2\bar\varepsilon_n^2/36} \log\left(\frac{\lambda_j}{\mathfrak{c}_2^2\bar\varepsilon_n^2/36}\right)
        .
        \end{aligned}
\end{equation}
Therefore,
    \begin{equation}\label{eqn:stronger_lower_bound_assumption}
    \begin{aligned}
    V_2(\underline{\varepsilon}_{n},\calB) 
    &\overset{Lemma \ref{lemma_M_2_and_V_2}}{\leq}
    M_2(\underline{\varepsilon}_{n},\calB)
    =
    4n\bar\varepsilon_n^2 +2\log2
    \leq 5n\bar\varepsilon_n^2\\
    &\overset{(\ref{eqn:127_modified_condition_of_lower_bound})}{\leq}
    \frac{1}{2} \sum_{j: \lambda_j > \mathfrak{c}_2^2\bar\varepsilon_n^2/36} \log\left(\frac{\lambda_j}{\mathfrak{c}_2^2\bar\varepsilon_n^2/36}\right)\overset{(\ref{eqn:137})}{\leq}
    V_{2}(\mathfrak{c}_2\bar\varepsilon_n / 6, \calB).
    \end{aligned}
\end{equation}
Since $V_{2}$ is monotone decreasing, we know that $\underline{\varepsilon}_{n} \geq \mathfrak{c}_2\bar\varepsilon_n / 6$. From Proposition \ref{lemma_yang_lower_bound}, we have
\begin{equation}
    \begin{aligned}
\mathbb{E}_{(\bold{X}, \bold{y}) \sim \rho_{f_{\star}}^{\otimes n}} \left\|\hat{f} - f_{\star}\right\|_{L^2}^2
&\geq
\frac{1}{2}\left(\frac{\mathfrak{c}_2}{12}\right)^2 \bar\varepsilon_n^2,
    \end{aligned}
\end{equation}
and we get the desired result.
\hfill $\square$

\subsubsection{Properties of the metric entropy of $\calB$}\label{subsec:properties:metric:entropy}
It is clear that  $V_{2}(\varepsilon, \mathcal B)$ is also the logarithm of the $\varepsilon$-covering number of 
    $\{(\sqrt{\lambda_i} a_i)_{i\geq 1} \mid \sum_i a_i^2 \leq 1 \}\subset \ell^{2}$ (with respect to the $\ell^{2}$ distance), where $\lambda_i$'s are given in (\ref{eqn:mercer_decomp}). Then we have the following lemmas about the metric entropy of $\calB$.

\begin{lemma}
\label{lemma_entropy_of_RKHS}
For any $\varepsilon>0$, let $K(\varepsilon)=1/2\sum_{j: \lambda_j > \varepsilon^2} \log\left({\lambda_j}/{\varepsilon^2}\right)$.  We have
\begin{equation}\label{eqn:137}
	\begin{aligned}
 V_{2}(6\varepsilon, \mathcal B)\leq K(\varepsilon) \leq  V_{2}(\varepsilon, \mathcal B).
	\end{aligned}
\end{equation}
\end{lemma}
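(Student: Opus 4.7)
The proof hinges on the identification of $\mathcal{B}$ with the ellipsoid $E=\{(\sqrt{\lambda_j}a_j)_{j\geq 1}:\sum_j a_j^2\leq 1\}\subset \ell^2$ recorded at the start of the subsection. I plan to split the coordinates at the scale $\varepsilon^2$: let $H=\{j:\lambda_j>\varepsilon^2\}$, $N=|H|$, and write $E_H\subset\mathbb{R}^N$ for the projection of $E$ onto the $H$ coordinates. Note that $K(\varepsilon)=\sum_{j\in H}\log(\sqrt{\lambda_j}/\varepsilon)$ is exactly the logarithm of the Lebesgue volume ratio $\mathrm{Vol}(E_H)/\mathrm{Vol}(\varepsilon B_2^N)$, so both inequalities are morally volume comparisons.

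For the right inequality $K(\varepsilon)\leq V_2(\varepsilon,\mathcal{B})$, the plan is the direct volume argument after projection. Since $\pi_H:\ell^2\to\mathbb{R}^N$ is $1$-Lipschitz, any $\varepsilon$-net of $\mathcal{B}$ projects to an $\varepsilon$-net of $E_H$. Covering $E_H$ by $\ell^2$-balls of radius $\varepsilon$ forces $N(\varepsilon,E_H)\cdot V_N\varepsilon^N\geq \mathrm{Vol}(E_H)=V_N\prod_{j\in H}\sqrt{\lambda_j}$, so $\log N(\varepsilon,E_H)\geq\sum_{j\in H}\log(\sqrt{\lambda_j}/\varepsilon)=K(\varepsilon)$, using that every factor exceeds $1$.

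For the left inequality $V_2(6\varepsilon,\mathcal{B})\leq K(\varepsilon)$, I plan to build an explicit $6\varepsilon$-net of cardinality at most $e^{K(\varepsilon)}$ via a \emph{three-way} coordinate split. Let $H'=\{j:\lambda_j>25\varepsilon^2/4\}\subseteq H$. For any $x\in\mathcal{B}$, the ellipsoid constraint yields $\|\pi_{H^c}x\|^2\leq\varepsilon^2$ and $\|\pi_{H\setminus H'}x\|^2\leq (25\varepsilon^2/4)\sum a_j^2\leq 25\varepsilon^2/4$, so a $5\varepsilon$-cover of $E_{H'}$, zero-extended to $\ell^2$, approximates $\mathcal{B}$ to within $\sqrt{25\varepsilon^2+25\varepsilon^2/4+\varepsilon^2}=(\sqrt{129}/2)\varepsilon<6\varepsilon$. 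To bound $N(5\varepsilon,E_{H'})$, the key observation is $(5\varepsilon/2)B_2^{H'}\subseteq E_{H'}$, which holds because $\sqrt{\lambda_j}>5\varepsilon/2$ on $H'$; this forces $E_{H'}+(5\varepsilon/2)B_2^{H'}\subseteq 2E_{H'}$, and the standard packing--volume estimate $P(5\varepsilon,E_{H'})\leq \mathrm{Vol}(2E_{H'})/\mathrm{Vol}((5\varepsilon/2)B_2^{H'})=4^{|H'|}\prod_{j\in H'}\sqrt{\lambda_j}/(5\varepsilon)^{|H'|}$ applies. Its logarithm is $|H'|\log(4/5)+\sum_{j\in H'}\log(\sqrt{\lambda_j}/\varepsilon)\leq K(\varepsilon)$, since $\log(4/5)<0$ and the omitted terms over $H\setminus H'$ are nonnegative.

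The main obstacle is keeping the constant exactly $6$ rather than a dimension-dependent constant. Coordinate-wise uniform grids on $E_H$, or the (in fact false) bound $\mathrm{Vol}(E_H+tB)\leq V_N\prod(\sqrt{\lambda_j}+t)$, both accrue an extra additive term of order $|H|$ in the log-cardinality, and no absolute constant is salvageable that way. The three-way split circumvents this by folding the ``middle'' block $H\setminus H'$ into the $\ell^2$-error budget via the constraint $\sum a_j^2\leq 1$, rather than enumerating it, while the threshold $25\varepsilon^2/4$ on $H'$ is tuned precisely so that both $(5\varepsilon/2)B_2^{H'}\subseteq E_{H'}$ and $\sqrt{25+25/4+1}<6$ hold simultaneously.
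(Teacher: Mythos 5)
Your proof is correct, but it takes a genuinely different route from the paper's. The paper deduces both inequalities in one stroke from the Carl--Stephani entropy-number estimate for diagonal operators on $\ell^2$ (restated as Lemma \ref{lemma:number}): setting $m=\min\{k:\lambda_{k+1}\le\varepsilon^2\}$ and $q=\prod_{j\le m}(\sqrt{\lambda_j}/\varepsilon)$, one checks that the supremum in that lemma equals $\varepsilon$, whence $\exp\{V_{2}(6\varepsilon,\calB)\}\le q\le \exp\{V_{2}(\varepsilon,\calB)\}$, the factor $6$ being inherited directly from the cited result. You instead argue volumetrically and self-containedly: for $K(\varepsilon)\le V_2(\varepsilon,\calB)$ you project onto the coordinates with $\lambda_j>\varepsilon^2$ and compare the volume of the projected ellipsoid with that of $\varepsilon$-balls, which is the standard lower bound and is sound (projection of an internal net stays internal and $1$-Lipschitz); for $V_2(6\varepsilon,\calB)\le K(\varepsilon)$ you cover only the block $H'=\{j:\lambda_j>25\varepsilon^2/4\}$, absorb the middle block and the tail into the error budget via $\sum_j a_j^2\le 1$ (total error $\sqrt{25+25/4+1}\,\varepsilon<6\varepsilon$), and bound the $5\varepsilon$-packing of the truncated ellipsoid by $\operatorname{Vol}(2E_{H'})/\operatorname{Vol}\bigl((5\varepsilon/2)B\bigr)$, which is legitimate because $(5\varepsilon/2)B\subseteq E_{H'}$ on that block, and the resulting $|H'|\log(4/5)$ term is negative while the omitted $H\setminus H'$ terms of $K(\varepsilon)$ are nonnegative; moreover a maximal packing yields net points genuinely inside $\calB$, as the internal-net Definition \ref{def:covering_entropy} requires. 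What your route buys is an elementary, reference-free proof with explicit nets (and in fact the slightly better constant $\sqrt{129}/2<6$); what the paper's route buys is brevity and a constant matching the quoted operator-theoretic bound. Two harmless loose ends you should state: $H$ and $H'$ are finite because $K$ is trace class under Assumption \ref{assu:trace_class} (so $\lambda_j\to 0$), and when $H'=\emptyset$ the single point $0$ is already a $6\varepsilon$-net since then $\|x\|_{L^2}^2\le 29\varepsilon^2/4$, while $K(\varepsilon)\ge 0$.
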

\proof
We need the following lemma:
\begin{lemma}[Proposition 1.3.2 in \cite{carl_stephani_1990}] \label{lemma:number}
For a non-increasing  sequence $\{\lambda_{i}\}$ of  positive numbers, let $S$ be an operator from $\ell^{2}$ to itself which is given by
\begin{equation}
	\begin{aligned}
S &: \ell^2 \to \ell^2, \quad (a_i)_{i\geq 1} \to (\sqrt{\lambda_i} a_i)_{i\geq 1}.
	\end{aligned}
\end{equation}

 Let us denote the unit ball in $\ell^{2}$ by $U_{E}$. Then,  we have
    \begin{equation}
        \begin{aligned}
            \sup_{1\leq k < \infty} \left(\frac{\prod_{j=1}^k \sqrt{\lambda_j}}{q}\right)^{\frac{1}{k}} \leq
            \varepsilon_q(S)
            \leq 
            6 \sup_{1\leq k < \infty} \left(\frac{\prod_{j=1}^k \sqrt{\lambda_j}}{q}\right)^{\frac{1}{k}},
        \end{aligned}
    \end{equation}
    where 
    \begin{equation*}
        \begin{aligned}
            \varepsilon_q(S) = \inf \left\{
            \varepsilon>0 \mid
            \text{ there exist } n \text{ points } a_1, \cdots, a_q \in \ell^2 \text { such that } S(U_E) \subset \cup_{j=1}^{q} B(a_i, \varepsilon)
            \right\}.
        \end{aligned}
    \end{equation*}
\end{lemma}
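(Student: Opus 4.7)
The plan is to prove the two inequalities separately, using a volume-comparison lower bound and a constructive upper bound involving a dyadic splitting of coordinates.

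For the lower bound (constant $1$), I would fix an arbitrary $k \geq 1$ and restrict to the $k$-dimensional subspace $E_k = \mathrm{span}\{e_1, \ldots, e_k\} \subset \ell^2$. The image $S(U_E \cap E_k)$ is a $k$-dimensional ellipsoid with semi-axes $\sqrt{\lambda_1}, \ldots, \sqrt{\lambda_k}$, and its Lebesgue measure in $E_k$ equals $V_k \prod_{j=1}^k \sqrt{\lambda_j}$, where $V_k$ denotes the volume of the unit Euclidean ball in $\mathbb{R}^k$. If $S(U_E) \subset \bigcup_{i=1}^q B(a_i,\varepsilon)$, then applying the $1$-Lipschitz orthogonal projection $P_k$ onto $E_k$ yields $S(U_E \cap E_k) \subset \bigcup_{i=1}^q B(P_k a_i,\varepsilon)$, a cover of the ellipsoid by $q$ balls of radius at most $\varepsilon$ in $E_k$. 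Comparing total volumes gives $qV_k \varepsilon^k \geq V_k \prod_{j=1}^k \sqrt{\lambda_j}$, hence $\varepsilon \geq (q^{-1}\prod_{j=1}^k \sqrt{\lambda_j})^{1/k}$. Taking the infimum over covers and the supremum over $k$ closes the argument.

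For the upper bound, set $\alpha := \sup_{k\geq 1}(q^{-1}\prod_{j=1}^k \sqrt{\lambda_j})^{1/k}$; the defining inequality $\prod_{j=1}^k \sqrt{\lambda_j} \leq q\alpha^k$ holds for every $k$. I would then build an explicit $\varepsilon$-net by splitting the coordinates into geometric blocks $B_i = \{j : 2^{-i-1}\alpha \leq \sqrt{\lambda_j} < 2^{-i}\alpha\}$ for $i \geq 0$, together with a "head" block $B_{-1} = \{j : \sqrt{\lambda_j} \geq \alpha\}$. On each block $B_i$, the image of the associated $\ell^2$-ball is an ellipsoid whose semi-axes lie in a fixed dyadic window, so a volumetric grid with mesh proportional to $2^{-i}\alpha$ covers it by $N_i$ points where the bound $\prod_{j} \sqrt{\lambda_j} \leq q\alpha^k$ (applied at the right truncation level) controls $\prod_i N_i \leq q$. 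The Minkowski sum of these sub-nets gives a net for $S(U_E)$, and the cover radius is bounded by a geometric sum $\sum_{i \geq -1} c_i 2^{-i}\alpha$, which telescopes to at most $6\alpha$ for a suitable choice of mesh constants.

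The main obstacle will be the combinatorial bookkeeping in the second step: one must calibrate (i) the mesh on each block so that the Minkowski-sum radius telescopes to the stated constant $6$ rather than something larger, and (ii) the block-wise cardinality $N_i$ so that the total product $\prod_i N_i \leq q$ after absorbing the geometric prefactors coming from $\prod_{j=1}^k \sqrt{\lambda_j} \leq q\alpha^k$. This is the standard but nontrivial calculation underlying Proposition 1.3.2 of \cite{carl_stephani_1990}; tracking the constants throughout the dyadic construction is essentially the entire content of that proposition, and my argument would follow the reference closely on this point.
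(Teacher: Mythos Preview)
The paper does not give its own proof of this lemma; it is quoted verbatim from \cite{carl_stephani_1990} and used as a black box in the proof of Lemma~\ref{lemma_entropy_of_RKHS}. Your outline---volume comparison in a $k$-dimensional coordinate subspace for the lower bound, and a dyadic block construction for the upper bound---is exactly the standard argument behind that proposition, and since you yourself defer to the reference for the constant-tracking, there is nothing further to compare.
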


Now let's begin to prove Lemma \ref{lemma_entropy_of_RKHS}.

For any $\varepsilon>0$, let $m = \min\{k: \lambda_{k+1}\leq \varepsilon^2\}$ and $q=\prod_{j=1}^m (\sqrt{\lambda_j}/\varepsilon)$.
Note that $q$ is exactly the $\varepsilon_{q}(S)$-covering number of the $S(U_{E})$. The lemma \ref{lemma:number} implies that
\begin{align}
     \exp\{V_{2}(6\varepsilon, \calB)\} \leq q \leq \exp\{V_{2}(\varepsilon, \calB)\}.
\end{align}
Taking the logarithm, we know that 
\begin{align}
    V_{2}(6\varepsilon, \calB)\leq K(\varepsilon) \leq  V_{2}(\varepsilon, \calB).
\end{align}
\hfill $\square$

\begin{lemma}\label{lemma_M_2_and_V_2} 
For any $\varepsilon>0$, we have
$
    M_{2}(2\varepsilon,\calB) \leq V_{2}(\varepsilon,\calB) \leq M_{2}(\varepsilon,\calB)
    $.
\proof 
Suppose $E=\left\{f_1, \ldots, f_M\right\}$ is an $\varepsilon$-packing. Then for all $f \in \mathcal B \backslash E$, we can find $f_i$, such that $\left\|f-f_i\right\| \leq \varepsilon$. Hence $E$ is an $\varepsilon$-net. Therefore, we have $V_{2}(\varepsilon, \calB) \leq M_{2}(\varepsilon, \calB)$. 

On the other side, suppose there exists a $2 \varepsilon$-packing $\left\{f_1, \ldots, f_M\right\}$ and an $\varepsilon$-net $\left\{g_1, \ldots, g_N\right\}$ such that $M \geq N+1$. Then we must have $f_i$ and $f_j$ belonging to the same $\varepsilon$-ball $B\left(g_k, \varepsilon\right)$ for some $i \neq j$ and $k$. This means that we have $\left\|f_i-f_j\right\| \leq 2 \varepsilon$, which leads to a contradiction. Therefore, we have $M_{2}(2\varepsilon, \calB) \leq V_{2}(\varepsilon, \calB)$.

\hfill $\square$
\end{lemma}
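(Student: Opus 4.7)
The plan is to establish the two inequalities separately, using the standard duality between $\varepsilon$-packings and $\varepsilon$-nets (coverings). Both arguments are short and rely only on elementary combinatorial reasoning with balls in the $L^2$ metric, so no quantitative machinery about $\calB$ itself (e.g.\ its eigenvalue structure) is needed.

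For the right-hand inequality $V_{2}(\varepsilon,\calB) \leq M_{2}(\varepsilon,\calB)$, the plan is to observe that \emph{any maximal $\varepsilon$-packing is automatically an $\varepsilon$-net}. Concretely, take a packing set $\{f_{1},\ldots,f_{M}\}\subset\calB$ of maximum cardinality $M=\exp M_{2}(\varepsilon,\calB)$. For any $f\in\calB$, if $\|f-f_{i}\|_{L^2}>\varepsilon$ for every $i$, then $\{f_{1},\ldots,f_{M},f\}$ would still be an $\varepsilon$-packing, contradicting maximality. Hence $\{f_{1},\ldots,f_{M}\}$ is an $\varepsilon$-net, giving the claimed inequality by definition of $V_{2}$.

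For the left-hand inequality $M_{2}(2\varepsilon,\calB)\leq V_{2}(\varepsilon,\calB)$, the plan is a standard pigeonhole argument. Take any $2\varepsilon$-packing $\{f_{1},\ldots,f_{M}\}$ and any $\varepsilon$-net $\{g_{1},\ldots,g_{N}\}$. Assign each packing point $f_{i}$ to some $g_{k(i)}$ with $\|f_{i}-g_{k(i)}\|_{L^2}\leq\varepsilon$. If $M>N$, two indices $i\neq j$ must share the same $g_{k}$, so by the triangle inequality $\|f_{i}-f_{j}\|_{L^2}\leq 2\varepsilon$, contradicting that $\{f_{i}\}$ is a $2\varepsilon$-packing (which requires strict inequality $>2\varepsilon$). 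Therefore $M\leq N$; taking $M$ maximal and $N$ minimal and then taking logarithms yields $M_{2}(2\varepsilon,\calB)\leq V_{2}(\varepsilon,\calB)$.

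I expect no genuine obstacle here. The only minor subtlety is bookkeeping around strict vs.\ non-strict inequalities in the definitions of packing and covering (packing uses $d(f,f')>\varepsilon$ while covering uses $d(\tilde f,f_{0})\leq\varepsilon$), but this is exactly why the factor of $2$ appears on the left and not on the right. The result is entirely metric-geometric and does not use any structure specific to $\calB$ as the unit ball of an RKHS, so the same proof works verbatim in any metric space.
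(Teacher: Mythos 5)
Your proof is correct and follows essentially the same two steps as the paper: a maximal $\varepsilon$-packing is an $\varepsilon$-net (giving $V_{2}(\varepsilon,\calB)\leq M_{2}(\varepsilon,\calB)$), and a pigeonhole argument with the triangle inequality (giving $M_{2}(2\varepsilon,\calB)\leq V_{2}(\varepsilon,\calB)$). Your explicit appeal to maximality in the first step is in fact slightly more careful than the paper's wording, but the argument is the same.
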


\begin{lemma}\label{claim:d_K_and_d_2}
$
    V_{2}\left(\varepsilon, \calB \right) = V_{K}\left(\frac{\varepsilon}{\sqrt{2}\sigma}, \calD\right)
    $.
\proof 
Denote the p.d.f. of $x$ as $\mu(x)$, and the p.d.f. of $y$ given $x$ as $\rho(y \mid x)$.
Since $y \mid x \sim N(f(x), \sigma^2)$, we then have
\begin{equation}
    \begin{aligned}
    \rho(y \mid x) = \frac{1}{\sqrt{2\pi \sigma^2}} \exp\left\{-\frac{(y-f(x))^2}{2\sigma^2}\right\};
    \end{aligned}
\end{equation}
Therefore, for any $f, g \in \mathcal H$, we have 
\begin{equation}
    \begin{aligned}
    \ell_{K}^2(f,g)
    =&\ \int \rho_f(x,y) \log \frac{\rho_f(x,y)}{\rho_g(x,y)} \mathsf{d} x \mathsf{d} y\\
    =&\ \int \rho_f(x,y) \frac{y}{\sigma^2}[f(x)-g(x)] \mathsf{d} x \mathsf{d} y
    +
    \int \rho_f(x,y) \frac{1}{2\sigma^2}[g^2(x)-f^2(x)] \mathsf{d} x \mathsf{d} y\\
    =&\ 
    \int 
    \left(\int 
    y \rho(y \mid x)
    dy\right)
     \frac{f(x)-g(x)}{\sigma^2} \mu(x) \mathsf{d} x 
    +
    \int  \frac{g^2(x)-f^2(x)}{2\sigma^2} \mu(x) \mathsf{d} x\\
    =&\
    \frac{1}{2\sigma^2} \int 
    \left( 2f^2(x)-2f(x)g(x) + g^2(x)-f^2(x) \right)
    \mu(x) \mathsf{d} x
    = \frac{1}{2\sigma^2} \ell_{2}^2(f,g).
    \end{aligned}
\end{equation}
Therefore, from the definition of $V_2$ and $V_K$, we have $V_{2}\left(\varepsilon, \calB \right) = V_{K}\left(\frac{\varepsilon}{\sqrt{2}\sigma}, \calD\right)$. 
\hfill $\square$
\end{lemma}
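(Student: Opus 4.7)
The plan is to reduce the identity of covering entropies to a pointwise identity between the two ``distances'' involved: $\ell_2(f,g) := \|f-g\|_{L^2(\rho_\calX)}$ on $\calB$ and $\ell_K(f,g) := \sqrt{\mathrm{KL}(\rho_f \Vert \rho_g)}$ on $\calD$. Once we establish that $\ell_K(f,g) = \ell_2(f,g)/(\sqrt{2}\sigma)$ for all $f,g \in \calB$, the map $f \mapsto \rho_f$ becomes an isometry (up to the constant $\sqrt{2}\sigma$) between $(\calB, \ell_2)$ and $(\calD, \ell_K)$, so every $\varepsilon$-net in the first space corresponds bijectively to an $(\varepsilon/(\sqrt{2}\sigma))$-net in the second, yielding the stated equality of covering entropies.

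The computation of the KL divergence is the main step, but it is just a Gaussian calculation. Writing the model as $y\mid x \sim \mathcal{N}(f(x), \sigma^2)$, the conditional densities of $\rho_f$ and $\rho_g$ have the same marginal $\mu = \rho_\calX$ and the same conditional variance $\sigma^2$, so
\[
\log \frac{\rho_f(x,y)}{\rho_g(x,y)} = \frac{(y-g(x))^2 - (y-f(x))^2}{2\sigma^2}.
\]
Taking expectation under $\rho_f$ and using $\mathbb{E}[y\mid x] = f(x)$ and $\mathbb{E}[(y-f(x))^2\mid x] = \sigma^2$, the cross terms collapse and one is left with
\[
\mathrm{KL}(\rho_f \Vert \rho_g) = \frac{1}{2\sigma^2}\int (f(x)-g(x))^2\,\mathrm{d}\mu(x) = \frac{1}{2\sigma^2}\|f-g\|_{L^2}^2.
\]
Taking square roots gives $\ell_K(f,g) = \ell_2(f,g)/(\sqrt{2}\sigma)$ as desired.

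The obstacle, if any, is conceptual rather than computational: covering ``entropy'' with respect to KL is defined via the ``distance'' $d^2 = \mathrm{KL}$, which is not symmetric and hence not a genuine metric, so one should be careful that the covering-number definition used for $V_K$ really matches the one for $V_2$ under the identification above. Since $\mathrm{KL}(\rho_f\Vert \rho_g) = \mathrm{KL}(\rho_g\Vert \rho_f)$ in this Gaussian-shift setting (both equal the symmetric quantity $\|f-g\|_{L^2}^2/(2\sigma^2)$), symmetry is in fact satisfied here, so this concern is vacuous. Finally, to conclude, fix any $\varepsilon > 0$: if $\{f_1,\dots,f_N\}\subset \calB$ is a minimal $\varepsilon$-net of $\calB$ in $\ell_2$, then $\{\rho_{f_1},\dots,\rho_{f_N}\}\subset \calD$ is a minimal $(\varepsilon/(\sqrt 2\sigma))$-net in $\ell_K$ (any element $\rho_f \in \calD$ lies within $\ell_K$-distance $\ell_2(f,f_i)/(\sqrt 2\sigma) \le \varepsilon/(\sqrt 2\sigma)$ of some $\rho_{f_i}$, and conversely). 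Taking logarithms yields $V_2(\varepsilon,\calB) = V_K(\varepsilon/(\sqrt 2\sigma),\calD)$, which is the lemma.
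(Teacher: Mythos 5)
Your proposal is correct and follows essentially the same route as the paper: the same Gaussian computation showing $\mathrm{KL}(\rho_f\Vert\rho_g)=\tfrac{1}{2\sigma^2}\|f-g\|_{L^2}^2$, followed by transferring $\varepsilon$-nets through the (rescaled) isometry $f\mapsto\rho_f$. Your extra remarks on symmetry of the KL divergence in this Gaussian-shift setting and on the two-way correspondence of nets only make explicit what the paper leaves implicit in "from the definition of $V_2$ and $V_K$."
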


The following proposition proves the claim in Remark \ref{remark:rc_to_18}.

\begin{proposition}\label{prop:remark:rc_to_18}
    Suppose there is an $\alpha_{1}>1$  such that $\liminf _{\varepsilon \rightarrow 0} M_2(\alpha \varepsilon, \calB) / M_2(\varepsilon, \calB) =\alpha_{1}$.
    Let $\mathfrak{c}_2=\alpha^N \sigma / \sqrt{2}$, then we have
    \begin{equation}
\begin{aligned}
        V_{K}( \bar\varepsilon_n, \mathcal \calD) \leq \frac{1}{5} V_{2}(\mathfrak{c}_{2} \bar\varepsilon_n, \mathcal B).
    \end{aligned}
\end{equation}
\end{proposition}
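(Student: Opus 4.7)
The plan is to combine the covering--packing sandwich of Lemma \ref{lemma_M_2_and_V_2}, the identity $V_K(\varepsilon, \calD) = V_2(\sqrt{2}\sigma\varepsilon, \calB)$ from Lemma \ref{claim:d_K_and_d_2}, and an $N$-fold iteration of the richness condition, where $N$ is the smallest integer with $\alpha_1^N > 5$ (as specified in Remark \ref{remark:rc_to_18}). The whole argument reduces to a short chain of inequalities once the radii are matched up.

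First I would lower-bound the $L^2$-covering entropy on the right by a packing entropy at twice the radius: by Lemma \ref{lemma_M_2_and_V_2}, $V_2(\mathfrak{c}_2 \bar\varepsilon_n, \calB) \geq M_2(2\mathfrak{c}_2 \bar\varepsilon_n, \calB)$. The point of the choice $\mathfrak{c}_2 = \alpha^N \sigma / \sqrt{2}$ is precisely that $2\mathfrak{c}_2 = \alpha^N \sqrt{2}\sigma$, so the packing radius becomes $\alpha^N \sqrt{2}\sigma \bar\varepsilon_n$, which is $\alpha^N$ times the natural reference radius $\sqrt{2}\sigma\bar\varepsilon_n$ that appears via Lemma \ref{claim:d_K_and_d_2}. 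Iterating the richness hypothesis $N$ times at the radii $\sqrt{2}\sigma\bar\varepsilon_n, \alpha\sqrt{2}\sigma\bar\varepsilon_n, \ldots, \alpha^{N-1}\sqrt{2}\sigma\bar\varepsilon_n$ (all sufficiently small for large enough $n$, since $\alpha<1$ and $\bar\varepsilon_n \to 0$) gives $M_2(\alpha^N\sqrt{2}\sigma\bar\varepsilon_n, \calB) \geq \alpha_1^N M_2(\sqrt{2}\sigma\bar\varepsilon_n, \calB) > 5\, M_2(\sqrt{2}\sigma\bar\varepsilon_n, \calB)$, using $\alpha_1^N > 5$. Finally, I would reverse the covering--packing comparison via $M_2(\sqrt{2}\sigma\bar\varepsilon_n, \calB) \geq V_2(\sqrt{2}\sigma\bar\varepsilon_n, \calB)$ (again Lemma \ref{lemma_M_2_and_V_2}) and convert back through Lemma \ref{claim:d_K_and_d_2} to $V_K(\bar\varepsilon_n, \calD)$. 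Chaining these three steps yields $V_2(\mathfrak{c}_2\bar\varepsilon_n, \calB) \geq 5\, V_K(\bar\varepsilon_n, \calD)$, which is the desired inequality.

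The only subtlety, and thus the hardest part to state cleanly, is that the richness condition is a $\liminf$ statement as $\varepsilon \to 0$: the per-step inequality $M_2(\alpha\varepsilon, \calB) \geq \alpha_1 M_2(\varepsilon, \calB)$ (or a slight deflation thereof that still gives $\alpha_1^N > 5$ after $N$ iterations) holds only for $\varepsilon$ below some threshold. To iterate $N$ times one needs all of $\alpha^{k}\sqrt{2}\sigma\bar\varepsilon_n$ for $k=0,\dots,N-1$ to lie below this threshold, which reduces to requiring $\sqrt{2}\sigma\bar\varepsilon_n$ itself to be sufficiently small. Since the proposition is invoked in the asymptotic regime where $\bar\varepsilon_n \to 0$, this is only a matter of restricting to $n$ large enough, not a genuine obstacle. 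Modulo this tacit threshold, the proof is exactly the short computation already sketched in the commented derivation following Remark \ref{remark:rc_to_18}.
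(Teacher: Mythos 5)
Your proposal is correct and follows essentially the same argument as the paper: the identical chain $V_2(\mathfrak{c}_2\bar\varepsilon_n,\calB)\geq M_2(2\mathfrak{c}_2\bar\varepsilon_n,\calB)=M_2(\alpha^N\sqrt{2}\sigma\bar\varepsilon_n,\calB)\geq \alpha_1^N M_2(\sqrt{2}\sigma\bar\varepsilon_n,\calB)>5M_2(\sqrt{2}\sigma\bar\varepsilon_n,\calB)\geq 5V_2(\sqrt{2}\sigma\bar\varepsilon_n,\calB)=5V_K(\bar\varepsilon_n,\calD)$, using Lemma \ref{lemma_M_2_and_V_2}, the iterated richness condition, and Lemma \ref{claim:d_K_and_d_2}. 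Your remark about the $\liminf$ threshold (needing $\bar\varepsilon_n$ small, i.e., $n$ large) is a point the paper leaves implicit, but it does not change the substance of the argument.
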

\begin{proof}
We have
    \begin{equation}
\begin{aligned}
V_2\left(\mathfrak{c}_2 \bar{\varepsilon}_n, \calB\right) & \geqslant M_2\left(2 \mathfrak{c}_2 \bar{\varepsilon}_n, \calB\right)=M_2\left(\alpha^k \sqrt{2} \sigma \bar{\varepsilon}_n, \calB\right) \\
& \geq \alpha_1^k M_2\left(\sqrt{2} \sigma \bar{\varepsilon}_n, \calB\right)>
5 M_2\left(\sqrt{2} \sigma \bar{\varepsilon}_n, \calB\right) . \\
& \geqslant 5 V_2\left(\sqrt{2} \sigma \bar{\varepsilon}_n, \calB\right) 
=
5V_k\left(\bar{\varepsilon}_n, \calD\right) .
\end{aligned}
\end{equation}
\end{proof}

\section{Proof of Claims and Theorems in Section \ref{section_4_ntk_certain_asymp_frameworks}}

\subsection{Proof of Lemma \ref{lemma:inner_edr}}

Fixed an integer $p \geq 0$.
From (22) in \cite{ghorbani2021linearized}, for any $d \geq \mathfrak{C}$, a sufficiently large constant only depending on $p$, we have
\begin{equation}
\begin{aligned}
\frac{\Phi^{(k)}(0)}{d^{k}} &\leq \mu_{k} \leq \frac{2 \Phi^{(k)}(0)}{d^{k}}, \quad k \leq p + 1.
\end{aligned}
\end{equation}
Observe that
for any $k \geq 0$, we have $k! a_k=\Phi^{(k)}(0)$.
Therefore, if we let $\mathfrak{C}_1 := \min_{k \leq p+1}\{k! a_k\}>0$ and $\mathfrak{C}_2 := 2 \max_{k \leq p+1}\{k! a_k\}<\infty$, then we get the desired results.

The second part of Lemma \ref{lemma:inner_edr} is a direct result of Lemma \ref{lemma_eigenvalue_multiplicity_ntk}.
\hfill $\square$

\begin{remark}
The results in \cite{ghorbani2021linearized} consider data uniformly distributed on $\sqrt{d} \mathbb{S}^d$, while we consider the unit sphere. However, the spectrum estimates borrowed from \cite{ghorbani2021linearized} are invariant with respect to this scaling, hence we can directly use (22) in \cite{ghorbani2021linearized} in the above proof.
\end{remark}

\subsection{Proof of Theorem \ref{thm:upper_bpund_inner}}

Let $\varepsilon_n$ be the population Mendelson complexity defined in \eqref{eqn:def_population_mendelson_complexity} with $K=K^{\inner}$.
We need the following lemmas.

{
\begin{lemma}[Restate Lemma  \ref{lemma:inner_edr}]\label{lemma:inner_mendelson_point_control_assist_2}
    Suppose that $p \in \{1, 2, 3, \cdots \}$ and $k \in \{1, 2, 3, \cdots, p, p+1\}$. There exist constants $\mathfrak{C}_1$ and $\mathfrak{C}_2$ only depending on $p$, such that for any $d \geq \mathfrak{C}$, a sufficiently large constant only depending on $p$, we have
\begin{equation}\label{eqn:lemma:inner_mendelson_point_control_assist_2}
\begin{aligned}
\frac{\mathfrak{C}_1}{d^{k}} &\leq \mu_{k} \leq \frac{\mathfrak{C}_2}{d^{k}},\\
\mathfrak{C}_1 d^k
&\leq
N(d, k) 
\leq
\mathfrak{C}_2 d^k.
\end{aligned}
\end{equation}
\end{lemma}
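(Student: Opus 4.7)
The statement is essentially a direct restatement of Lemma \ref{lemma:inner_edr}, which was already established earlier in the excerpt. The plan is simply to invoke that lemma and reconcile the notation between the two versions.

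First I would note that Lemma \ref{lemma:inner_edr} already provides, under Assumptions \ref{assu:trace_class}--\ref{assu:coef_of_inner_prod_kernel}, the two-sided bounds $\mathfrak{C}_1 d^{-k} \leq \mu_k \leq \mathfrak{C}_2 d^{-k}$ for $k = 0, 1, \ldots, p+1$, and $\mathfrak{C}_3 d^k \leq N(d,k) \leq \mathfrak{C}_4 d^k$ for the same range of $k$, valid for $d$ larger than some threshold $\mathfrak{C}$ depending only on $p$. The present lemma asks for the identical conclusion restricted to $k \in \{1, 2, \ldots, p, p+1\}$ (a subset of the range already covered), with a single pair of constants $(\mathfrak{C}_1, \mathfrak{C}_2)$ controlling both the eigenvalues and the multiplicities.

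The reconciliation of constants is routine: set
\begin{equation*}
\mathfrak{C}_1' := \min\{\mathfrak{C}_1, \mathfrak{C}_3\}, \qquad \mathfrak{C}_2' := \max\{\mathfrak{C}_2, \mathfrak{C}_4\},
\end{equation*}
where $\mathfrak{C}_1, \ldots, \mathfrak{C}_4$ are the constants produced by Lemma \ref{lemma:inner_edr}. Then both chains of inequalities in \eqref{eqn:lemma:inner_mendelson_point_control_assist_2} hold with this common pair of constants, and the threshold $\mathfrak{C}$ on $d$ can be inherited directly from Lemma \ref{lemma:inner_edr}.

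Since no new content is required beyond bookkeeping, there is no genuine obstacle here; the only thing to be careful about is ensuring that the range of $k$ in the restated lemma matches a range already covered by the original (it does: $\{1, \ldots, p+1\} \subset \{0, 1, \ldots, p+1\}$), so that the constants depend only on $p$ as claimed and not on any additional parameter introduced by extending the range.
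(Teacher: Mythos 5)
Your proposal is correct and matches what the paper does: the lemma is explicitly a restatement of Lemma \ref{lemma:inner_edr} (whose proof via (22) of \cite{ghorbani2021linearized} and Lemma \ref{lemma_eigenvalue_multiplicity_ntk} is already given), and the paper likewise just invokes that result without a new argument. Your bookkeeping with $\mathfrak{C}_1' = \min\{\mathfrak{C}_1,\mathfrak{C}_3\}$, $\mathfrak{C}_2' = \max\{\mathfrak{C}_2,\mathfrak{C}_4\}$ and the observation that $\{1,\dots,p+1\}\subset\{0,\dots,p+1\}$ is exactly the reconciliation needed.
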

}

\begin{lemma}\label{lemma:theorem_upper_inner_assist_summation}
    Suppose that $q \in \{1, 2, 3, \cdots \}$. There exists a constant $\mathfrak{C}_3$ only depending on $q$, such that for any $d \geq \mathfrak{C}$, a sufficiently large constant only depending on $q$, we have
\begin{equation}
\begin{aligned}
\mathfrak{C}_3
\leq
\sum_{k=0}^{\infty} N(d, k) \min\{\mu_k, \mu_q\}
\leq
1.
\end{aligned}
\end{equation}
\proof From Assumption ~\ref{assu:trace_class} we have $\sum_{k} N(d, k) \min\{\mu_k, \mu_q\} \leq \sum_{k} N(d, k) \mu_k \leq 1$; from Lemma \ref{lemma:inner_mendelson_point_control_assist_2} we have $\sum_{k} N(d, k) \min\{\mu_k, \mu_q\} \geq N(d, q) \mu_q \geq \mathfrak{C}_1^2$.
\hfill $\square$
\end{lemma}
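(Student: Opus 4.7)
The plan is to establish the upper and lower bounds separately, both by combining the Mercer expansion of $K^{\inner}$ on the sphere with the eigenvalue/multiplicity estimates of Lemma \ref{lemma:inner_mendelson_point_control_assist_2}. The statement should reduce to a short direct computation once one identifies the right auxiliary quantities.

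For the upper bound, the natural move is to invoke the addition formula for spherical harmonics, which in the normalization of \eqref{spherical_decomposition_of_inner_main} gives $\sum_{j=1}^{N(d,k)} Y_{k,j}(x)^{2} = N(d,k)$. Setting $x = x'$ in \eqref{spherical_decomposition_of_inner_main} then yields the trace identity $\Phi(1) = K^{\inner}(x,x) = \sum_{k \geq 0} N(d,k)\,\mu_k$, and Assumption \ref{assu:trace_class} bounds this quantity by $\kappa$ (which, being an absolute constant, may be normalized to $1$). Replacing $\min\{\mu_k,\mu_q\}$ by its upper majorant $\mu_k$ in the series delivers the desired bound $\leq 1$.

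For the lower bound, I would simply discard every term in the series except the one indexed by $k = q$, writing $\sum_{k} N(d,k)\min\{\mu_k,\mu_q\} \geq N(d,q)\,\mu_q$. Applying Lemma \ref{lemma:inner_mendelson_point_control_assist_2} with $p = q$, for all $d$ larger than a $q$-dependent threshold one has $N(d,q) \geq \mathfrak{C}_1 d^{q}$ and $\mu_q \geq \mathfrak{C}_1 / d^{q}$, whose product is the $d$-independent constant $\mathfrak{C}_1^{2}$, which depends only on $q$. I do not anticipate any real obstacle here: the block structure $N(d,k) \asymp d^{k}$ paired with $\mu_k \asymp d^{-k}$ is tailor-made for this single-term estimate, and all the genuine heavy lifting has already been carried out in Lemma \ref{lemma:inner_mendelson_point_control_assist_2}. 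The only cosmetic check is that the normalization convention for the $Y_{k,j}$ in \eqref{spherical_decomposition_of_inner_main} is consistent with the addition formula so that the trace identity produces exactly $N(d,k)\mu_k$ rather than a rescaled version.
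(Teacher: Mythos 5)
Your proof is correct and is essentially the paper's own argument: the upper bound by replacing $\min\{\mu_k,\mu_q\}$ with $\mu_k$ and using the trace bound from Assumption \ref{assu:trace_class} (you merely make the addition-formula step explicit), and the lower bound by retaining only the $k=q$ term and applying Lemma \ref{lemma:inner_mendelson_point_control_assist_2} to get $N(d,q)\mu_q \geq \mathfrak{C}_1^2$. No gaps; nothing further to add.
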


{
\begin{lemma}\label{lemma:inner_mendelson_interval_control}
    Suppose that $\gamma >0$ is a real number and $p$ is an integer satisfying that $\gamma \in [2p,2p+2)$. 
    Then, 
    there exist constants $\mathfrak{C}_1$ and $\mathfrak{C}_2$ only depending on $p$ satisfying that
    for any constants $0<c_1\leq c_2<\infty$, 
    there exists a sufficiently large constant $\mathfrak{C}$ only depending on $\gamma$, $c_1$, and $c_2$, such that
    for any $d \geq \mathfrak{C}$ and any $n\in [c_1 d^{\gamma}, c_2 d^{\gamma}]$, we have
    \begin{equation}
\begin{aligned}
\mathfrak{C}_1 n^{-1/2}   
\leq 
\varepsilon_n^2 
\leq 
\mathfrak{C}_2 n^{-1/2}.
\end{aligned}
\end{equation}
\end{lemma}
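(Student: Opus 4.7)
The plan is to probe the defining equation $R_K(\varepsilon)=\varepsilon^2/(2e\sigma)$ with the trial value $\varepsilon^2 = c\, n^{-1/2}$ and exploit the block structure of the spectrum of $K^{\inner}$ furnished by Lemma~\ref{lemma:inner_mendelson_point_control_assist_2}. Writing the full sum as
\begin{equation*}
\sum_{j\ge 1}\min\{\lambda_j,\varepsilon^2\}=\sum_{k\ge 0} N(d,k)\min\{\mu_k,\varepsilon^2\},
\end{equation*}
I would locate the threshold index at which the minimum switches. Since $n\asymp d^{\gamma}$ with $\gamma\in[2p,2p+2)$, we have $\varepsilon^2\asymp d^{-\gamma/2}$, and Lemma~\ref{lemma:inner_mendelson_point_control_assist_2} gives $\mu_k\asymp d^{-k}$ with multiplicity $N(d,k)\asymp d^k$ for $k\le p+1$. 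Hence (up to universal constants) $\mu_k>\varepsilon^2$ for $k\le p-1$, $\mu_{p+1}<\varepsilon^2$, and the index $k=p$ is the boundary case in which both terms are of comparable order.

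The next step is to split the sum into a ``low'' part $k\le p$ and a ``high'' part $k\ge p+1$. The low part is bounded above and below by a constant multiple of $d^p\varepsilon^2$, because $N(d,p)\asymp d^p$ dominates the geometric-type sum $\sum_{k\le p}N(d,k)$. The high part is bounded from below by $N(d,p+1)\mu_{p+1}\asymp 1$ (again by Lemma~\ref{lemma:inner_mendelson_point_control_assist_2}) and from above by $\sum_{k\ge p+1}N(d,k)\mu_k\le \mathrm{tr}(T_K)\le \kappa$ (Assumption~\ref{assu:trace_class}). Combining these,
\begin{equation*}
\frac{1}{n}\sum_{j\ge 1}\min\{\lambda_j,\varepsilon^2\}\;\asymp\; \frac{d^p\varepsilon^2+1}{n}.
\end{equation*}
Inserting $\varepsilon^2=cn^{-1/2}$ and using $\gamma\ge 2p$, i.e.\ $d^p\le n^{1/2}$, the quantity $d^p\varepsilon^2\le c$ is $O(1)$, so the right-hand side is of order $1/n$. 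Therefore $R_K(cn^{-1/2})\asymp n^{-1/2}$, which matches $\varepsilon^2/(2e\sigma)=cn^{-1/2}/(2e\sigma)$.

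To conclude, I would use the monotonicity of $R_K(\cdot)$ (non-decreasing) and of $\varepsilon\mapsto\varepsilon^2/(2e\sigma)$ (strictly increasing) together with the fact that the former is bounded while the latter diverges. Choosing $c$ small enough yields $R_K(cn^{-1/2})>(cn^{-1/2})/(2e\sigma)$, giving a lower bound $\varepsilon_n^2\ge\mathfrak{C}_1 n^{-1/2}$; choosing $c$ large enough yields the reverse inequality and hence $\varepsilon_n^2\le\mathfrak{C}_2 n^{-1/2}$. The constants depend only on the block-constants $\mathfrak{C}_1,\mathfrak{C}_2$ of Lemma~\ref{lemma:inner_mendelson_point_control_assist_2} (which depend only on $p$), on $\kappa$, and on $c_1,c_2$ through the ratio $n/d^{\gamma}$. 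The main obstacle I anticipate is uniformity across $\gamma\in[2p,2p+2)$: at the left endpoint $\gamma=2p$ the threshold index $k=p$ straddles the two regimes and the constants for the low/high parts meet, while near the right endpoint one must absorb the factor $d^p\varepsilon^2$ cleanly using $\gamma\ge 2p$ with a large enough threshold $d\ge\mathfrak{C}$ so that all $O(1)$ error terms (from the subleading $\sum_{k<p}N(d,k)$ terms and from $\sum_{k>p+1}N(d,k)\mu_k$) are dominated. Beyond this bookkeeping, the argument is a direct Mendelson-complexity calculation specialized to the known block spectrum of spherical inner-product kernels.
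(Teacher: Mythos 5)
Your proposal is correct and follows essentially the same route as the paper's proof: probe the fixed-point inequality at a trial value $\varepsilon^2\asymp n^{-1/2}$, lower-bound $\sum_k N(d,k)\min\{\mu_k,\varepsilon^2\}$ by the degree-$(p+1)$ block alone (which is of constant order by Lemma \ref{lemma:inner_edr}, using $\gamma<2p+2$ so that $\mu_{p+1}\lesssim\varepsilon^2$ for large $d$), upper-bound it by the trace $\le\kappa$, and conclude via the monotonicity characterization of $\varepsilon_n$ (Lemma \ref{lemma:sup_lemma1_for_lemma:bound_empirical_and_expected_mandelson_complexities}), exactly as the paper does with its trial values $C(p)\mu_p\sqrt{d^{2p}/n}$ and $\tilde C(p)\mu_p\sqrt{d^{2p}/n}$. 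The only bookkeeping point is that the $c_1,c_2$-dependence you mention must be pushed into the threshold $\mathfrak{C}$ on $d$ (not into $\mathfrak{C}_1,\mathfrak{C}_2$, which the lemma requires to depend only on $p$), and your estimates do permit this since the final choice of the probing constant involves only the block constants, $\sigma$, and $\kappa$.
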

}

Now we begin to prove Theorem \ref{thm:upper_bpund_inner} by using Theorem \ref{theorem:restate_norm_diff}.

\begin{itemize}
    \item From Lemma \ref{lemma:inner_mendelson_interval_control}, it is easy to check that for any absolute constant $C$, there exists a sufficiently large constant $\mathfrak{C}$ only depending on $c_1$, $c_2$, and $\gamma$, such that for any $n \geq \mathfrak{C}$, we have $C^2\varepsilon_n^2 \geq 1/n$.

    \item It is assumed that $K^{\inner}$ satisfies Assumption ~\ref{assu:trace_class}.
    
    \item Therefore, all requirements in Theorem \ref{theorem:restate_norm_diff} are satisfied.
\end{itemize}

From Lemma \ref{lemma:inner_mendelson_interval_control}, we get the desired results.
\hfill $\square$

\subsubsection{Proof of Lemma \ref{lemma:inner_mendelson_interval_control}}\label{sec:proof_lemma:inner_mendelson_interval_control}
We need to apply the Lemma \ref{lemma:sup_lemma1_for_lemma:bound_empirical_and_expected_mandelson_complexities}
and Remark \ref{remark_how_to_use_mendelson_complexity}. 
Suppose that $\gamma\in [2p, 2p+2)$ for some integer $p$.
 Let $
C(p) =\left[{4e^2 \sigma^2\mathfrak{C}_3 } / {\mathfrak{C}_2^2} \right]^{1/2}$, where $\mathfrak{C}_2$ and $\mathfrak{C}_3$ are two constants only depending on $p$ given in the Lemma \ref{lemma:inner_mendelson_point_control_assist_2} and the Lemma \ref{lemma:theorem_upper_inner_assist_summation} respectively. 
 It is clear that
\begin{equation}
\begin{aligned}
\varepsilon_{low}^2 
\triangleq C(p) \mu_p \sqrt{d^{2p}/n}
\overset{Lemma \ref{lemma:inner_mendelson_point_control_assist_2}}{\geq}
C(p) \mathfrak{C}_1 \sqrt{\frac{1}{n}},
\end{aligned}
\end{equation}
and  for any $d \geq \mathfrak{C}$, a sufficiently large constant only depending on $p$ and $c_2$, we have
\begin{equation}\label{eqn:eps_low_larger_than_mu}
\begin{aligned}
\frac{\varepsilon_{low}^2}{\mu_{p+1}} 
&\overset{Lemma \ref{lemma:inner_mendelson_point_control_assist_2}}{\geq}
 \frac{\mathfrak{C}_1}{\mathfrak{C}_2} C(p) \sqrt{\frac{d^{2p+2}}{c_2 d^\gamma}}\geq 1.
\end{aligned}
\end{equation}
Therefore, we have
\begin{equation}
\begin{aligned}
&~
\sum_{k=0}^{\infty} N(d, k) \min\{\mu_k, \varepsilon_{low}^2\} 
\overset{(\ref{eqn:eps_low_larger_than_mu})}{\geq}
\sum_{k=0}^{\infty} N(d, k) \min\{\mu_k, \mu_{p+1}\} \\
\overset{Lemma \ref{lemma:theorem_upper_inner_assist_summation}}{\geq}
&~
\mathfrak{C}_3
\overset{\text{Definition of }  C(p)}{=}
\frac{[C(p)]^2}{4e^2\sigma^2} \mathfrak{C}_2^2\\
\overset{Lemma \ref{lemma:inner_mendelson_point_control_assist_2}}{\geq}
&~
\frac{[C(p)]^2}{4e^2\sigma^2} \mu_p^2 d^{2p}
=\frac{n\varepsilon_{low}^4}{4e^2\sigma^2}.
\end{aligned}
\end{equation}

Thus, we know that
$\varepsilon_n^2 
\geq 
\varepsilon_{low}^2 \geq
C(p) \mathfrak{C}_1 \sqrt{1/n}$.

We then produce the upper bound on $\varepsilon_n^2$ in a similar way. Let
$
\tilde C(p) =\left[{4e^2 \sigma^2 } / {\mathfrak{C}_1^2} \right]^{1/2}
$,
where $\mathfrak{C}_1$ is a constant only depending on $p$ given in the Lemma \ref{lemma:inner_mendelson_point_control_assist_2}. 
It is clear that
\begin{equation}
\begin{aligned}
\varepsilon_{upp}^2 \triangleq \tilde C(p) \mu_p \sqrt{d^{2p}/n} \overset{Lemma \ref{lemma:inner_mendelson_point_control_assist_2}}{\leq}
\tilde C(p) \mathfrak{C}_2 \sqrt{\frac{1}{n}},
\end{aligned}
\end{equation}
and for any $d \geq \mathfrak{C}$,  a sufficiently large constant only depending on $p \geq 2$ and $c_1$, we have
\begin{equation}
\begin{aligned}
\frac{\varepsilon_{upp}^2}{\mu_{p-1}} 
&\overset{Lemma \ref{lemma:inner_mendelson_point_control_assist_2}}{\leq}
 \frac{\mathfrak{C}_2}{\mathfrak{C}_1} \tilde C(p) \sqrt{\frac{d^{2p-2}}{c_1 d^\gamma}}
 \leq 1.
\end{aligned}
\end{equation}
Therefore, we have
\begin{equation}
\begin{aligned}
&~
\sum_{k=0}^{\infty} N(d, k) \min\{\mu_k, \varepsilon_{upp}^2\} 
\leq
\sum_{k=0}^{\infty} N(d, k) \min\{\mu_k, \mu_{p-1}\} \\
\overset{Lemma \ref{lemma:theorem_upper_inner_assist_summation}}{\leq}
&~
1
\overset{\text{Definition of } \tilde C(p)}{=}
\frac{[\tilde C(p)]^2}{4e^2\sigma^2} \mathfrak{C}_1^2 \\
\overset{Lemma \ref{lemma:inner_mendelson_point_control_assist_2}}{\leq}
&~
\frac{[\tilde C(p)]^2}{4e^2\sigma^2} \mu_p^2 d^{2p}
=\frac{n\varepsilon_{upp}^4}{4e^2\sigma^2}.
\end{aligned}
\end{equation}

Thus, we know that
$
\varepsilon_n^2 
\leq 
\varepsilon_{upp}^2
{\leq}
\tilde C(p) \mathfrak{C}_2 \sqrt{1/n}$.
\hfill $\square$

\subsection{Proof of Lemma \ref{lemma:monotone_of_eigenvalues_of_inner_product_kernels}}
We need the following lemma:

\begin{lemma}\label{lemma:assist_1_monotone_of_eigenvalues_of_inner_product_kernels}
    For any integer $q \geq 0$, we have $\mu_{q} > \mu_{q+2}$.
    \proof Deferred to the end of this subsection.
\end{lemma}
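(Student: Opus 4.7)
The plan is to reduce the inequality $\mu_q>\mu_{q+2}$ to a monomial-by-monomial comparison, exploiting the strict positivity of every Taylor coefficient $a_j$ in Assumption \ref{assu:coef_of_inner_prod_kernel}. Let $\mu_k^{(n)}$ denote the $k$-th Mercer eigenvalue of the inner product kernel on $\mathbb{S}^d$ corresponding to the pure monomial $\Phi(t)=t^n$. By linearity of the Funk--Hecke formula, $\mu_k=\sum_{n\geq 0} a_n\,\mu_k^{(n)}$ termwise. Since $\int_{-1}^{1}t^n C_k^\lambda(t)(1-t^2)^{\lambda-1/2}\,\mathrm{d}t$ vanishes whenever $n<k$ or $n-k$ is odd and is strictly positive in the remaining cases, one has $\mu_k^{(n)}=0$ unless $n\geq k$ and $n\equiv k\pmod 2$, and $\mu_k^{(n)}>0$ otherwise. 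Hence
\begin{equation*}
\mu_q-\mu_{q+2}\;=\;a_q\,\mu_q^{(q)}\;+\;\sum_{m\geq 1}a_{q+2m}\bigl(\mu_q^{(q+2m)}-\mu_{q+2}^{(q+2m)}\bigr),
\end{equation*}
and the first term is already strictly positive because $a_q>0$ and $\mu_q^{(q)}>0$. It therefore suffices to establish the termwise inequality $\mu_q^{(n)}\geq \mu_{q+2}^{(n)}$ for every $n\geq q+2$ with $n\equiv q\pmod 2$.

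The second step is an explicit closed-form computation of $\mu_k^{(n)}$. Setting $\lambda=(d-1)/2$, Funk--Hecke gives $\mu_k^{(n)}=\omega_{d-1}[C_k^\lambda(1)]^{-1}\int_{-1}^{1}t^n C_k^\lambda(t)(1-t^2)^{\lambda-1/2}\,\mathrm{d}t$. Expanding $t^n$ via the classical inversion formula $t^n=n!\sum_{j}\tfrac{(\lambda+n-2j)\Gamma(\lambda)}{j!\,2^n\,\Gamma(\lambda+n-j+1)}C_{n-2j}^\lambda(t)$, using orthogonality of Gegenbauer polynomials, and the identity $C_k^\lambda(1)=\Gamma(k+2\lambda)/(k!\,\Gamma(2\lambda))$, the factors $\Gamma(k+2\lambda)$ and $k!$ introduced by $C_k^\lambda(1)$ and the normalization $h_k^\lambda$ cancel exactly, leaving
\begin{equation*}
\mu_k^{(n)}\;=\;\frac{C_{n,d}}{\bigl((n-k)/2\bigr)!\,\Gamma\bigl((n+k)/2+\lambda+1\bigr)},
\end{equation*}
where $C_{n,d}>0$ depends only on $n$ and $d$ (and is therefore $k$-independent). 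Taking the ratio at $k=q$ versus $k=q+2$ collapses one factorial step and one Gamma step:
\begin{equation*}
\frac{\mu_q^{(n)}}{\mu_{q+2}^{(n)}}\;=\;\frac{\bigl((n-q-2)/2\bigr)!\,\Gamma\bigl((n+q+2)/2+\lambda+1\bigr)}{\bigl((n-q)/2\bigr)!\,\Gamma\bigl((n+q)/2+\lambda+1\bigr)}\;=\;\frac{(n+q)/2+\lambda+1}{(n-q)/2}\;=\;\frac{n+q+d+1}{n-q}.
\end{equation*}
For any $n\geq q+2$ and $d\geq 2$, every factor is positive and $n+q+d+1>n-q$, so the ratio strictly exceeds $1$, which yields $\mu_q^{(n)}>\mu_{q+2}^{(n)}$ and hence $\mu_q>\mu_{q+2}$.

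The main obstacle is purely bookkeeping: the objects $C_k^\lambda$, $Y_{k,j}$, and $\omega_d$ carry different conventions in different references, and one must be careful that the cancellation producing the clean ratio $(n+q+d+1)/(n-q)$ actually occurs. The key observation is that every normalization enters only through the $k$-independent prefactor $C_{n,d}$, so the quotient is convention-free; once this is verified the remainder is elementary. A byproduct worth noting is the quantitatively stronger bound $\mu_q^{(n)}/\mu_{q+2}^{(n)}\geq 1+(d+1)/n$, so in fact $\mu_q-\mu_{q+2}$ is of order $d^{-1}$ times the larger of the two eigenvalues, in line with (and giving an alternative route to) Lemma \ref{lemma:monotone_of_eigenvalues_of_inner_product_kernels}.
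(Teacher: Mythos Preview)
Your proof is correct and, at its core, follows the same route as the paper's: both arguments reduce $\mu_q>\mu_{q+2}$ to a termwise comparison over the Taylor coefficients $a_n$, and both end up comparing the same pair of coefficients for each fixed monomial $t^n$. The paper quotes the series representation of $\mu_k$ from \cite{azevedo2015eigenvalues} and, after reindexing, shows that the $n$-th term of $\mu_{k+2}$ equals the $n$-th term of $\mu_k$ multiplied by the factor $\frac{s}{s+k+(d+1)/2}$ with $s=(n-k)/2$, which is exactly the reciprocal of your ratio $(n+q+d+1)/(n-q)$. So the two computations are identical in content.

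The only difference is packaging: the paper cites the closed-form series, whereas you rederive $\mu_k^{(n)}$ from Funk--Hecke and the Gegenbauer expansion of $t^n$, making your argument self-contained at the cost of having to track the various normalizations (which you do correctly---the $k$-dependence from $C_k^\lambda(1)$ and $h_k^\lambda$ indeed cancels). Your side remark about the quantitative bound $\mu_q^{(n)}/\mu_{q+2}^{(n)}\ge 1+(d+1)/n$ is also correct, though turning it into a full alternative proof of Lemma~\ref{lemma:monotone_of_eigenvalues_of_inner_product_kernels} would require an additional summation argument.
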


Now let's begin to prove Lemma \ref{lemma:monotone_of_eigenvalues_of_inner_product_kernels}. 
From Lemma \ref{lemma:inner_edr}, for any $d \geq \mathfrak{C}$, a sufficiently large constant only depending on $p$, we have
\begin{equation*}
        \mu_{p+1} \leq \frac{\mathfrak{C}_2}{\mathfrak{C}_1} d^{-1} \mu_{p}, \quad \mu_{p+2} \leq \frac{\mathfrak{C}_2}{\mathfrak{C}_1} d^{-2} \mu_{p}.
    \end{equation*}
Then, from Lemma \ref{lemma:assist_1_monotone_of_eigenvalues_of_inner_product_kernels}, we further have
    \begin{equation*}
        \mu_j \leq \max\{\mu_{p+1}, \mu_{p+2}\} \leq  \frac{\mathfrak{C}_2}{\mathfrak{C}_1} d^{-1} \mu_{p}, \quad j=p+1, p+2, \cdots.
    \end{equation*}
\hfill $\square$

\vspace{10pt}

\noindent \proofname ~of Lemma \ref{lemma:assist_1_monotone_of_eigenvalues_of_inner_product_kernels}: 
From \cite{azevedo2015eigenvalues}, we have
\begin{equation*}
    \begin{aligned}
        \frac{\mu_{k+2}}{\mu_{k}}
        &=
        \frac{1}{4}\cdot \frac{\sum_{s=0}^\infty a_{2s+k+2} \frac{(2s+k+2)!}{(2s)!} \frac{\Gamma(s+\frac{1}{2})}{\Gamma(s+k+2+\frac{d+1}{2})}}
        {\sum_{s=0}^\infty a_{2s+k} \frac{(2s+k)!}{(2s)!} \frac{\Gamma(s+\frac{1}{2})}{\Gamma(s+k+\frac{d+1}{2})}}\\
        &=
        \frac{1}{4}\cdot \frac{\sum_{s=1}^\infty a_{2s+k} \frac{(2s+k)!}{(2s-2)!} \frac{\Gamma(s-\frac{1}{2})}{\Gamma(s+k+1+\frac{d+1}{2})}}
        {\sum_{s=0}^\infty a_{2s+k} \frac{(2s+k)!}{(2s)!} \frac{\Gamma(s+\frac{1}{2})}{\Gamma(s+k+\frac{d+1}{2})}}\\
        &=
        \frac{\sum_{s=1}^\infty a_{2s+k} \frac{(2s+k)!}{(2s)!} \frac{\Gamma(s+\frac{1}{2})}{\Gamma(s+k+\frac{d+1}{2})}
        \cdot \frac{s}{s+k+\frac{d+1}{2}}}
        {\sum_{s=0}^\infty a_{2s+k} \frac{(2s+k)!}{(2s)!} \frac{\Gamma(s+\frac{1}{2})}{\Gamma(s+k+\frac{d+1}{2})}}\\
        &<1.
    \end{aligned}
\end{equation*}
\hfill $\square$

\vspace{10pt}

\subsection{Proof of Theorem \ref{thm:lower_inner_large_d}}

Let 
$\bar\varepsilon_n$ be the covering radius defined in Proposition \ref{lemma_yang_lower_bound} with $\calH=\calH^{\inner}$.
We need the following lemma.

\begin{lemma}\label{lemma:bound134_2}
    Suppose that $\gamma \in \{2, 4, 6, \cdots\}$ is an integer and $p=\gamma/2$. 
    Then, 
    for any constants $0<c_1\leq c_2<\infty$, 
    there exist constants $\mathfrak{C}$, $\mathfrak{C}_1$, and $\mathfrak{C}_2$ only depending on $\gamma$, $c_1$, and $c_2$, such that
    for any $d \geq \mathfrak{C}$ and any $n\in [c_1 d^{\gamma}, c_2 d^{\gamma}]$, we have
\begin{equation}\label{eqn:lemma:bound134_2}
\begin{aligned} 
\mathfrak{C}_1 \sqrt{\frac{1}{n}}
&<
\bar\varepsilon_n^2
<
\mathfrak{C}_2 \sqrt{\frac{1}{n}}.
\end{aligned}
\end{equation}
\proof Deferred to the end of this subsection.
\end{lemma}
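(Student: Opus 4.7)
The plan is to follow the same template that was used for the population Mendelson complexity in Lemma~\ref{lemma:inner_mendelson_interval_control}, but applied to the defining equation $n\bar\varepsilon_n^2 = V_K(\bar\varepsilon_n,\calD)$. The starting point is to translate the KL--covering entropy into a $L^2$--covering entropy via Lemma~\ref{claim:d_K_and_d_2}, giving $V_K(\varepsilon,\calD)=V_2(\sqrt{2}\sigma\varepsilon,\calB)$, and then to sandwich $V_2$ by the explicit spectral quantity $K(\varepsilon)=\tfrac{1}{2}\sum_{j:\lambda_j>\varepsilon^2}\log(\lambda_j/\varepsilon^2)$ using Lemma~\ref{lemma_entropy_of_RKHS}. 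Because $\bar\varepsilon_n$ is defined by the crossing of the increasing function $\varepsilon\mapsto n\varepsilon^2$ and the decreasing function $\varepsilon\mapsto V_K(\varepsilon,\calD)$, a two-sided bound on $\bar\varepsilon_n^2$ follows as soon as we can exhibit trial values $\varepsilon_{low}$ and $\varepsilon_{upp}$ satisfying $n\varepsilon_{low}^2<V_K(\varepsilon_{low},\calD)$ and $n\varepsilon_{upp}^2>V_K(\varepsilon_{upp},\calD)$.

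Since $\gamma=2p$ and $n\asymp d^{2p}$, the right scale to test is $\varepsilon^2\asymp n^{-1/2}\asymp d^{-p}\asymp \mu_p$. The next step is to estimate $K(\varepsilon)$ in a window around $c\,\mu_p$ for a small constant $c$. Here we exploit the block structure of the spherical spectrum: by Lemma~\ref{lemma:inner_edr}, $\mu_k\asymp d^{-k}$ and $N(d,k)\asymp d^k$ for $k\le p+1$, while Lemma~\ref{lemma:monotone_of_eigenvalues_of_inner_product_kernels} ensures that $\mu_j\le (\mathfrak{C}_2/\mathfrak{C}_1)d^{-1}\mu_p$ for all $j\ge p+1$. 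Therefore, as long as we choose the constant in $\varepsilon^2=c\,\mu_p$ so that $\varepsilon^2\in(\mu_{p+1}/\delta,\mu_p/\delta)$ for a suitable $\delta$, all tail eigenvalues $\mu_j$ with $j\ge p+1$ drop out of the truncated sum, and
\[
K(\varepsilon)=\tfrac{1}{2}\sum_{k=0}^{p} N(d,k)\log\!\left(\mu_k/\varepsilon^2\right),
\]
whose leading block is $k=p$, contributing $\tfrac12 N(d,p)\log(\mu_p/\varepsilon^2)\asymp d^p\asymp n^{1/2}$.

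For the \emph{lower bound} set $\varepsilon_{low}^2=\mathfrak{c}_1 n^{-1/2}$ with $\mathfrak{c}_1$ small enough that $\varepsilon_{low}^2\le\mu_p/2$ and all $\mu_j$ with $j\ge p+1$ satisfy $\mu_j\le\varepsilon_{low}^2/(2\sigma^2)$ (possible for $d$ large, using Lemma~\ref{lemma:monotone_of_eigenvalues_of_inner_product_kernels}). Then the $k=p$ term in $K(\varepsilon_{low}/(\sqrt2\sigma))$ already gives a constant times $d^p$, so combining with Lemma~\ref{lemma_entropy_of_RKHS} and Lemma~\ref{claim:d_K_and_d_2} we obtain $V_K(\varepsilon_{low},\calD)\ge \mathfrak{C}_\star d^p>\mathfrak{c}_1 n^{1/2}=n\varepsilon_{low}^2$ provided $\mathfrak{c}_1$ is taken small. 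Monotonicity then forces $\bar\varepsilon_n^2\ge\varepsilon_{low}^2$. For the \emph{upper bound}, symmetrically set $\varepsilon_{upp}^2=\mathfrak{c}_2 n^{-1/2}$ with $\mathfrak{c}_2$ large and control $K$ from above by extending the sum to all $k\le p$ and using $\mu_k\le\mathfrak{C}_2 d^{-k}$, $N(d,k)\le\mathfrak{C}_2 d^k$; each block is at most $\mathfrak{C}\,d^k\log d$, whose total is $O(d^p\log d)=O(n^{1/2}\log d)$, and this is beaten by $n\varepsilon_{upp}^2=\mathfrak{c}_2 n^{1/2}$ once $\mathfrak{c}_2$ is large enough (crucially, since here $\gamma=2p$ is an integer, there is no logarithmic loss in the leading block; one instead bounds $\log(\mu_k/\varepsilon_{upp}^2)\le \log(\mathfrak{C}_2/(\mathfrak{c}_2 \mathfrak{C}_1))+(p-k)\log d$, and the $(p-k)\log d$ factors are absorbed into a constant multiple of $d^p$ after summation since the geometric multiplicities $d^k$ decay in $p-k$).

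The main obstacle I anticipate is the last point, namely showing that the upper block sum really is $O(d^p)$ and not $O(d^p\log d)$. A careful geometric--series argument using $N(d,k)d^{-k}$-decay in $p-k$ is needed to absorb the $(p-k)\log d$ factor into a dimension--free constant; without this, we would only obtain $\bar\varepsilon_n^2\le \mathfrak{C}_2 n^{-1/2}\log n$, which would weaken the matching lower bound claim of Theorem~\ref{thm:lower_inner_large_d}. Everything else is a mechanical combination of Lemmas~\ref{lemma:inner_edr}, \ref{lemma:monotone_of_eigenvalues_of_inner_product_kernels}, \ref{claim:d_K_and_d_2} and~\ref{lemma_entropy_of_RKHS} with the monotonicity of the defining equation for $\bar\varepsilon_n$.
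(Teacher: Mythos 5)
Your proposal is correct and follows essentially the same route as the paper's proof: you test trial scales $\varepsilon^2\asymp \mu_p\asymp n^{-1/2}$ in the fixed-point equation $n\bar\varepsilon_n^2=V_K(\bar\varepsilon_n,\calD)$, translate $V_K$ into the spectral quantity $K(\cdot)$ via Lemma~\ref{claim:d_K_and_d_2} and Lemma~\ref{lemma_entropy_of_RKHS}, apply the block estimates $\mu_k\asymp d^{-k}$, $N(d,k)\asymp d^k$ together with Lemma~\ref{lemma:monotone_of_eigenvalues_of_inner_product_kernels}, and conclude by monotonicity, exactly as in the paper (which chooses $\bar\varepsilon_{low}^2=C(p)\mu_p\sqrt{d^{2p}/n}$ and $\bar\varepsilon_{upp}^2=\tilde C(p)\mu_p\sqrt{d^{2p}/n}$). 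The only cosmetic differences are that the paper picks $\tilde C(p)$ large enough that the $k=p$ block is pushed entirely out of the entropy sum while you retain it with a dimension-free logarithmic factor (your own remark that the $k<p$ blocks sum to $O(d^{p-1}\log d)=o(d^p)$ already settles the ``obstacle'' you flag), and your rescaling $K(\varepsilon_{low}/(\sqrt 2\sigma))$ should read $K(\sqrt 2\sigma\varepsilon_{low})$, a harmless constant-factor slip.
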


Now let's begin to prove Theorem \ref{thm:lower_inner_large_d} by using Theorem \ref{restate_lower_bound_m_complexity}.

From Lemma \ref{lemma:bound134_2}, it is easy to check that there exists a sufficiently large constant $\mathfrak{C}$ only depending on  $c_{1},c_{2}$ and $\gamma$, such that for any $n \geq \mathfrak{C}$, we have $n\bar\varepsilon_n^2 \geq 2\log 2$. 

We also assert that there exist constants  $\mathfrak{c}_2$ and $\mathfrak{C}$ only depending on $\gamma$, $c_1$, and $c_2$, such that for any $d \geq \mathfrak{C}$, we will prove the following inequality
\begin{equation}\label{eqn:assert_2_ndk}
\begin{aligned}
\sum_{k: \mu_k > \mathfrak{c}_2^2\bar\varepsilon_n^2/36} N(d, k) \log\left(\frac{\mu_k}{\mathfrak{c}_2^2\bar\varepsilon_n^2/36}\right)
\geq
10n\bar\varepsilon_n^2,
\end{aligned}
\end{equation}
at the end of this subsection.

From (\ref{eqn:stronger_lower_bound_assumption}), we know that (\ref{eqn:assert_2_ndk}) implies
$
        V_{K}( \bar\varepsilon_n, \mathcal \calD) \leq V_{2}(\mathfrak{c}_{2} \bar\varepsilon_n, \mathcal B) / 5$,
and hence from Theorem \ref{restate_lower_bound_m_complexity} we get the desired results.
\hfill $\square$

\vspace{10pt}

\noindent \proofname ~of Lemma \ref{lemma:bound134_2}: 
Suppose that $p$ is a fixed integer.
Let $C(p) = \min\{ \sqrt{c_1}/(4\sigma^2),$
$\frac{1}{2} \mathfrak{C}_1 \log\left(2\right) 
/\left( \sqrt{c_2} \mathfrak{C}_2 \right)\}$. It is clear that
\begin{equation}
\begin{aligned}
\bar\varepsilon_{low}^2
\triangleq
C(p) \mu_p \sqrt{d^{2p} / n }
< \mu_p / (2\sigma^2).
\end{aligned}
\end{equation}
Therefore, for any $d \geq \mathfrak{C}$, where $\mathfrak{C}$ is a sufficiently large constant only depending on $p$ and $c_2$, we have
\begin{equation}
\begin{aligned}
&~
V_{2}(\sqrt{2}\sigma \bar\varepsilon_{low}, \calB)
\overset{Lemma \ref{lemma_entropy_of_RKHS}}{\geq} 
K\left( \sqrt{2}\sigma \bar\varepsilon_{low} \right)
\geq
\frac{1}{2}N(d,p)\log\left(\frac{\mu_p}{2\sigma^2\bar\varepsilon_{low}^2}\right)\\ \overset{\text{Lemma } \ref{lemma:inner_mendelson_point_control_assist_2} \text{ and Definition of } \bar\varepsilon_{low}^2}{\geq}
&~
\frac{1}{2} \mathfrak{C}_1 d^p \log\left(\frac{\sqrt{c_1}}{2\sigma^2 C(p)}\right) 
\overset{\text{Definition of } C(p)}{\geq} 
\frac{1}{2} \mathfrak{C}_1 d^p \log\left(2\right) \\
\overset{\text{Definition of } C(p)}{\geq}
&~
C(p) \sqrt{c_2} \mathfrak{C}_2 d^{p}
\overset{\text{Lemma } \ref{lemma:inner_mendelson_point_control_assist_2}}{\geq}
C(p) \mu_p \sqrt{n d^{2p}} = n\bar\varepsilon_{low}^2.
\end{aligned}
\end{equation}
Recall the definition of $\bar\varepsilon_n$ as well as Lemma \ref{claim:d_K_and_d_2}, we then have $n\bar\varepsilon_n^2 = V_{K}(\bar\varepsilon_n, \calD) = V_{2}(\sqrt{2}\sigma\bar\varepsilon_n, \calB)$. From the monotonicity of $V_{2}(\cdot, \calB)$, we then have $\bar\varepsilon_n^2 \geq \bar\varepsilon_{low}^2$.

On the other hand, let 
$\tilde C(p) = \max\{ 36\sqrt{c_2}/(2\sigma^2),$
$p \mathfrak{C}_2 \log\left(2\right) 
/\left( \sqrt{c_1} \mathfrak{C}_1 \right)\}$.
It is clear that
\begin{equation}\label{eqn:def_upp_for_lower_bound_inner}
\begin{aligned}
\bar\varepsilon_{upp}^2
\triangleq
\tilde C(p) \mu_p \sqrt{d^{2p} / n }
>36\mu_p / (2\sigma^2).
\end{aligned}
\end{equation}
Furthermore, from Lemma \ref{lemma:inner_mendelson_point_control_assist_2} and Lemma \ref{lemma:monotone_of_eigenvalues_of_inner_product_kernels}, one can check the following claim:
\begin{claim}\label{claim_1}
For any $d \geq \mathfrak{C}$, where $\mathfrak{C}$ is a sufficiently large constant only depending on $c_1$, $c_2$, and $p$, we have
\begin{equation}
\begin{aligned}
&K\left( \sqrt{2}\sigma \bar\varepsilon_{upp} / 6 \right) = 0 , \quad \text{ if } p=0,\\
&2\sigma^2\bar\varepsilon_{upp}^2 / 36 < \mu_{p-1}, \quad \text{ if } p=1, 2, \cdots,\\
&K\left( \sqrt{2}\sigma \bar\varepsilon_{upp} / 6 \right) \leq
p N(d,p)\log\left(\frac{18\mu_p}{\sigma^2\bar\varepsilon_{upp}^2}\right), \quad \text{ if } p=1, 2, \cdots.
\end{aligned}
\end{equation}
\end{claim}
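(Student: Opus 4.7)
The plan is to prove the three parts in sequence, all of which reduce to comparing $\bar\varepsilon_{upp}^2$ (whose explicit form we have from \eqref{eqn:def_upp_for_lower_bound_inner}) to the eigenvalues $\mu_k$ via Lemma \ref{lemma:inner_mendelson_point_control_assist_2} and Lemma \ref{lemma:monotone_of_eigenvalues_of_inner_product_kernels}. The guiding computation is:
\begin{equation*}
\frac{2\sigma^2 \bar\varepsilon_{upp}^2}{36} = \frac{\sigma^2 \tilde C(p)}{18}\mu_p \sqrt{\frac{d^{2p}}{n}} \leq \frac{\sigma^2 \tilde C(p)}{18\sqrt{c_1}}\mu_p,
\end{equation*}
which, after plugging in the Lemma \ref{lemma:inner_mendelson_point_control_assist_2} bound $\mu_p \leq \mathfrak{C}_2 d^{-p}$, becomes small compared to $\mu_{p-1} \geq \mathfrak{C}_1 d^{-(p-1)}$ when $d$ is large.

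For part 1 ($p=0$), I would observe that $\bar\varepsilon_{upp}^2 > 36 \mu_0/(2\sigma^2)$ by \eqref{eqn:def_upp_for_lower_bound_inner}, so $2\sigma^2 \bar\varepsilon_{upp}^2/36 > \mu_0$. Then I need to argue that $\mu_0 \geq \mu_k$ for every $k$ once $d$ is sufficiently large; this follows from combining Lemma \ref{lemma:inner_mendelson_point_control_assist_2} (applied with, say, $p'=1$) to control $\mu_0$ and $\mu_1$ from below/above, together with Lemma \ref{lemma:assist_1_monotone_of_eigenvalues_of_inner_product_kernels}, which provides $\mu_q>\mu_{q+2}$ within each parity class. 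Thus the index set in the definition of $K$ is empty, giving $K(\sqrt{2}\sigma\bar\varepsilon_{upp}/6)=0$.

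For part 2 ($p\geq 1$), the display above together with $\mu_p/\mu_{p-1} \leq (\mathfrak{C}_2/\mathfrak{C}_1) d^{-1}$ (from Lemma \ref{lemma:inner_mendelson_point_control_assist_2}) gives $2\sigma^2 \bar\varepsilon_{upp}^2/36 \leq \sigma^2 \tilde C(p) \mathfrak{C}_2/(18\sqrt{c_1}\mathfrak{C}_1\, d)\cdot \mu_{p-1}$, which is strictly less than $\mu_{p-1}$ once $d$ is large enough (depending on $p,c_1,c_2$). This is the conceptually easy step.

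For part 3, the key is to show that the only indices $k$ contributing to $K(\sqrt{2}\sigma\bar\varepsilon_{upp}/6)$ are $k=0,1,\dots,p-1$; afterwards, one bounds each term uniformly. To discard $k\geq p$, I would apply Lemma \ref{lemma:monotone_of_eigenvalues_of_inner_product_kernels} with index $p-1$ to get $\mu_k \leq (\mathfrak{C}_2/\mathfrak{C}_1)d^{-1}\mu_{p-1}$ for all $k\geq p$, and compare this with the lower bound on $2\sigma^2 \bar\varepsilon_{upp}^2/36$ one obtains by using $\sqrt{d^{2p}/n}\geq 1/\sqrt{c_2}$ and $\mu_p \geq \mathfrak{C}_1 d^{-p}$; the key constant inequality is $\tilde C(p)\geq 18\sqrt{c_2}/\sigma^2$, which is built into the definition of $\tilde C(p)$. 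Then for $k\leq p-1$ I bound each summand using $N(d,k)\leq \mathfrak{C}_2 d^k \leq \mathfrak{C}_2 d^{p-1}$, $\mu_k\leq \mathfrak{C}_2 d^{-k}\leq \mathfrak{C}_2$, and the crude estimate $\log(18\mu_k/(\sigma^2\bar\varepsilon_{upp}^2))\leq \log(18\mu_p/(\sigma^2\bar\varepsilon_{upp}^2))+O(\log d)$. The main obstacle will be checking that the $O(\log d)$ slack is absorbed by the factor $N(d,p)/N(d,k)\gtrsim d$ (for $k<p$) between the RHS and each LHS term, so that the total sum of $p$ terms is dominated by the single term $pN(d,p)\log(18\mu_p/(\sigma^2\bar\varepsilon_{upp}^2))$; this is a purely combinatorial/asymptotic check in $d$ and closes the argument for $d$ sufficiently large.
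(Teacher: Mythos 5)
Your treatment of the first two displays is essentially correct (part 2 is exactly the intended computation, and part 1 amounts to re-deriving Lemma \ref{lemma:monotone_of_eigenvalues_of_inner_product_kernels} at index $0$), up to one slip you should repair: to discard the indices $k\ge p+1$ from $K\bigl(\sqrt{2}\sigma\bar\varepsilon_{upp}/6\bigr)$ you should invoke Lemma \ref{lemma:monotone_of_eigenvalues_of_inner_product_kernels} at index $p$, which gives $\mu_k\le(\mathfrak{C}_2/\mathfrak{C}_1)d^{-1}\mu_p<\mu_p\le 2\sigma^2\bar\varepsilon_{upp}^2/36$ for large $d$; invoking it at index $p-1$, as you propose, only yields $\mu_k\lesssim d^{-1}\mu_{p-1}$, which is of the \emph{same order} as the threshold $2\sigma^2\bar\varepsilon_{upp}^2/36\asymp\mu_p$, and the relevant constants ($\mathfrak{C}_2^2/\mathfrak{C}_1$ versus $\sigma^2\tilde C(p)\mathfrak{C}_1/(18\sqrt{c_2})$) are not comparable in general.

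The genuine gap is the step you defer in part 3. By the definition \eqref{eqn:def_upp_for_lower_bound_inner} of $\tilde C(p)$ (its first component $36\sqrt{c_2}/(2\sigma^2)$) together with $n\le c_2 d^{2p}$ --- precisely the constant inequality you yourself invoke to exclude $k=p$ --- one has $\sigma^2\bar\varepsilon_{upp}^2\ge 18\mu_p$, hence $\log\bigl(18\mu_p/(\sigma^2\bar\varepsilon_{upp}^2)\bigr)\le 0$. Meanwhile, for $p\ge 1$ the index $k=0$ does contribute to $K\bigl(\sqrt{2}\sigma\bar\varepsilon_{upp}/6\bigr)$, and its term alone equals $\tfrac12\log\bigl(18\mu_0/(\sigma^2\bar\varepsilon_{upp}^2)\bigr)\gtrsim\log d>0$, since $\sigma^2\bar\varepsilon_{upp}^2/18\asymp\mu_p\asymp d^{-p}$ while $\mu_0\ge\mathfrak{C}_1$. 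So the ``purely combinatorial/asymptotic check'' that the sum is dominated by $pN(d,p)\log\bigl(18\mu_p/(\sigma^2\bar\varepsilon_{upp}^2)\bigr)$ cannot close: you would be bounding a positive quantity of order $d^{p-1}\log d$ by a nonpositive one, and the comparison $N(d,k)\log(\mu_p/\varepsilon^2)\le N(d,p)\log(\mu_p/\varepsilon^2)$ on which your absorption argument tacitly relies reverses direction when the logarithm is nonpositive. Concretely, at $p=1$, $n=c_2 d^{2}$, $\tilde C(1)=18\sqrt{c_2}/\sigma^2$, the right-hand side of the third display is exactly $0$ while the left-hand side is $\tfrac12\log(\mu_0/\mu_1)\asymp\tfrac12\log d$. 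In other words, the third display cannot be established as printed (the eigenvalue inside the logarithm must be one lying \emph{above} the threshold, not $\mu_p$); what the downstream chain in Lemma \ref{lemma:bound134_2} actually requires, and what your strategy does prove after this correction, is a bound of the form $K\bigl(\sqrt{2}\sigma\bar\varepsilon_{upp}/6\bigr)\le \tfrac{p}{2}N(d,p-1)\log\bigl(18\mu_0/(\sigma^2\bar\varepsilon_{upp}^2)\bigr)\lesssim d^{p-1}\log d$, which is still $\le n\bar\varepsilon_{upp}^2\gtrsim d^{p}$ for $d$ large. Your write-up should either prove such a corrected statement or flag the defect in the claim; as it stands, the deferred check is exactly the point at which the argument fails.
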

Therefore, for any $d \geq \mathfrak{C}$, where $\mathfrak{C}$ is a sufficiently large constant only depending on $c_1$, $c_2$, and $p$, we have
\begin{equation}
\begin{aligned}
&~
V_{2}(\sqrt{2}\sigma \bar\varepsilon_{upp}, \calB)
\overset{Lemma \ref{lemma_entropy_of_RKHS}}{\leq} 
K\left( \sqrt{2}\sigma \bar\varepsilon_{upp} /6\right)\\
\overset{\text{ Claim } \ref{claim_1}}{\leq}
&~
\left\{\begin{matrix}
0, & \quad  p=0\\ 
p N(d,p)\log\left(\frac{18\mu_p}{\sigma^2\bar\varepsilon_{upp}^2}\right) & \quad  p=1, 2, \cdots
\end{matrix}\right.\\ \overset{\text{Lemma } \ref{lemma:inner_mendelson_point_control_assist_2} \text{ and Definition of } \bar\varepsilon_{upp}^2}{\leq}
&~
\left\{\begin{matrix}
0, & \quad  p=0\\ 
p \mathfrak{C}_2 d^p \log\left(\frac{18\sqrt{c_2}}{\sigma^2 \tilde C(p) }\right) & \quad  p=1, 2, \cdots
\end{matrix}\right. \\
\overset{\text{Definition of } \tilde C(p)}{\leq} 
&~
\left\{\begin{matrix}
\tilde C(p) \mu_0 \sqrt{n}, & \quad  p=0\\ 
\tilde C(p)\sqrt{c_1} \mathfrak{C}_1 d^{p} & \quad  p=1, 2, \cdots
\end{matrix}\right.\\
\overset{\text{Lemma } \ref{lemma:inner_mendelson_point_control_assist_2}}{\leq}
&~
n\bar\varepsilon_{upp}^2.
\end{aligned}
\end{equation}
Recall the definition of $\bar\varepsilon_n$ as well as Lemma \ref{claim:d_K_and_d_2}, we then have $n\bar\varepsilon_n^2 = V_{K}(\bar\varepsilon_n, \calD) = V_{2}(\sqrt{2}\sigma\bar\varepsilon_n, \calB)$. From the monotonicity of $V_{2}(\cdot, \calB)$, we then have $\bar\varepsilon_{upp}^2 \geq \bar\varepsilon_n^2$.
\hfill $\square$

\vspace{10pt}

\noindent \proofname ~of (\ref{eqn:assert_2_ndk}):
From (\ref{eqn:def_upp_for_lower_bound_inner}), there exist constants $\mathfrak{C}$ and $\mathfrak{c}_1$ only depending on $p$, $c_1$, and $c_2$, such that
    for any $d \geq \mathfrak{C}$ and any $n\in [c_1 d^{2p}, c_2 d^{2p}]$ (recall that we have $\gamma=2p$), we have
$$
\mu_p > \mathfrak{c}_1^2\bar\varepsilon_n^2/36.
$$

Let $\mathfrak{c}_2 \leq \mathfrak{c}_1$ be a sufficiently small constant satisfying $\mathfrak{C}_1  
\log\left(\frac{36 \mathfrak{C}_1}{\mathfrak{c}_2^2 \mathfrak{C}_2}\right) >
10\mathfrak{C}_2 \sqrt{c_2}$, where $\mathfrak{C}_1$ and $\mathfrak{C}_2$ are two constants only depending on $p$ given in Lemma \ref{lemma:inner_mendelson_point_control_assist_2}.
Then, 
we have
\begin{equation}
\begin{aligned}
&~
\sum_{k: \mu_k > \mathfrak{c}_2^2\bar\varepsilon_n^2/36} N(d, k) \log\left(\frac{\mu_k}{\mathfrak{c}_2^2\bar\varepsilon_n^2/36}\right)
-
10n\bar\varepsilon_n^2 \\
\overset{Lemma \ref{lemma:bound134_2}}{\geq}
&~
N(d,p)\log\left(\frac{36\mu_p}{\mathfrak{c}_2^2 \mathfrak{C}_2 \sqrt{1/n}}\right) -
10\mathfrak{C}_2 \sqrt{n}\\
\overset{\text{Lemma } \ref{lemma:inner_mendelson_point_control_assist_2}}{\geq}
&~
\mathfrak{C}_1 d^p 
\log\left(\frac{36 \mathfrak{C}_1}{\mathfrak{c}_2^2 \mathfrak{C}_2}\right) -
10\mathfrak{C}_2 \sqrt{c_2} d^{p}\\
\overset{\text{Definition of } \mathfrak{c}_2}{>}
&~
0.
\end{aligned}
\end{equation}
\hfill $\square$


\section{Proof of Claims and Theorems in Section \ref{sec:more_intevals}}

\subsection{The inequality (\ref{eqn:lower_condition_24}) does not hold when $\gamma \in (2p, 2p+1]$ for some integer $p \geq 0$}

\begin{lemma}\label{lemma:(13)_violated_example}
    Suppose that $\gamma \in(2 p, 2 p+1]$ for some integer $p$. Then for any constant $\mathfrak{c}_2 > 0$ only depending on  $c_{1},c_{2}$ and $\gamma$, when $n \geq \mathfrak{C}$, a sufficiently large constant only depending on $c_1$, $c_2$, and ${\gamma}$ defined in \eqref{Asym}, we have
\begin{equation}
\begin{aligned}
        V_{K}( \bar\varepsilon_n, \mathcal \calD) > \frac{1}{5} V_{2}(\mathfrak{c}_{2} \bar\varepsilon_n, \mathcal B).
    \end{aligned}
\end{equation}
\end{lemma}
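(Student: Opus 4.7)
\textbf{Proof plan for Lemma~\ref{lemma:(13)_violated_example}.} The plan is to show that \eqref{eqn:lower_condition_24} fails by tightly estimating both sides. When $\mathfrak{c}_2\geq\sqrt{2}\sigma$, monotonicity of $V_2(\cdot,\calB)$ immediately gives $V_2(\mathfrak{c}_2\bar\varepsilon_n,\calB)\leq V_2(\sqrt{2}\sigma\bar\varepsilon_n,\calB)=V_K(\bar\varepsilon_n,\calD)$, so the $1/5$-factor inequality is violated trivially. The nontrivial case is $\mathfrak{c}_2<\sqrt{2}\sigma$, where I will use the sandwich $K(\sqrt{2}\sigma\bar\varepsilon_n)\leq V_K(\bar\varepsilon_n,\calD)=n\bar\varepsilon_n^2$ and $V_2(\mathfrak{c}_2\bar\varepsilon_n,\calB)\leq K(\mathfrak{c}_2\bar\varepsilon_n/6)$ from Lemma~\ref{lemma_entropy_of_RKHS}, where by Mercer's decomposition \eqref{spherical_decomposition_of_inner_main} we have $K(\varepsilon)=\tfrac{1}{2}\sum_{k:\mu_k>\varepsilon^2} N(d,k)\log(\mu_k/\varepsilon^2)$. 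The key observation will be that the block structure of Lemma~\ref{lemma:inner_edr} makes $K(c\bar\varepsilon_n)$ robust to multiplicative constants in $c$, so that $K(\mathfrak{c}_2\bar\varepsilon_n/6)/K(\sqrt{2}\sigma\bar\varepsilon_n)\to 1$ as $d\to\infty$, and the factor $1/5$ is not enough to absorb this.

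First, I would localize $\bar\varepsilon_n^2$ to the scale $d^{p-\gamma}\log d$ (the estimate asserted in Remark~\ref{remark_control_metric_case_1_inner}). Plugging the ansatz $\bar\varepsilon^2=\alpha d^{p-\gamma}\log d$ into the defining identity $n\bar\varepsilon_n^2=V_2(\sqrt{2}\sigma\bar\varepsilon_n,\calB)$ and applying Lemma~\ref{lemma_entropy_of_RKHS} together with $\mu_k\asymp d^{-k}$ and $N(d,k)\asymp d^{k}$ from Lemma~\ref{lemma:inner_edr}, one finds that the $k=p$ block dominates the sum defining $K$, producing $K(\sqrt{2}\sigma\bar\varepsilon)\asymp d^{p}\log d$; equating this with $n\bar\varepsilon^2\asymp d^{\gamma}\bar\varepsilon^2$ pins down $\bar\varepsilon_n^2\asymp d^{p-\gamma}\log d$ with constants depending only on $c_1,c_2,\gamma$.

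Second, using this localization, for any fixed $c>0$ and for $d$ sufficiently large (depending on $c$) we have $c^{2}\bar\varepsilon_n^2\in(\mu_{p+1},\mu_p)$: indeed $c^{2}d^{p-\gamma}\log d\ll d^{-p}\asymp\mu_p$ since $\gamma>2p$, and $c^{2}d^{p-\gamma}\log d\gg d^{-(p+1)}\asymp\mu_{p+1}$ since $\gamma\leq 2p+1$ (using $\log d\to\infty$ in the boundary case $\gamma=2p+1$). Therefore the sums defining $K(\sqrt{2}\sigma\bar\varepsilon_n)$ and $K(\mathfrak{c}_2\bar\varepsilon_n/6)$ range over the identical index set $\{0,1,\dots,p\}$, and a telescoping calculation yields
\begin{equation*}
K(\mathfrak{c}_2\bar\varepsilon_n/6)-K(\sqrt{2}\sigma\bar\varepsilon_n)=\tfrac{1}{2}\log\!\left(\tfrac{72\sigma^{2}}{\mathfrak{c}_2^{2}}\right)\sum_{k=0}^{p}N(d,k)=O(d^{p}),
\end{equation*}
which is of strictly lower order than the leading $d^{p}\log d$ of either $K$-value. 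Hence $K(\mathfrak{c}_2\bar\varepsilon_n/6)=(1+o(1))K(\sqrt{2}\sigma\bar\varepsilon_n)$, and chaining the sandwich gives $V_2(\mathfrak{c}_2\bar\varepsilon_n,\calB)\leq (1+o(1))V_K(\bar\varepsilon_n,\calD)$, from which $\tfrac{1}{5}V_2(\mathfrak{c}_2\bar\varepsilon_n,\calB)<V_K(\bar\varepsilon_n,\calD)$ for all sufficiently large $n$.

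The main obstacle is the first step: carefully extracting the logarithmic factor in $\bar\varepsilon_n^2\asymp d^{p-\gamma}\log d$ and ensuring the implied constants are uniform over $n\in[c_1 d^{\gamma},c_2 d^{\gamma}]$; this requires matching a linear-in-$n$ quantity against a sum with a $\log(\mu_p/\bar\varepsilon^2)$ factor that itself is of order $\log d$. Once this localization is in place, the proof is driven by the single qualitative fact that rescaling the argument of $K$ by a constant multiplicative factor changes $K$ only by an additive $O(d^{p})$ correction, which is negligible relative to the leading $d^{p}\log d$ term coming from the block $k=p$ in Lemma~\ref{lemma:inner_edr}.
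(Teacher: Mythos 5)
Your proposal is correct and follows essentially the same route as the paper: sandwich both covering entropies by the functional $K(\cdot)$ via Lemma \ref{lemma_entropy_of_RKHS}, localize $\bar\varepsilon_n^2\asymp d^{p-\gamma}\log d$ (the paper's Remark \ref{remark_control_metric_case_1_inner}), and exploit that the $k=p$ block dominates so that rescaling the argument by a constant shifts $K$ only by $O(d^p)$, which is negligible against the $d^p\log d$ leading term (the paper packages this as Claim \ref{claim_3_inner} with $\delta'=1$ and concludes the ratio exceeds $3/5>1/5$). One small point: when you assert that both sums run over the common index set $\{0,\dots,p\}$ because $c^2\bar\varepsilon_n^2\in(\mu_{p+1},\mu_p)$, you should invoke Lemma \ref{lemma:monotone_of_eigenvalues_of_inner_product_kernels}, since the $\mu_k$ are not automatically non-increasing and you need $\mu_j<c^2\bar\varepsilon_n^2$ for \emph{all} $j\ge p+1$, not just $j=p+1$.
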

\proof
Recall that $K(\varepsilon)=1/2\sum_{j: \lambda_j > \varepsilon^2} \log\left({\lambda_j}/{\varepsilon^2}\right)$.
Hence, when $n \geq \mathfrak{C}$, a sufficiently large constant only depending on $c_1$, $c_2$, and ${\gamma}$, we have
\begin{equation}
\begin{aligned}
        \frac{V_{K}( \bar\varepsilon_n, \mathcal \calD)}{V_{2}(\mathfrak{c}_{2} \bar\varepsilon_n, \mathcal B)}
        \geq
        \frac{K( \sqrt{2}\sigma \bar\varepsilon_n)}{K(\mathfrak{c}_{2} \bar\varepsilon_n / 6)}
        \overset{\text{ Claim } \ref{claim_3_inner}}{\geq}
        \frac{\frac{1}{2} N(d,p)\log\left(\frac{\mu_p}{2\sigma^2\bar\varepsilon_n^2}\right)}{\left(1+\frac{1}{4}\right) \frac{1}{2} N(d,p)\log\left(\frac{36\mu_p}{\mathfrak{c}_{2}^2\bar\varepsilon_n^2}\right)}
        \geq 
        \frac{3}{5}>\frac{1}{5}.
    \end{aligned}
\end{equation}
\hfill $\square$

From Lemma \ref{lemma:inner_mendelson_point_control_assist_2} and Lemma \ref{lemma:monotone_of_eigenvalues_of_inner_product_kernels}, one can check the following claim:
\begin{claim}\label{claim_3_inner}
Suppose that $\gamma \in(2 p, 2 p+1]$  for some integer $p$. Then, for any $\delta^{\prime}>0$ and 
for any $d \geq \mathfrak{C}$, where $\mathfrak{C}$ is a sufficiently large constant only depending on $\gamma$, $\delta^{\prime}$, $c_1$, and $c_2$, we have
\begin{equation*}
\begin{aligned}
&K\left( \sqrt{2}\sigma \tilde\varepsilon_2 / 6 \right) \leq
 \left(1+\frac{\delta^{\prime}}{4}\right) \frac{1}{2} N(d,p)\log\left(\frac{18\mu_p}{\sigma^2\tilde\varepsilon_2^2}\right).
\end{aligned}
\end{equation*}
\end{claim}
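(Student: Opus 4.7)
The plan is to use the Mercer decomposition of the inner product kernel on $\mathbb{S}^d$ to rewrite $K(\sqrt{2}\sigma\tilde\varepsilon_2/6)$ as a sum over spherical degrees $k$, then argue that in the regime $\gamma \in (2p,2p+1]$ the sum effectively truncates at $k=p$ and that the degree-$p$ contribution dominates as $d\to\infty$. Concretely, using \eqref{spherical_decomposition_of_inner_main}, each $\mu_k$ carries multiplicity $N(d,k)$, so
\begin{equation*}
K\!\left(\tfrac{\sqrt{2}\sigma\tilde\varepsilon_2}{6}\right)
= \tfrac{1}{2}\sum_{k:\,\mu_k > \sigma^2\tilde\varepsilon_2^2/18} N(d,k)\log\!\left(\tfrac{18\mu_k}{\sigma^2\tilde\varepsilon_2^2}\right).
\end{equation*}
I would then split this sum into three ranges ($k<p$, $k=p$, $k>p$) and control each separately.

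For the tail $k\geq p+1$, I would apply Lemma~\ref{lemma:monotone_of_eigenvalues_of_inner_product_kernels}, which yields $\mu_k \leq (\mathfrak{C}_2/\mathfrak{C}_1)d^{-1}\mu_p$ uniformly in $k\geq p+1$. In the context invoking Claim~\ref{claim_3_inner} (cf.\ the proof of Lemma~\ref{lemma:(13)_violated_example}, where $\sqrt{2}\sigma\tilde\varepsilon_2 = \mathfrak{c}_2\bar\varepsilon_n$), the asymptotics of $\bar\varepsilon_n$ from Lemma~\ref{lemma:bound134_2} (extended into $\gamma\in(2p,2p+1]$) give $\tilde\varepsilon_2^2 \asymp d^{-\gamma/2}$. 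Since $\gamma\leq 2p+1$, we get $\sigma^2\tilde\varepsilon_2^2/18 \gg d^{-1}\mu_p \asymp d^{-p-1}$ for $d$ large enough, so $\mu_k < \sigma^2\tilde\varepsilon_2^2/18$ for every $k\geq p+1$ and all these terms are zero. Conversely, $\gamma>2p$ yields $\sigma^2\tilde\varepsilon_2^2/18 \ll \mu_p\asymp d^{-p}$, guaranteeing that the $k=p$ term is present and that $\log(18\mu_p/(\sigma^2\tilde\varepsilon_2^2))$ is of order $(\gamma/2-p)\log d$, hence $\gtrsim \log d$.

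For the low-degree contributions $k=0,1,\ldots,p-1$, I would use Lemma~\ref{lemma:inner_mendelson_point_control_assist_2} to get $N(d,k)\leq \mathfrak{C}_2 d^k$ and $\mu_k\leq \mathfrak{C}_2 d^{-k}$, so that
\begin{equation*}
N(d,k)\log\!\left(\tfrac{18\mu_k}{\sigma^2\tilde\varepsilon_2^2}\right) \lesssim d^k\bigl[(\gamma/2-k)\log d + O(1)\bigr] = O(d^k\log d),
\end{equation*}
while the $k=p$ term satisfies $N(d,p)\log(18\mu_p/(\sigma^2\tilde\varepsilon_2^2)) \gtrsim d^p\log d$ by the same reasoning together with the lower bounds in Lemma~\ref{lemma:inner_mendelson_point_control_assist_2}. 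Therefore the ratio of the $k$-th to the $p$-th contribution is $O(d^{k-p}) = O(d^{-1})$, and summing $p$ such terms allows us to bound their total by $(\delta'/4)\cdot \tfrac{1}{2}N(d,p)\log(18\mu_p/(\sigma^2\tilde\varepsilon_2^2))$ once $d\geq \mathfrak{C}$ for a constant $\mathfrak{C}$ depending on $\gamma$, $\delta'$, $c_1$, $c_2$. Combining the three ranges yields the claim.

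The main obstacle will be rigorously pinning down the admissible range of $\tilde\varepsilon_2$: the statement is clean only because $\tilde\varepsilon_2^2$ lies strictly between $\mu_{p+1}$ and $\mu_p$ up to absolute constants, which is what both kills the $k\geq p+1$ tail and forces $\log(\mu_p/\tilde\varepsilon_2^2)\asymp \log d$ rather than collapsing. Since the claim is stated without an explicit hypothesis on $\tilde\varepsilon_2$, this two-sided localization must be inherited from the calling context (here Lemma~\ref{lemma:(13)_violated_example}), where $\tilde\varepsilon_2$ is a fixed rescaling of $\bar\varepsilon_n$ whose sharp asymptotics are supplied by Lemma~\ref{lemma:bound134_2}; outside that regime the bound in the claim is vacuous or false.
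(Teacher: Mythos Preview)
Your approach matches the paper's exactly: the paper says the claim follows from Lemma~\ref{lemma:inner_mendelson_point_control_assist_2} and Lemma~\ref{lemma:monotone_of_eigenvalues_of_inner_product_kernels}, and your degree-by-degree split (use $\mu_k\le Cd^{-1}\mu_p$ to empty the tail $k\ge p+1$, use $N(d,k)\lesssim d^k$ to make the $k<p$ contributions $O(d^{-1})$ relative to the $k=p$ term) is precisely the intended verification. One harmless slip: for $\gamma\in(2p,2p+1]$ the correct rate is $\bar\varepsilon_n^2\asymp d^{\,p-\gamma}\log d$ (Remark~\ref{remark_control_metric_case_1_inner}), not $d^{-\gamma/2}$, but since you only use the two-sided localization $\mu_{p+1}\ll\tilde\varepsilon_2^2\ll\mu_p$ this does not affect the argument; note also that the claim is invoked a second time in the proof of Theorem~\ref{thm:near_lower_inner_large_d}\,(ii), where $\tilde\varepsilon_2^2=C(p)(d^p/n)\log d$ is defined explicitly and \eqref{eqn:near_lower_inner_1} supplies that localization directly.
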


\vspace{10pt}

\subsection{Proof of Lemma \ref{thm_lower_ultimate_tech}}
The proof of Lemma \ref{thm_lower_ultimate_tech} can be obtained by slightly modifying the proof of Theorem 1 in \cite{Yang_Density_1999}, where $\underline{\varepsilon}_{n, d}$ and $\varepsilon_n$ in \cite{Yang_Density_1999} are replaced by $\tilde\varepsilon_1$ and $\tilde\varepsilon_2$ respectively.
For the readers' convenience, we present its proof below.

Let $N_{\tilde\varepsilon_1}$ be an $\tilde\varepsilon_1$-packing set of $(\calB, d^2=\|\cdot\|_{L^2}^2)$ 
and let $G_{\tilde\varepsilon_2}$ be an $\tilde\varepsilon_2$-net of $(\calD, d^2=\text{ KL divergence })$. 
The proof of Theorem 1 in \cite{Yang_Density_1999} showed that
\begin{align*}
\min _{\hat{f}} \max _{f_{\star} \in \mathcal B} \mathbb{P}_{(\bold{X}, \bold{y}) \sim \rho_{f_{\star}}^{\otimes n}} \left(
\left\|\hat{f} - f_{\star}\right\|_{L^2}^2
\geq \frac{1}{4} \tilde\varepsilon_1^2 \right) 
\geq
1-\frac{V_K(\tilde\varepsilon_2, \calD) + n\tilde\varepsilon_2^2+\log 2}{V_2(\tilde\varepsilon_1, \calB)}.
\end{align*}
Since $\frac{V_K(\tilde\varepsilon_2, \calD) + n\tilde\varepsilon_2^2 + \log(2)}{V_2(\tilde\varepsilon_1, \calB)} \leq \mathfrak{c}$,
we have
\begin{equation*}
\begin{aligned}
    \min _{\hat{f}} \max _{f_{\star} \in \mathcal B} \mathbb{E}_{(\bold{X}, \bold{y}) \sim \rho_{f_{\star}}^{\otimes n}}
\left\|\hat{f} - f_{\star}\right\|_{L^2}^2
\geq 
\frac{1 - \mathfrak{c}}{4} \tilde\varepsilon_1^2.
\end{aligned}
\end{equation*}
\hfill $\square$

\vspace{10pt}

\subsection{Proof of Theorem \ref{thm:near_lower_inner_large_d}}

We will use Lemma \ref{thm_lower_ultimate_tech} to prove Theorem \ref{thm:near_lower_inner_large_d}, and the proof will be divided into three parts: 
\begin{itemize}
    \item[(i)] $\gamma \in \{2, 4, 6, \cdots\}$, 
    \item[(ii)] $\gamma \in \bigcup_{j=0}^{\infty} (2j, 2j+1]$, 
    \item[(iii)] $\gamma \in \bigcup_{j=0}^{\infty} (2j+1, 2j+2)$.
\end{itemize}

\paragraph*{Proof of Theorem \ref{thm:near_lower_inner_large_d} (i)}
This part is a direct corollary of Theorem \ref{thm:lower_inner_large_d}.

\paragraph*{Proof of Theorem \ref{thm:near_lower_inner_large_d} (ii)}
Suppose that $\gamma \in \bigcup_{j=0}^{\infty} (2j, 2j+1]$. Let $p = \lfloor \gamma/2 \rfloor$. 

Let $\delta = \epsilon + (\gamma-2p)/(2\gamma)$. Then we have $\delta>(\gamma-2p)/(2\gamma)$ and $(\gamma-2p)/[(\gamma-2p+2\gamma\delta)/2] <1$. Thus, it is possible to find $\delta^{\prime}>0$ only depending on $\gamma$ and $\delta$, such that $$(\gamma-2p)/[(\gamma-2p+2\gamma\delta)/2] < (1-\delta^{\prime})^2/(1+\delta^{\prime})<1.$$
Let $C(p) = \frac{\delta^{\prime}}{4} (\gamma-2p)  \cdot \frac{1}{2} [C_1 e^p  p^{-p-1/2}]$ be a constant only depending on $\gamma$ and $\delta^{\prime}$. Then we introduce
\begin{align}
    \tilde\varepsilon_1^2
 \triangleq
n^{-1/2-\delta} \mbox{~and~} \tilde\varepsilon_2^2
 \triangleq
C(p) \frac{d^p}{n} \log(d).
\end{align}
Let us further assume that  $d \geq \mathfrak{C}$, where $\mathfrak{C}$ is a sufficiently large constant only depending on $\gamma$ and $c_1$. By Lemma \ref{lemma:inner_mendelson_point_control_assist_2} we have
\begin{equation}\label{eqn:near_lower_inner_1}
\begin{aligned}
\tilde\varepsilon_1^2&=
n^{-1/2-\delta}
\leq
\left(c_1 d^{\gamma}\right)^{-1/2-\delta}
< 
\frac{\mathfrak{C}_1}{d^{p}}
\leq
\mu_p\\
\mu_{p+1}
<
\tilde\varepsilon_2^2
& =
C(p) \frac{d^p}{n} \log(d)
\leq
\frac{C(p)}{c_1} d^{p-\gamma} \log(d)
<
\mu_p\\
n\tilde\varepsilon_2^2
&
\overset{\text{Definition of } \mathfrak{C}_2}{\leq}
\frac{\delta^{\prime}}{4} (\gamma-2p)  \cdot \frac{1}{2} N(d,p) \log(d).
\end{aligned}
\end{equation}
Therefore, for any $d \geq \mathfrak{C}$, where $\mathfrak{C}$ is a sufficiently large constant only depending on $\gamma$, $\delta$, and $c_1$, we have
\begin{equation}\label{eqn:near_lower_inner_2}
\begin{aligned}
&~
V_{2}(\tilde\varepsilon_1, \calB)
\overset{Lemma \ref{lemma_entropy_of_RKHS}}{\geq} 
K\left( \tilde\varepsilon_1 \right)
\geq
\frac{1}{2}N(d,p)\log\left(\frac{\mu_p}{\tilde\varepsilon_1^2}\right)\\ \overset{\text{Definition of } \tilde\varepsilon_1^2}{\geq}
&~
\frac{1}{2} N(d,p) \log\left( 
\mathfrak{C}_1 c_1^{1/2+\delta} {d^{\frac{\gamma-2p+2\gamma\delta}{2}}}
\right)\\
\geq
&~
(1-\delta^{\prime})\frac{\gamma-2p+2\gamma\delta}{2} \cdot \frac{1}{2} N(d,p) \log(d).
\end{aligned}
\end{equation}

Therefore, for any $d \geq \mathfrak{C}$, where $\mathfrak{C}$ is a sufficiently large constant only depending on $\gamma$, $\delta$, $c_1$, and $c_2$, we have
\begin{equation}\label{eqn:near_lower_inner_3}
\begin{aligned}
V_K(\tilde\varepsilon_2, \calD) =
&~
V_{2}(\sqrt{2}\sigma \tilde\varepsilon_2, \calB)
\overset{Lemma \ref{lemma_entropy_of_RKHS}}{\leq} 
K\left( \sqrt{2}\sigma \tilde\varepsilon_2 /6\right)\\
\overset{\text{ Claim } \ref{claim_3_inner}}{\leq}
&~
\left(1+\frac{\delta^{\prime}}{4}\right) \frac{1}{2} N(d,p)\log\left(\frac{18\mu_p}{\sigma^2\tilde\varepsilon_2^2}\right)\\ \overset{\text{Definition of } \tilde\varepsilon_2^2}{\leq}
&~
\left(1+\frac{\delta^{\prime}}{4}\right) \frac{1}{2} N(d,p) \log\left( 
18 \mathfrak{C}_2 \sigma^{-2} 
[C(p)]^{-1} c_2 [\log(d)]^{-1}
d^{\gamma-2p}
\right)\\
\leq
&~
\left(1+\frac{\delta^{\prime}}{2}\right) (\gamma-2p)  \cdot \frac{1}{2} N(d,p) \log(d).
\end{aligned}
\end{equation}

Combining (\ref{eqn:near_lower_inner_1}), (\ref{eqn:near_lower_inner_2}), and (\ref{eqn:near_lower_inner_3}), we finally have:
\begin{equation*}
    \begin{aligned}
        \frac{V_K(\tilde\varepsilon_2, \calD) + n\tilde\varepsilon_2^2 + \log(2)}{V_2(\tilde\varepsilon_1, \calB)}
        \leq
        &~
        \frac{\left(1+\delta^{\prime}\right) (\gamma-2p)  \cdot \frac{1}{2} N(d,p) \log(d)}{(1-\delta^{\prime})\frac{\gamma-2p+2\gamma\delta}{2} \cdot \frac{1}{2} N(d,p) \log(d)} \\
        \overset{\text{Definition of } \delta^{\prime}}{<} 
        (1-\delta^{\prime}) < 1,
    \end{aligned}
\end{equation*}
and from Lemma \ref{thm_lower_ultimate_tech}, we get
\begin{equation*}
\min _{\hat{f}} \max _{f_{\star} \in \mathcal B} \mathbb{E}_{(\bold{X}, \bold{y}) \sim \rho_{f_{\star}}^{\otimes n}}
\left\|\hat{f} - f_{\star}\right\|_{L^2}^2
\geq \frac{\delta^{\prime}}{4} \tilde\varepsilon_1^2,
\end{equation*}
finishing the proof.
\hfill $\square$

\begin{remark}\label{remark_control_metric_case_1_inner}
    Suppose that $\gamma \in(2 p, 2 p+1]$  for some integer $p$.
    In 
    (\ref{eqn:near_lower_inner_1}) and 
    (\ref{eqn:near_lower_inner_3}),
    if we let 
    $\delta^{\prime}=1$ and $\tilde{\varepsilon}_2=\sqrt{\mathfrak{c_3} \frac{d^p}{n} \log(d)}$, one can further show that 
    $V_K \left(\sqrt{\mathfrak{c_3} \frac{d^p}{n} \log(d)}, \calD \right) \leq \mathfrak{c_3} d^p\log(d)$, and thus
    $\bar{\varepsilon}_n^2 \leq \mathfrak{c_3} \frac{d^p}{n} \log(d) \leq \mathfrak{C_3} d^{p-\gamma} \log(d)$, where $\mathfrak{c_3}$ and $\mathfrak{C_3}$ are constants only depending on $\gamma$ and $c_1$.
    Similarly, if we let 
    $\delta^{\prime}=1$ and $\tilde{\varepsilon}_2=\sqrt{\mathfrak{c_4} \frac{d^p}{n} \log(d)}$, one can further show that 
    $V_K \left(\sqrt{\mathfrak{c_4} \frac{d^p}{n} \log(d)}, \calD \right) \geq \mathfrak{c_4} d^p\log(d)$, and thus
    $\bar{\varepsilon}_n^2 \geq \mathfrak{c_4} \frac{d^p}{n} \log(d) \geq \mathfrak{C_4} d^{p-\gamma} \log(d)$, where $\mathfrak{c_4}$ and $\mathfrak{C_4}$ are constants only depending on $\gamma$ and $c_2$.
\end{remark}

\paragraph*{Proof of Theorem \ref{thm:near_lower_inner_large_d} (iii)}

Suppose that $\gamma \in \bigcup_{j=0}^{\infty} (2j+1, 2j+2)$. Let $p = \lfloor \gamma/2 \rfloor$.

We further introduce
\begin{align*}
    \tilde\varepsilon_1^2
\triangleq
\frac{1}{2} \mathfrak{C}_1 d^{-(p+2)} \mbox{\quad and\quad  } \tilde\varepsilon_2^2
\triangleq
\tilde{C}(p) \frac{d^{p+1}}{n} 
\end{align*}
where $\mathfrak{C}_1$ is a constant only depending on $p$ given in the Lemma \ref{lemma:inner_mendelson_point_control_assist_2}, and $\tilde{C}(p) = \frac{\log(2)}{12} \mathfrak{C}_1$ is a constant only depending on $p$.

Suppose further that $d \geq \mathfrak{C}$, where $\mathfrak{C}$ is a sufficiently large constant only depending on $\gamma$ and $c_1$. By Lemma \ref{lemma:inner_mendelson_point_control_assist_2}, we have
\begin{equation}\label{eqn:near_lower_inner_4}
\begin{aligned}
\tilde\varepsilon_1^2
& =
\frac{1}{2} \mathfrak{C}_1 d^{-(p+1)}
\leq
\mu_{p+1}\\
\mu_{p+1}
<
\tilde\varepsilon_2^2
& =
\tilde{C}(p) \frac{d^{p+1}}{n} 
\leq
\frac{\tilde{C}(p)}{c_1} d^{p+1-\gamma} 
<
\mu_{p}\\
n\tilde\varepsilon_2^2
&
\overset{\text{Definition of } \mathfrak{C}_2}{\leq}
\frac{\log(2)}{12} N(d, p+1).
\end{aligned}
\end{equation}
Therefore, for any $d \geq \mathfrak{C}$, where $\mathfrak{C}$ is a sufficiently large constant only depending on $\gamma$ and $c_1$, we have
\begin{equation}\label{eqn:near_lower_inner_5}
\begin{aligned}
V_{2}(\tilde\varepsilon_1, \calB)
\overset{Lemma \ref{lemma_entropy_of_RKHS}}{\geq} 
K\left( \tilde\varepsilon_1 \right)
\geq
&~
\frac{1}{2}N(d, p+1)\log\left(\frac{\mu_{p+1}}{\tilde\varepsilon_1^2}\right)\\ \overset{\text{Definition of } \tilde\varepsilon_1^2}{\geq}
&~
\frac{\log(2)}{2} N(d, {p+1}).
\end{aligned}
\end{equation}

On the other hand, from Lemma \ref{lemma:inner_mendelson_point_control_assist_2} and Lemma \ref{lemma:monotone_of_eigenvalues_of_inner_product_kernels}, one can check the following claim:
\begin{claim}\label{claim_2_inner}
Suppose that $\gamma \in(2 p+1, 2 p+2)$  for some integer $p$. For any $d \geq \mathfrak{C}$, where $\mathfrak{C}$ is a sufficiently large constant only depending on $\gamma$, $c_1$, and $c_2$, we have
\begin{equation*}
\begin{aligned}
&K\left( \sqrt{2}\sigma \tilde\varepsilon_2 / 6 \right) \leq
  N(d, p)\log\left(\frac{18\mu_{p}}{\sigma^2\tilde\varepsilon_2^2}\right).
\end{aligned}
\end{equation*}
\end{claim}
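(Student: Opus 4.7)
The plan is to reduce the inequality to a purely spectral statement about inner product kernels on the sphere and then exploit the strong block structure of their spectrum. Recall that for an inner product kernel the Mercer expansion yields eigenvalues $\mu_k$ each with multiplicity $N(d,k)$, so that by definition
\begin{equation*}
K\!\left(\sqrt{2}\sigma \tilde\varepsilon_2 / 6\right)=\tfrac{1}{2}\sum_{k:\,\mu_k>\sigma^2\tilde\varepsilon_2^2/18} N(d,k)\log\!\left(\frac{18\,\mu_k}{\sigma^2\tilde\varepsilon_2^2}\right).
\end{equation*}
The plan is to identify which indices $k$ actually appear in this sum, and then show that the contribution from $k=p$ dominates all the rest by a factor $d$.

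First I would pin down the range of $k$ that contributes. Using $\tilde\varepsilon_2^2=\tilde C(p)\,d^{p+1}/n$ together with Assumption \ref{assump_asymptotic}, we get $\sigma^2\tilde\varepsilon_2^2/18\asymp d^{\,p+1-\gamma}$, which for $\gamma\in(2p+1,2p+2)$ satisfies $-(p+1)<p+1-\gamma<-p$. Combining this with Lemma \ref{lemma:inner_edr} ($\mu_k\asymp d^{-k}$ and $N(d,k)\asymp d^k$ for $k\le p+1$) and Lemma \ref{lemma:monotone_of_eigenvalues_of_inner_product_kernels} (all remaining $\mu_j$ for $j\ge p+1$ are bounded by $(\mathfrak C_2/\mathfrak C_1)d^{-1}\mu_p\asymp d^{-(p+1)}$), we conclude that for $d$ sufficiently large (only depending on $\gamma$, $c_1$, $c_2$) the inequality $\mu_k>\sigma^2\tilde\varepsilon_2^2/18$ can hold only for $k\in\{0,1,\dots,p\}$.

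Next I would show the $k=p$ term dominates. The ratio of the $k$-th summand to the $p$-th summand is
\begin{equation*}
\frac{N(d,k)\log(18\mu_k/(\sigma^2\tilde\varepsilon_2^2))}{N(d,p)\log(18\mu_p/(\sigma^2\tilde\varepsilon_2^2))}.
\end{equation*}
Using the two-sided bounds in Lemma \ref{lemma:inner_edr} and the definition of $\tilde\varepsilon_2^2$, the numerator is at most $(\mathfrak C_2/\mathfrak C_1)d^{k-p}\cdot\bigl(C+(p+1)\log d\bigr)$ for a constant $C$ depending on $p,c_2,\sigma$, while the denominator is bounded below by $(\gamma-2p-1)\log d-C'$ for a constant $C'$. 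Since $\gamma-2p-1>0$ is a positive constant, for each $k<p$ the ratio is of order $d^{k-p}\le d^{-1}$, which tends to zero uniformly in $k\in\{0,\ldots,p-1\}$. Summing the (at most $p$) such terms still gives an $o(1)$ contribution relative to the $k=p$ term.

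Putting these pieces together gives
\begin{equation*}
K\!\left(\sqrt{2}\sigma\tilde\varepsilon_2/6\right)\le \tfrac{1}{2}\,N(d,p)\log\!\left(\tfrac{18\mu_p}{\sigma^2\tilde\varepsilon_2^2}\right)\bigl(1+o(1)\bigr),
\end{equation*}
and for $d\ge\mathfrak C$ large enough the $(1+o(1))$ factor is bounded by $2$, which absorbs the factor $\tfrac12$ and yields the claim. The only genuinely delicate step is the uniform control of the ratio above: one has to carefully track how the multiplicities $N(d,k)\asymp d^k$ interact with the logarithmic factors (which can grow like $(p+1)\log d$ for $k=0$). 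The key observation is that the multiplicative gap $d^{k-p}$ between $N(d,k)$ and $N(d,p)$ dominates any polynomial-in-$\log d$ factor, which is why the condition $\gamma>2p+1$ (ensuring $\log(18\mu_p/(\sigma^2\tilde\varepsilon_2^2))\ge(\gamma-2p-1)\log d-O(1)$ is genuinely of order $\log d$) is essential.
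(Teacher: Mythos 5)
Your proof is correct and follows exactly the route the paper intends: the paper leaves Claim \ref{claim_2_inner} as a verification "from Lemma \ref{lemma:inner_mendelson_point_control_assist_2} and Lemma \ref{lemma:monotone_of_eigenvalues_of_inner_product_kernels}," and your argument is precisely that verification — use the spectral gap to restrict the entropy sum to degrees $k\le p$, then note that the $k=p$ block dominates (multiplicity ratio $d^{k-p}$ beats the bounded ratio of logarithms, with $\gamma>2p+1$ ensuring the denominator log is of order $\log d$), so the factor $2$ of slack absorbs both the $\tfrac12$ and the lower-order terms.
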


Therefore, for any $d \geq \mathfrak{C}$, where $\mathfrak{C}$ is a sufficiently large constant only depending on $\gamma$, $c_1$, and $c_2$, we have
\begin{equation}\label{eqn:near_lower_inner_6}
\begin{aligned}
V_K(\tilde\varepsilon_2, \calD) =
&~
V_{2}(\sqrt{2}\sigma \tilde\varepsilon_2, \calB)
\overset{Lemma \ref{lemma_entropy_of_RKHS}}{\leq} 
K\left( \sqrt{2}\sigma \tilde\varepsilon_2 /6\right)\\
\overset{\text{ Claim } \ref{claim_2_inner}}{\leq}
&~
 N(d, {p})\log\left(\frac{18\mu_{p}}{\sigma^2\tilde\varepsilon_2^2}\right)\\ \overset{\text{Definition of } \tilde\varepsilon_2^2}{\leq}
&~
N(d, {p}) \log\left( 
18 [\tilde{C}(p)]^{-1} \sigma^{-2} 
\mathfrak{C}_2 c_2 
d^{\gamma-2p-1}
\right)\\
\overset{Lemma \ref{lemma:inner_mendelson_point_control_assist_2}}{\leq}
&~
n\tilde\varepsilon_2^2.
\end{aligned}
\end{equation}

Combining (\ref{eqn:near_lower_inner_4}), (\ref{eqn:near_lower_inner_5}), and (\ref{eqn:near_lower_inner_6}), we finally have:
\begin{equation*}
    \begin{aligned}
        \frac{V_K(\tilde\varepsilon_2, \calD) + n\tilde\varepsilon_2^2 + \log(2)}{V_2(\tilde\varepsilon_1, \calB)}
        \leq
        &~
        \frac{\frac{\log(2)}{4} N(d, p+1)}{\frac{\log(2)}{2} N(d, p+1) } = \frac{1}{2},
    \end{aligned}
\end{equation*}
and from Lemma \ref{thm_lower_ultimate_tech}, we get
\begin{equation*}
\min _{\hat{f}} \max _{f_{\star} \in \mathcal B} \mathbb{E}_{(\bold{X}, \bold{y}) \sim \rho_{f_{\star}}^{\otimes n}}
\left\|\hat{f} - f_{\star}\right\|_{L^2}^2
\geq \frac{1}{8} \tilde\varepsilon_1^2,
\end{equation*}
finishing the proof.
\hfill $\square$

\begin{remark}\label{remark_control_metric_case_3_inner}
    Suppose that $\gamma \in(2 p+1, 2 p+2)$  for some integer $p$. 
    In (\ref{eqn:near_lower_inner_4}) and (\ref{eqn:near_lower_inner_6}), if we let $\tilde{\varepsilon}_2=\sqrt{\mathfrak{C_3} d^{-(p+1)}}$, we can further show that 
    $V_K \left(\sqrt{\mathfrak{C_3} d^{-(p+1)}}, \calD \right) \leq \mathfrak{C_3} n d^{-(p+1)}$, and thus
    $\bar{\varepsilon}_n^2 \leq \mathfrak{C_3} d^{-(p+1)}$, where $\mathfrak{C_3}$ is a constant only depending on $\gamma$.
\end{remark}

\subsection{Proof of Theorem \ref{thm:near_upper_inner_large_d}}

Let $\gamma > 0$ be a fixed real number and $p = \lfloor \gamma/2 \rfloor$.
Recall that the empirical eigenvalues $\widehat \lambda_i$'s are defined in Definition \ref{def:pop_men_complexity}.
The following lemma shows that there is a gap between two empirical eigenvalues $\widehat\lambda_{N(p)+1}$ and $\widehat\lambda_{N(p)}$ in large dimensions.

\begin{lemma}\label{condition_inner_3}
Adopt all notations and conditions in Theorem \ref{thm:near_upper_inner_large_d}. Further suppose that $\gamma \neq 2, 4, 6, \cdots$.
    For any constants $0<c_1\leq c_2<\infty$ and any $\delta>0$, 
    there exist constants $\mathfrak{C}^{\prime\prime}$ and  $\mathfrak{C}_1$ only depending on $c_1$, $c_2$, $\delta$, and $\gamma$, such that
    for any $d \geq \mathfrak{C}^{\prime\prime}$, when $c_1 d^{\gamma} \leq n < c_2 d^{\gamma}$,
     we have
    \begin{align}
        \widehat\lambda_{N(0)+1} 
        &<  
        \frac{\mathfrak{C}_1}{n}, \quad
         \mu_0 / 4
         <
         \widehat\lambda_{N(0)}, \quad \text{ if } \gamma \in (0, 1]
         \label{eqn:empirical_inner_eigenvalues_inner_small_gamma_1}\\
         \widehat\lambda_{N(p)+1} 
        &<  
        4\mu_{p+1}
        <
         \mu_p / 4
         <
         \widehat\lambda_{N(p)}, \quad \text{ if } \gamma > 1,
         \label{eqn:empirical_inner_eigenvalues_gamma_greater_than_4}
    \end{align}
with probability at least $1-\delta$, where $N(p)=\sum_{k=0}^{p} N(d, k)$.
\proof Deferred to the end of this subsection.
\end{lemma}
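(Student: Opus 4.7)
The plan is to use the Mercer decomposition to split $K^{\inner}$ into a low-degree part (supported on spherical harmonics of degree $\leq p$) and a high-degree part, control each block of the empirical kernel matrix separately by concentration, and then combine them with Weyl's inequality. Concretely, write
\[
K^{\inner}(x,x') = K_{\leq p}(x,x') + K_{>p}(x,x'), \qquad K_{\leq p} := \sum_{k=0}^{p}\mu_k\sum_{j=1}^{N(d,k)} Y_{k,j}(x)Y_{k,j}(x'),
\]
and correspondingly $\frac{1}{n}K^{\inner}(\bm X,\bm X) = M_{\leq p}+M_{>p}$. The matrix $M_{\leq p}= \frac{1}{n}\Phi_{\leq p}\Lambda_{\leq p}\Phi_{\leq p}^{\mathsf T}$ has rank at most $N(p)=\sum_{k\leq p}N(d,k)$, where $\Phi_{\leq p}$ is the $n\times N(p)$ matrix of spherical harmonics and $\Lambda_{\leq p}$ is the diagonal block of eigenvalues.

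First I would handle $M_{\leq p}$. Its nonzero eigenvalues agree with those of $\Lambda_{\leq p}^{1/2}(\tfrac{1}{n}\Phi_{\leq p}^{\mathsf T}\Phi_{\leq p})\Lambda_{\leq p}^{1/2}$, and by orthonormality $\mathbb E[\tfrac{1}{n}\Phi_{\leq p}^{\mathsf T}\Phi_{\leq p}]=I_{N(p)}$. Since $N(p)=\Theta(d^p)$ and $n\asymp d^\gamma$ with $\gamma>p$ for both cases of the lemma (as $p=\lfloor\gamma/2\rfloor$), one has $n/N(p)\to\infty$, so a bounded-orthonormal-system concentration (e.g.\ matrix Bernstein applied to the rank-one summands, controlled using $|Y_{k,j}(x)|\leq \sqrt{N(d,k)}$) gives $\|\tfrac{1}{n}\Phi_{\leq p}^{\mathsf T}\Phi_{\leq p}-I_{N(p)}\|_{\mathrm{op}}\leq \tfrac12$ with probability at least $1-\delta/2$ for $d\geq \mathfrak C$. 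Combined with Lemma~\ref{lemma:inner_edr}, the ordered nonzero eigenvalues of $M_{\leq p}$ then lie in $[\mu_p/2,\,2\mu_0]$, and in particular $\lambda_{N(p)}(M_{\leq p})\geq \mu_p/2$ while $\lambda_{N(p)+1}(M_{\leq p})=0$.

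Next I would control $M_{>p}$. Because $K_{>p}(x,x)=\sum_{k>p}\mu_k N(d,k)$ is a deterministic constant and $\mathbb E[K_{>p}(X_i,X_j)]=0$ for $i\neq j$ (each $Y_{k,j}$ with $k\geq 1$ integrates to zero on $\mathbb S^d$), the matrix $M_{>p}-\frac{1}{n}\sum_{k>p}\mu_k N(d,k)\,I_n$ is mean-zero and symmetric. The goal is the sharp bound $\|M_{>p}\|_{\mathrm{op}}\leq 2\mu_{p+1}$ for $\gamma>1$ (respectively $\|M_{>p}\|_{\mathrm{op}}\leq \mathfrak C_1/n$ for $\gamma\in(0,1]$). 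Using Lemma~\ref{lemma:inner_edr} the scalar part contributes $\frac{1}{n}\sum_{k>p}\mu_k N(d,k) = O(1/n) = o(\mu_{p+1})$ since $\gamma\leq 2p+2$. For the mean-zero off-diagonal part, one expands $K_{>p}$ over spherical harmonics, truncates the series at degree $p+L$ with $L$ large (controlling the tail by $\sum_{k>p+L}\mu_k N(d,k)$, which is summable by Assumption~\ref{assu:trace_class}), and applies matrix Bernstein or a second-moment/$\epsilon$-net argument to the truncated sum. With $n\gg N(d,p+1)=O(d^{p+1})$ when $\gamma>p+1$, and more care (comparing $1/n$ to $\mu_{p+1}$) when $\gamma\in(2p+1,2p+2)$, the off-diagonal fluctuations are $o(\mu_{p+1})$ with probability at least $1-\delta/2$. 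The case $\gamma\in(0,1]$ is the easier endpoint since the scalar contribution already dominates.

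Finally, by Weyl's inequality $\lambda_j(A+B)\leq \lambda_j(A)+\|B\|_{\mathrm{op}}$ and $\lambda_j(A+B)\geq \lambda_j(A)-\|B\|_{\mathrm{op}}$, the two estimates combine to give
\[
\widehat\lambda_{N(p)+1}\leq 0+2\mu_{p+1}\leq 4\mu_{p+1},\qquad \widehat\lambda_{N(p)}\geq \mu_p/2-2\mu_{p+1}\geq \mu_p/4,
\]
using $\mu_{p+1}/\mu_p=\Theta(1/d)\to 0$ from Lemma~\ref{lemma:inner_edr}; the $\gamma\in(0,1]$ case is parallel with $p=0$ and $N(p)=1$. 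The main obstacle is the sharp operator-norm bound on $M_{>p}$: the trivial trace bound $\|M_{>p}\|_{\mathrm{op}}\leq \operatorname{tr}(M_{>p})=\sum_{k>p}\mu_k N(d,k)=O(1)$ is far too weak, so one must exploit the deterministic diagonal together with mean-zero off-diagonal concentration, which is the delicate step and is essentially the spectral-approximation-of-inner-product-kernel-matrices argument used in \cite{ghorbani2021linearized,mei2022generalization}.
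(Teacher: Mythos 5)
Your overall architecture (harmonic block decomposition of $\frac1n K(\bm X,\bm X)$, Gram-matrix concentration for the low-degree block, operator-norm control of the remainder, then Weyl) is the same as the paper's, but your cut at degree $p$ rather than $p+1$ produces a step that fails as written. You claim that after removing the scalar part, the mean-zero fluctuation of $M_{>p}$ has operator norm $o(\mu_{p+1})$. This is false precisely in the regime at hand: since $\gamma>p+1$, we have $n\gg N(d,p+1)\log N(d,p+1)$, so the degree-$(p+1)$ block $\frac{\mu_{p+1}}{n}\bm Y_{p+1}\bm Y_{p+1}^{\mathsf T}$ has Gram matrix $\frac1n\bm Y_{p+1}^{\mathsf T}\bm Y_{p+1}\approx \mathbf I_{N(d,p+1)}$ and is therefore approximately $\mu_{p+1}$ times a rank-$N(d,p+1)$ projection; its centered version has norm $\Theta(\mu_{p+1})$, not $o(\mu_{p+1})$ (and indeed $\widehat\lambda_{N(p)+1}$ is genuinely of order $\mu_{p+1}$, which is why the lemma only asserts the factor $4$). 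If your intermediate claim were true you would conclude $\widehat\lambda_{N(p)+1}=o(\mu_{p+1})$, which is wrong. The repair is exactly the paper's choice: put degree $p+1$ into the main block, so that the same SVD/Gram argument you already use for degrees $\le p$ (Proposition \ref{prop:lemma_11_in_montanari}) gives $\lambda_{N(p)+1}(K_{\text{main}})\le(1+o(1))\mu_{p+1}$, and reserve the residual for degrees $\ge p+2$. For that residual the step you defer to is genuinely nontrivial in two different regimes: for degrees $p+2\le q\le 2p+1$ one has $N(d,q)$ comparable to or larger than $n$, so neither Gram concentration nor a naive Bernstein/$\epsilon$-net argument applies and one needs the quantitative bound of Proposition \ref{prop:eqn_72_in_montanari}; for degrees $\ge 2p+2$ Proposition \ref{prop:prop_3_in_montanari} plus the trace bound gives a total contribution $O(1/n)=o(\mu_{p+1})$ (the correct reason being $\gamma>p+1$, not ``$\gamma\le 2p+2$''). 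These pieces are then assembled with Markov and Weyl as in your final display.

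A second gap is the endpoint $\gamma=1$ of case (i), which your remark ``the scalar contribution already dominates'' does not cover. There $N(d,1)\asymp n$, so the degree-$1$ block sits in the proportional regime where neither $n\gg N\log N$ (Gram concentration) nor $n\ll d^{1-\delta}$ (kernel-matrix concentration) holds, and a crude matrix-Bernstein bound on $\frac1n\bm Y_1^{\mathsf T}\bm Y_1$ costs a $\log$ factor that would give only $\widehat\lambda_{N(0)+1}\lesssim \log(n)/n$, destroying the claimed $\mathfrak C_1/n$. The paper resolves this with a Marchenko--Pastur-type operator-norm bound (Proposition \ref{prop:thm_1_in_xiao_high_d_kernel}); you need some argument of this kind (or exploit that degree-$1$ harmonics are linear functions of a uniform spherical vector, hence sub-Gaussian rows) to handle $\gamma=1$.
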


The proof of Theorem \ref{thm:near_upper_inner_large_d} is mainly based on the proof of Theorem \ref{theorem:restate_norm_diff}.
But we have to update Lemma \ref{corollary:norm_diff}, \ref{lemma_B_t:thm:empirical_loss}, and \ref{lemma_V_t:thm:empirical_loss} into following lemmas, respectively.

\begin{lemma}[Proposition A.4 in \cite{li2023kernel}]
  \label{lemma:li_a.4}
Let $\mu$ be a probability measure on $\mathcal{X}$, and suppose we have $x_1, \ldots, x_n$ sampled i.i.d. from $\mu$. For any $M>0$, suppose $g \in M\mathcal B := \left\{  g \in \mathcal H \mid  \|g\|_{\mathcal H} \leq M  \right\}$.  Then, the following holds with probability at least $1-\delta_1$ :
  \begin{align}
\frac{1}{2}\|g\|_{L^2}^2-\frac{5 M^2}{3 n} \ln \frac{2}{\delta_1} \leq\|g\|_{n}^2 \leq \frac{3}{2}\|g\|_{L^2}^2+\frac{5 M^2}{3 n} \ln \frac{2}{\delta_1}.
  \end{align}
\end{lemma}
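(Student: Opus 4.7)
The plan is to establish the uniform two-sided comparison of $\|g\|_n^2$ and $\|g\|_{L^2}^2$ over $M\mathcal{B}$ via an operator-concentration argument that exploits the reproducing kernel structure. First, I would rewrite both norms as quadratic forms of $g$ against operators on $\mathcal{H}$: setting $\phi(x)=K(x,\cdot)$, the reproducing property yields $g(x)^2=\langle g,\phi(x)\rangle_{\mathcal{H}}^2=\langle g,(\phi(x)\otimes\phi(x))g\rangle_{\mathcal{H}}$. Writing $\Sigma=\mathbb{E}[\phi(X)\otimes\phi(X)]$ and $\widehat\Sigma_n=\tfrac{1}{n}\sum_{i=1}^n\phi(x_i)\otimes\phi(x_i)$, this gives $\|g\|_{L^2}^2=\langle g,\Sigma g\rangle_{\mathcal{H}}$ and $\|g\|_n^2=\langle g,\widehat\Sigma_n g\rangle_{\mathcal{H}}$, so the desired conclusion reduces to a semidefinite sandwich of $\widehat\Sigma_n$ around $\Sigma$ after sandwiching between test vectors of $\mathcal{H}$-norm at most $M$.

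Next I would apply a Bernstein-type inequality to the i.i.d.\ zero-mean sum $\widehat\Sigma_n-\Sigma=\tfrac{1}{n}\sum_i(\phi(x_i)\otimes\phi(x_i)-\Sigma)$ of self-adjoint trace-class operators. The two ingredients are: an almost-sure operator-norm bound $\|\phi(x_i)\otimes\phi(x_i)\|_{\mathrm{op}}=K(x_i,x_i)\leq\kappa$ supplied by Assumption~\ref{assu:trace_class}, and the crucial variance estimate $\mathbb{E}[(\phi(X)\otimes\phi(X))^2]\preceq\kappa\,\Sigma$ in the semidefinite order, which follows from the pointwise identity $(\phi(x)\otimes\phi(x))^2=K(x,x)\,\phi(x)\otimes\phi(x)$. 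Feeding these into a $\Sigma$-relative operator Bernstein inequality (Tropp-type or Pinelis--Sakhanenko in Hilbert space) yields, with probability at least $1-\delta_1$, the uniform scalar bound
$$\bigl|\langle g,(\widehat\Sigma_n-\Sigma)g\rangle_{\mathcal{H}}\bigr|\leq\sqrt{\tfrac{2\kappa\|g\|_{\mathcal{H}}^2\,\|g\|_{L^2}^2\ln(2/\delta_1)}{n}}+\tfrac{2\kappa\|g\|_{\mathcal{H}}^2\ln(2/\delta_1)}{3n}$$
for every $g\in\mathcal{H}$ simultaneously.

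Finally, I would specialize to $g\in M\mathcal{B}$ so that $\|g\|_{\mathcal{H}}^2\leq M^2$ and apply the AM--GM split $\sqrt{ab}\leq\tfrac12 a+\tfrac12 b$ with $a=\|g\|_{L^2}^2$ and $b=2\kappa M^2\ln(2/\delta_1)/n$ to the square-root term, then absorb $\kappa\leq1$ and collect constants to recover $|\|g\|_n^2-\|g\|_{L^2}^2|\leq\tfrac{1}{2}\|g\|_{L^2}^2+\tfrac{5M^2}{3n}\ln(2/\delta_1)$, which is exactly the stated two-sided bound.

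The main obstacle is the variance-control step: a direct operator-norm concentration of $\widehat\Sigma_n-\Sigma$ would only deliver an additive $M^2/\sqrt{n}$ deviation and not the multiplicative factor $\tfrac{1}{2}$ on $\|g\|_{L^2}^2$. It is precisely the identity $\mathbb{E}[(\phi(X)\otimes\phi(X))^2]\preceq\kappa\,\Sigma$ that allows the Bernstein variance term to scale with $\|g\|_{L^2}^2$ rather than $M\sqrt{\kappa}\,\|g\|_{L^2}$, and this $\Sigma$-relative refinement (together with tight bookkeeping of the AM--GM split) is what ultimately produces the numerical constants $\tfrac{1}{2}$, $\tfrac{3}{2}$, and $\tfrac{5}{3}$; all of this is carried out in Proposition A.4 of \cite{li2023kernel}, on which we rely.
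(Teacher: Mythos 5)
Your reduction $\|g\|_n^2=\langle g,\widehat\Sigma_n g\rangle_{\mathcal H}$, $\|g\|_{L^2}^2=\langle g,\Sigma g\rangle_{\mathcal H}$ and the variance identity $(\phi(x)\otimes\phi(x))^2=K(x,x)\,\phi(x)\otimes\phi(x)$ are fine, but the pivotal middle step is a genuine gap. You claim that an operator (Tropp/Pinelis-type) Bernstein inequality with $\mathbb{E}[(\phi\otimes\phi)^2]\preceq\kappa\Sigma$ yields, \emph{simultaneously for all} $g\in\mathcal H$, the deviation bound $\lvert\langle g,(\widehat\Sigma_n-\Sigma)g\rangle\rvert\le\sqrt{2\kappa\|g\|_{\mathcal H}^2\|g\|_{L^2}^2\ln(2/\delta_1)/n}+2\kappa\|g\|_{\mathcal H}^2\ln(2/\delta_1)/(3n)$. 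No cited tool gives this: operator Bernstein controls $\|\widehat\Sigma_n-\Sigma\|_{\mathrm{op}}$ (hence only an additive $M^2/\sqrt n$-type deviation, as you yourself note), and the $\Sigma$-relative refinements that do give direction-dependent variance (bounds on $\Sigma_\beta^{-1/2}(\widehat\Sigma_n-\Sigma)\Sigma_\beta^{-1/2}$ in the style of Rudi--Rosasco/Fischer--Steinwart) necessarily carry an effective-dimension factor $\ln(\mathcal N(\beta)/\delta_1)$, which destroys the clean constants $1/2$, $3/2$, $5/3$. In fact a uniform statement with only $O(M^2\ln(2/\delta_1)/n)$ additive slack cannot hold in general: it would say $\widehat\Sigma_n\preceq\tfrac32\Sigma+\tfrac{5\ln(2/\delta_1)}{3n}\kappa I$ with probability $1-\delta_1$, which fails for kernels with many localized eigenfunctions (a cell receiving two of the $n$ samples already violates it for moderate $\delta_1$), and it would render the paper's own uniform comparison, Lemma \ref{corollary:norm_diff} (via Lemma \ref{lemma:wainwright14_1}), with its unavoidable Mendelson-complexity term $C_1M^2\kappa\varepsilon_n^2\gg M^2/n$, superfluous.

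The lemma itself does not require uniformity: it is a statement for a single fixed $g\in M\mathcal B$, and that is exactly how the paper uses it (it is applied to each element $\bar f_j$ of a finite net and the failure probabilities are union-bounded, which is why the factor $N$ appears there). The intended proof is the scalar one: set $\xi_i=g(x_i)^2$, note $0\le\xi_i\le\|g\|_\infty^2\le\kappa M^2\le M^2$ and $\operatorname{Var}(\xi_i)\le\|g\|_\infty^2\|g\|_{L^2}^2\le M^2\|g\|_{L^2}^2$, apply two-sided scalar Bernstein, and use the AM--GM split exactly as in your last step; this reproduces $\tfrac12\|g\|_{L^2}^2+\tfrac{5M^2}{3n}\ln\tfrac{2}{\delta_1}$ with the stated constants. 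So your first and third paragraphs are essentially correct, but the operator-concentration detour and the uniformity claim in the second paragraph are both unnecessary and unjustified; restricting the displayed deviation bound to the fixed $g$ (where it is just scalar Bernstein) repairs the argument and recovers the cited Proposition A.4.
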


\begin{lemma}\label{lemma_B_t:thm:empirical_loss_modified}
    For any $J \geq 1$, if $t^{-1} \in [ \widehat\lambda_{J+1},  \widehat\lambda_{J})$, then we have
\begin{equation}
\begin{aligned}
\mathbf{B}_{t}^2 \leq \frac{1}{t^2 \widehat\lambda_{J}} + \widehat\lambda_{J+1}.
\end{aligned}
\end{equation}
\proof Deferred to the end of this subsection.
\end{lemma}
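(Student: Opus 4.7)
My plan is to reduce the bound on $\mathbf{B}_t^2$ to the single-variable optimization $\sup_j e^{-2t\widehat\lambda_j}\widehat\lambda_j$ and then exploit the spectral gap guaranteed by the hypothesis $t^{-1}\in[\widehat\lambda_{J+1},\widehat\lambda_J)$. Recall that for gradient flow stopped at time $t$ the bias is governed by the residual filter $r_t(\lambda)=e^{-\lambda t}$. After diagonalizing $\tfrac{1}{n}K(\bm{X},\bm{X})$ and writing $f_\star$ in the empirical eigenbasis with coefficients $\widehat\alpha_j$, the standard RKHS-norm bookkeeping from the proof of Lemma \ref{lemma_B_t:thm:empirical_loss} gives $\mathbf{B}_t^2 = \sum_j e^{-2t\widehat\lambda_j}\widehat\alpha_j^2$ together with $\sum_j \widehat\alpha_j^2/\widehat\lambda_j \leq \|f_\star\|_{\mathcal H}^2 \leq 1$. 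Factoring out the dominant term,
$$
\mathbf{B}_t^2 \;=\; \sum_j \bigl(e^{-2t\widehat\lambda_j}\widehat\lambda_j\bigr)\cdot\frac{\widehat\alpha_j^2}{\widehat\lambda_j} \;\leq\; \sup_{j\geq 1}\bigl(e^{-2t\widehat\lambda_j}\widehat\lambda_j\bigr),
$$
so it suffices to control this supremum.

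Next I would split the index set into $\{j\leq J\}$ and $\{j\geq J+1\}$ using the hypothesis. For $j\leq J$, monotonicity of $\widehat\lambda_j$ together with $\widehat\lambda_J>t^{-1}$ yields $t\widehat\lambda_j \geq t\widehat\lambda_J > 1$, so I can apply the elementary inequality $e^{-2x}\leq x^{-2}$ on $[1,\infty)$ (equivalent to $x\geq \ln x$, which holds there since $x-\ln x$ is nondecreasing with value $1$ at $x=1$). This gives
$$
e^{-2t\widehat\lambda_j}\widehat\lambda_j \;\leq\; \frac{1}{t^2\widehat\lambda_j} \;\leq\; \frac{1}{t^2\widehat\lambda_J}.
$$
For $j\geq J+1$, I simply use $e^{-2t\widehat\lambda_j}\leq 1$ and the monotonicity bound $\widehat\lambda_j\leq \widehat\lambda_{J+1}$ to obtain $e^{-2t\widehat\lambda_j}\widehat\lambda_j\leq \widehat\lambda_{J+1}$.

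Combining the two cases gives $\sup_{j\geq 1}(e^{-2t\widehat\lambda_j}\widehat\lambda_j) \leq \max\{(t^2\widehat\lambda_J)^{-1},\ \widehat\lambda_{J+1}\}$, and using $\max(a,b)\leq a+b$ for nonnegative $a,b$ yields the claim. The only part requiring care is the first step, namely confirming that $\mathbf{B}_t^2$ as defined in Lemma \ref{lemma_B_t:thm:empirical_loss} really admits the clean sup-bound above in terms of the empirical eigenvalues; this is the step where the modification from the original lemma (which presumably kept the sum in the form $\sum_j \min\{\widehat\lambda_j, (t^2\widehat\lambda_j)^{-1}\}$) has to be done cleanly so that one can replace a global $\sum$ by a two-sided spectral split at $J$. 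Once that bookkeeping is checked, the two-case calculation above finishes the argument with essentially no further work.
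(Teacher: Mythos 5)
Your proof is correct and takes essentially the same route as the paper's: the paper also splits the empirical spectrum at $J$, applies $e^{-tx}\leq (tx)^{-1}$ on indices $i\leq J$ and $e^{-tx}\leq 1$ with monotonicity on $i>J$, and controls the coefficient mass through the identity $z=\sqrt{n}\,\Sigma^{1/2}\Psi^{*}a$ with $\|\Psi^{*}a\|_{2}\leq\|f_{\star}\|_{\mathcal H}\leq 1$, which is exactly the bookkeeping you invoke for $\sum_j\widehat\alpha_j^2/\widehat\lambda_j\leq 1$. Your packaging via $\sup_j\bigl(\widehat\lambda_j e^{-2t\widehat\lambda_j}\bigr)$ instead of bounding the two blocks of the sum separately is only cosmetic (and both arguments carry the same harmless factor-of-two slack from the $\tfrac{2}{n}$ in the definition of $\mathbf{B}_t^2$, which the downstream constants absorb).
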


\begin{lemma}\label{lemma_V_t:thm:empirical_loss_modified}
For any $\delta_2>0$ and any $J \geq 1$, if $t^{-1} \in [ \widehat\lambda_{J+1},  \widehat\lambda_{J})$, then we have
\begin{equation}
\begin{aligned}
\mathbf{V}_{t} \leq 2\sigma^2\frac{t^2}{n} \left(\frac{J}{t^2} +   \widehat{\lambda}_{J+1}\right) +
\delta_2,
\end{aligned}
\end{equation}
with probability at least $1-\exp \left(-C \min\left\{\frac{n\delta_2}{2}, \frac{n^2\delta_2^2}{ 4 t^2\left(\frac{J}{t^2} +   \widehat{\lambda}_{J+1}\right)}\right\}
\right)$.
\proof Deferred to the end of this subsection.
\end{lemma}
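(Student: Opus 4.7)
The plan is to express $\mathbf{V}_t$ as a Gaussian quadratic form in the noise vector $\bm{\epsilon}=\bm{y}-f_\star(\bm{X})$ and then combine a deterministic bound on its conditional expectation with a Hanson-Wright-type concentration inequality. Setting $\widehat K := \tfrac{1}{n}K(\bm{X},\bm{X})$, the variance component of the gradient-flow estimator $f_t$ in \eqref{solution:gradient:flow} evaluated on the design equals $(I_n-e^{-t\widehat K})\bm\epsilon$, so that
\begin{equation*}
\mathbf{V}_t \;=\; \tfrac{1}{n}\bm\epsilon^{\mathsf{T}}(I_n-e^{-t\widehat K})^2\bm\epsilon.
\end{equation*}
After diagonalizing $\widehat K = U\Lambda U^{\mathsf T}$ with $\Lambda=\mathrm{diag}(\widehat\lambda_1,\ldots,\widehat\lambda_n)$ and setting $\widetilde{\bm\epsilon}=U^{\mathsf T}\bm\epsilon$, which is still $\mathcal N(0,\sigma^2 I_n)$ conditionally on $\bm X$, this becomes $\mathbf V_t = \tfrac{1}{n}\sum_{j=1}^n(1-e^{-t\widehat\lambda_j})^2\widetilde\epsilon_j^{\,2}$.

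Next, I would bound the conditional mean $\mathbb{E}[\mathbf V_t\mid\bm X]=\tfrac{\sigma^2}{n}\sum_j(1-e^{-t\widehat\lambda_j})^2$ by splitting the sum at $J$. For $j\le J$ the trivial bound $(1-e^{-t\widehat\lambda_j})^2\le 1$ contributes at most $\sigma^2 J/n$. For $j>J$ the hypothesis $t^{-1}\ge\widehat\lambda_{J+1}\ge\widehat\lambda_j$ justifies $(1-e^{-t\widehat\lambda_j})^2\le t^2\widehat\lambda_j^{\,2}\le t^2\widehat\lambda_{J+1}\widehat\lambda_j$, and summing via $\sum_{j>J}\widehat\lambda_j\le \mathrm{tr}(\widehat K)=\tfrac{1}{n}\sum_i K(X_i,X_i)\le \kappa$ by Assumption \ref{assu:trace_class} gives an additional $\sigma^2\kappa t^2\widehat\lambda_{J+1}/n$. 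Together these yield the deterministic bound $\mathbb E[\mathbf V_t\mid\bm X]\le \tfrac{\sigma^2 t^2}{n}\bigl(\tfrac{J}{t^2}+\widehat\lambda_{J+1}\bigr)$, with the constant $\kappa$ absorbed into the factor $2$ appearing in the claim.

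Finally, I would invoke Hanson-Wright for the symmetric matrix $A=\tfrac{1}{n}(I_n-e^{-t\widehat K})^2$, whose operator norm satisfies $\|A\|_{\mathrm{op}}\le 1/n$ and whose squared Frobenius norm is $\|A\|_F^2\le \tfrac{1}{n^2}\sum_j(1-e^{-t\widehat\lambda_j})^2\le \tfrac{t^2}{n^2}\bigl(\tfrac{J}{t^2}+\widehat\lambda_{J+1}\bigr)$ by the very same splitting argument. The one-sided deviation then produces an exponent of the form $\exp\bigl(-C\min\{n\delta_2/2,\,n^2\delta_2^{\,2}/[4t^2(J/t^2+\widehat\lambda_{J+1})]\}\bigr)$, matching the statement; adding $\delta_2$ to the expectation upper bound completes the proof.

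The main obstacle is purely bookkeeping: to align the constants $\sigma^2,\kappa,2,C$ in the mean bound and the exponent with a specific variant of Hanson-Wright (e.g.\ Theorem~1.1 of Rudelson--Vershynin), so that everything collapses to the clean form in the lemma. The hypothesis $t^{-1}\in[\widehat\lambda_{J+1},\widehat\lambda_J)$ is used only qualitatively -- to certify $t\widehat\lambda_j\le 1$ for $j>J$ so that $(1-e^{-x})^2\le x^2$ is effective -- and a symmetric argument with $(1-e^{-x})^2\le 1$ on the opposite range handles $j\le J$; no finer structure of the spectrum is needed at this step.
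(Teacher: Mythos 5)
Your proposal is correct and follows essentially the same route as the paper: write $\mathbf{V}_t$ as a Gaussian quadratic form in the noise, bound its (conditional) expectation by splitting the spectrum at $J$ using $(1\wedge t\widehat\lambda_j)^2$ together with $\sum_j\widehat\lambda_j\le\kappa$, bound $\|A\|_{\mathrm{op}}$ and $\|A\|_F^2$ the same way, and invoke a Hanson--Wright/Wright-type concentration inequality (the paper uses Wright's 1973 bound, Lemma \ref{lemma:bound_in_Wright}). One small bookkeeping correction: the factor $2$ in the lemma's bound is not where $\kappa$ gets absorbed but comes from the paper's definition $\mathbf{V}_t=\tfrac{2}{n}\bigl\|(\mathbf{I}-e^{-t\Sigma})U^{\tau}\bm{e}\bigr\|^2$ (the $2$ originating in the bias--variance split \eqref{eqn:66}), while the step $\widehat\lambda_{J+1}\sum_{j>J}\widehat\lambda_j\le\widehat\lambda_{J+1}$ implicitly uses trace at most $1$; with your $\tfrac{1}{n}$ normalization the stated constant works only because $\kappa$ is an absolute constant, exactly as in the paper.
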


Now let's begin to prove Theorem \ref{thm:near_upper_inner_large_d}.
The proof will be divided into three parts: 
\begin{itemize}
    \item[(i)] $\gamma \in \{2, 4, 6, \cdots\}$,
    \item[(ii)] $\gamma \in \bigcup_{j=0}^{\infty} (2j, 2j+1]$,
    \item[(iii)] $\gamma \in \bigcup_{j=0}^{\infty} (2j+1, 2j+2)$.
\end{itemize}

\paragraph*{Proof of Theorem \ref{thm:near_upper_inner_large_d} (i)} This is a direct corollary of Theorem \ref{thm:upper_bpund_inner}.

\paragraph*{Proof of Theorem \ref{thm:near_upper_inner_large_d} (ii)}
Suppose that $\gamma \in \bigcup_{j=0}^{\infty} (2j, 2j+1]$ be a real number. Let $p = \lfloor \gamma/2 \rfloor$.

For any given $\delta>0$, 
let $d \geq \mathfrak{C}=\max\{\mathfrak{C}', \mathfrak{C}''\}$, where $\mathfrak{C}'$ is the constant (only depending on $c_{1}$, $c_{2}$ and $\gamma$ ) introduced in Theorem \ref{thm:upper_bpund_inner} and $\mathfrak{C}''$ is the constant (only depending on $c_{1}$, $c_{2}$, $\gamma$ and $\delta$ ) introduced in Lemma \ref{condition_inner_3}. 

Note that  Theorem \ref{thm:upper_bpund_inner}, Lemma \ref{lemma:bound_empirical_and_expected_mandelson_complexities}, and Lemma \ref{lemma:inner_mendelson_interval_control} imply that
$$4\mu_{p+1} \leq \mathfrak{C}_1 n^{-1/2}
\leq \widehat{T}^{-1}=\widehat{\varepsilon}_n^2 \leq 
\mathfrak{C}_2 n^{-1/2} \leq \mu_{p}/4$$ holds with probability at least $1-\mathfrak{C}_3 \exp \left\{ -\mathfrak{C}_4 n^{1/2}\right\}$ and  Lemma \ref{condition_inner_3} implies that  
\begin{align*}
             \widehat\lambda_{J+1} 
        <  
        4\mu_{p+1}
        <
         \mu_p / 4
         <
         \widehat\lambda_{J}
\end{align*} 
holds with probability at least $1-\delta$ where $J=N(p)$. Thus, we know that $\widehat\lambda_{J+1} 
        <  
        4\mu_{p+1}
        \leq
        \widehat{T}^{-1}
        \leq
         \mu_p / 4
         <
         \widehat\lambda_{J}$ with probability at least $1-\delta - \mathfrak{C}_3 \exp \left\{ -\mathfrak{C}_4 n^{1/2}\right\}$. 

\vspace{5pt}

Let $\delta_2 = 2\mathfrak{C_3} d^{p-\gamma} \log(d)$, where $\mathfrak{C_3}$  given in Remark \ref{remark_control_metric_case_1_inner} is a constant only depending on $\gamma$ and $c_1$. Conditioning on the event $\Omega = \left\{  
\widehat\lambda_{J+1} 
        <  
        4\mu_{p+1}
        \leq
        \widehat{T}^{-1}
        \leq
         \mu_p / 4
         <
         \widehat\lambda_{J}
\right\}$, we have 
\begin{align*}
\left\|f_{\widehat T}^{\inner} - f_{\star} \right\|_n^2 &\leq \mathbf{B}_{\widehat T}^{2} + \mathbf{V}_{\widehat T} \\
&\leq \frac{1}{\widehat{T}^{2}\widehat{\lambda}_{J} }+\widehat \lambda_{J+1}+2\sigma^{2}\frac{J}{n}+2\sigma^{2}\frac{\widehat{T}^{2}\widehat \lambda_{J+1}}{n}+\delta_{2} \\
 &\leq \frac{1}{n\mu_{p}}+\mu_{p+1}+2\sigma^{2}\frac{J}{n}+2\sigma^{2}\mu_{p+1}+\delta_{2}\\
    &
    \leq
    \mathfrak{C}_4\left(
    d^{p-\gamma}+d^{-p-1}+2\sigma^{2}d^{p-\gamma}+2\sigma^{2}d^{-p-1}
    \right)
    +\delta_{2}\\
&\leq \frac{3}{2}\delta_{2},
\end{align*}
holds with probability at least $1 -\mathfrak{C}_2 \exp \left(-\mathfrak{C}_3 d^{p} \log(d) \right)$ where the second inequality follows from
Lemma \ref{lemma_B_t:thm:empirical_loss_modified} and Lemma \ref{lemma_V_t:thm:empirical_loss_modified} and the second last inequality follows from Lemma \ref{lemma:inner_mendelson_point_control_assist_2} with a constant $\mathfrak{C}_4$ only depending on $c_1$, $c_2$, $\mathfrak{C}_1$, and $\mathfrak{C}_2$.

\vspace{3mm}
Let $\bar{\calF}=\{\bar{f}_1, \cdots, \bar{f}_N \}$ be a $3\sqrt{2} \sigma \bar{\varepsilon}_n$-net of $\mathcal B':=3 \mathcal{B} \cap \{g \in \calH \mid \|g\|_n^2 \leq \frac{3}{2}\delta_{2}\}$. By Definition \ref{def:covering_entropy} and Lemma \ref{claim:d_K_and_d_2}, the  $3\sqrt{2} \sigma \bar{\varepsilon}_n$ covering-entropy of $3 \mathcal{B}$ is 
\begin{equation}
    \begin{aligned}
        V_2(3\sqrt{2} \sigma \bar{\varepsilon}_n, 3\mathcal{B}) = V_2(\sqrt{2} \sigma \bar{\varepsilon}_n, \mathcal{B}) = 
        V_K(\bar{\varepsilon}_n, \mathcal{D}) =
        n \bar{\varepsilon}_n^2.
    \end{aligned}
\end{equation}
Thus, we have $\log N \leq n \bar{\varepsilon}_n^2 \leq n\delta_{2}/2$ (Remark \ref{remark_control_metric_case_1_inner}).

\vspace{5pt}

Denote another event $
\Omega_1 = \{\omega \mid \|\bar{f}_{j}\|_{L^2}^2 / 2- 15\delta_{2} \leq\|\bar{f}_{j}\|_{n}^2 \leq 3\|\bar{f}_{j}\|_{L^2}^2 / 2 + 15\delta_{2}, \ 1 \leq j \leq N\}
$. Applying Lemma \ref{lemma:li_a.4} with $M=3$ and $\delta_1=2\exp\{-n\delta_{2}\}$, we have
$$
\mathbb{P}(\Omega_1) 
\geq 1-2N\exp\{-n\delta_{2}\}
\geq
1-2\exp\{-n\delta_{2}/2\}
$$

Conditioning on the event $\Omega \cap \Omega_1$, for any $f\in\mathcal B':=3 \mathcal{B} \cap \{g \in \calH \mid \|g\|_n^2 \leq \frac{3}{2}\delta_{2}\}$, we have
\begin{equation}
    \begin{aligned}
        \|f\|_{L^2} 
        &\leq \|\bar{f}_{j}\|_{L^2} + \|f-\bar{f}_{j}\|_{L^2}\leq \sqrt{2\|\bar{f}_{j}\|_{n}^2 +30\delta_{2}} + 3\sqrt{2} \sigma \bar{\varepsilon}_n\\
        &\leq \sqrt{3\delta_{2} +30 \delta_{2}} + 3\sqrt{2} \sigma \sqrt{\delta_{2}/2}=(\sqrt{33}+3\sigma)\sqrt{\delta_{2}}.
    \end{aligned}
\end{equation}
Since $f_{\widehat{T}}^{\inner}-f_{\star} \in \mathcal B'$,
we have
\begin{align*}
    \|f_{\widehat{T}}^{\inner}-f_{\star}\|^{2}_{L^{2}}
    {\leq}
    \mathfrak{C_4} d^{p-\gamma} \log(d),
\end{align*}
holds with probability at least $1-\delta-\mathfrak{C}_2\exp\{-\mathfrak{C_3} d^{p} \log(d)\}$, where $\mathfrak{C}_2$, $\mathfrak{C}_3$, and $\mathfrak{C}_4$ are constants only depending on $\gamma$, $c_1$, and $c_2$.
\hfill $\square$

\paragraph*{Proof of Theorem \ref{thm:near_upper_inner_large_d} (iii)}
Suppose that $\gamma \in \bigcup_{j=0}^{\infty} (2j+1, 2j+2)$ be a real number. Let $p = \lfloor \gamma/2 \rfloor$.

Similar to the above, we can show that 
$\widehat\lambda_{J+1} 
        <  
        4\mu_{p+1}
        \leq
        \widehat{T}^{-1}
        \leq
         \mu_p / 4
         <
         \widehat\lambda_{J}$
holds with probability at least $1-\delta - \mathfrak{C}_3 \exp \left\{ -\mathfrak{C}_4 n^{1/2}\right\}$ where $J=N(p)$.

\vspace{5pt}

Let $\delta_2 = 2\mathfrak{C_3} d^{-(p+1)}$, where $\mathfrak{C_3}$  given in Remark \ref{remark_control_metric_case_3_inner} is a constant only depending on $\gamma$. 
Conditioning on the event $\Omega = \left\{  
\widehat\lambda_{J+1} 
        <  
        4\mu_{p+1}
        \leq
        \widehat{T}^{-1}
        \leq
         \mu_p / 4
         <
         \widehat\lambda_{J}
\right\}$, we have 
\begin{align*}
\left\|f_{\widehat T}^{\inner} - f_{\star} \right\|_n^2 &\leq \mathbf{B}_{\widehat T}^{2} + \mathbf{V}_{\widehat T} \\
&\leq \frac{1}{\widehat{T}^{2}\widehat{\lambda}_{J} }+\widehat \lambda_{J+1}+2\sigma^{2}\frac{J}{n}+2\sigma^{2}\frac{\widehat{T}^{2}\widehat \lambda_{J+1}}{n}+\delta_{2} \\
 &\leq \frac{1}{n\mu_{p}}+\mu_{p+1}+2\sigma^{2}\frac{J}{n}+2\sigma^{2}\mu_{p+1}+\delta_{2}\\
    &
    \leq
    \mathfrak{C}_4\left(
    d^{p-\gamma}+d^{-p-1}+2\sigma^{2}d^{p-\gamma}+2\sigma^{2}d^{-p-1}
    \right)
    +\delta_{2}\\
&\leq \mathfrak{C}_5 \delta_{2},
\end{align*}
holds with probability at least $1 -\mathfrak{C}_2 \exp \left(-\mathfrak{C_3} d^{-(p+1)} \right)$ where the second inequality follows from
Lemma \ref{lemma_B_t:thm:empirical_loss_modified} and Lemma \ref{lemma_V_t:thm:empirical_loss_modified}, the second last inequality follows from Lemma \ref{lemma:inner_mendelson_point_control_assist_2} with a constant $\mathfrak{C}_4$ only depending on $c_1$, $c_2$, $\mathfrak{C}_1$, and $\mathfrak{C}_2$, and $\mathfrak{C}_5 = \mathfrak{C}_4(1+2\sigma^2) / (2\mathfrak{C}_3) + 2$.

\vspace{3mm}

Let $\bar{\calF}=\{\bar{f}_1, \cdots, \bar{f}_N \}$ be a $3\sqrt{2} \sigma \bar{\varepsilon}_n$-net of $\mathcal B':=3 \mathcal{B} \cap \{g \in \calH \mid \|g\|_n^2 \leq \mathfrak{C}_5 \delta_{2}\}$. By Definition \ref{def:covering_entropy} and Lemma \ref{claim:d_K_and_d_2}, the  $3\sqrt{2} \sigma \bar{\varepsilon}_n$ covering-entropy of $3 \mathcal{B}$ is 
\begin{equation}
    \begin{aligned}
        V_2(3\sqrt{2} \sigma \bar{\varepsilon}_n, 3\mathcal{B}) = V_2(\sqrt{2} \sigma \bar{\varepsilon}_n, \mathcal{B}) = 
        V_K(\bar{\varepsilon}_n, \mathcal{D}) =
        n \bar{\varepsilon}_n^2.
    \end{aligned}
\end{equation}
Thus, we have $\log N \leq n \bar{\varepsilon}_n^2 \leq n\delta_{2}/2$ (Remark \ref{remark_control_metric_case_3_inner}).

Denote the event $
\Omega_2 = \{\omega \mid \|\bar{f}_{j}\|_{L^2}^2 / 2- 15 \mathfrak{C}_5 \delta_{2} \leq\|\bar{f}_{j}\|_{n}^2 \leq 3\|\bar{f}_{j}\|_{L^2}^2 / 2 + 15\mathfrak C_{5}\delta_{2}, \ 1 \leq j \leq N\}
$. Applying Lemma \ref{lemma:li_a.4} with $M=3$ and $\delta_1=2\exp\{-\delta_{2}\}$, we have
$$
\mathbb{P}(\Omega_2) 
\geq 1-2N\exp\{-n\delta_{2}\}
\overset{\text{Remark } \ref{remark_control_metric_case_3_inner}}{\geq}
1-2\exp\{-n\delta_{2}/2\}.
$$

Conditioning on the event $\Omega \cap \Omega_2$, for any $f \in \mathcal B' =3 \mathcal{B} \cap \{g \in \calH \mid \|g\|_n^2 \leq \mathfrak{C}_5 \delta_{2}\}$, we have
\begin{equation}
    \begin{aligned}
        \|f\|_{L^2} 
        &\leq \|\bar{f}_{j}\|_{L^2} + \|f-\bar{f}_{j}\|_{L^2}\leq \sqrt{2\|\bar{f}_{j}\|_{n}^2 +30\delta_{2}} + 3\sqrt{2} \sigma \bar{\varepsilon}_n\\
        &\leq \sqrt{2\mathfrak{C}_5 \delta_{2} +30 \delta_{2}} + 3\sqrt{2} \sigma \sqrt{\delta_{2}/2}.
    \end{aligned}
\end{equation}
Since $f_{\widehat{T}}^{\inner}-f_{\star} \in \mathcal B'$,
we have
\begin{align*}
    \|f_{\widehat{T}}^{\inner}-f_{\star}\|^{2}_{L^{2}}
    {\leq}
    \mathfrak{C_4} d^{-(p+1)},
\end{align*}
holds with probability at least $1-\delta - \mathfrak{C}_2 \exp \left(-\mathfrak{C}_3  d^{\gamma-(p+1)} \right)$, where $\mathfrak{C}_2$, $\mathfrak{C}_3$, and $\mathfrak{C}_4$ are constants only depending on $\gamma$, $c_1$, and $c_2$.
\hfill $\square$

\vspace{10pt}

\noindent \proofname ~of Lemma \ref{condition_inner_3}: 
First, consider the case $\gamma > 1$, and let's prove \eqref{eqn:empirical_inner_eigenvalues_gamma_greater_than_4}.
    From Mercer's decomposition, we have the following decomposition:
    \begin{equation}\label{eqn_178_inner_decomp_emp_matrix}
    \begin{aligned}
        \frac{1}{n} K(\boldsymbol{X}, \boldsymbol{X}) &= \frac{1}{n} \boldsymbol{Y}_{\leq p+1} \boldsymbol{D}_{\leq p+1} \boldsymbol{Y}_{\leq p+1}^{\tau}
        +
        \frac{1}{n} \sum_{k=2}^{\infty} 
        \boldsymbol{Y}_{p+k} \boldsymbol{D}_{p+k} \boldsymbol{Y}_{p+k}^{\tau}\\
        &= K_{\text{main}} + K_{\text{residual}},
        \end{aligned}
    \end{equation}
where $Y_{q, j}(\cdot)$ for $j=1, \cdots, N(d, q)$ are spherical harmonic polynomials of degree $q \in \{0, 1, 2, \cdots\}$, $\boldsymbol{Y}_{q} =\left(Y_{q l}\left(\boldsymbol{x}_i\right)\right)_{i \in[n], l \in[N(d, q)]} \in \mathbb{R}^{n \times N(d, q)}$, \\
$\boldsymbol{Y}_{\leq p+1}=\left(\boldsymbol{Y}_0, \ldots, \boldsymbol{Y}_{p+1}\right) \in \mathbb{R}^{n \times N(p+1)}$,
$\boldsymbol{D}_{\leq p+1} = \text{diag}(\mu_0 \mathbf{I}_{N(d, 0)}, \cdots, \mu_{p+1} \mathbf{I}_{N(d, p+1)})$, and $\boldsymbol{D}_{p + k} = \mu_{p+k} \mathbf{I}_{N(d, p+k)}$. 

\vspace{10pt}

We replicate some results from \cite{ghorbani2021linearized} and \cite{xiao2022precise}. 

\begin{proposition}[Lemma 11 in \cite{ghorbani2021linearized}]\label{prop:lemma_11_in_montanari}
For any fixed integer $q \geq 0$,
 let $N(q)=\sum_{k=0}^{q} N(d, k) \mathbf{1}\{\mu_k > 0\}$ be defined as in Lemma \ref{condition_inner_3}. Then, when $n  \gg N(q) \log (N(q))$, we have
$$
\frac{\boldsymbol{Y}_{\leq q}^{\tau} \boldsymbol{Y}_{\leq q}}{n} 
=\mathbf{I}_{N(q)}+\boldsymbol{\Delta}_{\leq q},
$$
where $\mathbb{E}\left[\|\boldsymbol{\Delta}_{\leq q}\|_{\mathrm{op}}\right]=o_d(1)$.
\end{proposition}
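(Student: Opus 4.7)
The plan is to express $\boldsymbol{\Delta}_{\leq q}$ as a centered sum of i.i.d.\ rank-one matrices and then apply a matrix concentration inequality to bound its operator norm. Write $\bm{y}_i := (Y_{k,j}(\bm{x}_i))_{0 \leq k \leq q,\, 1 \leq j \leq N(d,k)} \in \mathbb{R}^{N(q)}$, so that $\boldsymbol{Y}_{\leq q}^\tau \boldsymbol{Y}_{\leq q}/n = \frac{1}{n}\sum_{i=1}^n \bm{y}_i \bm{y}_i^\tau$. By Mercer/orthonormality of spherical harmonics with respect to the uniform measure on $\mathbb{S}^d$, one has $\mathbb{E}[\bm{y}_i \bm{y}_i^\tau] = \mathbf{I}_{N(q)}$, so the target matrix is precisely $\boldsymbol{\Delta}_{\leq q} = \frac{1}{n}\sum_{i=1}^n \bm{Z}_i$ with $\bm{Z}_i := \bm{y}_i \bm{y}_i^\tau - \mathbf{I}_{N(q)}$ centered and i.i.d.

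Next, I would bound the two inputs needed for matrix Bernstein. The almost-sure bound on $\|\bm{Z}_i\|_{\mathrm{op}}$ follows from the spherical harmonic addition formula $\sum_{j=1}^{N(d,k)} Y_{k,j}(x)^2 = N(d,k)$ for all $x \in \mathbb{S}^d$, which gives $\|\bm{y}_i\|_2^2 = \sum_{k=0}^q N(d,k) = N(q)$ deterministically, hence $\|\bm{Z}_i\|_{\mathrm{op}} \leq N(q) + 1$. For the matrix variance, one computes
\begin{equation*}
\mathbb{E}[\bm{Z}_i^2] = \mathbb{E}[\|\bm{y}_i\|_2^2 \, \bm{y}_i \bm{y}_i^\tau] - \mathbf{I}_{N(q)} = N(q)\,\mathbf{I}_{N(q)} - \mathbf{I}_{N(q)},
\end{equation*}
so that $\|\sum_i \mathbb{E}[\bm{Z}_i^2]\|_{\mathrm{op}} \leq n N(q)$. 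Feeding these two bounds into the matrix Bernstein inequality (e.g., Tropp's user-friendly tail bound) yields, for any $t \in (0,1)$,
\begin{equation*}
\Pr\!\left(\|\boldsymbol{\Delta}_{\leq q}\|_{\mathrm{op}} \geq t\right) \leq 2 N(q)\,\exp\!\left(-\,\frac{c\, n t^2}{N(q) + N(q) t}\right),
\end{equation*}
which in turn gives $\|\boldsymbol{\Delta}_{\leq q}\|_{\mathrm{op}} \lesssim \sqrt{N(q)\log N(q)/n}$ with high probability, and therefore in expectation, whenever $n \gg N(q)\log N(q)$.

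Finally, converting the high-probability bound to the expectation bound via the tail formula $\mathbb{E}[\|\boldsymbol{\Delta}_{\leq q}\|_{\mathrm{op}}] = \int_0^\infty \Pr(\|\boldsymbol{\Delta}_{\leq q}\|_{\mathrm{op}} > t)\, dt$, combined with the deterministic bound $\|\boldsymbol{\Delta}_{\leq q}\|_{\mathrm{op}} \leq 1 + N(q)$ to control the tail, yields $\mathbb{E}[\|\boldsymbol{\Delta}_{\leq q}\|_{\mathrm{op}}] = o_d(1)$ under the assumption $n \gg N(q)\log N(q)$, which is exactly the claim. The main technical obstacle is arranging the matrix Bernstein constants cleanly so that the $\log N(q)$ factor appears in the required sample complexity rather than a polynomial factor, which is why the uniform pointwise bound from the addition formula (as opposed to a generic Nash–Szegő-type bound on individual harmonics) is essential; without it one would pay an extra factor of $N(d,q)$ in the variance. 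A natural alternative would be the non-commutative Khintchine inequality applied to a symmetrized version of $\sum_i \bm{Z}_i$, which also delivers the same sharp rate and may be simpler to cite verbatim.
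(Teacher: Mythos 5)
Your argument is correct. Note first that the paper itself supplies no proof of this proposition: it is imported verbatim as Lemma 11 of \cite{ghorbani2021linearized}, so there is no internal proof to compare against; what you have written is a self-contained derivation of the cited fact, and it is essentially the standard one. The two ingredients you isolate are exactly what make it work: orthonormality of the $Y_{k,j}$ in $L^2(\mathbb{S}^d,\rho_{\calX})$ gives $\mathbb{E}[\bm{y}_i\bm{y}_i^\tau]=\mathbf{I}_{N(q)}$, and the addition theorem gives the deterministic identity $\|\bm{y}_i\|_2^2=N(q)$, which yields both the uniform bound $\|\bm{Z}_i\|_{\mathrm{op}}=\max\{N(q)-1,1\}$ and the exact variance $\mathbb{E}[\bm{Z}_i^2]=(N(q)-1)\mathbf{I}_{N(q)}$ (your displayed intermediate expression omits the cross term $-2\bm{y}_i\bm{y}_i^\tau+\mathbf{I}$, but its expectation is $-\mathbf{I}$, so your final formula is right). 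Two minor streamlinings: you can invoke the expectation form of matrix Bernstein, $\mathbb{E}\|\sum_i\bm{Z}_i\|_{\mathrm{op}}\lesssim \sqrt{nN(q)\log N(q)}+N(q)\log N(q)$, which gives $\mathbb{E}\|\boldsymbol{\Delta}_{\leq q}\|_{\mathrm{op}}\lesssim\sqrt{N(q)\log N(q)/n}+N(q)\log N(q)/n=o_d(1)$ directly and avoids the tail-integration step (your tail-integration route is nonetheless fine, since the deterministic bound $\|\boldsymbol{\Delta}_{\leq q}\|_{\mathrm{op}}\leq N(q)+1$ holds and the prefactor $N(q)$ can be absorbed by choosing the constant in $t$ large); and the degenerate case $q=0$ (where $N(q)=1$ and $\log N(q)=0$) should be dispatched separately, since there $Y_{0,1}\equiv 1$ and $\boldsymbol{\Delta}_{\leq 0}=0$ identically, while for $q\geq 1$ one has $N(q)\geq d+1\to\infty$ so the $N(q)^{1-cC^2}$ tail indeed vanishes.
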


\begin{proposition}[Equation (67) and (72) in \cite{ghorbani2021linearized}]\label{prop:eqn_72_in_montanari}
For any fixed integer $q$, there exist constants $\mathfrak{C}_0$ and $\mathfrak{C}$ only depending on $q$, such that for any $n, d \geq \mathfrak{C}$, we have
\begin{equation}
\begin{aligned}
\mathbb{E}\left[\left\|\frac{1}{N(d, q)}\boldsymbol{Y}_{q} \boldsymbol{Y}_{q}^{\tau} - \mathbf{I}_{n}\right\|_{\mathrm{op}}\right] 
\leq 
\mathfrak{C}_0 \left\{n^{1/4} \sqrt{\frac{n}{d^{q}}}+
\left(\sum_{v=2}^{4} \left(\frac{n}{d^q}\right)^{v} \right)^{1/4}
\right\}.
\end{aligned}
\end{equation}
\end{proposition}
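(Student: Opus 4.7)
The plan is to combine the addition formula for spherical harmonics with a moment/trace method on the off-diagonal part of the matrix. First, by the addition formula, $[N(d,q)^{-1}\boldsymbol{Y}_q \boldsymbol{Y}_q^{\tau}]_{ij} = P_q^{(d)}(\langle x_i, x_j\rangle)$, where $P_q^{(d)}$ is the (normalized) Gegenbauer polynomial of degree $q$ with $P_q^{(d)}(1)=1$. Hence the diagonal of $N(d,q)^{-1}\boldsymbol{Y}_q \boldsymbol{Y}_q^{\tau} - \mathbf{I}_n$ vanishes identically, and we only need to control the operator norm of the random matrix $\mathbf{M}$ with entries $M_{ij} = P_q^{(d)}(\langle x_i, x_j\rangle)\mathbf{1}\{i \neq j\}$, where $x_1,\ldots,x_n$ are i.i.d.\ uniform on $\mathbb{S}^d$.

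Next, I would record the two algebraic facts that drive the whole argument: (i) $\mathbb{E}_{x}[P_q^{(d)}(\langle u, x\rangle)] = 0$ for every fixed $u$ and every $q \geq 1$, by orthogonality of spherical harmonics of different degrees; and (ii) the reproducing/product rule
\begin{equation*}
\int_{\mathbb{S}^d} P_q^{(d)}(\langle u, v\rangle)\, P_q^{(d)}(\langle v, w\rangle)\, \mathrm{d}\sigma(v) \;=\; N(d,q)^{-1}\, P_q^{(d)}(\langle u, w\rangle).
\end{equation*}
Combined with the pointwise bound $|P_q^{(d)}(t)|\leq 1$ on $[-1,1]$ (Cauchy--Schwarz applied to the addition formula), these identities reduce every Gaussian-style moment of $\mathbf{M}$ to a closed-walk counting problem.

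Third, I would run the trace method: for each integer $k\geq 1$,
\begin{equation*}
\mathbb{E}\|\mathbf{M}\|_{\mathrm{op}}^{2k}
\;\leq\;\mathbb{E}\,\mathrm{tr}(\mathbf{M}^{2k})
\;=\;\sum_{i_1,\ldots,i_{2k}} \mathbb{E}\!\left[\,\prod_{j=1}^{2k} M_{i_j i_{j+1}}\right],
\end{equation*}
with indices cyclic. Classifying terms by the multigraph spanned by $(i_1,\ldots,i_{2k})$ and iteratively integrating out vertices of degree two using the product rule (i) and (ii), every closed walk of length $2k$ in which some $x$-variable appears exactly once yields zero, and each "leaf integration" costs a factor $N(d,q)^{-1}\asymp_q d^{-q}$. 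A standard Wigner-style bookkeeping then gives
\begin{equation*}
\mathbb{E}\,\mathrm{tr}(\mathbf{M}^{2k}) \;\lesssim_{q,k}\; \sum_{v=2}^{2k} n^{v}\, d^{-q v},
\end{equation*}
since the dominant contribution for each "shape" with $v$ distinct vertices contributes $n^v$ indexing choices and $d^{-qv}$ from the repeated reproducing integrations. Taking $2k=4$ and applying $\mathbb{E}\|\mathbf{M}\|_{\mathrm{op}} \leq (\mathbb{E}\,\mathrm{tr}(\mathbf{M}^{4}))^{1/4}$ produces the second term $\bigl(\sum_{v=2}^{4}(n/d^q)^v\bigr)^{1/4}$. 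The first term $n^{1/4}\sqrt{n/d^q}$ is obtained separately by the crude row-sum bound $\|\mathbf{M}\|_{\mathrm{op}}\leq (\sum_{i,j} M_{ij}^2)^{1/2}$, applying Cauchy--Schwarz and the identity (ii) once to each squared entry; this gives $\mathbb{E}\|\mathbf{M}\|_{\mathrm{op}}^{2}\lesssim n^{2}/N(d,q)\asymp n^{2}/d^{q}$, which, after balancing with the moment-method bound via $\mathbb{E}\|\mathbf{M}\|_{\mathrm{op}} \leq (\mathbb{E}\|\mathbf{M}\|_{\mathrm{op}}^{2})^{1/2}\cdot(\mathbb{E}\|\mathbf{M}\|_{\mathrm{op}})^{0}$ re-expressed via an $L^{1/2}$-interpolation, contributes the $n^{1/4}\sqrt{n/d^q}$ factor.

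The main obstacle will be the combinatorial accounting for closed walks in which the same vertex is visited multiple times in a non-tree pattern, because only walks whose incidence graph is a tree fully collapse under the reproducing identity; cycle contributions must be bounded using $|P_q^{(d)}|\leq 1$ and the multiplicity of shapes must be shown to depend only on $q$ and $k$, not on $d$. A secondary technical point is verifying that the threshold $n,d\geq\mathfrak{C}$ suffices to absorb all $q$-dependent prefactors arising from the finite number of walk shapes of length $\leq 8$, so that the constant $\mathfrak{C}_0$ indeed depends only on $q$.
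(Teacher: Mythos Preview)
The paper does not give its own proof of this proposition; it simply cites equations (67) and (72) of Ghorbani--Mei--Misiakiewicz--Montanari and uses the result as a black box. Your overall plan (addition formula + Gegenbauer reproducing identity + trace method on the off-diagonal matrix $\mathbf{M}$) is exactly the approach that paper takes, so in spirit you are aligned with the cited source.

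There are, however, two concrete inaccuracies in your execution. First, the power counting ``each shape with $v$ distinct vertices contributes $n^{v}d^{-qv}$'' is off by one factor of $d^{q}$: a closed walk of length $4$ on $v$ distinct vertices, after iteratively integrating degree-two vertices via the reproducing identity, picks up $N(d,q)^{-(v-1)}$ rather than $N(d,q)^{-v}$, so the contribution is $n^{v}N^{-(v-1)}=n\,(n/d^{q})^{v-1}$. This matters because the overall factor of $n$ in front is what, after taking the fourth root, produces the $n^{1/4}$ prefactor you are trying to explain. In particular, the leading term $n^{1/4}\sqrt{n/d^{q}}$ is exactly $[n\,(n/d^{q})^{2}]^{1/4}$, which arises from the $v=3$ walks (and from $v=2$ once one uses the sharper bound $\mathbb{E}[P_q^{4}]\lesssim_q N(d,q)^{-2}$ rather than the crude $|P_q|\leq 1$ estimate); it is \emph{not} a separate Frobenius-norm term.

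Second, the paragraph deriving $n^{1/4}\sqrt{n/d^{q}}$ via ``$\mathbb{E}\|\mathbf{M}\|_{\mathrm{op}}\leq(\mathbb{E}\|\mathbf{M}\|_{\mathrm{op}}^{2})^{1/2}\cdot(\mathbb{E}\|\mathbf{M}\|_{\mathrm{op}})^{0}$ re-expressed via an $L^{1/2}$-interpolation'' does not make sense: that inequality is a tautology (the second factor is $1$), and no interpolation can turn the Frobenius bound $n/d^{q/2}$ into $n^{3/4}/d^{q/2}$. You should drop this argument entirely and instead obtain both displayed terms from a single, carefully-counted evaluation of $\mathbb{E}\,\mathrm{tr}(\mathbf{M}^{4})$, using the sharper fourth-moment bound on $P_q^{(d)}$ for the $v=2$ case.
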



\begin{proposition}[Proposition 3 in \cite{ghorbani2021linearized}]\label{prop:prop_3_in_montanari}
    If $n \ll d^{q-\delta_1}$ for a fixed integer $q$ and a fixed constant $\delta_1>0$, then we have
\begin{equation}
\begin{aligned}
\lim _{d, n \rightarrow \infty} \mathbb{E}\left[\left\|\frac{1}{N(d, q)}\boldsymbol{Y}_{q} \boldsymbol{Y}_{q}^{\tau}-\mathbf{I}_{n}\right\|_{\mathrm{op}}\right]=0.
\end{aligned}
\end{equation}
\end{proposition}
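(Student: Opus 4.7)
The plan is to reduce the question to a concentration bound for a symmetric random matrix whose entries are a degree-$q$ polynomial of $\langle x_i,x_j\rangle$, and then to control its operator norm by a moment method. First I would invoke the addition theorem for spherical harmonics to rewrite $(\boldsymbol{Y}_q\boldsymbol{Y}_q^\tau)_{ij}=N(d,q)\,Q_q(\langle x_i,x_j\rangle)$, where $Q_q$ is the suitably normalised Gegenbauer polynomial satisfying $Q_q(1)=1$ together with the orthogonality relations $\mathbb{E}_y\!\left[Q_q(\langle x,y\rangle)\right]=0$ and $\mathbb{E}_{x,y}\!\left[Q_q(\langle x,y\rangle)^2\right]=1/N(d,q)$. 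Consequently $A:=\tfrac{1}{N(d,q)}\boldsymbol{Y}_q\boldsymbol{Y}_q^\tau-\mathbf{I}_n$ has vanishing diagonal and off-diagonal entries $Q_q(\langle x_i,x_j\rangle)$, so the problem becomes: show that $\mathbb{E}\|A\|_{\mathrm{op}}\to 0$ whenever $n\ll d^{q-\delta_1}$.

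Next I would rule out the naive Frobenius bound. By pairwise orthogonality one gets $\mathbb{E}\|A\|_{F}^2\le n(n-1)/N(d,q)\asymp n^2/d^{q}$, which forces $A\to 0$ only in the regime $n\ll d^{q/2}$. To reach $n\ll d^{q-\delta_1}$ one must exploit cancellations between entries, and the natural tool is the trace/moment method: bound $\mathbb{E}\|A\|_{\mathrm{op}}^{2k}\le \mathbb{E}[\mathrm{tr}(A^{2k})]$ for a growing integer $k=k(d)$, then optimise in $k$.

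I would expand $\mathrm{tr}(A^{2k})=\sum_{i_1\neq i_2\neq\cdots\neq i_{2k}\neq i_1}\prod_{\ell=1}^{2k}Q_q(\langle x_{i_\ell},x_{i_{\ell+1}}\rangle)$ as a sum over closed walks of length $2k$ on the vertex set $[n]$ with no consecutive repetitions. The key cancellation is the identity $\mathbb{E}[Q_q(\langle x_i,x_j\rangle)\mid x_j]=0$ for $i\neq j$; conditioning on the $x$'s corresponding to vertices of odd degree in the multigraph of the walk and integrating out the others kills any walk in which some edge is traversed an odd number of times. The surviving walks are therefore those whose underlying multigraph has all even edge multiplicities, and such a walk of length $2k$ visits at most $k+1$ distinct vertices. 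A Wigner-style enumeration (Catalan-type count on rooted plane trees with decorated multiplicities) combined with the bound $|Q_q|_{\infty}\le C$ and $\mathbb{E}[Q_q(\langle x_i,x_j\rangle)^{2m}]\le C_m\,N(d,q)^{-1}$ (each independent spherical pair contributes a factor $1/N(d,q)\asymp d^{-q}$) should yield an estimate of the form $\mathbb{E}[\mathrm{tr}(A^{2k})]\le C^{k}k!\,n^{k+1}/d^{qk}$. Taking the $2k$-th root gives $\mathbb{E}\|A\|_{\mathrm{op}}\lesssim C\sqrt{k}\,n^{\frac{1}{2}+\frac{1}{2k}}d^{-q/2}$, and choosing $k\asymp \log n/\log d$ so that $n^{1/k}=O(1)$ and invoking $n\le d^{q-\delta_1}$ produces $\mathbb{E}\|A\|_{\mathrm{op}}\lesssim d^{-\delta_1/2}\sqrt{\log n/\log d}\to 0$, which is the claim.

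The main obstacle will be the combinatorial step: carefully bounding the number of closed walks of length $2k$ with prescribed vertex count and even edge multiplicities, while respecting the global dependence structure (all entries on a row of $A$ share the same $x_i$). One must argue that after successively conditioning from the leaves of the walk's tree decomposition and applying the conditional orthogonality identity, the only walks that contribute correspond to a ``double tree'' of size $\le k+1$, giving the correct combinatorial factor. A secondary subtlety is ensuring that the polynomial tails of $Q_q(\langle x_i,x_j\rangle)$ (which is sub-exponential with parameter $\asymp d^{-1/2}$) do not spoil the moment bounds, which can be handled via the hypercontractivity of low-degree polynomials of a uniform vector on $\mathbb{S}^d$.
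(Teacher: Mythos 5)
You should first note that the paper you are working from does not prove this statement at all: it is imported verbatim as Proposition 3 of \cite{ghorbani2021linearized}, so the only ``proof'' in the paper is a citation. Your sketch is therefore really a reconstruction of the argument in that reference, and it does follow essentially the same route as the original: reduce via the addition theorem to the Gegenbauer kernel matrix $A$ with entries $Q_q(\langle x_i,x_j\rangle)$ and zero diagonal, observe that the Frobenius bound only reaches $n\ll d^{q/2}$, and then run a trace/moment computation in which the conditional orthogonality $\mathbb{E}_x\left[Q_q(\langle x_i,x\rangle)Q_q(\langle x,x_j\rangle)\right]=\tfrac{1}{N(d,q)}Q_q(\langle x_i,x_j\rangle)$ kills odd-multiplicity walks and each surviving distinct edge contributes roughly $N(d,q)^{-1}\asymp d^{-q}$. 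The combinatorial lemma you flag as the main obstacle (bounding expectations of Gegenbauer products over cycles with the row-wise dependence, handling multiplicities $\ge 4$ via hypercontractivity of low-degree spherical polynomials) is exactly the technical core of the cited proof, so your outline correctly identifies where the work lies but does not replace it.

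One concrete slip you should fix: from $\mathbb{E}\left[\mathrm{tr}(A^{2k})\right]\le C^k k!\,n^{k+1}d^{-qk}$ you take $k\asymp \log n/\log d$ and claim $n^{1/k}=O(1)$; in fact with that choice $n^{1/k}=\exp\!\left(\log n\cdot \log d/\log n\right)=d$, which injects an extra factor $d^{1/2}$ and destroys the conclusion unless $\delta_1>1$ (and note that in the regime $n\asymp d^{\gamma}$ this $k$ is just a bounded constant). The estimate itself is fine; you only need a different choice of $k$: either take $k$ a fixed integer larger than $(q-\delta_1)/\delta_1$, so that $(q-\delta_1)\bigl(\tfrac12+\tfrac{1}{2k}\bigr)-\tfrac{q}{2}<0$, or take $k\asymp\log n$, which gives $n^{1/(2k)}=O(1)$ and the bound $\mathbb{E}\|A\|_{\mathrm{op}}\lesssim\sqrt{\log n}\,d^{-\delta_1/2}\to 0$. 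With that correction, and with the cycle-expectation lemma actually carried out (as in \cite{ghorbani2021linearized}), your plan yields the proposition.
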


\begin{proposition}[Theorem 1 in \cite{xiao2022precise}]\label{prop:thm_1_in_xiao_high_d_kernel}
    If $N(d, 1) / n \to \alpha \in (0, \infty)$, then the empirical spectral distribution of $\boldsymbol{Y}_{1}^\tau \boldsymbol{Y}_{1} / n$ converges in distribution to the Marchenko-Pastur distribution $\mu_{MP}(\alpha)$ defined as (5) in \cite{xiao2022precise}.
\end{proposition}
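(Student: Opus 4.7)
The plan is to reduce the proposition to the classical Marchenko--Pastur theorem for Gaussian matrices by exploiting the explicit form of degree-1 spherical harmonics. On $\mathbb{S}^d$, the orthonormal spherical harmonics of degree $1$ are $Y_{1,j}(x) = \sqrt{d+1}\, x_j$ for $j=1,\dots,d+1$, so $\boldsymbol{Y}_1 = \sqrt{d+1}\, \boldsymbol{X}$ where the rows of $\boldsymbol{X}$ are i.i.d.\ uniform on $\mathbb{S}^d$. Under the scaling $N(d,1)/n = (d+1)/n \to \alpha$, the target matrix becomes $\boldsymbol{Y}_1^{\tau}\boldsymbol{Y}_1/n = (d+1)\,\boldsymbol{X}^{\tau}\boldsymbol{X}/n$, and the claim reduces to an MP law for the Gram matrix of $n$ uniform points on the sphere in ambient dimension $d+1$.

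First, I would use the standard Gaussian representation: write $X_i = G_i/\|G_i\|$ with $G_i \sim N(0, I_{d+1})$ i.i.d., so that $\boldsymbol{X} = \boldsymbol{D}\,\boldsymbol{G}$ where $\boldsymbol{G} \in \mathbb{R}^{n \times (d+1)}$ has i.i.d.\ $N(0,1)$ entries and $\boldsymbol{D}=\mathrm{diag}(1/\|G_i\|)$. Consequently
\begin{equation*}
\frac{\boldsymbol{Y}_1^{\tau}\boldsymbol{Y}_1}{n}
= \frac{d+1}{n}\,\boldsymbol{G}^{\tau}\boldsymbol{D}^2\boldsymbol{G}
= \boldsymbol{G}^{\tau}\!\left(\tfrac{d+1}{n}\,\boldsymbol{D}^2\right)\!\boldsymbol{G}.
\end{equation*}
By the classical Marchenko--Pastur theorem applied to $\boldsymbol{G}/\sqrt{n}$ (with aspect ratio $(d+1)/n \to 1/\alpha$), the empirical spectral distribution (ESD) of $\boldsymbol{G}^{\tau}\boldsymbol{G}/n$ converges weakly almost surely to an explicit MP law. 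It therefore suffices to show that replacing $\boldsymbol{G}^{\tau}\boldsymbol{G}/n$ by $\boldsymbol{G}^{\tau}\bigl((d+1)\boldsymbol{D}^2/n\bigr)\boldsymbol{G}$ does not alter the weak limit.

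The key input is that $\|G_i\|^2/(d+1) \to 1$ a.s.\ with sharp Gaussian concentration (chi-square concentration gives $\max_i \bigl|\,\|G_i\|^2/(d+1)-1\bigr| = O(\sqrt{\log n / d})$ with overwhelming probability). Hence $\boldsymbol{D}^2 = (d+1)^{-1}(I_n + \boldsymbol{E})$ with $\|\boldsymbol{E}\|_{\mathrm{op}} = o(1)$. I would then invoke a perturbation lemma based on the Hoffman--Wielandt inequality (or equivalently the bounded Lipschitz distance between ESDs, see e.g.\ Lemma 2.3 in Bai--Silverstein): for any positive semidefinite $A$ and any diagonal $T$ with $\|T-I\|_{\mathrm{op}} \to 0$, the ESDs of $A$ and $T^{1/2}A T^{1/2}$ have the same weak limit. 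Applying this with $A = \boldsymbol{G}^{\tau}\boldsymbol{G}/n$ and $T=I+\boldsymbol{E}$ (after a symmetrization via the companion matrix $\boldsymbol{G}\boldsymbol{G}^{\tau}/n$, whose non-zero eigenvalues are identical) transfers the MP limit from the Wishart matrix to $\boldsymbol{Y}_1^{\tau}\boldsymbol{Y}_1/n$.

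The main obstacle is the symmetrization step. The natural way to factor out $\boldsymbol{D}$ is to work instead with the $n \times n$ matrix $\boldsymbol{Y}_1\boldsymbol{Y}_1^{\tau}/n = (d+1)\boldsymbol{D}\boldsymbol{G}\boldsymbol{G}^{\tau}\boldsymbol{D}/n$, whose non-zero spectrum coincides with that of the $(d+1)\times(d+1)$ matrix in question up to a mass at zero that is easy to account for under the aspect-ratio condition. On this $n\times n$ side, the sandwich by $\boldsymbol{D}$ with $\|(d+1)^{1/2}\boldsymbol{D} - I_n\|_{\mathrm{op}} = o(1)$ is a genuine two-sided perturbation and the Hoffman--Wielandt bound (or a Stieltjes-transform argument comparing the resolvents $(M-zI)^{-1}$ and $(DMD-zI)^{-1}$ off the real axis) controls the Kolmogorov distance between the two ESDs by a quantity that vanishes. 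Establishing this stability uniformly in $n$ --- and verifying that the independence of $\boldsymbol{D}$ and $\boldsymbol{G}$ is not needed, only joint concentration --- is the delicate point; the remainder is a direct assembly.
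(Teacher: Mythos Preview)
The paper does not prove this proposition at all: it is quoted verbatim as Theorem~1 of \cite{xiao2022precise} and used as a black box in the proof of Lemma~\ref{condition_inner_3} (only to handle the boundary case $\gamma=1$). There is nothing to compare your argument against.

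That said, your sketch is a correct and standard route to the result. The identification $Y_{1,j}(x)=\sqrt{d+1}\,x_j$ is exact, the Gaussian representation $X_i=G_i/\|G_i\|$ is the usual device, and the chi-square concentration indeed gives $\|(d+1)\boldsymbol{D}^2-I_n\|_{\mathrm{op}}=o(1)$ under $(d+1)/n\to\alpha$. The cleanest way to finish is the one you describe in the last paragraph: pass to the companion $n\times n$ matrix, where the perturbation is a genuine two-sided sandwich $T^{1/2}(\boldsymbol{G}\boldsymbol{G}^\tau/n)T^{1/2}$ with $T=(d+1)\boldsymbol{D}^2$; then Weyl's inequality and $\|\boldsymbol{G}\boldsymbol{G}^\tau/n\|_{\mathrm{op}}=O(1)$ (Bai--Yin) give $\max_i|\lambda_i(T^{1/2}WT^{1/2})-\lambda_i(W)|\to 0$, which is more than enough for weak convergence of the ESDs. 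The dependence between $\boldsymbol{D}$ and $\boldsymbol{G}$ is harmless here because you only use the deterministic operator-norm bound on $T-I$ together with the a.s.\ operator-norm bound on $W$, not any joint distributional statement. The zero-eigenvalue bookkeeping between the $(d+1)\times(d+1)$ and $n\times n$ Gram matrices is routine since the mass at zero is $(1-\alpha)_+$ on both sides.
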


\vspace{10pt}

The following proofs aim at bounding the eigenvalues of $K_{\text{main}}$ and $K_{\text{residual}}$. Then,
the bounds on the eigenvalues of $\frac{1}{n} K(\boldsymbol{X}, \boldsymbol{X})$ can be obtained by Weyl's inequality.
Therefore, we split the remaining proofs into three parts.

\vspace{5pt}

\paragraph*{Part I: bounding $K_{\text{main}}$}
Let us consider the singular value decomposition of $\bold{Y}_{\leq p+1}$. That is, $\boldsymbol{Y}_{\leq p+1}=\sqrt{n} \boldsymbol{O} \boldsymbol{S} \boldsymbol{V}^{\tau}$ where $\boldsymbol{O} \in \mathbb{R}^{n \times n}$ and $\boldsymbol{V} \in \mathbb{R}^{N(p+1) \times N(p+1)}$ are orthogonal matrices, and $\boldsymbol{S}=\left[\boldsymbol{S}_{\star} ; \mathbf{0}\right]^\tau \equiv\left[\mathbf{I}_{N(p+1)}+\boldsymbol{\Delta}_s ; \mathbf{0}\right]^\tau \in \mathbb{R}^{n \times N(p+1)}$. 

Notice that we have
$n \gg (p+1)d^{p+1}\log(d)$
    when $\gamma > 1$ and $\gamma \neq 2, 4, 6, \cdots$.
From  Lemma \ref{lemma:inner_mendelson_point_control_assist_2}, we further have $n  \gg N(p+1) \log (N(p+1))$.
Hence, from Proposition \ref{prop:lemma_11_in_montanari} with $q=p+1$, we have 
$\boldsymbol{Y}_{\leq p+1}^{\tau} \boldsymbol{Y}_{\leq p+1} / n=\mathbf{I}_{N(p+1)}+\boldsymbol{\Delta}_{\leq p+1}$, where $\mathbb{E}\left[\|\boldsymbol{\Delta}_{\leq p+1}\|_{\mathrm{op}}\right]=o_d(1)$.
Therefore, we have $\left\|\boldsymbol{\Delta}_s\right\|_{\mathrm{op}}=o_{d, \mathbb{P}}(1)$.

Conditioning on the event $\Omega_1=\{ \left\|\boldsymbol{\Delta}_s\right\|_{\mathrm{op}} \leq 1/4\}$, then we have
\begin{equation}\label{eqn_182_main_part_inner}
    \begin{aligned}
    \lambda_{N(p)}\left(K_{\text{main}}\right)
    =
        &~
        \lambda_{N(p)}\left(\frac{1}{n} \boldsymbol{Y}_{\leq p+1} \boldsymbol{D}_{\leq p+1} \boldsymbol{Y}_{\leq p+1}^{\tau}\right)
        \\
        \overset{\text{Definition of } \boldsymbol{Y}_{\leq p+1}}{=}
        &~
        \lambda_{N(p)}\left(V^\tau \boldsymbol{D}_{\leq p+1} V (\mathbf{I}_{N(p+1)} + \boldsymbol{\Delta}_s + \boldsymbol{\Delta}_s^\tau + \boldsymbol{\Delta}_s\boldsymbol{\Delta}_s^\tau)\right)
        \\
        \overset{\text{Weyl's ineuqality}}{\geq}
        &~
        \frac{7}{16}\lambda_{N(p)}\left( \boldsymbol{D}_{\leq p+1} \right)
        = \frac{7}{16} \mu_p.
    \end{aligned}
\end{equation}
Similarly, we have
\begin{equation}\label{eqn_183_main_part_inner}
    \begin{aligned}
    \lambda_{N(p)+1}\left(K_{\text{main}}\right)
    =
        \lambda_{N(p)+1}\left(V^\tau \boldsymbol{D}_{\leq p+1} V (\mathbf{I}_{N(p+1)} + \boldsymbol{\Delta}_s + \boldsymbol{\Delta}_s^\tau + \boldsymbol{\Delta}_s\boldsymbol{\Delta}_s^\tau)\right)
        \overset{\text{Weyl's ineuqality}}{\leq}
        \frac{25}{16}\mu_{p+1}.
    \end{aligned}
\end{equation}

\vspace{5pt}

\paragraph*{Part II: bounding $K_{\text{residual}}$}

For any $2 \leq k \leq p+1$ and any $\delta$, when $d \geq \mathfrak{C}$, where $\mathfrak{C}$ is a sufficiently large constant only depending on $c_1$, $c_2$, $\delta$, and $p$, from Proposition \ref{prop:eqn_72_in_montanari}, we have
\begin{equation}
    \begin{aligned}
        &~
        \mathbb{E} \left[ \frac{1}{n} \|\boldsymbol{Y}_{p+k} \boldsymbol{D}_{p+k} \boldsymbol{Y}_{p+k}^{\tau}\|_{\mathrm{op}} \right] \\
        \leq
        &~
        \frac{\mu_{p+k}}{n} \mathbb{E} \left[ \|\boldsymbol{Y}_{p+k} \boldsymbol{Y}_{p+k}^{\tau} - N(d, p+k)\mathbf{I}_n\|_{\mathrm{op}} \right] + \frac{\mu_{p+k} N(d, p+k) }{n}
        \\
        \leq
        &~
        \mathfrak{C}_0 \frac{\mu_{p+k} N(d, p+k)}{n} \left\{n^{1/4} \sqrt{\frac{n}{d^{p+k}}}+\frac{n}{d^{p+k}} + 1 \right\}\\
        \leq 
        &~
        \mathfrak{C}_0 \frac{\mathfrak{C}_2^2}{n} \left\{n^{1/4} \sqrt{\frac{n}{d^{p+2}}}+\frac{n}{d^{p+2}} + 1 \right\}\\
        \leq 
        &~
        \frac{\delta}{3p} \mu_{p+1},
    \end{aligned}
\end{equation}
where the second last inequality comes from Lemma \ref{lemma:inner_mendelson_point_control_assist_2}.

For any $k \geq p+2$, if we denote $q=p+k \geq 2p+2$ and $\delta_1=(2p+2-\gamma)/2$, then we have $n \ll d^{q-\delta_1}$. 
Hence, from Proposition \ref{prop:prop_3_in_montanari}, 
we have
\begin{equation}
    \begin{aligned}
        \mathbb{E} \left[ \frac{1}{n} \|\boldsymbol{Y}_{p+k} \boldsymbol{D}_{p+k} \boldsymbol{Y}_{p+k}^{\tau}\|_{\mathrm{op}} \right] =
        &~
        \frac{\mu_{p+k}N(d, p+k)}{n} \left( 1 + o_d(1) \right).
    \end{aligned}
\end{equation}
Therefore, for any $\delta$, when $d \geq \mathfrak{C}$, where $\mathfrak{C}$ is a sufficiently large constant only depending on $c_1$, $c_2$, $\delta$, and $\gamma$, from Markov's inequality we have
\begin{equation}
    \begin{aligned}
    &~\mathbb{P} \left( \|K_{\text{residual}}\|_{\mathrm{op}}
        > \mu_{p+1}
        \right)
        \\
        =
        &~
 \mathbb{P} \left(\frac{1}{n} \|\sum_{k=2}^{\infty} 
        \boldsymbol{Y}_{p+k} \boldsymbol{D}_{p+k} \boldsymbol{Y}_{p+k}^{\tau}\|_{\mathrm{op}}
        > \mu_{p+1}
        \right)\\
        \leq 
        &~
        \left(
        \sum_{k=2}^{p+1} \frac{\delta}{3p} \mu_{p+1} + \frac{2}{n}\sum_{k=0}^{\infty} \mu_{k}N(d, k)
        \right) / (\mu_{p+1})
        \\
        \leq 
        &~
        \left(
        \frac{\delta}{3} \mu_{p+1} + \frac{2}{n}
        \right) / (\mu_{p+1}) < \frac{2\delta}{3}.
    \end{aligned}
\end{equation}

\paragraph*{Part III: bounding the empirical matrix}
When $d \geq \mathfrak{C}$, where $\mathfrak{C}$ is a sufficiently large constant only depending on $c_1$, $c_2$, and $\gamma$, we have $\frac{7}{16} \mu_p
        -\mu_{p+1}
        \geq \frac{1}{4} \mu_p$.

Define the event $\Omega_2 = \left\{
\left\|K_{\text{residual}}\right\|_{\mathrm{op}} \leq \mu_{p+1}
\right\}$. 
Conditioning on the event $\Omega_1 \cap \Omega_2$, then we have
\begin{equation}
    \begin{aligned}
    \widehat\lambda_{N(p)} 
    =
    &~
        \lambda_{N(p)}\left(\frac{1}{n} K(\boldsymbol{X}, \boldsymbol{X})\right)\\
        \overset{\text{Weyl's ineuqality}}{\geq}
        &~
        \lambda_{N(p)}\left(K_{\text{main}}\right)
        -\|K_{\text{residual}}\|_{\mathrm{op}}
        \\
        \overset{\eqref{eqn_182_main_part_inner}}{\geq}
        &~
        \frac{7}{16} \mu_p
        -\mu_{p+1}
        \geq \frac{1}{4} \mu_p.
    \end{aligned}
\end{equation}
Similarly, we have
\begin{equation}
    \begin{aligned}
    \widehat\lambda_{N(p)+1} 
    =
    &~
        \lambda_{N(p)+1}\left(\frac{1}{n} K(\boldsymbol{X}, \boldsymbol{X})\right)\\
        \overset{\text{Weyl's ineuqality}}{\leq}
        &~
        \lambda_{N(p)+1}\left(K_{\text{main}}\right)
        +\|K_{\text{residual}}\|_{\mathrm{op}}\\
        \overset{\eqref{eqn_183_main_part_inner}}{\leq}
        &~
        4\mu_{p+1}.
    \end{aligned}
\end{equation}

Since $\mathbb{P}(\Omega_1 \cap \Omega_2)>1-\delta$, we then get \eqref{eqn:empirical_inner_eigenvalues_gamma_greater_than_4}.

\vspace{10pt}

Next, we consider the case where $\gamma \in (0, 1)$. 
Recall that we have $p=0$ and $\gamma \in (p, p+1)$.
For any integer $q =1, 2, \cdots$, if we denote $\delta_1=(1-\gamma)/2$, 
then we have $n \ll d^{q-\delta_1}$. 
Hence, from Proposition \ref{prop:prop_3_in_montanari}, 
we have
\begin{equation}
\begin{aligned}
\lim _{d, n \rightarrow \infty} \mathbb{E}\left[\left\|\boldsymbol{Y}_{q}
\boldsymbol{D}_{q}
\boldsymbol{Y}_{q}^\tau-\mu_{q}N(d, q) \mathbf{I}_{n}\right\|_{\mathrm{op}}\right]=0, \quad  q=1, 2, \cdots.
\end{aligned}
\end{equation}
Hence, Equation (\ref{eqn_178_inner_decomp_emp_matrix}) can be rewritten as
\begin{equation}
        \frac{1}{n} K(\boldsymbol{X}, \boldsymbol{X}) = \frac{1}{n} \boldsymbol{Y}_{0} \boldsymbol{D}_{0} \boldsymbol{Y}_{0}^{\tau}
        +
        \frac{\kappa_1}{n}\left(\mathbf{I}_n+\boldsymbol{\Delta}_h\right),
    \end{equation}
where $\kappa_q := \sum_{k=q}^{\infty} \mu_k N(d, k) \leq 1$, and $\left\|\boldsymbol{\Delta}_h\right\|_{\mathrm{op}}=o_{d, \mathbb{P}}(1)$. 
Similar as the case for $\gamma>1$, we can get
\begin{equation}
    \begin{aligned}
        \widehat\lambda_{N(0)+1} 
        &<  
        \frac{4}{n}, \quad
         \mu_0 / 4
         <
         \widehat\lambda_{N(0)},
    \end{aligned}
    \end{equation}
with probability at least $1-\delta$.

\vspace{10pt}

Finally, let's consider the case where $\gamma = 1$. For any integer $q =2, 3, \cdots$, if we denote $\delta_1=1/2$, 
then we have $n \ll d^{q-\delta_1}$. 
Hence, from Proposition \ref{prop:prop_3_in_montanari}, 
we have
\begin{equation}
\begin{aligned}
\lim _{d, n \rightarrow \infty} \mathbb{E}\left[\left\|\boldsymbol{Y}_{q}
\boldsymbol{D}_{q}
\boldsymbol{Y}_{q}^\tau-\mu_{q} N(d, q) \mathbf{I}_{n}\right\|_{\mathrm{op}}\right]=0, \quad  q=2, 4, 6, \cdots.
\end{aligned}
\end{equation}
Hence, Equation (\ref{eqn_178_inner_decomp_emp_matrix}) can be rewritten as
\begin{equation}
        \frac{1}{n} K(\boldsymbol{X}, \boldsymbol{X}) = \frac{1}{n} \boldsymbol{Y}_{0} \boldsymbol{D}_{0} \boldsymbol{Y}_{0}^{\tau}
        +
        \frac{1}{n} \boldsymbol{Y}_{1} \boldsymbol{D}_{1} \boldsymbol{Y}_{1}^{\tau}
        +
        \frac{\kappa_2}{n}\left(\mathbf{I}_n+\boldsymbol{\Delta}_h\right);
    \end{equation}
Furthermore, from Proposition \ref{prop:thm_1_in_xiao_high_d_kernel}, for any $\delta$, there exist two constant $\mathfrak{C}''$ and $\mathfrak{C}_1$ only depending on $c_1$, $c_2$, and $\delta$, such that when $d \geq \mathfrak{C}''$, we have
$$
\mathbb{P}\left( \frac{1}{n} \|\boldsymbol{Y}_{1} \boldsymbol{D}_{1} \boldsymbol{Y}_{1}^{\tau}\|_{\mathrm{op}} \geq \mathfrak{C}_1\mu_1 \right) \leq \frac{\delta}{2}.
$$

For any given $\delta>0$, 
let $d \geq \mathfrak{C}=\max\{\mathfrak{C}', \mathfrak{C}''\}$, where $\mathfrak{C}'$ is the constant (only depending on $c_{1}$ and $c_{2}$ ) introduced in Lemma \ref{lemma:inner_mendelson_point_control_assist_2} and $\mathfrak{C}''$ is the constant (only depending on $c_{1}$, $c_{2}$, and $\delta$ ) introduced as the previous paragraph.

Since $n \asymp d$, from Lemma \ref{lemma:inner_mendelson_point_control_assist_2}, we have $\mu_1 \leq \mathfrak C_2 c_2 n^{-1}$.
Similar as the case for $\gamma>1$, we can get
\begin{equation}
    \begin{aligned}
        \widehat\lambda_{N(0)+1} 
        &<  
        \frac{\mathfrak{C}_3}{n}, \quad
         \mu_0 / 4
         <
         \widehat\lambda_{N(0)},
    \end{aligned}
    \end{equation}
with probability at least $1-\delta$, where $\mathfrak{C}_3$ is a constant only depending on $c_1$, $c_2$, and $\delta$.
\hfill $\square$

\vspace{10pt}
\noindent \proofname ~of Lemma \ref{lemma_B_t:thm:empirical_loss_modified}:
The proof is a simple modification of the proof of Lemma \ref{lemma_B_t:thm:empirical_loss}:
\begin{equation}\label{eqn:66_phi_modified}
\begin{aligned}
\mathbf{B}_t^2 
&=
\frac{2}{n} \left\| e^{-t \Sigma} U^\tau f_{\star}(\bm{X}) \right\|^2
\overset{(\ref{eqn:inequality_lemma_B_t:thm:empirical_loss})}{\leq}
\frac{2}{n} \sum_{i=1}^{J} \frac{[U^\tau f_{\star}(\bm{X})]_i^2}{(t\widehat \lambda_i)^2}
+ \frac{1}{n} \sum_{i=J+1}^{n} z_i^2
\\
&=
\frac{1}{nt^2} \sum_{i=1}^{J} \frac{z_i^2}{\widehat \lambda_i^2} + 
\frac{1}{n} \sum_{i=J+1}^{n} z_i^2
=
\frac{1}{t^2} \sum_{i=1}^{J} \frac{\widehat \lambda_i[\Psi^* a]_i^2}{\widehat \lambda_i^2} + \sum_{i=J+1}^{n} \widehat \lambda_i[\Psi^* a]_i^2\\
&\leq \left(\frac{1}{t^2 \widehat\lambda_{J}} + \widehat\lambda_{J+1}\right) \|\Psi^* a\|_2^2\leq \frac{1}{t^2 \widehat\lambda_{J}} + \widehat\lambda_{J+1}.
\end{aligned}
\end{equation}
\hfill $\square$

\vspace{10pt}

\noindent \proofname ~of Lemma \ref{lemma_V_t:thm:empirical_loss_modified}:  Let $H=\left(\bm{I}-e^{-t\Sigma}\right)$ and $P=\sqrt{\frac{2}{n}}H$. Then, $\bm{V}_{t}=\boldsymbol{e}^{\tau}UP^{2}U^{\tau}\boldsymbol{e}
\overset{d}{=}
\boldsymbol{e}^{\tau}P^{2}\boldsymbol{e}$,
where $\boldsymbol{e}=(\boldsymbol{e}_1, \cdots, \boldsymbol{e}_n)^\tau$ and $\boldsymbol{e}_{i}=\bm{y}_{i}-f_{\star}(X_{i})\sim N(0,\sigma^{2})$ for any $1\leq i \leq n$. 
Applying Lemma \ref{lemma:bound_in_Wright} with $A=P^{2}$, 
$\delta=\delta_2$, and
$Q=\sum_{i, j=1}^n a_{i j}\boldsymbol{e}_i \boldsymbol{e}_j \overset{d}{=} \bm{V}_{t}$, 
we then have that
\begin{equation}\label{eqn:70_Vt_modified}
    \begin{aligned}
    |Q-\mathbb{E}[Q]| \leq 
    \delta_2,
\end{aligned}
\end{equation}
holds with probability at least $1-\exp\left(-\mathfrak{c}_{1}\min\left\{\frac{\delta_2}{\|A\|_{op}}, \frac{\delta_2^2}{\|A\|^{2}_{F}}\right\}\right)$ where $\mathfrak{c}_1$ is a constant only depending on $\sigma$, and the randomness comes from the noise term $\boldsymbol{e}$.

It is easy to verify that $\|H\|_{op}\leq 1$ , $\|A\|_{op} \leq \frac{2}{n}$ and 
\begin{align}
tr(H^2)\overset{(\ref{eqn:inequality_lemma_B_t:thm:empirical_loss})}{\leq}
        \sum_{j}\left(1\wedge {t}\widehat{\lambda}_{j}\right)^2
        \leq t^2\left(\frac{J}{t^2} +   \widehat{\lambda}_{J+1}\sum_{j=J+1}^{n}\widehat{\lambda}_{j} \right)
        \leq 
        t^2\left(\frac{J}{t^2} +   \widehat{\lambda}_{J+1}\right).
\end{align} 
Thus, we have
\begin{equation}
    \begin{aligned}
         \|A\|_{F}^2 &=tr(P^{4})=\frac{4}{n^{2}}tr(H^{4})\leq\frac{4}{n^{2}}tr(H^2)
        \leq   \frac{4t^2}{n^{2}}\left(\frac{J}{t^2} +   \widehat{\lambda}_{J+1}\right),
        \\
         \bbE[Q] &=
         \bbE[\mathbf{V}_{t}]
         =\frac{2\sigma^2}{n} tr\left(\left(\mathbf{I}-e^{-{t} \Sigma}\right)^2\right) 
        \leq 
        \frac{2\sigma^2}{n} t^2\left(\frac{J}{t^2} +   \widehat{\lambda}_{J+1}\right);
    \end{aligned}
\end{equation}

From (\ref{eqn:70_Vt_modified}), we know that there exists an absolute constant $C$, such that we have
\begin{equation}
\begin{aligned}
\mathbf{V}_{t} \leq \mathbb{E}[Q] + 
\delta_2
\leq
\frac{2\sigma^2}{n} t^2\left(\frac{J}{t^2} +   \widehat{\lambda}_{J+1}\right) +
\delta_2,
\end{aligned}
\end{equation}
with probability at least $1-\exp \left(-C \min\left\{\frac{n\delta_2}{2}, \frac{n^2\delta_2^2}{ 4t^2\left(\frac{J}{t^2} +   \widehat{\lambda}_{J+1}\right)}\right\}
\right)$.
\hfill $\square$

\section{Properties of the inner product kernels}\label{app:eigenvalues_of_kernels}

\subsection{Mercer decomposition of the inner product kernels on the sphere}\label{append:intro_sphere_eigenvalues}

For inner product kernels on the sphere, Mercer's decomposition (\ref{eqn:mercer_decomp}) can be expressed in the basis of spherical harmonics \cite{scholkopf2002learning, smola2000regularization}. This allows for the eigenvalues of such kernels to be computed. In this subsection, we will briefly review the Mercer decomposition corresponding to inner product kernels on the sphere. See \cite{gallier2009notes, Bietti_on_2019} for references.

Let $\rho_{\calX}$ be the
uniform measure on $\mathbb S^{d}$, 
and let's assume that $K^{\inner}$ is an {\it inner product kernel} defined on $\mathbb S^{d}$, that is , there exists a function $\Phi:\mathbb S^{d} \to [-1,1]$, such that for any $\boldsymbol{x}, \boldsymbol{x}^\prime \in \mathbb S^{d}$, we have $K^{\inner}(\boldsymbol{x}, \boldsymbol{x}^\prime) = \Phi(\left\langle \boldsymbol{x}, \boldsymbol{x}^\prime \right\rangle)$.

Similar to (\ref{eqn:mercer_decomp}), Mercer's decomposition for the inner product kernel ${K}^{\inner}$ is given in the basis of spherical
harmonics :
\begin{equation}\label{spherical_decomposition_of_inner_product_kernel}
\begin{aligned}
{K}^{\inner}(\boldsymbol{x},\boldsymbol{x}^\prime) = \sum_{k=0}^{\infty} \mu_{k} \sum_{j=1}^{N(d, k)} Y_{k, j}(\boldsymbol{x}) Y_{k, j}\left(\boldsymbol{x}^\prime\right),
\end{aligned}
\end{equation}
where $Y_{k, j}$ for $j=1, \cdots, N(d, k)$ are spherical harmonic polynomials of degree $k$, $\mu_{k}$'s are the eigenvalues of   $K^{\inner}$ with multiplicity $N(d, k)$, where $N(d, 0)=1$, and $N(d, k)
     = \frac{2k+d-1}{k} \cdot \frac{(k+d-2)!}{(d-1)!(k-1)!}$ for any $k = 1, \cdots$.

By known results on spherical harmonics, the eigenvalues $\mu_{k}$'s have the following explicit expression \cite{Bietti_on_2019}:
\begin{equation}\label{eqn:explicit_eigenvalues_inner_product_kernel}
\begin{aligned}
\mu_k=\frac{\omega_{d-1}}{\omega_{d}} \int_{-1}^1 \Phi(t) P_k(t)\left(1-t^2\right)^{(d-2) / 2} ~\mathsf{d} t,
\end{aligned}
\end{equation}
where $P_k$ is the $k$-th Legendre polynomial in dimension $d+1$, $\omega_{d}
$ denotes the surface of the sphere $\mathbb{S}^{d}$.

\subsection{Maximum value of NTK}

The following lemma is a direct result of (10) and (11) in \cite{li2023statistical}.

\begin{proposition}\label{ntk_satisfies_assumption_1}
    We have 
\begin{equation}
\begin{aligned}
    \max_{ \boldsymbol{x} \in \mathbb S^{d}} K^{\NTK}(\boldsymbol{x},\boldsymbol{x}) \leq \kappa,
    \end{aligned}
\end{equation}
where $\kappa$ is a constant only depending on the number of hidden layers $L$.
\end{proposition}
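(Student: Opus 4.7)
Since the statement is asserted to follow directly from equations (10)–(11) of \cite{li2023statistical}, the plan is to unfold the recursive definition of the ReLU neural tangent kernel and evaluate it on the diagonal of $\mathbb{S}^{d}$. Recall that the NTK of an $L$-hidden-layer fully connected ReLU network is defined recursively through covariance kernels $\Sigma^{(\ell)}(\boldsymbol{x},\boldsymbol{x}')$ and their derivatives $\dot{\Sigma}^{(\ell)}(\boldsymbol{x},\boldsymbol{x}')$, where $\Sigma^{(0)}(\boldsymbol{x},\boldsymbol{x}')=\langle \boldsymbol{x},\boldsymbol{x}'\rangle$ and each subsequent layer applies a Gaussian expectation against the ReLU activation; the NTK itself is the sum over layers $\ell=1,\dots,L$ of products of the $\Sigma$'s and $\dot{\Sigma}$'s.

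First, I would observe that on the diagonal $\boldsymbol{x}=\boldsymbol{x}'$ with $\boldsymbol{x}\in\mathbb{S}^{d}$, positive $1$-homogeneity of ReLU and rotational invariance of the Gaussian in the inner expectation imply that $\Sigma^{(\ell)}(\boldsymbol{x},\boldsymbol{x})$ depends only on $\Sigma^{(\ell-1)}(\boldsymbol{x},\boldsymbol{x})$, not on the ambient dimension $d$ or the direction of $\boldsymbol{x}$. Starting from $\Sigma^{(0)}(\boldsymbol{x},\boldsymbol{x})=1$, a straightforward induction (using the explicit arc-cosine formula at angle $0$, which equals $1/2$ up to normalization) gives $\Sigma^{(\ell)}(\boldsymbol{x},\boldsymbol{x})=c_\ell$ and $\dot{\Sigma}^{(\ell)}(\boldsymbol{x},\boldsymbol{x})=\dot c_\ell$ for absolute constants $c_\ell,\dot c_\ell$ that depend only on $\ell$. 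This exactly reproduces equations (10)–(11) in \cite{li2023statistical}.

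Next, I would substitute these diagonal values into the layer-wise sum defining $K^{\NTK}$ on the diagonal, obtaining
\begin{equation*}
K^{\NTK}(\boldsymbol{x},\boldsymbol{x})=\sum_{\ell=1}^{L}\Sigma^{(\ell-1)}(\boldsymbol{x},\boldsymbol{x})\prod_{\ell'=\ell}^{L}\dot{\Sigma}^{(\ell')}(\boldsymbol{x},\boldsymbol{x})+\Sigma^{(L)}(\boldsymbol{x},\boldsymbol{x}),
\end{equation*}
which by the previous step equals a finite sum of products of the $c_\ell$'s and $\dot c_\ell$'s. Defining $\kappa$ to be this explicit finite combination (itself a function only of $L$), the bound $\max_{\boldsymbol{x}\in\mathbb{S}^{d}}K^{\NTK}(\boldsymbol{x},\boldsymbol{x})\leq\kappa$ follows, because the right-hand side is in fact constant in $\boldsymbol{x}$.

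The main (and essentially only) obstacle is bookkeeping: one has to be careful that normalization conventions in the NTK recursion match those used in Section 3.1 of \cite{li2023statistical} so that the constants $c_\ell,\dot c_\ell$ genuinely do not absorb any hidden $d$-dependence. Since this normalization is exactly the content of equations (10)–(11) in \cite{li2023statistical}, citing them closes the argument with no further work.
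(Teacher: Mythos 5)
Your proposal is correct and matches the paper's route: the paper itself offers no argument beyond citing equations (10)–(11) of \cite{li2023statistical}, and your unfolding of the NTK recursion on the diagonal (constant $\Sigma^{(\ell)}(\boldsymbol{x},\boldsymbol{x})$, $\dot\Sigma^{(\ell)}(\boldsymbol{x},\boldsymbol{x})$ by homogeneity and rotational invariance, then summing over layers) is exactly the computation those equations encode. You simply supply the details the paper leaves to the citation, so no gap remains.
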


\subsection{Calculation of $N(d, k)$}

\begin{lemma}\label{lemma_eigenvalue_multiplicity_ntk}
Let $N(d, k)$ be defined as (\ref{spherical_decomposition_of_inner_main}). Then 
there exist absolute constants $C_1, C_2$, such that 
for any $k=1, 2, 3, \cdots$ and any $d$, we have
\begin{equation}\label{eqn:lemma_eigenvalue_multiplicity_ntk}
\begin{aligned}
    C_1 \cdot (2k+d) \frac{(k+d)^{k+d-3/2}}{k^{k+1/2} d^{d-1/2}}
    &\leq
    N(d, k) 
    \leq
    C_2 \cdot (2k+d) \frac{(k+d)^{k+d-3/2}}{k^{k+1/2} d^{d-1/2}}.
\end{aligned}
\end{equation}
\end{lemma}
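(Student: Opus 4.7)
The plan is to prove Lemma \ref{lemma_eigenvalue_multiplicity_ntk} by direct computation from the closed form
$$N(d,k)=\frac{2k+d-1}{k}\cdot\frac{(k+d-2)!}{(d-1)!(k-1)!},$$
applying Robbins' refined Stirling bound $c_1\sqrt{n}(n/e)^n\le n!\le c_2\sqrt{n}(n/e)^n$ (valid for every $n\ge 1$ with absolute constants) to each factorial, and then matching the resulting expression to the stated form. The only non-routine ingredients will be the observation that for each fixed constant $a$ the factor $(1-a/n)^n$ is trapped between two positive absolute constants as soon as $n\ge 2a$, so all the $(n-a)^{n-a}$ terms can be converted to $n^{n-a}$ at the cost of an absolute multiplicative constant.

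In detail, assume first $k,d\ge 2$ so that all three factorials have argument $\ge 1$ and Stirling applies. The exponentials of $e$ in the three Stirling expressions cancel since $-(k+d-2)+(d-1)+(k-1)=0$, leaving
$$\frac{(k+d-2)!}{(d-1)!(k-1)!}\asymp \sqrt{\tfrac{k+d-2}{(d-1)(k-1)}}\cdot\frac{(k+d-2)^{k+d-2}}{(d-1)^{d-1}(k-1)^{k-1}}.$$
Using $\sqrt{n-a}\asymp\sqrt{n}$ and $(n-a)^{n-a}=n^{n-a}\bigl(1-a/n\bigr)^{n-a}\asymp n^{n-a}$ on each of the three factors, the right-hand side becomes, up to absolute constants,
$$\sqrt{\tfrac{k+d}{dk}}\cdot\frac{(k+d)^{k+d-2}}{d^{d-1}k^{k-1}}=\frac{(k+d)^{k+d-3/2}}{d^{d-1/2}k^{k-1/2}}.$$
Multiplying by $(2k+d-1)/k\asymp(2k+d)/k$ produces exactly $(2k+d)(k+d)^{k+d-3/2}/(k^{k+1/2}d^{d-1/2})$, giving the two-sided bound.

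The small-index edge cases must be handled separately because Stirling fails at $n=0$. For $d=1$ one has $N(1,k)=2$; computing the claimed bound, $(k+1)^{k-1/2}/k^{k+1/2}=(1+1/k)^{k-1/2}/k\asymp e/k$, so the bound reduces to $C(2k+1)\cdot e/k\asymp C$, consistent with $N(1,k)=2$. For $k=1$, $N(d,1)=d+1$, and $(d+1)^{d-1/2}/d^{d-1/2}=(1+1/d)^{d-1/2}\asymp e$, so the stated expression is $\asymp (d+2)\cdot e\asymp d$, matching $d+1$. The remaining corner $k=d=1$ is verified directly. The main obstacle is simply bookkeeping: keeping the constants absolute (independent of $k$ and $d$) through the several conversions of $(n-a)^{n-a}$ to $n^{n-a}$, which I will control by picking a uniform threshold $n\ge 2$ in each factor and merging the remaining small cases into the edge-case analysis above.
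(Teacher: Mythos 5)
Your proposal is correct and follows essentially the same route as the paper's proof: apply Stirling to the factorial ratio $\frac{(k+d-2)!}{(d-1)!(k-1)!}$, absorb the shift-by-a-constant in each base (the $(1-a/n)^{n-a}\asymp 1$ observation, which the paper phrases via $(x+1)^{x+1/2}/x^{x+1/2}\to e$), and treat $k=1$ separately using $N(d,1)=d+1$. Your use of the non-asymptotic Robbins form of Stirling together with the explicit $d=1$ and $k=d=1$ cases is a slightly more careful bookkeeping of the absolute constants than the paper's ``when both $k$ and $d$ are large'' phrasing, but it is the same argument.
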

\proof 
From Section 1.6 in \cite{gallier2009notes}, when $k \geq 2$, we have
\begin{equation}\label{eqn:224_n_def}
    \begin{aligned}
    N(d, k)
     &= \frac{2k+d-1}{k(k+d-1)} \cdot \frac{(k+d-1)!}{(d-1)!(k-1)!}\\
    &\triangleq  \frac{2k+d-1}{k(k+d-1)} \frac{1}{B(k, d)}.
    \end{aligned}
\end{equation}

From Stirling's approximation we have $x! \sim 
\sqrt{2\pi} x^{x+1/2} e^{-x}$ (meaning that\\ $\lim_{x \to \infty} x! / ( 
\sqrt{2\pi} x^{x+1/2} e^{-x}) = 1$). Moreover, we further have
$$
\lim_{x \to \infty} \frac{(x+1)^{x+1/2}}{x^{x+1/2}} = \lim_{x \to \infty} \left( 1+\frac{1}{x}\right)^{x} \lim_{x \to \infty} \left( 1+\frac{1}{x}\right)^{1/2} = e.
$$
Therefore, when when both $k$ and $d$ are large, we have
\begin{equation}\label{eqn:225_beta_approx}
    \begin{aligned}
    \frac{1}{B(k, d)} &= \frac{(k+d-1)!}{(d-1)!(k-1)!}\\
&\sim \frac{(k+d-1)^{k+d-1/2} e^{-(k+d-1)}}{\sqrt{2\pi}(d-1)^{d-1/2} e^{-(d-1)} (k-1)^{k-1/2} e^{-(k-1)}}\\ 
    &\sim
   \frac{(k+d)^{k+d-1/2}}{\sqrt{2\pi} d^{d-1/2} k^{k-1/2} }.
    \end{aligned}
\end{equation}
Combining (\ref{eqn:224_n_def}) and (\ref{eqn:225_beta_approx}), there exist absolute constants $C_1, C_2$, such that for any $k \geq 2$ and any $d$, we have
\begin{equation}\label{eqn:n_dk_calculate}
    \begin{aligned}
    C_1 \cdot (2k+d) \frac{(k+d)^{k+d-3/2}}{k^{k+1/2} d^{d-1/2}}
    \leq
    N(d, k) 
    \leq
    C_2 \cdot (2k+d) \frac{(k+d)^{k+d-3/2}}{k^{k+1/2} d^{d-1/2}}.
    \end{aligned}
\end{equation}

When $k=1$, from Section 1.6 in \cite{gallier2009notes} we have $N(d, 1)=d+1$, hence (\ref{eqn:n_dk_calculate}) also holds when $k=1$.
\hfill $\square$

\section{Supplementary proofs of Theorem \ref{theorem:restate_norm_diff}}

\subsection{An elementary lemma}

\begin{lemma}\label{lemma:sup_lemma1_for_lemma:bound_empirical_and_expected_mandelson_complexities}
Let $\varepsilon_{n}=\min\left\{\varepsilon ~\mid~ R_{K}(\varepsilon_{n})=\frac{\varepsilon^{2}}{2e\sigma} \right\}$. Then we have
\begin{itemize}
    \item[i)] For any $\varepsilon$ satisfying $R_{K}(\varepsilon)\geq \frac{\varepsilon^{2}}{2e\sigma}$, we have $\varepsilon_{n}\geq \varepsilon$.
    \item[ii)] For any $\varepsilon$ satisfying $R_{K}(\varepsilon)\leq \frac{\varepsilon^{2}}{2e\sigma}$, we have $\varepsilon_{n}\geq \varepsilon$.
\end{itemize}
Similarly, let $\widehat{\varepsilon}_{n}=\min\left\{\varepsilon ~\mid~ \widehat{R}_{K}(\varepsilon_{n})=\frac{\varepsilon^{2}}{2e\sigma} \right\}$. Then we have
\begin{itemize}
    \item[i)] For any $\epsilon>0$,  the inequality $\varepsilon\leq \widehat{\varepsilon}_{n}$ holds  if the  the following event occurs:
    \begin{align}
    \Omega_{1}(\varepsilon)=\left\{\omega ~\big\vert~ \widehat{\mathcal{R}}_{{K}} \left( \varepsilon\right) \geq  \frac{ \varepsilon^2}{2 e \sigma} \right\}.
    \end{align}

    \item[ii)] For any $\epsilon>0$,  the inequality $\varepsilon\leq \widehat{\varepsilon}_{n}$ holds if the  the following event occurs:
    \begin{align}
        \Omega_{2}(\varepsilon)=\left\{\omega ~\big\vert~ \widehat{\mathcal{R}}_{{K}} \left( \varepsilon\right) \leq \frac{ \varepsilon^2}{2 e \sigma} \right\}.
    \end{align}
\end{itemize}
\end{lemma}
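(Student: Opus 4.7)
\medskip

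The plan is to reduce everything to a single monotonicity fact: the ratio $R_K(\varepsilon)/\varepsilon^2$ (and likewise $\widehat R_K(\varepsilon)/\varepsilon^2$) is non-increasing in $\varepsilon > 0$. Granted this, the defining sublevel set $\{\varepsilon : R_K(\varepsilon) \leq \varepsilon^2/(2e\sigma)\}$ is an upward-closed ray of the form $[\varepsilon_n,\infty)$, and both clauses of the lemma are then immediate from the position of a given $\varepsilon$ relative to this threshold. (Remark: the statement of part ii) as written appears to contain a typo and should read $\varepsilon_n \leq \varepsilon$; the proof below treats the two parts as opposite-sided threshold statements, which is the only reading consistent with the definition of $\varepsilon_n$.)

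To establish the monotonicity, I will argue term by term in the series
\[
R_K(\varepsilon)^2 = \frac{1}{n}\sum_{j=1}^{\infty} \min\{\lambda_j, \varepsilon^2\}.
\]
For each fixed $j$, the function $\varepsilon \mapsto \min\{\lambda_j,\varepsilon^2\}/\varepsilon^4$ equals $1/\varepsilon^2$ on $(0,\sqrt{\lambda_j}]$ and $\lambda_j/\varepsilon^4$ on $[\sqrt{\lambda_j},\infty)$, both non-increasing and agreeing at $\varepsilon=\sqrt{\lambda_j}$. Summing over $j$ gives that $R_K(\varepsilon)^2/\varepsilon^4$, and hence $R_K(\varepsilon)/\varepsilon^2$, is non-increasing in $\varepsilon$. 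Exactly the same argument applied to the finite sum defining $\widehat R_K$ gives the analogous monotonicity with probability one.

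With the monotonicity in hand, for part i) suppose $R_K(\varepsilon) \geq \varepsilon^2/(2e\sigma)$, i.e.\ $R_K(\varepsilon)/\varepsilon^2 \geq 1/(2e\sigma)$. For any $\varepsilon' < \varepsilon$, monotonicity yields $R_K(\varepsilon')/\varepsilon'^2 \geq R_K(\varepsilon)/\varepsilon^2 \geq 1/(2e\sigma)$, so every $\varepsilon' < \varepsilon$ fails the defining inequality of $\varepsilon_n$; by the $\arg\min$ definition this forces $\varepsilon_n \geq \varepsilon$. For part ii), if $R_K(\varepsilon) \leq \varepsilon^2/(2e\sigma)$ then $\varepsilon$ itself satisfies the defining inequality, so $\varepsilon_n \leq \varepsilon$ directly from the definition as the minimum.

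The two empirical statements are then deduced sample-path-wise: on the event $\Omega_1(\varepsilon)$ (resp.\ $\Omega_2(\varepsilon)$), $\widehat R_K(\varepsilon)$ satisfies the corresponding inequality, so applying the deterministic arguments above with $R_K$ replaced by $\widehat R_K$ and $\varepsilon_n$ by $\widehat\varepsilon_n$ yields $\widehat\varepsilon_n \geq \varepsilon$ (resp.\ $\widehat\varepsilon_n \leq \varepsilon$). There is no real obstacle here; the only mild subtlety is the equality case in part i), which is handled by noting that continuity of $\varepsilon \mapsto R_K(\varepsilon) - \varepsilon^2/(2e\sigma)$ together with the monotonicity established above implies the $\arg\min$ is attained, so weak inequalities propagate consistently.
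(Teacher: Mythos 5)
Your overall strategy---reduce both claims to monotonicity of a normalized version of $R_K$ and then read off the threshold behaviour---is the same one the paper uses; the only difference is the choice of normalization, and that difference is exactly where your argument has a gap. The paper works with $R_K(\varepsilon)/\varepsilon$, which is non-increasing, against the \emph{strictly increasing} comparison curve $\varepsilon/(2e\sigma)$; for $\delta<\varepsilon$ this gives the strict chain $R_K(\delta)/\delta \geq R_K(\varepsilon)/\varepsilon \geq \varepsilon/(2e\sigma) > \delta/(2e\sigma)$, so every $\delta<\varepsilon$ strictly violates the defining inequality and $\varepsilon_n\geq\varepsilon$ follows. You instead work with $R_K(\varepsilon)/\varepsilon^2$ against the constant level $1/(2e\sigma)$, and you only establish that this ratio is non-increasing. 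From $R_K(\varepsilon')/\varepsilon'^2 \geq 1/(2e\sigma)$ you then assert that ``every $\varepsilon'<\varepsilon$ fails the defining inequality''; but the defining inequality is the weak one $R_K(\varepsilon')\leq \varepsilon'^2/(2e\sigma)$, and a weak $\geq$ does not exclude it. If the ratio had a plateau at exactly the level $1/(2e\sigma)$ on $[\varepsilon',\varepsilon]$, the hypothesis of part i) would hold at $\varepsilon$ while $\varepsilon_n\leq\varepsilon'<\varepsilon$, so the facts you actually proved do not logically yield the conclusion. Your closing remark about ``the equality case'' (continuity, attainment of the arg min) does not repair this, because the issue is strictness of the monotonicity, not attainment.

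The gap is easy to close: each nonzero term $\min\{\lambda_j,\varepsilon^2\}/\varepsilon^4$ is in fact \emph{strictly} decreasing, and since $K$ is a nonzero continuous positive definite kernel it has at least one positive eigenvalue, so $R_K(\varepsilon)/\varepsilon^2$ is strictly decreasing and the plateau scenario cannot occur; alternatively, normalize by $\varepsilon$ as the paper does, so that strictness is carried by the right-hand side. With that one observation added, your proof is correct, and the remaining pieces---part ii) by noting that $\varepsilon$ itself lies in the feasible set (slightly more direct than the paper, which runs the monotone comparison over $\delta>\varepsilon$), the verbatim transfer to $\widehat{\mathcal{R}}_K$ and $\widehat\varepsilon_n$ on the events $\Omega_1,\Omega_2$, and your reading of the sign typos in the statement of ii)---match the paper's treatment.
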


\proof It is clear that $R_{K}(\varepsilon)/\varepsilon=\left(\sum_{i}\frac{\lambda_{i}}{\varepsilon^{2}}\wedge 1\right)^{1/2}$ is a non-increasing function and $\frac{\varepsilon}{2e\sigma}$ is a strictly increasing function.  

If $R_{K}(\varepsilon)\geq \frac{\varepsilon^{2}}{2e\sigma}$, for any $\delta<\varepsilon$, we have
\begin{align}
   \frac{R_{K}(\delta)}{\delta}\geq \frac{R_{K}(\varepsilon)}{\varepsilon}\geq\frac{\varepsilon}{2e\sigma}>\frac{\delta}{2e\sigma}.
\end{align}
Thus, we have $\varepsilon_{n}\geq \varepsilon$.

If $R_{K}(\varepsilon)\leq \frac{\varepsilon^{2}}{2e\sigma}$, for any $\delta>\varepsilon$, we have
\begin{align}
   \frac{R_{K}(\delta)}{\delta}\leq \frac{R_{K}(\varepsilon)}{\varepsilon}\leq \frac{\varepsilon}{2e\sigma}<\frac{\delta}{2e\sigma}.
\end{align}
Thus, we have $\varepsilon_{n}\leq \varepsilon$.

The empirical version can be proved similarly.
\hfill $\square$

\begin{remark}\label{remark_how_to_use_mendelson_complexity}
The Lemma \ref{lemma:sup_lemma1_for_lemma:bound_empirical_and_expected_mandelson_complexities} provides us an easy way to bound the Mendelson complexity $\varepsilon_{n}$ and the empirical Mendelson complexity $\widehat{\varepsilon}_{n}$. For example, if we can find $\varepsilon_{low}$ and $\varepsilon_{upp}$ satisfying that
\begin{align}
    R_{K}(\varepsilon_{low})\geq \frac{\varepsilon_{low}^{2}}{2e\sigma}~\mbox{ and }~    R_{K}(\varepsilon_{upp})\leq \frac{\varepsilon_{upp}^{2}}{2e\sigma},
\end{align}
then we have $\varepsilon_{low}\leq \varepsilon\leq \varepsilon_{upp}$.
\end{remark}

\subsection{Detailed proofs of the Lemmas \ref{thm:empirical_loss}, \ref{corollary:norm_diff}, \ref{lemma:bound_empirical_and_expected_mandelson_complexities} and \ref{lemma: empirical_loss_bound}}\label{appendix_thm_3.3_lemmas}
The purpose of these proofs is to illustrate the constants that appeared in the Lemmas \ref{thm:empirical_loss}, \ref{corollary:norm_diff}, \ref{lemma:bound_empirical_and_expected_mandelson_complexities} and \ref{lemma: empirical_loss_bound}  are 
absolute constants. We included them here for self-content.  

\subsubsection{Proof of Lemma \ref{thm:empirical_loss}}

\proof
From (\ref{ntk:f:flow}) we have
\begin{equation}
\begin{aligned}
f_t(\bm{X})-f_{\star}(\bm{X})
&= \left(\mathbf{I}-e^{-\frac{1}{n}t K(\bm{X}, \bm{X})}\right)\bm{y} -f_{\star}(\bm{X})\\
&=-e^{-\frac{1}{n}t K(\bm{X}, \bm{X})}f_{\star}(\bm{X}) +\left(\mathbf{I}-e^{-\frac{1}{n}t K(\bm{X}, \bm{X})}\right) (\bm{y} -f_{\star}(\bm{X})).
\end{aligned}
\end{equation}
Let $\frac{1}{n} K(\bm{X}, \bm{X})=U \Sigma U^\tau$, where $U$ is an orthogonal matrix, and $\Sigma=diag\{ \widehat\lambda_{1}, \cdots, \widehat \lambda_{n} \}$.
Let $g_t=U^\tau f_t(\bm{X})$, $g^*=U^\tau f_{\star}(\bm{X})$, and $\boldsymbol{e} = \bm{y} -f_{\star}(\bm{X})$, then we have
\begin{equation}\label{eqn:66}
\begin{aligned}
\left\|f_{t} - f_{\star} \right\|_n^2
&= \frac{1}{n} \left\|g_{t} - g^* \right\|^2=  \frac{1}{n} \left\|-e^{-t \Sigma}g^* +\left(\mathbf{I}-e^{-t \Sigma}\right) U^\tau \boldsymbol{e} \right\|^2\\
&\leq \frac{2}{n} \left\| e^{-t \Sigma} g^* \right\|^2 + \frac{2}{n} \left\| \left(\mathbf{I}-e^{-t \Sigma}\right) U^\tau \boldsymbol{e} \right\|^2:= \mathbf{B}_t^2 + \mathbf{V}_t.
\end{aligned}
\end{equation}

We then bound the terms $\mathbf{B}_t^2$ and $\mathbf{V}_t$ based on the proof of Theorem 1 in \cite{raskutti2014early}. We need the following two lemmas:
\begin{lemma}\label{lemma_B_t:thm:empirical_loss}
    For any $t > 0$, we have
\begin{equation}
\begin{aligned}
\mathbf{B}_t^2 \leq \frac{1}{t}.
\end{aligned}
\end{equation}
\proof Deferred to the end of this subsection.
\end{lemma}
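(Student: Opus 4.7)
}

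The plan is to rewrite $\mathbf{B}_t^2$ in terms of the empirical eigenvalues $\widehat\lambda_i$ and a vector of coefficients of norm at most $1$ (controlled by the RKHS norm of $f_\star$), and then bound the resulting expression by the elementary scalar inequality $x e^{-2tx} \le \frac{1}{2et}$ for $x,t>0$.

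First I would express the sampling map as an operator: define $T_n : \mathcal{H} \to \mathbb{R}^n$ by $T_n f = \tfrac{1}{\sqrt n}(f(X_1),\ldots,f(X_n))^\tau$. By the reproducing property one has $T_n f = \tfrac{1}{\sqrt n}(\langle f, K(X_i,\cdot)\rangle_{\mathcal H})_{i}$, so its adjoint satisfies $T_n T_n^* = \tfrac{1}{n} K(\bm X,\bm X)$. Hence the nonzero singular values of $T_n$ are exactly $\{\sqrt{\widehat\lambda_i}\}$, and there exists an isometry $V:\mathcal{H}\to \ell^2$ such that $T_n = U\Sigma^{1/2}V^*$, where $U,\Sigma$ come from the spectral decomposition $\tfrac{1}{n}K(\bm X,\bm X) = U\Sigma U^\tau$ introduced in \eqref{eqn:66}.

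Next I would substitute this factorization into $g^* = U^\tau f_\star(\bm X) = \sqrt n\, U^\tau T_n f_\star = \sqrt n\, \Sigma^{1/2} V^* f_\star$, so that $(g^*_i)^2/n = \widehat\lambda_i (V^*f_\star)_i^2$. Plugging into the definition of $\mathbf{B}_t^2$ from \eqref{eqn:66} yields
\begin{equation*}
\mathbf{B}_t^2 \;=\; \frac{2}{n}\sum_{i=1}^n e^{-2t\widehat\lambda_i}(g^*_i)^2 \;=\; 2\sum_{i=1}^n \widehat\lambda_i\, e^{-2t\widehat\lambda_i}(V^*f_\star)_i^2.
\end{equation*}
Since $V$ is an isometry and $f_\star \in \mathcal B$ satisfies $\|f_\star\|_{\mathcal H}\le 1$, one has $\sum_i (V^*f_\star)_i^2 \le \|f_\star\|_{\mathcal H}^2 \le 1$. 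Combining this with the scalar bound $\sup_{x\ge 0} x e^{-2tx} = \frac{1}{2et}$ gives
\begin{equation*}
\mathbf{B}_t^2 \;\le\; 2\cdot \frac{1}{2et} \cdot \sum_i (V^*f_\star)_i^2 \;\le\; \frac{1}{et} \;\le\; \frac{1}{t},
\end{equation*}
which is the claim.

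The main obstacle I anticipate is making the factorization $T_n = U\Sigma^{1/2}V^*$ genuinely rigorous when $\mathcal H$ is infinite dimensional — specifically, constructing the isometry $V$ so that $V^*$ restricted to $\mathcal H$ really is norm-nonincreasing. This can be settled by the standard finite-rank SVD of the bounded operator $T_n$ (its range has dimension at most $n$), taking $V$ to be the partial isometry from $(\ker T_n)^\perp\subset\mathcal H$ onto $\mathbb R^n$; alternatively, one can avoid operators entirely by truncating Mercer's expansion \eqref{eqn:mercer_decomp}, writing $f_\star = \sum_j a_j\sqrt{\lambda_j}\phi_j$ with $\sum a_j^2\le 1$, and passing to the limit. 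Both routes lead to the same identity $(g^*_i)^2/n = \widehat\lambda_i(V^*f_\star)_i^2$, after which the proof is a one-line application of the elementary inequality above.
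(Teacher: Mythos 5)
Your proof is correct and follows essentially the same route as the paper: the paper factors $f_{\star}(\bm{X})$ through the Mercer feature map, writing $U^\tau f_{\star}(\bm{X})=\sqrt{n}\,\Sigma^{1/2}\Psi^{*}a$ with $\Psi^{*}\Psi=I_n$ and $\|a\|_{\ell^2}\le 1$ (exactly your fallback truncation route, and equivalent to your SVD of the sampling operator $T_n$), so both arguments reduce to $\mathbf{B}_t^2\le 2\sum_i \widehat\lambda_i e^{-2t\widehat\lambda_i}c_i^2$ with $\sum_i c_i^2\le 1$. The only difference is the elementary scalar step — the paper uses $e^{-x}\le 1/x$ to get $1/t$, while your $\sup_{x\ge 0}x e^{-2tx}=1/(2et)$ gives the marginally sharper $1/(et)$ — which is immaterial.
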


Recall that $\widehat{T}=(\widehat{\varepsilon}_{n})^{-2}$ where $\widehat{\varepsilon}_{n}$ is the empirical Mendelson complexity defined by \eqref{eqn:def_empirical_mendelson_complexity}.

\begin{lemma}\label{lemma_V_t:thm:empirical_loss}

    There exists an absolute constant $C$, 
    such that for $\widehat T=(\widehat{\varepsilon}_{n})^{-2}$, we have
\begin{equation}
\begin{aligned}
\mathbf{V}_{\widehat T} \leq \frac{\widehat{\varepsilon}_n^2}{e^2\sigma^2},
\end{aligned}
\end{equation}
with probability at least $1-\exp \left(-C n \widehat{\varepsilon}_n^2\right)$, where the randomness comes from the noise term $\boldsymbol{e}$. 
\proof Deferred to the end of this subsection.
\end{lemma}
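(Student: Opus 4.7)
The quantity $\mathbf{V}_t = \tfrac{2}{n}\|(I-e^{-t\Sigma})U^\tau \boldsymbol{e}\|^2$ is a quadratic form in the Gaussian noise vector $\boldsymbol{e}\sim \mathcal{N}(0,\sigma^2 I_n)$, and since $U$ is orthogonal it is distributionally equal to $\tfrac{2}{n}\sum_{i=1}^n(1-e^{-t\widehat\lambda_i})^2 \boldsymbol{e}_i^2$, a weighted sum of independent chi-squared variables. My plan is to first bound its expectation by the empirical Mendelson complexity, and then apply a Gaussian chaos tail inequality to promote this to a high-probability statement.

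For the expectation, the key elementary input is $(1-e^{-x})^2\leq x\wedge x^2$, which under the substitution $x=t\widehat\lambda_i$ rearranges into $(1-e^{-t\widehat\lambda_i})^2\leq t\,\min\{\widehat\lambda_i,1/t\}$. Summing over $i$ and invoking the definition $\widehat{\mathcal{R}}_K(\varepsilon)^2 = \tfrac{1}{n}\sum_i \min\{\widehat\lambda_i,\varepsilon^2\}$ gives $\mathbb{E}\mathbf{V}_t \leq 2\sigma^2 t\,\widehat{\mathcal{R}}_K(1/\sqrt t)^2$. At the stopping time $t=\widehat T = \widehat\varepsilon_n^{-2}$, the defining identity $\widehat{\mathcal{R}}_K(\widehat\varepsilon_n) = \widehat\varepsilon_n^2/(2e\sigma)$ for the empirical Mendelson complexity collapses this to $\mathbb{E}\mathbf{V}_{\widehat T}\leq \widehat\varepsilon_n^2/(2e^2)$, which is strictly below the target $\widehat\varepsilon_n^2/(e^2\sigma^2)$ with a margin of order $\widehat\varepsilon_n^2$.

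For the deviation, write $\mathbf{V}_t = \boldsymbol{e}^\tau A\,\boldsymbol{e}$ with $A=\tfrac{2}{n}U(I-e^{-t\Sigma})^2 U^\tau$. Its spectral and Frobenius norms satisfy $\|A\|_{\mathrm{op}}\leq 2/n$ and, by the same min-bound used above, $\|A\|_F^2\leq (4/n^2)\sum_i(1-e^{-t\widehat\lambda_i})^2\leq 4t\,\widehat{\mathcal{R}}_K(1/\sqrt t)^2/n$, which at $t=\widehat T$ is of order $\widehat\varepsilon_n^2/(n e^2\sigma^2)$. Plugging these bounds into a Hanson--Wright / Bernstein tail for Gaussian chaos with slack $\tau$ on the scale of $\widehat\varepsilon_n^2$ makes both branches of the resulting Bernstein-type minimum scale like $n\widehat\varepsilon_n^2$, yielding the announced $1-\exp(-Cn\widehat\varepsilon_n^2)$ probability.

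The main obstacle is ensuring that $C$ is genuinely \emph{absolute}, i.e.\ free of any dependence on $\sigma$ or on the spectrum $\{\widehat\lambda_i\}$. This is exactly what the $1/(2e\sigma)$ factor in the definition of the Mendelson complexity buys: the $\sigma^2$ coming from $\mathrm{Var}(\boldsymbol{e}_i^2)$ is cancelled by the $\sigma^{-2}$ hidden inside $\widehat{\mathcal{R}}_K(\widehat\varepsilon_n)^2$ at the fixed point, so the Bernstein variance proxy $\sigma^4\|A\|_F^2$ scales like $\widehat\varepsilon_n^2/n$. Carrying this cancellation through cleanly, so that no $\sigma$- or spectrum-dependent factor leaks into the exponent, is the only delicate point; once it is in place the concentration constant extracted is an absolute number.
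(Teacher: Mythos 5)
Your proposal follows essentially the same route as the paper's proof: write $\mathbf{V}_{\widehat T}$ as a Gaussian quadratic form $\boldsymbol{e}^{\tau}A\,\boldsymbol{e}$ (diagonal in the eigenbasis of $\tfrac1n K(\bm X,\bm X)$), bound $\|A\|_{\mathrm{op}}\le 2/n$ and $\|A\|_{\mathrm{F}}^2$ through $\operatorname{tr}\bigl(\mathbf{I}-e^{-\widehat T\Sigma}\bigr)\le n\widehat T\,\widehat{\mathcal{R}}_{K}(1/\sqrt{\widehat T})^2$, use the fixed-point identity for $\widehat\varepsilon_n$ to control the mean, and invoke a Wright/Hanson--Wright tail with slack of order $\widehat\varepsilon_n^2$, so the exponent is $\asymp n\widehat\varepsilon_n^2$ — exactly the paper's argument. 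One bookkeeping caveat: your (correct) arithmetic gives $\mathbb{E}\,\mathbf{V}_{\widehat T}\le\widehat\varepsilon_n^2/(2e^2)$, which lies below the stated target $\widehat\varepsilon_n^2/(e^2\sigma^2)$ only when $\sigma^2<2$, so the claimed ``margin of order $\widehat\varepsilon_n^2$'' is not available for all $\sigma$; the paper reaches its stated constant only via a slip that bounds $\operatorname{tr}(H)$ by $n\widehat\varepsilon_n^2/(4e^2\sigma^4)$ where the correct power of $\sigma$ is $2$, and similarly the residual $\sigma$-dependence in the exponent (e.g.\ $\sigma^4\|A\|_{\mathrm{F}}^2\lesssim\sigma^2\widehat\varepsilon_n^2/n$) means the concentration constant depends on the fixed $\sigma$ rather than being truly absolute. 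This affects only the numerical constant (downstream only a $\sigma$-dependent multiple of $\widehat\varepsilon_n^2$ is used), not the structure, so with the lemma's constant relaxed accordingly your argument goes through.
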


From the above lemmas, when $t=\widehat T$ ( which is $(\widehat{\varepsilon}_{n})^{-2}$ ), there exist absolute constants  $C_2$ and $C_3$, 
such that we have
\begin{equation}
\left\|f_{\widehat T} - f_{\star} \right\|_n^2
\leq \widehat{\varepsilon}_n^2 + \frac{\widehat{\varepsilon}_n^2}{e^2\sigma^2} \leq \frac{\sigma^2+1}{\sigma^2} \widehat{\varepsilon}_n^2,
\end{equation}
with probability at least $1-C_2 \exp \left(-C_3 n \widehat{\varepsilon}_n^2\right)$.
\hfill $\square$

\vspace{10pt}

\noindent \proofname ~of Lemma \ref{lemma_B_t:thm:empirical_loss}: 
We have the following inequality:
\begin{equation}\label{eqn:inequality_lemma_B_t:thm:empirical_loss}
\begin{aligned}
e^{-t x} \leq \frac{1}{t x} \text { and }(1 \wedge t x) / 2 \leq 1-e^{-t x} \leq 1 \wedge t x.
\end{aligned}
\end{equation}
Define
\begin{equation}\label{eqn:63_phi}
\begin{aligned}
\Phi_{\bm{X}} : \ell^2 &\rightarrow \mathbb R^n,\\
 (a_j) &\rightarrow (\sum_{j} a_j \phi_j(x_1), \cdots, \sum_{j} a_j \phi_j(x_n))^\tau.
\end{aligned}
\end{equation}
Similarly, we define a (diagonal) linear operator $D: \ell^2 \rightarrow \ell^2$ with entries $[D]_{j k}=\lambda_j \delta_{jk}$. Then we have $f_{\star}(\bm{X})=\Phi_{\bm{X}} D^{1 / 2} a$ for some sequence $a \in \ell^2$.
By Mercer's decomposition, we have
\begin{equation}\label{eqn:63.5_phi}
\begin{aligned}
\frac{1}{n}K(\bm{X}, \bm{X})=U \Sigma U^\tau=\frac{1}{n} \Phi_{\bm{X}} D \Phi_{\bm{X}}^\tau, 
\end{aligned}
\end{equation}
and hence there exists an operator
$\Psi: \mathbb{R}^n \mapsto \ell^2$ such that
\begin{equation}\label{eqn:64_phi}
\begin{aligned}
\frac{1}{\sqrt{n}} \Phi_{\bm{X}} D^{1 / 2}=U \Sigma^{1 / 2} \Psi^* \text { and } \Psi^* \circ \Psi=I_n .
\end{aligned}
\end{equation}
Denote
\begin{equation}\label{eqn:65_phi}
\begin{aligned}
(z_1, \cdots, z_n)^\tau = U^\tau f_{\star}(\bm{X}) = U^\tau \Phi_{\bm{X}} D^{1 / 2} a = \sqrt{n} U^\tau U \Sigma^{1 / 2} \Psi^* a = \sqrt{n} \Sigma^{1 / 2} \Psi^* a,
\end{aligned}
\end{equation}
then from (\ref{eqn:inequality_lemma_B_t:thm:empirical_loss}) we have
\begin{equation}\label{eqn:66_phi}
\begin{aligned}
\mathbf{B}_t^2 
&\leq
\frac{2}{n} \sum_{i=1}^{n} \frac{[U^\tau f_{\star}(\bm{X})]_i^2}{2t\widehat \lambda_i}
=
\frac{1}{nt} \sum_{i=1}^{n} \frac{z_i^2}{\widehat \lambda_i}
=
\frac{1}{t} \sum_{i=1}^{n} \frac{\widehat \lambda_i[\Psi^* a]_i^2}{\widehat \lambda_i}
= \frac{1}{t} \|\Psi^* a\|_2^2\leq \frac{1}{t}.
\end{aligned}
\end{equation}
\hfill $\square$

\vspace{10pt}

\noindent \proofname ~of Lemma \ref{lemma_V_t:thm:empirical_loss}:  Let $H=\left(\bm{I}-e^{-\widehat T\Sigma}\right)$ and $P=\sqrt{\frac{2}{n}}H$. Then, $\bm{V}_{\widehat T}=\boldsymbol{e}^{\tau}UP^{2}U^{\tau}\boldsymbol{e}
\overset{d}{=}
\boldsymbol{e}^{\tau}P^{2}\boldsymbol{e}$,
where $\boldsymbol{e}=(\boldsymbol{e}_1, \cdots, \boldsymbol{e}_n)^\tau$ and $\boldsymbol{e}_{i}=\bm{y}_{i}-f_{\star}(X_{i})\sim N(0,\sigma^{2})$ for any $1\leq i \leq n$. Applying Lemma \ref{lemma:bound_in_Wright} with $A=P^{2}$,  $\delta=\widehat{\varepsilon}_n^2/({2e^2\sigma^2})$, and
$Q=\sum_{i, j=1}^n a_{i j}\boldsymbol{e}_i \boldsymbol{e}_j \overset{d}{=} \bm{V}_{{\widehat T}}$, 
we then have that
\begin{equation}\label{eqn:70_Vt}
    \begin{aligned}
    |Q-\mathbb{E}[Q]| \leq \frac{\widehat{\varepsilon}_n^2}{2e^2\sigma^2}
\end{aligned}
\end{equation}
holds with probability at least $1-\exp\left(-\mathfrak{c}_{1}\min\left\{\frac{ \widehat{\varepsilon}_n^2}{2e^2\sigma^2\|A\|_{op}}, \frac{\widehat{\varepsilon}_n^4}{4e^4\sigma^4\|A\|^{2}_{F}}\right\}\right)$ where $\mathfrak{c}_1$ is a constant only depending on $\sigma$, and the randomness comes from the noise term $\boldsymbol{e}$.

It is easy to verify that $\|H\|_{op}\leq 1$ , $\|A\|_{op} \leq \frac{2}{n}$ and 
\begin{align}
tr(H)\overset{(\ref{eqn:inequality_lemma_B_t:thm:empirical_loss})}{\leq}
        \sum_{j}\left(1\wedge {\widehat T}\widehat{\lambda}_{j}\right)
        =n\widehat T\left(\widehat{\mathcal{R}}_{{K}}(1 / \sqrt{{\widehat T}})\right)^2=  \frac{n\widehat{\varepsilon}_n^2}{4e^2 \sigma^4 }.
\end{align} 
Thus, we have
\begin{equation}
    \begin{aligned}
         \|A\|_{F}^2 &=tr(P^{4})=\frac{4}{n^{2}}tr(H^{4})\leq\frac{4}{n^{2}}tr(H)
        \leq   \frac{\widehat{\varepsilon}_n^2}{e^2 \sigma^4 n},
        \\
         \bbE[Q] &=
         \bbE[\mathbf{V}_{\widehat T}]
         =\frac{2\sigma^2}{n} tr\left(\left(\mathbf{I}-e^{-{\widehat T} \Sigma}\right)^2\right) 
        \leq 
        \frac{2\sigma^2}{n} tr\left(\bm{I}-e^{-{\widehat T}\Sigma}\right)
        \leq
        \frac{\widehat{\varepsilon}_n^2}{2e^2 \sigma^2};
    \end{aligned}
\end{equation}

From (\ref{eqn:70_Vt}), we know that there exists an absolute constant $C$, such that we have
\begin{equation}
\begin{aligned}
\mathbf{V}_{\widehat T} \leq \mathbb{E}[Q] + \frac{\widehat{\varepsilon}_n^2}{2e^2\sigma^2} \leq \frac{\widehat{\varepsilon}_n^2}{e^2\sigma^2},
\end{aligned}
\end{equation}
with probability at least $1-\exp \left(-C n \widehat{\varepsilon}_n^2\right)$.
\hfill $\square$

\subsubsection{Proof of Lemma \ref{corollary:norm_diff}}
\proof
For any $M>0$, and any $g \in M \mathcal B$, it is clear that we have 
$$
\|g\|_\infty^2 = \|\left\langle g, K(x, \cdot)\right\rangle\|_\infty^2 \leq \|g\|_{\mathcal{H}}^2 \sup_{x \in \mathcal X} K(x, x) \leq M^2 \kappa.
$$
Thus, if we choose $\varepsilon=\varepsilon_n$ in Lemma \ref{lemma:wainwright14_1}, then we have
$$
Q_n(\varepsilon_n) 
\overset{Lemma \ref{theorem:1}}{\leq}
\sqrt{2} R_{K}(\varepsilon_n)
\overset{(\ref{eqn:def_population_mendelson_complexity})}{=}
 \frac{\sqrt{2}\varepsilon_n^2}{2e\sigma},
$$
and from Lemma \ref{lemma:wainwright14_1}, we have 
$$
\left|\|f\|_n^2-\|f\|_{L^2}^2\right| \leq \frac{1}{2}\|f\|_{L^2}^2+C_1{M^2 \kappa \varepsilon^2} \quad \text { for all } f \in M\calB,
$$
with probability at least $1-C_2 e^{-C_3 n\varepsilon^2}$, where $C_1$, $C_2$, and $C_3$ are absolute constants.
\hfill $\square$

\subsubsection{Proof of Lemma \ref{lemma:bound_empirical_and_expected_mandelson_complexities}}
\proof

Before we start the proof, we need the following three lemmas.

\begin{lemma}\label{lemma:relation_between_Z_and_hat_Z}
Suppose that $w_{i}$ are i.i.d. Rademacher random variables independent of $x_{i}$ and let 
\begin{align*}
   \hat Z_n(w, t):=&\sup _{\substack{g \in {\mathcal B} \\\|g\|_{n} \leq t}}\left|\frac{1}{n} \sum_{i=1}^n w_i g\left(x_i\right)\right| \mbox{~and~}
   Z_n(w, t):=\mathbb{E}_{x_{1},x_{2},...\sim \mu}\left[\sup _{\substack{g \in {\mathcal B}  \\\|g\|_{L^2} \leq t}}\left|\frac{1}{n} \sum_{i=1}^n w_i g\left(x_i\right)\right|\right]
\end{align*}
 where $\|g\|^{2}_{n}=\frac{1}{n}\sum_{j}g(x_{j})^{2}$ and $\mathcal B = \left\{  g \in \mathcal H \mid  \|g\|_{\mathcal H} \leq 1  \right\}$.
For any $c>0$, the event 
\begin{equation}
 \Omega_{3}(c)= \left\{   \begin{aligned}\label{eqn:lemma:relation_between_Z_and_hat_Z}
\widehat{Z}_n(w, c\varepsilon_n)
 &\leq \frac{3}{2} Z_n(w, 2 \max\{c, 1\}\varepsilon_n) + 100 \varepsilon_n^2,\\
 \widehat{Z}_n(w, c\varepsilon_n)
  &\geq \frac{1}{2} Z_n(w, \sqrt{2} c\varepsilon_n/\sqrt{5}) -\sqrt{4.32}c\varepsilon_n^2 - 100 \varepsilon_n^2
\end{aligned}\right\}
\end{equation}
occurs with probability at least $1-5\exp\{-\min\{c^{2}, c^{-2}\} n\varepsilon_n^2\}$.
\proof Deferred to the end of this subsection.
\end{lemma}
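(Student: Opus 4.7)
The plan is to sandwich $\widehat{Z}_n(w,c\varepsilon_n)$ and $Z_n(w,t)$ by separately handling two issues: (a) the localization constraint in $\widehat Z_n$ uses the empirical norm $\|\cdot\|_n$ while $Z_n$ uses the population norm $\|\cdot\|_{L^2}$, and (b) $\widehat Z_n$ is a random variable (as a function of $x_1,\ldots,x_n$) whereas $Z_n$ is its expectation. Issue (a) will be dealt with by Lemma~\ref{corollary:norm_diff} applied with $M=1$ (since $g\in\mathcal B$ means $\|g\|_{\mathcal H}\le 1$); issue (b) will be dealt with by a bounded-differences / Talagrand concentration inequality, using that the bounded summand $w_ig(x_i)/n$ has magnitude controlled by $\|g\|_\infty\le\sqrt{\kappa}\|g\|_{\mathcal H}\le\sqrt{\kappa}$ thanks to Assumption~\ref{assu:trace_class}.

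For the upper-bound half, I would apply Lemma~\ref{corollary:norm_diff} to conclude that, on an event $\Omega^\star$ of probability at least $1-C_2 e^{-C_3 n\varepsilon_n^2}$, every $g\in\mathcal B$ satisfies $\|g\|_{L^2}^2\le 2\|g\|_n^2+2C_1\kappa\varepsilon_n^2$. Consequently, $\|g\|_n\le c\varepsilon_n$ forces $\|g\|_{L^2}\le\sqrt{2c^2+2C_1\kappa}\,\varepsilon_n\le 2\max\{c,1\}\,\varepsilon_n$ after absorbing the absolute constant $\kappa$ (this is where the inflation factor $2\max\{c,1\}$ is arranged). Hence on $\Omega^\star$,
\begin{equation*}
\widehat Z_n(w,c\varepsilon_n)\le \sup_{\substack{g\in\mathcal B\\\|g\|_{L^2}\le 2\max\{c,1\}\varepsilon_n}}\Big|\tfrac{1}{n}\sum_{i=1}^n w_i g(x_i)\Big|.
\end{equation*}
Taking conditional expectation over $(w_i)$, a bounded-differences inequality (Lemma~\ref{lemma:bartlett_thm_2_1} or McDiarmid on the $x_i$'s) concentrates the right-hand side around $Z_n(w,2\max\{c,1\}\varepsilon_n)$ within a constant factor $\tfrac 12 Z_n+100\varepsilon_n^2$, giving the upper bound with failure probability $\exp(-c^2 n\varepsilon_n^2)$ (the $c^2$ enters via the variance proxy when the scale of localization is $\asymp c\varepsilon_n$).

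For the lower-bound half, I would use the other direction of Lemma~\ref{corollary:norm_diff}: $\|g\|_n^2\le\tfrac{3}{2}\|g\|_{L^2}^2+C_1\kappa\varepsilon_n^2$, so $\|g\|_{L^2}\le \sqrt{2}c\varepsilon_n/\sqrt{5}$ implies $\|g\|_n^2\le \tfrac 35 c^2\varepsilon_n^2+C_1\kappa\varepsilon_n^2\le c^2\varepsilon_n^2$ for $c$ large enough — the residual slack is precisely what produces the extra additive $\sqrt{4.32}\,c\varepsilon_n^2$ (carrying through the $3/5$ vs.\ $1$ ratio) and $100\varepsilon_n^2$ (absorbing $C_1\kappa$). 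This set inclusion yields
\begin{equation*}
\widehat Z_n(w,c\varepsilon_n)\ge \sup_{\substack{g\in\mathcal B\\\|g\|_{L^2}\le \sqrt{2}c\varepsilon_n/\sqrt{5}}}\Big|\tfrac{1}{n}\sum_{i=1}^n w_i g(x_i)\Big| - \sqrt{4.32}\,c\varepsilon_n^2,
\end{equation*}
and the same concentration argument (applied in the reverse direction $Z_n-\tfrac 12 Z_n$) produces the lower-bound half with failure probability $\exp(-c^{-2}n\varepsilon_n^2)$; the asymmetry $c^2$ vs.\ $c^{-2}$ reflects which direction of deviation is being controlled (normalizing the supremum by the localization radius $\sim c\varepsilon_n$ flips the role of $c$ in the exponent). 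A union bound over the four events (two from Lemma~\ref{corollary:norm_diff} and two from the concentration step) yields the total failure probability $5\exp\{-\min\{c^2,c^{-2}\}n\varepsilon_n^2\}$.

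The main obstacle is the bookkeeping of absolute constants: the specific numerical values $3/2$, $100$, $\sqrt{4.32}$, and the two asymmetric radii $2\max\{c,1\}\varepsilon_n$ and $\sqrt{2}c\varepsilon_n/\sqrt{5}$ must be traced carefully through (i) the $(1/2,3/2)$ multiplicative factors in Lemma~\ref{corollary:norm_diff}, (ii) the additive $\kappa\varepsilon_n^2$ slack, and (iii) the concentration error. In particular the $\sqrt{4.32}\,c\varepsilon_n^2$ term is delicate — it comes from the mismatch between $\tfrac 35 c^2+$const and $c^2$ when forcing the set containment — so the key technical step is to choose the containment radii tight enough that the offsets close up to the stated constants. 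The concentration step itself is routine once $\|g\|_\infty\le\sqrt{\kappa}$ is invoked, but care must be taken because the supremum is over an $L^2$-ball (not an $\mathcal H$-ball), so one should apply concentration to the $L^2$-localized Rademacher complexity rather than directly to $\widehat Z_n$.
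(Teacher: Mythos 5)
Your high-level architecture — a norm-comparison event from Lemma~\ref{corollary:norm_diff} converting the empirical-norm localization $\|g\|_n\le c\varepsilon_n$ into an $L^2$-localization, followed by concentration of the localized empirical process in the $x_i$'s and a union bound — is the same as the paper's. The genuine gap is in the concentration step. You propose a bounded-differences/McDiarmid argument driven by the sup-norm bound $\|g\|_\infty\le\sqrt{\kappa}$ (and the auxiliary lemma you cite for it is not actually present in the paper's compiled text). With bounded differences of size $2\sqrt{\kappa}/n$, McDiarmid gives a deviation of order $\sqrt{\kappa u/n}$ at confidence $1-e^{-u}$; taking $u\asymp n\varepsilon_n^2$, as the stated failure probability forces, the deviation is of order $\varepsilon_n$, not $\varepsilon_n^2$, so the additive terms $100\varepsilon_n^2$ and $\sqrt{4.32}\,c\varepsilon_n^2$ are unreachable by this route. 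The paper instead applies Massart's Talagrand-type inequality for suprema of bounded empirical processes (Lemma~\ref{lemma:thm3_mas}), whose Bernstein form has variance proxy $\sigma^2=nt^2$ with $t$ the $L^2$-localization radius ($t=2\max\{c,1\}\varepsilon_n$, resp.\ $\sqrt{2}c\varepsilon_n/\sqrt{5}$); then $\sigma\sqrt{\delta}/n\asymp\varepsilon_n^2$ for $\delta\asymp n\varepsilon_n^2$, and together with the multiplicative relaxations $\tfrac32\mathbb{E}[Z]$ and $\tfrac12\mathbb{E}[Z]$ this is exactly what yields $\tfrac32 Z_n+100\varepsilon_n^2$ and $\tfrac12 Z_n-\sqrt{4.32}c\varepsilon_n^2-100\varepsilon_n^2$. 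Your aside about a ``variance proxy'' points in the right direction, but the tools you actually name do not supply it; the Bernstein-type dependence on the localization radius is the crux of the lemma, not routine bookkeeping. Note also that the concentration lemma is stated for $\sup_f\bigl|\sum_i f(x_i)\bigr|$ without signs, so the fixed Rademacher weights must first be absorbed into the function class before applying it.

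A second, related error is your attribution of the term $\sqrt{4.32}\,c\varepsilon_n^2$. On the norm-comparison event the inclusion $\{g\in\mathcal B:\|g\|_{L^2}\le\sqrt{2}c\varepsilon_n/\sqrt{5}\}\subset\{g\in\mathcal B:\|g\|_n\le c\varepsilon_n\}$ is exact, with no additive correction, so nothing of this form arises from the ``$3/5$ versus $1$'' slack you describe. The term comes from the lower-tail variance contribution $\sigma\sqrt{10.8\,\delta}/n$ in Lemma~\ref{lemma:thm3_mas} with $\sigma=\sqrt{n}\cdot\sqrt{2/5}\,c\varepsilon_n$ and $\delta=n\varepsilon_n^2$, since $\sqrt{10.8\cdot 2/5}=\sqrt{4.32}$. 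Relatedly, your ``for $c$ large enough'' qualifier has no place in the final statement, which must hold for every $c>0$; the dependence on $c$ is meant to appear only through the $\min\{c^2,c^{-2}\}$ factor in the exponent.
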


\begin{lemma}\label{lemma:talagrand_inequ_rademacher}
Suppose that $w_{i}$ are i.i.d. Rademacher random variables independent of $x_{i}$ and let
\begin{align*}
    \widehat Q_{n}(t)=\mathbb E_{w} [\widehat Z_{n}(w,t)], \quad Q_{n}(t)=\mathbb E_{w} [Z_{n}(w,t)].
\end{align*}
    There exist absolute constants $C_4$, $C_5$, such that for any
     $t, t_0 >0$, the event
\begin{equation}\label{eqn:concentration_ledoux}
\Omega_{4}(t, t_0)=\left\{\quad \left|\widehat{Z}_n(w, t)-\widehat{\mathcal{Q}}_n(t)\right| \leq t_0, ~ \text { and } 
\left|Z_n(w, t)-\mathcal{Q}_n(t)\right| \leq t_0,\right\}
\end{equation}
occurs with probability at least $1-C_4 \exp \left\{-C_5\frac{n t_0^2}{ t^2}\right\}$.
\proof Deferred to the end of this subsection.
\end{lemma}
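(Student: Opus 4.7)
The plan is to apply a Talagrand-Bousquet style concentration inequality for suprema of Rademacher processes to $\widehat Z_n(w,t)$ and $Z_n(w,t)$, each viewed as a function of the Rademacher vector $w \in \{\pm 1\}^n$. Both quantities are, on the appropriate probability space, suprema of linear functionals of $w$ with controlled $\ell^\infty$ envelope and $\ell^2$ radius, which is precisely the setting in which Talagrand's inequality gives the Bernstein-type tail claimed by the lemma.

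More precisely, for $\widehat Z_n(w,t)$ I would fix $x_1,\dots,x_n$ and write $a(g):=(g(x_1)/n,\dots,g(x_n)/n)\in\mathbb R^n$, so that $\widehat Z_n(w,t)=\sup_{g\in\mathcal B,\,\|g\|_n\leq t}|\langle w,a(g)\rangle|$. Two envelopes are immediate: by Assumption~\ref{assu:trace_class} and the reproducing property, $|g(x_i)|\leq \|g\|_{\mathcal H}\sqrt{K(x_i,x_i)}\leq \sqrt\kappa$ uniformly on $\mathcal B$, giving $\|a(g)\|_\infty\leq \sqrt\kappa/n$; and the constraint $\|g\|_n\leq t$ gives $\|a(g)\|_2^2=\|g\|_n^2/n\leq t^2/n$. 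Talagrand's concentration inequality (in Bousquet's form) for suprema of Rademacher processes with envelope $M=\sqrt\kappa/n$ and variance-proxy $\sigma^2=t^2/n$ then yields
\[
\mathbb P_w\bigl(|\widehat Z_n(w,t)-\widehat{\mathcal Q}_n(t)|\geq t_0\bigr)\leq 2\exp\Bigl\{-c\,\min\Bigl(\frac{nt_0^2}{t^2},\,\frac{nt_0}{\sqrt\kappa}\Bigr)\Bigr\},
\]
so that in the relevant regime $t_0\lesssim t^2/\sqrt\kappa$ the quadratic branch dominates and recovers the advertised bound $\exp\{-C_5 n t_0^2/t^2\}$.

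For $Z_n(w,t)=\mathbb E_x[\sup_{g\in\mathcal B,\,\|g\|_{L^2}\leq t}|n^{-1}\sum w_i g(x_i)|]$ I would run essentially the same argument: condition on $x_1,\dots,x_n$ and apply the Talagrand inequality above (the envelope $\sqrt\kappa/n$ still holds pointwise; the variance radius is controlled in expectation by $\mathbb E_x\|a(g)\|_2^2=\|g\|_{L^2}^2/n\leq t^2/n$, and a routine Fubini/layer-cake step transfers this into the required tail for the outer expectation). Combining the two tail bounds with a union bound over the two events in $\Omega_4(t,t_0)$ yields the lemma with absolute constants $C_4,C_5$.

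The main obstacle is bookkeeping: I have to verify that the constants coming out of Talagrand/Bousquet are truly absolute, i.e.\ independent of $n$, $d$, the eigenvalue sequence $\{\lambda_j\}$, and in particular independent of $\kappa$ in the branch of the minimum that governs the quadratic tail (recall that $\kappa$ is itself an absolute constant under Assumption~\ref{assu:trace_class}, but it must not sneak into $C_5$ in a way that degrades as dimension grows). A crude fall-back, useful if one wishes to avoid Talagrand entirely, is to apply McDiarmid's bounded-difference inequality with differences $c_i=2\sqrt\kappa/n$, which gives $\exp\{-cn t_0^2/\kappa\}$; this however loses the $1/t^2$ scaling and is only sufficient when $t\asymp 1$, so it cannot replace Talagrand in the proof of Lemma~\ref{lemma:bound_empirical_and_expected_mandelson_complexities} where the lemma will be invoked at scales $t=O(\varepsilon_n)$.
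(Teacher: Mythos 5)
Your proposal has a genuine gap, in two places. First, the tool you invoke (Talagrand--Bousquet concentration for suprema of empirical/Rademacher processes) produces a Bernstein-type tail with two branches, $\exp\{-c\min(nt_0^2/t^2,\,nt_0/\sqrt\kappa)\}$, whereas the lemma claims the purely sub-Gaussian bound $\exp\{-C_5 nt_0^2/t^2\}$ for \emph{all} $t,t_0>0$. You acknowledge this and restrict to the regime $t_0\lesssim t^2/\sqrt\kappa$, but that does not prove the stated lemma: in the intermediate regime $t^2/\sqrt\kappa\ll t_0\le t$ (which is nonempty precisely when $t$ is small, i.e.\ exactly the scale $t\asymp\varepsilon_n$ at which Lemma~\ref{lemma:bound_empirical_and_expected_mandelson_complexities} invokes the result), the linear branch is strictly weaker and cannot be absorbed into absolute constants, since $t^2/(t_0\sqrt\kappa)$ can be arbitrarily small. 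The paper avoids this entirely by a different mechanism: it checks that $w\mapsto\widehat Z_n(w,t)$ and $w\mapsto Z_n(w,t)$ are \emph{convex} and $(t/\sqrt n)$-Lipschitz in $w\in\{-1,1\}^n$, and then applies Talagrand's concentration inequality for convex Lipschitz functions of bounded variables (Lemma~\ref{lemma:talagrand_thn_9}); that inequality has no envelope-driven linear branch, so the sub-Gaussian tail holds for every $t_0$ with absolute constants.

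Second, your treatment of the $Z_n(w,t)$ half does not go through as sketched. $Z_n(w,t)=\mathbb{E}_x\bigl[\sup_{g}|n^{-1}\sum_i w_ig(x_i)|\bigr]$ is not itself a supremum of an empirical process in $w$; it is an $x$-average of such suprema, hence a deterministic function of $w$ alone. Applying Bousquet conditionally on $x$ and then "Fubini/layer-caking" the tail bound controls joint $(w,x)$-events, not the deviation of the $x$-average $Z_n(w,t)$ from its $w$-mean $\mathcal{Q}_n(t)$; concentration does not pass through an expectation that way. Moreover, under the $L^2$ constraint $\|g\|_{L^2}\le t$ the conditional-on-$x$ variance proxy $\sup_g\|g\|_n^2/n$ is not pointwise bounded by $t^2/n$, only after averaging over $x$. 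The paper's route handles both issues at once: averaging over $x$ preserves convexity, and the Lipschitz constant $t/\sqrt n$ is obtained via Jensen ($\mathbb{E}_x\sqrt{\sum_i g^2(x_i)}\le\sqrt n\,\|g\|_{L^2}\le\sqrt n\,t$), after which the same convex-Lipschitz concentration applies. If you want to salvage your approach, you would need to replace the Bousquet step for $Z_n$ by exactly this convexity/Lipschitz argument, at which point you are essentially reproducing the paper's proof.
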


\begin{lemma}\label{lemma:eqn:example_7_of_kol}
There exists an absolute positive constant $C_3$ such that for any $t^{2}\geq \frac{1}{n}$, one has
\begin{equation}
    \begin{aligned}
C_3 \mathcal{R}_{{K}}(t) \leq \mathcal{Q}_n(t) \leq \sqrt{2} \mathcal{R}_{{K}}(t).
    \end{aligned}
\end{equation}
Moreover, as random variables, we have 
\begin{align}\label{eqn:example_7_of_kol}
 C_3 \widehat{\mathcal{R}}_{{K}}(t) \leq \widehat{\mathcal{Q}}_n(t) \leq \sqrt{2} \widehat{\mathcal{R}}_{{K}}(t), \ a.e.
\end{align}
\proof Deferred to Appendix \ref{appendix:mendelson_complexity}.
\end{lemma}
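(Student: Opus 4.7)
The plan is to prove both inequalities by performing explicit spectral calculations in the (population or empirical) eigenbasis of the kernel. I write $g\in\mathcal{B}$ with the Mercer expansion $g=\sum_j a_j \phi_j$, so that $\|g\|_\mathcal{H}^2=\sum_j a_j^2/\lambda_j\le 1$ and $\|g\|_{L^2}^2=\sum_j a_j^2\le t^2$, and introduce the Rademacher-weighted coefficients $\xi_j:=\frac{1}{n}\sum_{i=1}^n w_i\phi_j(x_i)$. The point of the lemma is that, up to constants, the localized Rademacher complexity is governed by the truncated spectrum $\gamma_j:=\min\{\lambda_j,t^2\}$.

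For the upper bound $\mathcal{Q}_n(t)\le\sqrt{2}\,\mathcal{R}_K(t)$, I would first observe the elementary inequality
\begin{equation*}
\sum_j \frac{a_j^2}{\gamma_j}\;=\;\sum_{\lambda_j\le t^2}\frac{a_j^2}{\lambda_j}+\sum_{\lambda_j>t^2}\frac{a_j^2}{t^2}\;\le\;\|g\|_\mathcal{H}^2+\frac{\|g\|_{L^2}^2}{t^2}\;\le\;2.
\end{equation*}
Cauchy--Schwarz applied to $\frac{1}{n}\sum_i w_i g(x_i)=\sum_j a_j\xi_j=\sum_j (a_j/\sqrt{\gamma_j})(\sqrt{\gamma_j}\,\xi_j)$ then gives $\sup_g|\cdot|\le\sqrt{2\sum_j\gamma_j\xi_j^2}$. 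Taking $\mathbb{E}_x\mathbb{E}_w$ with Jensen and using $\mathbb{E}_x\mathbb{E}_w\xi_j^2=1/n$ yields $\mathcal{Q}_n(t)\le\sqrt{2}\,\mathcal{R}_K(t)$. For the empirical version, I replace $(\lambda_j,\phi_j)$ by the empirical eigendata of $\frac{1}{n}K(\bm X,\bm X)$: writing $g(\bm X)=\sum_j \hat a_j\hat\phi_j(\bm X)$ with $\|g\|_n^2=\sum_j\hat\lambda_j\hat a_j^2/n$ (using (\ref{eqn:63.5_phi})--(\ref{eqn:64_phi}) of Lemma \ref{lemma_B_t:thm:empirical_loss}'s proof) and $\|g\|_\mathcal{H}^2\ge\sum_j\hat a_j^2$, the identical Cauchy--Schwarz argument gives the pointwise (in $x_1,\dots,x_n$) bound $\widehat{\mathcal{Q}}_n(t)\le\sqrt{2}\,\widehat{\mathcal{R}}_K(t)$.

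For the lower bound $C_3\mathcal{R}_K(t)\le\mathcal{Q}_n(t)$, I would test against the one-parameter family of functions $g_\epsilon=c\sum_j \epsilon_j\sqrt{\gamma_j}\,\phi_j$ indexed by signs $\epsilon\in\{\pm 1\}^\infty$, with $c$ chosen so that both $\|g_\epsilon\|_\mathcal{H}\le 1$ and $\|g_\epsilon\|_{L^2}\le t$; concretely, $c$ of order a universal constant suffices because $\sum_j\gamma_j/\lambda_j\le\sum_j 1\cdot\mathbf{1}\{\lambda_j\le t^2\}+\sum_j(t^2/\lambda_j)\mathbf{1}\{\lambda_j>t^2\}$ is bounded against $\|g\|_{L^2}^2/t^2$ and $\|g\|_\mathcal{H}^2$ appropriately. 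Then $\sup_\epsilon |\frac{1}{n}\sum_i w_i g_\epsilon(x_i)|\ge c\sum_j\sqrt{\gamma_j}|\xi_j|$, whose $\mathbb{E}_w$ is bounded below by Khintchine's inequality by $c'\sqrt{\sum_j\gamma_j\,\mathrm{Var}_w(\sqrt{\gamma_j}\xi_j)}$, and taking $\mathbb{E}_x$ gives a lower bound proportional to $\sqrt{\sum_j\gamma_j/n}=\mathcal{R}_K(t)$. The constraint $t^2\ge 1/n$ is used to absorb the boundary term arising from indices with $\lambda_j<1/n$. The empirical analogue proceeds identically in the empirical eigenbasis.

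The main obstacle will be the lower-bound constant $C_3$: the Khintchine step loses a universal factor, and more importantly the test functions $g_\epsilon$ must be verified to lie simultaneously in $\mathcal{B}$ and the $L^2$-ball of radius $t$, which forces a careful choice of $c$ that depends only on the truncation structure (not on $t$ or $d$). A convenient reformulation is to instead test against a random linear combination $g=\sum_j \eta_j\sqrt{\gamma_j}\phi_j$ with $\eta_j$ a truncated Gaussian sequence of appropriate variance, so that the two norm constraints become simple moment conditions, and the lower bound reduces to a one-line computation of $\mathbb{E}\eta\,\mathbb{E}_w|\sum_j\sqrt{\gamma_j}\eta_j\xi_j|$ via contraction. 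This is also the only place where the assumption $t^2\ge 1/n$ enters, guaranteeing that the truncation $\gamma_j=t^2$ actually bites on a nontrivial range of $j$.
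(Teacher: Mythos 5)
Your upper-bound argument is fine and is essentially the paper's: the weighted Cauchy--Schwarz with weights $\gamma_j=\min\{\lambda_j,t^2\}$ plus Jensen and $\mathbb{E}\,\xi_j^2=1/n$ is equivalent to the ellipsoid computation used to bound the second moment of the localized supremum, and the same computation in the eigenbasis of the empirical operator gives $\widehat{\mathcal{Q}}_n(t)\leq\sqrt{2}\,\widehat{\mathcal{R}}_K(t)$ pointwise.

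The lower bound, however, has a genuine gap. Your test family $g_\epsilon=c\sum_j\epsilon_j\sqrt{\gamma_j}\,\phi_j$ is not admissible for any universal constant $c$: its squared $\mathcal{H}$-norm is $c^2\sum_j\gamma_j/\lambda_j$, which contains the term $\#\{j:\lambda_j\leq t^2\}$ and is typically infinite (and in any case grows with the spectrum), while its squared $L^2$-norm is $c^2\sum_j\gamma_j=c^2 n\,\mathcal{R}_K(t)^2$, which generally exceeds $t^2$ at the relevant scales (e.g.\ at $t=\varepsilon_n$). The claim that $\sum_j\gamma_j/\lambda_j$ is "bounded against $\|g\|_{L^2}^2/t^2$ and $\|g\|_{\mathcal{H}}^2$" is circular, since $g_\epsilon$ is the function whose membership you are trying to establish; so the supremum over $\epsilon$ and the subsequent Khintchine step have nothing to act on. This is not a normalization detail one can patch with a smaller $c$ without destroying the rate: the genuine difficulty of the lower bound is precisely that the localized class does not contain such a rich family, and the correct route (the one the paper takes, following Mendelson) is to compute the \emph{second} moment of the supremum exactly by ellipsoid duality, $\mathbb{E}_{w,x}T(t)^2\asymp n\sum_j\gamma_j$, and then invoke a first-to-second moment comparison $(\mathbb{E}T)^2\geq c\,\mathbb{E}T^2$ for the Rademacher supremum, proved via Talagrand-type concentration (Theorem 43 of Mendelson; Lemma \ref{claim:theorem_43_men_sample_version} for the empirical version). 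That comparison step is also where the hypothesis $t^2\geq 1/n$ is actually used—to absorb the additive deviation terms against the mean, which is of order at least $t$—not, as you state, to ensure that the truncation $\gamma_j=t^2$ "bites." Your closing suggestion of Gaussian random test functions with a "one-line contraction" runs into the same obstruction: it still requires converting a second-moment (or in-probability) statement about the supremum into a lower bound on its first moment. Finally, note that the empirical lower bound does not "proceed identically" for free: one needs the identification of the spectrum of $\frac1n K(\bm X,\bm X)$ with that of the empirical integral operator and a separate concentration argument with absolute constants, which is exactly what Lemmas \ref{lemma:eigenvalue_of_cn}, \ref{lemma:men_empirical_lower}, and \ref{claim:theorem_43_men_sample_version} supply.
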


Thanks to Lemma \ref{lemma:sup_lemma1_for_lemma:bound_empirical_and_expected_mandelson_complexities} and Remark \ref{remark_how_to_use_mendelson_complexity}, we only need to prove that, there exist absolute constants $C_1$ and $C_2$, 
such that the event 
\begin{equation}\label{eqn:41_lemma:bound_empirical_and_expected_mandelson_complexities}
 \Omega_{1}(C_1 \varepsilon_n) \cap \Omega_{2}(C_2 \varepsilon_n)=   \begin{aligned}
\left\{ \omega ~\big\vert~
\widehat{\mathcal{R}}_{{K}} \left(C_1 \varepsilon_n\right) \geq \frac{C_1^2 \varepsilon_n^2}{2 e \sigma} \mbox{~~and~ }
\widehat{\mathcal{R}}_{{K}} \left(C_2 \varepsilon_n\right) \leq \frac{C_2^2 \varepsilon_n^2}{2 e \sigma}
\right\}
    \end{aligned}
\end{equation}
occurs with high probability.

For any absolute constant $C$, there exist a constant $\mathfrak{C}$ only depending on $c_{1},c_{2}$, and $\gamma$, such that for any $n \geq \mathfrak{C}$, we have $C^2\varepsilon_n^2 \geq 1/n$. 
Therefore, when $n \geq \mathfrak{C}$, we can use the results given in Lemma \ref{lemma:eqn:example_7_of_kol} to prove (\ref{eqn:41_lemma:bound_empirical_and_expected_mandelson_complexities}).
For any absolute constant $C_2 \geq 1$, 
conditioning on the event 
$$
\Omega_{4}\left(C_2 \varepsilon_n, \sqrt{2}C_2 \varepsilon_n^2\right) 
\cap 
\Omega_{3}\left(C_2\right)
\cap
\Omega_{4}\left(2C_2 \varepsilon_n, 2\sqrt{2}C_2 \varepsilon_n^2\right),
$$
we have
\begin{equation}\label{eqn:long_inequa_define_kappa}
    \begin{aligned}
\widehat{\mathcal{R}}_{{K}} \left(C_2 \varepsilon_n\right) 
&\leq \frac{1}{C_3} \widehat{\mathcal{Q}}_n(C_2 \varepsilon_n) \quad ((\ref{eqn:example_7_of_kol}))\\
&\leq 
\frac{1}{C_3} \widehat{Z}_n(w, C_2 \varepsilon_n) + \frac{\sqrt{2}C_2}{C_3} \varepsilon_n^2 \quad
\left((\ref{eqn:concentration_ledoux}), \text{ let } t=C_2 \varepsilon_n, t_0 = \sqrt{2}C_2 \varepsilon_n^2 \right)
\\
&\leq 
\frac{3}{2C_3} {Z}_n(w, 2C_2\varepsilon_n) + \frac{\sqrt{2}C_2+100}{C_3} 
 \varepsilon_n^2 \quad (\text{ Lemma } \ref{lemma:relation_between_Z_and_hat_Z})\\
&\leq 
\frac{3}{2C_3} {\mathcal{Q}}_n(2C_2\varepsilon_n) + \frac{\sqrt{2}C_2+100+3\sqrt{2}C_2}{C_3} \varepsilon_n^2 \quad
\left((\ref{eqn:concentration_ledoux}), \text{ let } t=2C_2\varepsilon_n, t_0 = 2\sqrt{2}C_2\varepsilon_n^2 \right)\\
&\leq 
\frac{3\sqrt{2}}{2C_3} \mathcal{R}_{{K}}(2C_2\varepsilon_n) + \frac{\sqrt{2}C_2+100+3\sqrt{2}C_2}{C_3} \varepsilon_n^2
\quad ((\ref{eqn:example_7_of_kol}))\\
&\leq \frac{C_2^2 \varepsilon_n^2}{2 e \sigma}. \quad (\text{we can choose } C_2 \text{ large enough, see the remarks below.})
    \end{aligned}
\end{equation}
Therefore, there exist three absolute constants $C_2, C_6$, and $C_7$, such that
\begin{equation}
    \begin{aligned}
    \mathbb{P}\left(\Omega_{2}\left(C_2 \varepsilon_n\right)\right) 
    &\geq 
    \mathbb{P}\left(
    \Omega_{4}\left(C_2 \varepsilon_n, \sqrt{2}C_2 \varepsilon_n^2\right) 
\cap 
\Omega_{3}\left(C_2\right)
\cap
\Omega_{4}\left(2C_2 \varepsilon_n, 2\sqrt{2}C_2 \varepsilon_n^2\right)
    \right)\\
    &\geq
    1-C_6\exp \left\{ C_7 n \varepsilon_n^2\right\}.
    \end{aligned}
\end{equation}

Similarly, there exist three absolute constants $C_1, C_8$, and $C_9$, such that
\begin{equation}
    \begin{aligned}
    \mathbb{P}\left(\Omega_{1}\left(C_1 \varepsilon_n\right)\right) 
    &\geq
    1-C_8\exp \left\{ C_9 n \varepsilon_n^2\right\};
    \end{aligned}
\end{equation}
and thus we get the desired results.
\hfill $\square$

\begin{remark}
    Here we give a detailed discussion of the last inequality in (\ref{eqn:long_inequa_define_kappa}). Suppose $C_2 \geq 1$.
    Since for any $j \geq 1$, $\min \left\{\lambda_j, 4C_2^2\varepsilon_n^2\right\} \leq \max\{4C_2^2, 1\} \min \left\{\lambda_j, \varepsilon_n^2\right\}= 4C_2^2 \min \left\{\lambda_j, \varepsilon_n^2\right\}$, we have
    \begin{equation}\label{eqn:C_2_kappa_def_0}
        \mathcal{R}_{{K}}(2C_2\varepsilon_n) \leq 
        {2C_2}\mathcal{R}_{{K}}(\varepsilon_n).
    \end{equation}
    If $C_{2}$ is sufficiently large such that
    \begin{align}
        \label{eqn:C_2_kappa_def_1} C_2^2 &\geq \frac{ 6 \sqrt{2}C_2}{C_3}, \\
        \label{eqn:C_2_kappa_def_2} C_2^2 &\geq 4e\sigma\frac{\sqrt{2}C_2+100+3\sqrt{2}C_2}{C_3},
    \end{align}
    then we have
    \begin{equation}
        \frac{3\sqrt{2}}{2C_3} \mathcal{R}_{{K}}(2C_2\varepsilon_n) 
        \overset{(\ref{eqn:C_2_kappa_def_0})}{\leq}
        \frac{3\sqrt{2}C_2}{C_3} \mathcal{R}_{{K}}(\varepsilon_n)
        =
        \frac{3\sqrt{2}C_2}{2e\sigma C_3} \varepsilon_n^2
    \overset{(\ref{eqn:C_2_kappa_def_1})}{\leq}
        \frac{C_2^2 \varepsilon_n^2}{4 e \sigma},
    \end{equation}
    and
    \begin{equation}
    \frac{\sqrt{2}C_2+100+3\sqrt{2}C_2}{C_3} \varepsilon_n^2
    \overset{(\ref{eqn:C_2_kappa_def_2})}{\leq}
        \frac{C_2^2 \varepsilon_n^2}{4 e \sigma}.
    \end{equation}
\end{remark}

\vspace{4mm}

\noindent \proofname ~of Lemma \ref{lemma:relation_between_Z_and_hat_Z}:
From Lemma \ref{corollary:norm_diff}, the event 
$$
\Omega_{3, 1} = \left\{\omega ~\big\vert~ \text{ For any } \tilde g \in \mathcal B, \|\tilde g\|_n \leq c\varepsilon_n, \text{ we have } \|\tilde g\|_{L^2}^2 
\leq 4 \max\{c^2, 1\} \varepsilon_n^2 \right\},
$$
occurs with probability at least $1-C_1 e^{-C_2 n\varepsilon_n^2}$.

Conditioning on the event $\Omega_{3, 1}$, we have
\begin{equation}\label{eqn:eqn2_proof_lemma:relation_between_Z_and_hat_Z}
    \begin{aligned}
\widehat{Z}_n(w, c\varepsilon_n)
=
\sup _{\substack{g \in {\mathcal B} \\\|g\|_{n} \leq c\varepsilon_n}}\left|\frac{1}{n} \sum_{i=1}^n w_i g\left(x_i\right)\right|
\leq
\sup _{\substack{g \in \mathcal B \\ \|g\|_{L^2} \leq 2 \max\{c, 1\}\varepsilon_n}}\left|\frac{1}{n} \sum_{i=1}^n w_i g\left(x_i\right)\right|.
    \end{aligned}
\end{equation}

For any $t>0$, denote $H_n(t):=\sup _{\substack{f \in {\mathcal B} \\\|f\|_{L^{2}} \leq t}}\frac{1}{n} \sum_{i=1}^n f\left(x_i\right)$. For any $g \in \calB$, $\|g\|_{L^{2}}\leq t$, there exists $f \in \calB$, $\|f\|_{L^{2}}\leq t$, such that $\sum_{i=1}^n w_i g\left(x_i\right) = \sum_{i=1}^n f\left(x_i\right)$. Therefore, we have
\begin{equation}\label{eqn:eqn2_2_proof_lemma:relation_between_Z_and_hat_Z}
    \begin{aligned}
\sup _{\substack{g \in \mathcal B \\ \|g\|_{L^2} \leq t}}\left|\frac{1}{n} \sum_{i=1}^n w_i g\left(x_i\right)\right| 
= 
H_n(t), \ a.e..
    \end{aligned}
\end{equation}
Similarly, we have
\begin{equation}
    \begin{aligned}
Z_n(w, t)
= 
\mathbb{E}_{x_{1},x_{2},...\sim \mu} H_n(t), \ a.e..
    \end{aligned}
\end{equation}

Using results in Lemma \ref{lemma:thm3_mas} (and the remark below Lemma \ref{lemma:thm3_mas}) with $\calF=\{f\in \calB, \|f\|_{L^2} \leq 2 \max\{c, 1\}\varepsilon_n\}$, $Z=n H_n(2 \max\{c, 1\}\varepsilon_n)$, and $\delta=\min\{c^{-2}, 1\}n\varepsilon_n^2$, we have
\begin{equation}\label{eqn:eqn2_3_proof_lemma:relation_between_Z_and_hat_Z}
\begin{aligned}
 H_n(2 \max\{c, 1\}\varepsilon_n) &\leq \frac{3}{2} Z_n(w, 2 \max\{c, 1\}\varepsilon_n)+4\sqrt{2 }\varepsilon_n^2+ 66.5 \min\{c^{-2}, 1\}\varepsilon_n^2 \\
 &\leq \frac{3}{2} Z_n(w, 2 \max\{c, 1\}\varepsilon_n) + 100 \varepsilon_n^2,
\end{aligned}
\end{equation}
with probability at least $1-\exp \{-\min\{c^{-2}, 1\}n\varepsilon_n^2\}$, where the randomness comes from $n$ samples $x_1, \cdots, x_n$.

Denote the event $\Omega_{3, 2}=\left\{\omega ~\big\vert~  H_n(2 \max\{c, 1\}\varepsilon_n) \leq (3/2) Z_n(w, 2 \max\{c, 1\}\varepsilon_n) + 100 \varepsilon_n^2
\right\}$.
Combining results in (\ref{eqn:eqn2_proof_lemma:relation_between_Z_and_hat_Z}), (\ref{eqn:eqn2_2_proof_lemma:relation_between_Z_and_hat_Z}), and (\ref{eqn:eqn2_3_proof_lemma:relation_between_Z_and_hat_Z}), 
conditioning on the event $\Omega_{3, 1} \cap \Omega_{3, 2}$, 
we have
\begin{equation}
\begin{aligned}
 \widehat{Z}_n(w, c\varepsilon_n)
 &\leq \frac{3}{2} Z_n(w, 2 \max\{c, 1\}\varepsilon_n) + 100 \varepsilon_n^2.
\end{aligned}
\end{equation}
Since $\Omega_{3, 1} \cap \Omega_{3, 2}$ occurs with probability at least $1-3\exp\{-\min\{c^{-2}, 3/5\} n\varepsilon_n^2\}$, we obtain the first inequality in (\ref{eqn:lemma:relation_between_Z_and_hat_Z}).

As for the second inequality in (\ref{eqn:lemma:relation_between_Z_and_hat_Z}), from Lemma \ref{corollary:norm_diff}, the event 
$$
\Omega_{3, 3} = \left\{\omega ~\big\vert~ \text{ For any } \tilde g \in \mathcal B, \|\tilde g\|_{n} \geq c\varepsilon_n, \text{ we have } \|\tilde g\|_{L^2} 
\geq \sqrt{2/5}c \varepsilon_n \right\},
$$
occurs with probability at least $1-C_1 e^{-C_2 n\varepsilon_n^2}$.

Conditioning on the event $\Omega_{3, 3}$, we have
\begin{equation}\label{eqn:eqn2_4_proof_lemma:relation_between_Z_and_hat_Z}
    \begin{aligned}
\widehat{Z}_n(w, c\varepsilon_n)
=
\sup _{\substack{g \in {\mathcal B} \\\|g\|_{n} \leq c\varepsilon_n}}\left|\frac{1}{n} \sum_{i=1}^n w_i g\left(x_i\right)\right|
\geq
\sup _{\substack{g \in \mathcal B \\ \|g\|_{L^2} \leq \sqrt{2} c\varepsilon_n/\sqrt{5}}}\left|\frac{1}{n} \sum_{i=1}^n w_i g\left(x_i\right)\right|.
    \end{aligned}
\end{equation}

Using results in Lemma \ref{lemma:thm3_mas} again with $\calF=\{f\in \calB, \|f\|_{L^2} \leq \sqrt{2} c\varepsilon_n/\sqrt{5}\}$, $Z=n H_n(\sqrt{2} c\varepsilon_n/\sqrt{5})$, and $\delta=n\varepsilon_n^2$, we have
\begin{equation}\label{eqn:eqn2_5_proof_lemma:relation_between_Z_and_hat_Z}
\begin{aligned}
 H_n(\sqrt{2} c\varepsilon_n/\sqrt{5}) &\geq \frac{1}{2} Z_n(w, \sqrt{2} c\varepsilon_n/\sqrt{5})-\sqrt{4.32}c\varepsilon_n^2- 88.9\varepsilon_n^2 \\
 &\geq \frac{1}{2} Z_n(w, \sqrt{2} c\varepsilon_n/\sqrt{5}) -\sqrt{4.32}c\varepsilon_n^2 - 100 \varepsilon_n^2,
\end{aligned}
\end{equation}
with probability at least $1-\exp\left\{-n\varepsilon_{n}^{2}\right\}$.

Denote the event $\Omega_{3, 4}=\left\{\omega ~\big\vert~  H_n(\sqrt{2} c\varepsilon_n/\sqrt{5}) \geq \frac{1}{2} Z_n(w, \sqrt{2} c\varepsilon_n/\sqrt{5}) -\sqrt{4.32}c\varepsilon_n^2 - 100 \varepsilon_n^2
\right\}$.
Combining results in (\ref{eqn:eqn2_4_proof_lemma:relation_between_Z_and_hat_Z}), (\ref{eqn:eqn2_2_proof_lemma:relation_between_Z_and_hat_Z}), and (\ref{eqn:eqn2_5_proof_lemma:relation_between_Z_and_hat_Z}), 
conditioning on the event $\Omega_{3, 3} \cap \Omega_{3, 4}$, 
we have
\begin{equation}
\begin{aligned}
  \widehat{Z}_n(w, c\varepsilon_n)
 &\geq \frac{1}{2} Z_n(w, \sqrt{2} c\varepsilon_n/\sqrt{5}) -\sqrt{4.32}c\varepsilon_n^2 - 100 \varepsilon_n^2.
\end{aligned}
\end{equation}
Since $\Omega_{3, 3} \cap \Omega_{3, 4}$ occurs with probability at least $1-3\exp\left\{-6\min\{c^2, 1\} n\varepsilon_{n}^{2}/25 \right\}$, we obtain the second inequality in (\ref{eqn:lemma:relation_between_Z_and_hat_Z}), and finishing the proof.
\hfill $\square$

\vspace{10pt}

\noindent \proofname ~of Lemma \ref{lemma:talagrand_inequ_rademacher}:
We will use Lemma \ref{lemma:talagrand_thn_9} to prove Lemma \ref{lemma:talagrand_inequ_rademacher}. Therefore, we need to show that for any $t>0$, both $\hat{Z}_n(w, t)$ and ${Z}_n(w, t)$ are Lipschitz convex functions with respect to $w \in \{-1, 1\}^n$.

Denote $\widehat F(w):= \sqrt{n}/t \widehat{Z}_n(w, t)$, $F(w):= \sqrt{n}/t {Z}_n(w, t)$.
Notice that we have
\begin{equation}\label{eqn:76_sym_hat_Z}
\begin{aligned}
\hat{Z}_n(w, t):=\sup _{\substack{g \in \mathcal{B} \\\|g\|_n \leq t}}\left|\frac{1}{n} \sum_{i=1}^n w_i g\left(x_i\right)\right|=\sup _{\substack{g \in \mathcal{B} \\\|g\|_n \leq t}} \frac{1}{n} \sum_{i=1}^n w_i g\left(x_i\right).
\end{aligned}
\end{equation}
Since $\max\{a-b, b-a\} = \left| a-b \right|$, we have
\begin{equation}
    \begin{aligned}
\left|\widehat{Z}_n(w, t)-\widehat{Z}_n\left(w^{\prime}, t\right)\right| &\leq 
\sup _{\substack{g \in {\mathcal B}  \\\|g\|_{n} \leq t}} \frac{1}{n}\left|\sum_{i=1}^n\left(w_i-w_i^{\prime}\right) g\left(x_i\right)\right|\\
&\leq 
\frac{1}{n} \|w-w^{\prime}\|_{2}
\sup _{\substack{g \in {\mathcal B}  \\\|g\|_{n} \leq t}} \sqrt{\sum_{i=1}^n g^2\left(x_i\right) } \\
&\leq 
\frac{t}{\sqrt{n}}\left\|w-w^{\prime}\right\|_{2},
    \end{aligned}
\end{equation}
and hence $\widehat F(w)= \sqrt{n}/t \widehat{Z}_n(w, t)$ is a 1-Lipschitz function.
Similarly, we can show that $F(w)= \sqrt{n}/t {Z}_n(w, t)$ is a 1-Lipschitz function as follows:
\begin{equation}
    \begin{aligned}
\left| {Z}_n(w, t)- {Z}_n\left(w^{\prime}, t\right)\right| 
&\leq 
E_x \sup _{\substack{g \in {\mathcal B}  \\\|g\|_{L^2} \leq t}} \frac{1}{n}\left|\sum_{i=1}^n\left(w_i-w_i^{\prime}\right) g\left(x_i\right)\right|\\
&\leq 
\frac{1}{n} \|w-w^{\prime}\|_{2}
\sup _{\substack{g \in {\mathcal B}  \\\|g\|_{L^2} \leq t}} E_x \sqrt{\sum_{i=1}^n g^2\left(x_i\right) } \\
&\leq 
\frac{1}{n} \|w-w^{\prime}\|_{2}
\sup _{\substack{g \in {\mathcal B}  \\\|g\|_{L^2} \leq t}}  \sqrt{\sum_{i=1}^n \|g\|_{L^2}^2 } \\
&\leq 
\frac{t}{\sqrt{n}}\left\|w-w^{\prime}\right\|_{2}.
    \end{aligned}
\end{equation}

From (\ref{eqn:76_sym_hat_Z}), for any $0<a<1$, and any $w, \tilde w \in \{-1,1\}^n$, we have
\begin{equation}
    \begin{aligned}
\frac{t}{\sqrt{n}}\widehat F(aw+ (1-a) \tilde w) 
&=
\sup _{\substack{g \in \mathcal{B} \\\|g\|_n \leq t}} \frac{1}{n} \sum_{i=1}^n (aw_i+ (1-a) \tilde w_i) g\left(x_i\right)\\
&\overset{(i)}{\leq} 
\sup _{\substack{g \in \mathcal{B} \\\|g\|_n \leq t}} \frac{1}{n} \sum_{i=1}^n aw_i g\left(x_i\right)
+
\sup _{\substack{g \in \mathcal{B} \\\|g\|_n \leq t}} \frac{1}{n} \sum_{i=1}^n (1-a) \tilde w_i g\left(x_i\right)\\
&=
\frac{t}{\sqrt{n}} \left(\widehat F(aw) + \widehat F((1-a) \tilde w) \right),
    \end{aligned}
\end{equation}
where inequality (i) follows by noticing that for any $\tilde{g} \in\mathcal{B}$, and $\|\tilde{g}\|_n \leq t$, we have
\begin{equation}
    \begin{aligned}
  \frac{1}{n} \sum_{i=1}^n (aw_i+ (1-a) \tilde w_i) \tilde{g}\left(x_i\right)
 &= 
  \frac{1}{n} \sum_{i=1}^n aw_i \tilde{g}\left(x_i\right)
  +
  \frac{1}{n} \sum_{i=1}^n (1-a) \tilde w_i \tilde{g}\left(x_i\right)\\
  &\leq
  \sup _{\substack{g \in \mathcal{B} \\\|g\|_n \leq t}} \frac{1}{n} \sum_{i=1}^n aw_i g\left(x_i\right)
+
\sup _{\substack{g \in \mathcal{B} \\\|g\|_n \leq t}} \frac{1}{n} \sum_{i=1}^n (1-a) \tilde w_i g\left(x_i\right).
    \end{aligned}
\end{equation}
Therefore, $\widehat F(w)$ is a convex function. Similarly, we can show that $F(w)$ is a convex function.

Applying Lemma \ref{lemma:talagrand_thn_9} with $G=\widehat F$ (and $F$), and $\delta=\sqrt{n}t_0/t$, then we have
\begin{equation}
    \begin{aligned}
\left|\widehat{Z}_n(w, t)-\widehat{\mathcal{Q}}_n(t)\right|
&=
\frac{t}{\sqrt{n}}\left|\widehat F(w)-\mathbb{E} \widehat F(w)\right|
\leq t_0,\\
\left|{Z}_n(w, t)-{\mathcal{Q}}_n(t)\right|
&=
\frac{t}{\sqrt{n}}\left|F(w)-\mathbb{E} F(w)\right|
\leq t_0,
    \end{aligned}
\end{equation}
with probability at least $1 - C_1 \exp \left(-C_2\frac{n t_0^2}{ t^2}\right)$ for some absolute constants $C_1, C_2>0$.
\hfill $\square$

\subsubsection{Proof of Lemma \ref{lemma: empirical_loss_bound}}
\proof 
The bound on the $\mathcal H$-norm can be attained by modifying the proof of Lemma 9 in \cite{raskutti2014early}. To make the proof self-content, we reproduce a full proof below. 


Let us write ${f}_{\widehat{T}}=\sum_{k=0}^{\infty} \sqrt{\lambda_k} \hat a_k \phi_k$. Thus, we have $\left\|{f}_{\widehat{T}}\right\|_{\mathcal{H}}^2=\sum_{k=0}^{\infty} \hat a_k^2$. 
Recall the linear operator $\Phi_X: \ell^2 \rightarrow \mathbb R^n$ defined in (\ref{eqn:63_phi}). Similar to (\ref{eqn:65_phi}), we have
\begin{equation}
    \begin{aligned}
        \hat a &= \frac{1}{\sqrt{n}} (\Psi^*)^\tau \Sigma^{-1 / 2} U^\tau {f}_{\widehat{T}}(\bm{X})\\
        &= \frac{1}{\sqrt{n}} (\Psi^*)^\tau \Sigma^{1 / 2} U^\tau U \Sigma^{-1} U^\tau {f}_{\widehat{T}}(\bm{X})\\
        &\overset{(\ref{eqn:64_phi})}{=} \frac{1}{n} D^{1 / 2} (\Phi_{\bm{X}})^\tau \left[ \frac{1}{n}K(\bm{X}, \bm{X}) \right]^{-1} {f}_{\widehat{T}}(\bm{X});
    \end{aligned}
\end{equation}
therefore, from (\ref{eqn:63.5_phi}), we have
\begin{equation}\label{eqn:90_lemma_c4}
    \begin{aligned}
\left\|f_{\widehat{T}}\right\|_{\mathcal{H}}^2=\|\hat a\|_2^2=\frac{1}{n} {f}_{\widehat{T}}(\bm{X})^\tau \left[ \frac{1}{n}K(\bm{X}, \bm{X}) \right]^{-1} {f}_{\widehat{T}}(\bm{X}).
    \end{aligned}
\end{equation}
Recall the eigen-decomposition in (\ref{eqn:63.5_phi}) that $\frac{1}{n}K(\bm{X}, \bm{X})=U \Sigma U^\tau$, and the relation in (\ref{ntk:f:flow}) that $U^\tau {f}_{\widehat{T}}(\bm{X})=\left(\mathbf{I}-e^{-\frac{1}{n}\widehat{T} K(\bm{X}, \bm{X})}\right) U^\tau \bm{y}$. Substituting into Equation (\ref{eqn:90_lemma_c4}) yields
\begin{equation}
\begin{aligned}
& \left\|{f}_{\widehat{T}}\right\|_{\mathcal{H}}^2=\frac{1}{n}\bm{y}^\tau U\left(\mathbf{I}-e^{-\frac{1}{n}{\widehat{T}} K(\bm{X}, \bm{X})}\right)^2 \Sigma^{-1} U^\tau \bm{y} \\
& = \frac{1}{n}\left(f^*\left(\bm{X}\right)+\boldsymbol{e}\right)^\tau U\left(\mathbf{I}-e^{-\frac{1}{n}\widehat{T} K(\bm{X}, \bm{X})}\right)^2 \Sigma^{-1} U^\tau\left(f^*\left(\bm{X}\right)+\boldsymbol{e}\right) \\
& =\underbrace{\frac{2}{n} \boldsymbol{e}^\tau U\left(\mathbf{I}-e^{-\frac{1}{n}{\widehat{T}} K(\bm{X}, \bm{X})}\right)^2 \Sigma^{-1} U^\tau f^*\left(\bm{X}\right)}_{A_{\widehat{T}}}+\underbrace{\frac{1}{n} \boldsymbol{e}^\tau U\left(\mathbf{I}-e^{-\frac{1}{n}\widehat{T} K(\bm{X}, \bm{X})}\right)^2 \Sigma^{-1} U^\tau \boldsymbol{e}}_{B_{\widehat{T}}} \\
& +\underbrace{\frac{1}{n} f^*\left(\bm{X}\right)^\tau U\left(\mathbf{I}-e^{-\frac{1}{n}\widehat{T} K(\bm{X}, \bm{X})}\right)^2 \Sigma^{-1} U^\tau f^*\left(\bm{X}\right)}_{C_{\widehat{T}}};
    \end{aligned}
\end{equation}
where $\boldsymbol{e} = \bm{y} -f_{\star}(\bm{X})$. From (\ref{eqn:inequality_lemma_B_t:thm:empirical_loss}), we have
\begin{equation}
\begin{aligned}
C_{\widehat{T}} \leq \frac{1}{n} f^*\left(\bm{X}\right)^\tau U \Sigma^{-1} U^\tau f^*\left(\bm{X}\right) {\leq} 1, 
    \end{aligned}
\end{equation}
where the last inequality follows from (\ref{eqn:66_phi}).
It remains to derive upper bounds on the random variables $A_{\widehat{T}}$ and $B_{\widehat{T}}$.

\paragraph*{Bounding $A_{\widehat{T}}$}

Since the elements of $\boldsymbol{e}$ are i.i.d, zero-mean Gaussian with variance $\sigma^2$, we have $\mathbb{P}\left[\left|A_{\widehat{T}}\right| \geq 1\right] \leq 2 \exp \left(-\frac{n}{2 \sigma^2 \nu^2}\right)$, where 
\begin{equation}
\begin{aligned}
\nu^2:=\frac{4}{n}f^*\left(\bm{X}\right)^\tau U
\left(\mathbf{I}-e^{-\frac{1}{n}{\widehat{T}} K(\bm{X}, \bm{X})}\right)^4 
\Sigma^{-2} U^\tau f^*\left(\bm{X}\right).
    \end{aligned}
\end{equation}
From (\ref{eqn:inequality_lemma_B_t:thm:empirical_loss}) we have
\begin{equation}
\begin{aligned}
\nu^2 
&\leq \frac{4}{n} f^*\left(\bm{X}\right)^\tau U\left(\mathbf{I}-e^{-\frac{1}{n}{\widehat{T}} K(\bm{X}, \bm{X})}\right) \Sigma^{-2} U^\tau f^*\left(\bm{X}\right) \\
& \leq \frac{4}{n} \sum_{j=1}^{n} \frac{\left[U^\tau f^*\left(\bm{X}\right)\right]_j^2}{\widehat{\lambda}_j^2} \min \left(1, \widehat{T} \widehat{\lambda}_j\right) \\
& \leq 4 \frac{{\widehat{T}}}{n} \sum_{j=1}^{n} \frac{\left[U^\tau f^*\left(x_1^n\right)\right]_j^2}{\widehat{\lambda}_j} \\
& \leq 4 {\widehat{T}} = 4 \widehat{\varepsilon}^{-2}_{n},
    \end{aligned}
\end{equation}
where the final inequality follows from (\ref{eqn:66_phi}).

\paragraph*{Bounding $B_{\widehat{T}}$}

We begin by noting that
\begin{equation}
\begin{aligned}
B_{\widehat{T}}=\frac{1}{n} \sum_{j=1}^{n} \frac{\left[\left(\mathbf{I}-e^{-\frac{1}{n}{\widehat{T}} K(\bm{X}, \bm{X})}\right)\right]_{jj}^2}{\widehat{\lambda_j}}\left[U^\tau \boldsymbol{e}\right]_j^2=\frac{1}{n} \sum_{i,j=1}^{n} \left[U P U^\tau \right]_{ij} (\boldsymbol{e}_i \boldsymbol{e}_j),
    \end{aligned}
\end{equation}
where $P=\left(\bm{I}-e^{-\widehat T\Sigma}\right)^2$.
Consequently, $B_{\widehat{T}}$ is a quadratic form in zero-mean Gaussian variables with variance $\sigma^2$, and using the tail bound Lemma \ref{lemma:bound_in_Wright}, we have
\begin{equation}
\begin{aligned}
\mathbb{P}\left[\left|B_{\widehat{T}}-\mathbb{E}\left[B_{\widehat{T}}\right]\right| \geq 1 \right] \leq \exp \left(-C \min \left\{n\left\|U P U^\tau\right\|_{\mathrm{op}}^{-1}, n^2\left\|U P U^\tau\right\|_{\mathrm{F}}^{-2}\right\}\right),
    \end{aligned}
\end{equation}
for an absolute constant $C$. It remains to bound $\mathbb{E}\left[B_{\widehat{T}}\right],\left\|U P U^\tau\right\|_{\mathrm{op}}$ and $\left\|U P U^\tau\right\|_{\mathrm{F}}$.
We first bound the mean. Since $\mathbb{E}\left[\boldsymbol{e}\boldsymbol{e}^\tau\right] = \sigma^2 \bm{I}_n$, we have
\begin{equation}
\begin{aligned}
\mathbb{E}\left[B_{\widehat{T}}\right] \leq \frac{\sigma^2}{n} \sum_{j=1}^{n} \frac{\left[\left(\mathbf{I}-e^{-\frac{1}{n}{\widehat{T}} K(\bm{X}, \bm{X})}\right)\right]_{jj}^2}{\widehat{\lambda_j}}
\leq 
\frac{\sigma^2{\widehat{T}}}{n} \sum_{j=1}^{n} \min \left(\left({\widehat{T}} \widehat{\lambda_j}\right)^{-1}, {\widehat{T}} \widehat{\lambda_j}\right).
    \end{aligned}
\end{equation}
Since ${\widehat{T}} = \widehat{\varepsilon}^{-2}_{n}$, we have
\begin{equation}
\begin{aligned}
\frac{{\widehat{T}}}{n} \sum_{j=1}^{n} \min \left(\left({\widehat{T}} \widehat{\lambda_j}\right)^{-1}, {\widehat{T}} \widehat{\lambda_j}\right)
\leq {\widehat{T}}^2 \widehat{\mathcal{R}}_{{K}}^2\left(1 / \sqrt{{\widehat{T}}}\right) \leq \frac{1}{\sigma^2},
    \end{aligned}
\end{equation}
showing that $\mathbb{E}\left[B_{\widehat{T}}\right] \leq 1$.

Turning to the operator norm, we have
\begin{equation}
\begin{aligned}
\left\|U P U^\tau\right\|_{\mathrm{op}}=\max _{j=1, \cdots, n}\left(\frac{\left[\left(\mathbf{I}-e^{-\frac{1}{n}{\widehat{T}} K(\bm{X}, \bm{X})}\right)\right]_{jj}^2}{\widehat{\lambda}_j}\right) \leq \max_{j=1, \cdots, n} \left[ \min \left({\widehat{\lambda_j}}^{-1}, {\widehat{T}}^2 \widehat{\lambda_j}\right) \right] \leq {\widehat{T}} .
    \end{aligned}
\end{equation}
As for the Frobenius norm, we have
\begin{equation}
\begin{aligned}
\frac{1}{n}\left\|U P U^\tau\right\|_{\mathrm{F}}^2
&=
\sum_{j=1}^{n}\left(\frac{\left[\left(\mathbf{I}-e^{-\frac{1}{n}{\widehat{T}} K(\bm{X}, \bm{X})}\right)\right]_{jj}^4}{{\widehat{\lambda_j}}^2}\right) \\
&\leq 
\frac{1}{n} \sum_{j=1}^{n} \min \left({\widehat{\lambda_j}}^{-2}, {\widehat{T}}^4{\widehat{\lambda_j}}^2\right) \\
&\leq 
\frac{{\widehat{T}}^3}{n} \sum_{j=1}^{n} \min \left({\widehat{T}}^{-3}{\widehat{\lambda_j}}^{-2}, {\widehat{T}}{\widehat{\lambda_j}}^2\right).
    \end{aligned}
\end{equation}
Using the definition of empirical Mendelson complexity, we have
\begin{equation}
\begin{aligned}
\frac{1}{n}\left\|U P U^\tau\right\|_{\mathrm{F}}^2 \leq {\widehat{T}}^3 \mathcal{R}_K^2\left(1 / \sqrt{{\widehat{T}}}\right) \leq \frac{{\widehat{T}}}{\sigma^2}.
    \end{aligned}
\end{equation}
Putting together the pieces, we have shown that there exists an absolute constant $C$, such that we have
\begin{equation}
\begin{aligned}
\mathbb{P}\left[\left|B_{\widehat T}\right| \geq 2 \text { or }\left|A_{\widehat T}\right| \geq 1\right] \leq \exp \left(-C n / {\widehat{T}}\right).
    \end{aligned}
\end{equation}
 Since ${\widehat{T}} = \widehat{\varepsilon}_n^{-2}$, the claim follows.

\section{Assisting Lemmas}\label{appendix:assist_lemmas}

\subsection{Local Rademacher complexity}\label{appendix:mendelson_complexity}
Suppose that $K$ is a kernel defined on $\mathcal X\subset \mathbb R^{d+1}$ and $\mathcal H$ is the RKHS associated to the kernel $K$. Let 
\begin{align}
K(x,y)=\sum_{j}\lambda_{j}\phi_{j}(x)\phi_{j}(y)
\end{align}
be the Mercer's decomposition of $K$ where $\lambda_{1}\geq \lambda_{2}\geq ...\geq 0$ is non-increasing non-negative real numbers and $\{\phi_{j}\}$ are orthonormal functions in $L^{2}(\mathcal X, \rho_{\calX})$.
Let $\Phi(x)^{\tau}=(\sqrt{\lambda_{1}}\phi_{1}(x),\sqrt{\lambda_{1}}\phi_{2}(x),....)$. Then we introduce a natural isomorphism $i:  \ell^{2} \to \mathcal H$ given by
\begin{align}
    a=(a_{1},a_{2},\cdots) \mapsto a^{\tau}\Phi=\sum_{j}a_{j}\sqrt{\lambda_{j}}\phi_{j}(x).
\end{align}

\subsubsection{Population version}
We introduce the following quantities:
\begin{equation}\label{eqn:def_population_mendelson}
  \begin{aligned}
    R_{K}(t)=\left[\frac{1}{n}\sum_{j=1}^{\infty} \min\{\lambda_{j},t^{2}\}\right]^{1/2}, \quad Q_{n}(t)=\mathbb E_{w} [Z_{n}(w,t)]
\end{aligned}  
\end{equation}
where $Z_n(w, t):=\mathbb{E}_{x_{1},x_{2},...\sim \mu}\left[\sup _{\substack{g \in {\mathcal B}  \\\|g\|_{L^2} \leq t}}\left|\frac{1}{n} \sum_{i=1}^n w_i g\left(x_i\right)\right|\right]$, $w_{i}$ are i.i.d. Rademacher random variables independent of $x_{i}$, and $\mathcal B = \left\{  g \in \mathcal H \mid  \|g\|_{\mathcal H} \leq 1  \right\}$.

The following Lemma is modified from Theorem 41 of \cite{Mendelson_Geometric_2002}, and the proof is mainly based on that for Theorem 41 of \cite{Mendelson_Geometric_2002}.
\begin{lemma}
\label{theorem:1} 
For any $t>0$, we have
\begin{align}
    Q_{n}(t)\leq \sqrt{2} R_{K}(t).
\end{align}
Furthermore, there exist an absolute positive constant $c$ such that for any $t^{2}\geq \frac{1}{n}$, one has
\begin{align}
    Q_{n}(t) \geq cR_{K}(t).
\end{align}
\end{lemma}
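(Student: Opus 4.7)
The plan is to leverage the Mercer coordinate expansion of $\mathcal H$. Writing any $g\in\mathcal B$ with $\|g\|_{L^2}\le t$ as $g=\sum_j a_j\sqrt{\lambda_j}\phi_j$, the two constraints translate to $\sum_j a_j^2\le 1$ (RKHS) and $\sum_j\lambda_j a_j^2\le t^2$ ($L^2$). Setting $u_j:=\tfrac{1}{n}\sum_i w_i\phi_j(x_i)$ and $\beta_j:=\min\{\lambda_j,t^2\}$, the inner Rademacher functional becomes $\tfrac{1}{n}\sum_i w_i g(x_i)=\sum_j a_j\sqrt{\lambda_j}\,u_j$. From orthonormality of the $\phi_j$ in $L^2(\rho_{\mathcal X})$ and independence of the $w_i$, I have $\mathbb E[u_j^2]=1/n$, so $\mathbb E\sum_j\beta_j u_j^2=R_K(t)^2$; this baseline identity drives both directions.

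For the upper bound, I would split each coordinate as $\sqrt{\lambda_j}=\sqrt{\lambda_j/\beta_j}\cdot\sqrt{\beta_j}$ and apply Cauchy--Schwarz:
\[
\Bigl|\sum_j a_j\sqrt{\lambda_j}u_j\Bigr|\le\Bigl(\sum_j a_j^2\,\tfrac{\lambda_j}{\beta_j}\Bigr)^{1/2}\Bigl(\sum_j\beta_j u_j^2\Bigr)^{1/2}.
\]
Since $\lambda_j/\beta_j=\max\{1,\lambda_j/t^2\}$, the first factor is at most $\sqrt 2$ upon using the two constraints separately ($\sum a_j^2\le 1$ on the coordinates with $\lambda_j\le t^2$ and $\sum\lambda_j a_j^2/t^2\le 1$ on the rest). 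Taking $\sup_g$, then $\mathbb E_w\mathbb E_x$, and using Jensen on the square root yields $Q_n(t)\le\sqrt 2\,R_K(t)$.

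For the matching lower bound, I would test $Z_n(w,t)$ against the explicit candidate $g^\star:=c\sum_j\beta_j u_j\,\phi_j$ with $c:=1/\sqrt S$ and $S:=\sum_j\beta_j u_j^2$. The inequalities $\beta_j^2/\lambda_j\le\beta_j$ and $\beta_j\le t^2$ directly give $\|g^\star\|_{\mathcal H}\le 1$ and $\|g^\star\|_{L^2}\le t$, while $\tfrac{1}{n}\sum_i w_i g^\star(x_i)=c\sum_j\beta_j u_j^2=\sqrt S$, so $Z_n(w,t)\ge\sqrt S$ almost surely. The claim then reduces to showing $\mathbb E\sqrt S\ge c_0\,R_K(t)$ for an absolute $c_0$, which I plan to obtain from the Paley--Zygmund bound $\mathbb E\sqrt S\ge\mathbb E[S]/\sqrt{\mathbb E[S^2]}$ together with the identity $\mathbb E[S]=R_K(t)^2$.

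The hard part will be the fourth-moment estimate $\mathbb E[S^2]\le C\,R_K(t)^4$ with $C$ absolute. Expanding $S^2=\sum_{j,k}\beta_j\beta_k u_j^2u_k^2$ and computing $\mathbb E_w[w_iw_{i'}w_lw_{l'}]$ by summing over the three admissible pairings of four Rademacher signs produces a leading contribution equal to $R_K(t)^4$, plus remainders of the form $\tfrac{1}{n^2}\sum_{j,k}\beta_j\beta_k\,\mathbb E[\phi_j^2\phi_k^2]$ and $\tfrac{1}{n^3}\sum_j\beta_j^2\,\mathbb E[\phi_j^4]$. The reproducing property $\lambda_j\phi_j(x)^2\le K(x,x)\le\kappa$ lets me replace the fourth moments by $O(\kappa/\lambda_j)$, after which the hypothesis $t^2\ge 1/n$ is exactly what is needed so that these $1/n$-scale remainders are absorbable into the leading term, producing an absolute constant. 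Carrying out that bookkeeping carefully — in particular verifying the constant does not degrade with the spectrum of $K$ — is where most of the real work lies.
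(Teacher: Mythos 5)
Your upper bound is correct and is essentially the paper's own route: the Cauchy--Schwarz split in Mercer coordinates together with the identity $\mathbb{E}\bigl[\sum_j\beta_j u_j^2\bigr]=R_K(t)^2$ (with $\beta_j=\min\{\lambda_j,t^2\}$) is exactly the second-moment computation behind Lemma 42 of Mendelson, and Jensen finishes it.

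The lower bound is where the proposal has a genuine gap. Two small points first: the inequality you quote should read $\mathbb{E}\sqrt{S}\ge(\mathbb{E}S)^{3/2}/(\mathbb{E}S^{2})^{1/2}$ (your version is dimensionally inconsistent), and the pointwise bound is on the inner supremum, so the correct conclusion is $Q_n(t)\ge\mathbb{E}_{w,x}\sqrt{S}$; neither is fatal. The fatal step is the fourth-moment estimate $\mathbb{E}[S^{2}]\le C\,R_K(t)^{4}$ with an \emph{absolute} $C$: it does not follow from Assumption 1 and $t^2\ge 1/n$, and in fact it is false at that level of generality. The exact expansion is $\mathbb{E}[S^{2}]=\frac{n-1}{n^{3}}\bigl[(\sum_j\beta_j)^{2}+2\sum_j\beta_j^{2}\bigr]+\frac{1}{n^{3}}\mathbb{E}[F(x)^{2}]$ with $F(x)=\sum_j\beta_j\phi_j(x)^{2}\le K(x,x)$. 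Your bound $\mathbb{E}\phi_j^{4}\le\kappa/\lambda_j$ turns the last term into $\frac{\kappa}{n^{3}}\bigl(\sum_j\beta_j\lambda_j^{-1/2}\bigr)^{2}$, and for this to be dominated by $\frac{1}{n^{2}}(\sum_j\beta_j)^{2}$ one needs, e.g. when the relevant eigenvalues sit at a common level $\epsilon\le t^{2}$, that $\epsilon\gtrsim\kappa/n$ --- which $t^{2}\ge 1/n$ does not supply. Nor is this mere slack in the bookkeeping: for a (continuous, bounded) kernel whose diagonal is localized on a region of probability $\delta$ and whose top eigenvalue is below $1/n$, one has $\beta_j=\lambda_j$, $F(x)=K(x,x)$, hence $\mathbb{E}S=\frac1n\sum_j\lambda_j\asymp\kappa\delta/n$ while $\mathbb{E}[S^{2}]\ge\frac{1}{n^{3}}\mathbb{E}[K(x,x)^{2}]\asymp\kappa^{2}\delta/n^{3}$, so $\mathbb{E}[S^{2}]/(\mathbb{E}S)^{2}\gtrsim 1/(n\delta)$ is unbounded as $\delta\to 0$. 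Consequently the Paley--Zygmund reduction cannot deliver $\mathbb{E}\sqrt{S}\ge c\,R_K(t)$ with an absolute $c$ under the stated hypotheses.

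The paper handles this point by a different mechanism. Since $\frac1n T(t)$ is sandwiched between $\sqrt{S}$ and $\sqrt{2}\sqrt{S}$, what is needed is the reverse moment inequality $\mathbb{E}T(t)\ge\sqrt{c}\,(\mathbb{E}T(t)^{2})^{1/2}$, and the paper gets it by citing Theorem 43 of Mendelson, a Talagrand-type concentration statement for the supremum itself (deviations of scale $t\sqrt{n}$ compared against a mean of order $t\sqrt{n}$, which is where $t^{2}\ge 1/n$ actually enters), not a fourth-moment computation. If you want a self-contained moment argument, you need an extra hypothesis such as uniformly bounded eigenfunctions $\sup_j\|\phi_j\|_\infty\le M$: then $\mathbb{E}[\phi_j^{2}\phi_k^{2}]\le M^{2}$, the remainder is at most $\frac{M^{2}}{n}R_K(t)^{4}$, and your chain closes with a constant depending on $M$. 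Under the paper's assumptions alone, you must instead invoke the concentration-based moment equivalence, as the paper does.
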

\proof    Let $T(t)=\sup _{\substack{g \in {\mathcal B} \\\|g\|_{L^2} \leq t}}\left|\sum_{i=1}^n w_i g\left(x_i\right)\right|$.  We need the following two lemmas:
\begin{lemma}[Lemma 42 in \cite{Mendelson_Geometric_2002}]\label{lemma:42}
For any $t>0$, we have
\begin{equation}\label{rademacher_popu_lemma_235}
    \begin{aligned}
      n^{2}R_{K}^{2}(t) \leq \mathbb E_{w,x_{1},..,x_{n}}T(t)^{2}\leq 2n^{2}R_{K}^{2}(t).
   \end{aligned}
\end{equation}
\proof  
   Denote $\calF(t)=\left\{f \in {\mathcal B}  \mid \|f\|_{L^{2}}\leq t\right\}$. Since there exists $\beta\in \ell^{2}$ such that $f(x)=\beta^{\tau}\Phi(x)$, we know that $\calF(t)=i\left(\left\{ \beta ~\vert ~\sum \beta_{j}^{2}\leq 1 \text{ and } \sum \beta_{j}^{2}\lambda_{j}\leq t^{2} \right\}\right)$.

   For any $s$, Let $\calE(s)=\left\{~\beta ~\mid ~ \sum_{j}\beta^{2}_{j}\mu_{j}\leq s~\right\}$ where  $\mu_{j}:= \mu_{j}(t)=\left(\min\{1,t^{2}/\lambda_{j}\}\right)^{-1} \leq \max\{1, \lambda_j / t^2\}$. Then \[i\left(\calE(1)\right)\subset \calF(t) \subset i\left(\calE(2)\right).\] Thus we have
   \begin{align*}
       \mathbb E T(t)^{2}\leq \bbE \sup_{
       \sum \beta^{2}_{j}\mu_{j}\leq 2
       }
       \langle\beta, \sum_{i=1}^{n}w_{i}\Phi(x_{i})\rangle^{2}=2n \sum_{i=1}^{\infty} \frac{\lambda_i}{\mu_i} = 2n \sum_{i=1}^{\infty} \min \left\{\lambda_i, t^2 \right\}=2n^{2}R_{K}(t)^{2}.
   \end{align*}
   Similarly, we can show that $\bbE T(t)^{2}\geq n^{2}R_{K}^{2}(t)$. 

   \hfill $\square$
\end{lemma}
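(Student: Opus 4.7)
\medskip

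\noindent\textbf{Proposal.} The plan is to follow a standard $\ell^{2}$-ellipsoid parametrization of the unit RKHS ball and then reduce the supremum defining $T(t)$ to a linear optimization problem on a quadratic body, which can be handled via duality. First, I would use the Mercer isomorphism $i:\ell^{2}\to\mathcal{H}$ to identify each $g \in \mathcal{B}$ with a coefficient sequence $\beta=(\beta_{j})\in\ell^{2}$ satisfying $\sum_{j}\beta_{j}^{2}\leq 1$. Since the $\phi_{j}$ are orthonormal in $L^{2}(\rho_{\calX})$, one has $\|g\|_{L^{2}}^{2}=\sum_{j}\lambda_{j}\beta_{j}^{2}$, so the constraint $\|g\|_{L^{2}}\leq t$ becomes a second quadratic constraint $\sum_{j}\lambda_{j}\beta_{j}^{2}\leq t^{2}$. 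Hence $\mathcal{F}(t)$ corresponds exactly to the intersection of the unit $\ell^{2}$-ball with the ellipsoid $\{\sum_{j}\lambda_{j}\beta_{j}^{2}\leq t^{2}\}$.

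Next, I would introduce the single ellipsoid $\mathcal{E}(s)=\{\beta:\sum_{j}\mu_{j}\beta_{j}^{2}\leq s\}$ with $\mu_{j}=\max\{1,\lambda_{j}/t^{2}\}$ and verify the two-sided inclusion $i(\mathcal{E}(1))\subset\mathcal{F}(t)\subset i(\mathcal{E}(2))$. The left inclusion follows because $\mu_{j}\geq 1$ and $\mu_{j}\geq\lambda_{j}/t^{2}$, so $\sum\mu_{j}\beta_{j}^{2}\leq 1$ forces both $\sum\beta_{j}^{2}\leq 1$ and $\sum\lambda_{j}\beta_{j}^{2}\leq t^{2}$; the right inclusion follows because $\mu_{j}\leq 1+\lambda_{j}/t^{2}$, so any $\beta$ feasible for both original constraints satisfies $\sum\mu_{j}\beta_{j}^{2}\leq\sum\beta_{j}^{2}+t^{-2}\sum\lambda_{j}\beta_{j}^{2}\leq 2$.

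For the estimate on the supremum, note that $\mathcal{F}(t)$ is symmetric under $g\mapsto -g$, so dropping the absolute value in the definition of $T(t)$ does not change it. Writing $v=\sum_{i=1}^{n}w_{i}\Phi(x_{i})$, we have $\sum_{i}w_{i}g(x_{i})=\langle\beta,v\rangle$. The classical dual-norm identity for ellipsoids (Cauchy--Schwarz with equality achieved by $\beta_{j}\propto v_{j}/\mu_{j}$) gives
\begin{equation*}
\sup_{\beta\in\mathcal{E}(s)}\langle\beta,v\rangle^{2}=s\sum_{j}\frac{v_{j}^{2}}{\mu_{j}}.
\end{equation*}
Combining this with the sandwich from the previous step yields
\begin{equation*}
\sum_{j}\frac{v_{j}^{2}}{\mu_{j}}\;\leq\;T(t)^{2}\;\leq\;2\sum_{j}\frac{v_{j}^{2}}{\mu_{j}}.
\end{equation*}
Taking expectations and using that $w_{i}$ are i.i.d.\ Rademacher independent of $x_{i}$, so that cross-terms vanish, together with $\mathbb{E}_{x}\phi_{j}(x)^{2}=1$, gives $\mathbb{E}[v_{j}^{2}]=n\lambda_{j}$. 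Finally, observe the key algebraic identity $\lambda_{j}/\mu_{j}=\lambda_{j}\min\{1,t^{2}/\lambda_{j}\}=\min\{\lambda_{j},t^{2}\}$, so $\sum_{j}\lambda_{j}/\mu_{j}=nR_{K}^{2}(t)$ by definition \eqref{eqn:def_population_mendelson}. Plugging in yields $n^{2}R_{K}^{2}(t)\leq\mathbb{E}T(t)^{2}\leq 2n^{2}R_{K}^{2}(t)$, as desired.

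The main obstacle I anticipate is purely bookkeeping: ensuring that every step (the $\ell^{2}$ parametrization, the attainment of the supremum over an infinite-dimensional ellipsoid, and the Fubini interchange for $\mathbb{E}[v_{j}^{2}]$) is justified when the sums are infinite. Each of these is standard given Assumption~\ref{assu:trace_class} (which guarantees $\sum_{j}\lambda_{j}<\infty$, so $v\in\ell^{2}$ a.s.\ and the ellipsoid sums converge), but the clean two-sided containment via $\mu_{j}=\max\{1,\lambda_{j}/t^{2}\}$ is the only nontrivial idea, and once it is in hand the remainder is a one-line dual-norm computation.
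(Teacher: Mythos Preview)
Your proposal is correct and follows essentially the same approach as the paper's proof: the same $\ell^{2}$ parametrization via the Mercer isomorphism, the same single-ellipsoid sandwich $\mathcal{E}(1)\subset\mathcal{F}(t)\subset\mathcal{E}(2)$ with weights $\mu_{j}=\max\{1,\lambda_{j}/t^{2}\}=(\min\{1,t^{2}/\lambda_{j}\})^{-1}$, and the same dual-norm evaluation of the supremum followed by $\mathbb{E}[v_{j}^{2}]=n\lambda_{j}$. If anything, you spell out the inclusion argument and the lower-bound computation more carefully than the paper does.
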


\begin{lemma}[Theorem 43 in \cite{Mendelson_Geometric_2002}] There exists an absolute  constant $c$ such that for any $t^{2}\geq \frac{1}{n}$, one has
\begin{equation}\label{rademacher_popu_lemma_236}
    \begin{aligned}
    \left(\bbE \frac{1}{n}T(t)\right)^{2} \geq c \bbE\left(\frac{1}{n}T(t)\right)^{2}
\end{aligned}
\end{equation}
\end{lemma}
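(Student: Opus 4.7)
The plan is to reduce $T(t)$ to a Hilbert-space norm of a Rademacher sum and then invoke Kahane--Khintchine, reserving the main subtlety for removing the conditioning on $x$. From the proof of Lemma~\ref{lemma:42} we already have the ellipsoid sandwich $i(\calE(1)) \subset \calF(t) \subset i(\calE(2))$ with $\mu_j = \max(1,\lambda_j/t^2)$. Since for an ellipsoid $\calE(s)$ the support functional of the vector $V := \sum_i w_i \Phi(x_i) \in \ell^2$ equals $\sqrt{s}\,\|V\|_\star$ where $\|V\|_\star^2 := \sum_j V_j^2/\mu_j$, the sandwich yields $\|V\|_\star^2 \le T(t)^2 \le 2\|V\|_\star^2$. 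So it suffices to prove the claim with $T(t)$ replaced (up to absolute constants) by the Hilbert norm $\|V\|_\star$.

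Next, I will condition on $(x_1,\dots,x_n)$. Then $V = \sum_i w_i \Phi(x_i)$ is a Rademacher sum in the Hilbert space $(\ell^2,\|\cdot\|_\star)$, so Kahane--Khintchine gives an absolute constant $c_0>0$ with
\begin{equation*}
\bbE_w \|V\|_\star \;\ge\; c_0 \bigl(\bbE_w \|V\|_\star^2\bigr)^{1/2}.
\end{equation*}
A direct computation (orthogonality of the Rademacher variables) identifies $\bbE_w\|V\|_\star^2 = \sum_i K_t(x_i,x_i)$, where $K_t(x,y) := \sum_j \min(\lambda_j,t^2)\phi_j(x)\phi_j(y)$. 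Writing $Y(x) := \bigl(\sum_i K_t(x_i,x_i)\bigr)^{1/2}$, Jensen then gives $\bbE_{x,w}\|V\|_\star \ge c_0 \bbE_x Y$.

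To close, I need a matching \emph{lower} bound $\bbE_x Y \gtrsim (\bbE_x Y^2)^{1/2} = n R_K(t)$. The $Y^2$'s are sums of i.i.d.\ bounded terms: since $0 \le K_t(X,X) \le K(X,X) \le \kappa$, one has $\bbE_x Y^2 = n\sum_j\min(\lambda_j,t^2) = n^2 R_K(t)^2$ and $\Var(Y^2) = n\Var(K_t(X,X)) \le \kappa\, n^2 R_K(t)^2$. Chebyshev then gives
\begin{equation*}
\bbP\Bigl(Y^2 \ge \tfrac{1}{2} n^2 R_K(t)^2\Bigr) \;\ge\; 1 - \frac{4\kappa}{n^2 R_K(t)^2},
\end{equation*}
so $\bbE_x Y \ge \tfrac{1}{\sqrt 2} n R_K(t) \cdot \bigl(1-4\kappa/(n^2R_K(t)^2)\bigr)$, which is $\ge c_1 n R_K(t)$ for an absolute $c_1>0$ provided $n^2 R_K(t)^2$ exceeds a universal multiple of $\kappa$. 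Combining with the previous step, $\bbE T(t) \gtrsim c_0 c_1 n R_K(t)$, and Lemma~\ref{lemma:42} gives $\bbE T(t)^2 \le 2 n^2 R_K(t)^2$, so dividing yields $(\bbE T/n)^2 \ge c\, \bbE(T/n)^2$ for an absolute $c>0$.

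The main obstacle is the last step: certifying the Chebyshev lower bound on $\bbE_x Y$ with an \emph{absolute} constant. This is precisely where the hypothesis $t^2 \ge 1/n$ enters: under the standing kernel normalization, $n R_K(t)^2 \ge \min(\lambda_1,t^2) \ge t^2 \ge 1/n$, so $n^2 R_K(t)^2 \ge 1$, which (together with a trace bound on $\kappa$) makes the Chebyshev correction strictly less than $1$ and forces an absolute constant lower bound on $\bbP(Y^2 \gtrsim \bbE Y^2)$. If the top eigenvalue $\lambda_1$ degenerated below $1/n$, this concentration step would fail, which matches exactly why the conclusion is stated only for $t^2 \ge 1/n$.
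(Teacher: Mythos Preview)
Your approach is genuinely different from the route the paper (following Mendelson) takes. The paper does not prove this population statement directly, but for the empirical analogue it reproduces Mendelson's argument: a Talagrand-type deviation inequality gives $R\le c_1 x\,\bbE_w R$ with probability $1-e^{-x}$, which yields sub-exponential tails $\Prob(R\ge m\,\bbE_w R)\le e^{-c_2 m}$; integrating those tails (Lemma 44 of Mendelson) produces the $L^2$--$L^1$ comparison with an absolute constant. The key inputs there are the concentration bound and the lower bound $\bbE_w R\gtrsim t$ from Lemma~45, and the role of $t^2\ge 1/n$ is to make the additive $x/\sqrt n$ term in Talagrand absorbable into $x\,\bbE_w R$. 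Your Kahane--Khintchine reduction to the Hilbert norm $\|V\|_\star$ and the identification $\bbE_w\|V\|_\star^2=\sum_i K_t(x_i,x_i)$ are correct and pleasant; they are more direct than Mendelson's tail-integration machinery.

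However, the closing Chebyshev step has a real gap. First, the inequality $\min(\lambda_1,t^2)\ge t^2$ is written in the wrong direction; it holds with equality only under the extra hypothesis $\lambda_1\ge t^2$, which is not part of the lemma. More importantly, even granting $n^2 R_K(t)^2\ge 1$, Chebyshev only gives $\Prob\bigl(Y^2\ge\tfrac12\bbE Y^2\bigr)\ge 1-4\kappa/(n^2R_K(t)^2)\ge 1-4\kappa$, which is vacuous whenever $\kappa\ge 1/4$. Replacing Chebyshev by Paley--Zygmund on $Y^2$ (using $\bbE Y^4\le \kappa n^2 R_K(t)^2+n^4R_K(t)^4$) does salvage a positive lower bound on $\Prob(Y^2\ge\tfrac12\bbE Y^2)$, but the resulting constant is of order $(1+\kappa)^{-2}$, i.e.\ a $\mathfrak{C}$-type constant in this paper's convention, not the absolute $C$-type constant the lemma asserts. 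To get a truly $\kappa$-free constant by your route you would need to decouple the $x$-randomness differently; the Mendelson/Talagrand argument sidesteps this because the comparison $\bbE_w R\gtrsim t$ comes from an absolute volumetric estimate (Lemma~45 via Milman) rather than from second-moment concentration of $K_t(x,x)$.
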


Note that $Q_{n}(t)=\frac{1}{n}\bbE_{w,x_{1},...,x_{n}}T(t)$. For any $t>0$, we have the following holds:
\begin{align}
  Q_{n}(t)\leq \frac{1}{n}\left(\bbE_{w,x_{1},...,x_{n}}T(t)^{2}\right)^{1/2}
  \overset{(\ref{rademacher_popu_lemma_235})}{\leq}
  \sqrt{2}R_{K}(t). 
\end{align}
Furthermore, for any $t^2 \geq 1/n$, we have the following holds for some absolute constant $c$:
\begin{align}
  c R_{K}(t) 
  \overset{(\ref{rademacher_popu_lemma_235})}{\leq}
  \frac{c}{n}\left(\bbE_{w,x_{1},...,x_{n}}T(t)^{2}\right)^{1/2}
  \overset{(\ref{rademacher_popu_lemma_236})}{\leq}
  Q_{n}(t). 
\end{align}
\hfill $\square$

\subsubsection{Empirical version}
Suppose that we have $n$ i.i.d. random samples $x_{i}\sim \mu, i=1,...$. Let $\widehat \lambda_{1}\geq ...\geq \widehat \lambda_{n}$ be the eigenvalues of $\frac{1}{n}K(X,X)$. We then introduce the empirical version of the aforementioned quantities:
\begin{equation}\label{eqn:def_empirical_mendelson}
    \begin{aligned}
    \widehat R_{K}(t)=\left[\frac{1}{n}\sum_{j} \min\{\widehat \lambda_{j},t^{2}\}\right]^{1/2}, \quad \widehat Q_{n}(t)=\mathbb E_{w} [\widehat Z_{n}(w,t)]
\end{aligned}
\end{equation}
where $\hat Z_n(w, t):=sup _{\substack{g \in {\mathcal B} \\\|g\|_{n} \leq t}}\left|\frac{1}{n} \sum_{i=1}^n w_i g\left(x_i\right)\right|$, $w_{i}$ are i.i.d. Rademacher random variables independent of $x_{i}$, $\|g\|^{2}_{n}=\frac{1}{n}\sum_{j}g(x_{j})^{2}$, and $\mathcal B = \left\{  g \in \mathcal H \mid  \|g\|_{\mathcal H} \leq 1  \right\}$.

\begin{lemma}\label{theorem:example7_Koltchinskii}
For any $t>0$, we have
\begin{align}\label{eqn:example7_Koltchinskii}
    \widehat Q_{n}(t)\leq \sqrt{2}\widehat R_{K}(t).
\end{align}
Furthermore, there exist an absolute positive constant $c$ such that for any $t^{2}\geq \frac{1}{n}$, one has
\begin{align}\label{eqn:example7_Koltchinskii_lower}
    \widehat Q_{n}(t) \geq c\widehat R_{K}(t).
\end{align}
\end{lemma}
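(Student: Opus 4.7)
The plan is to establish both inequalities by mirroring the proof of Lemma \ref{theorem:1} pointwise in the sample $(x_1,\dots,x_n)$, with the empirical eigenvalues $\hat\lambda_j$ playing the role of the population eigenvalues $\lambda_j$. To set this up, let $\hat T(t) := n\, \hat Z_n(w,t)$, parametrize every $g$ relevant to the sup by its representer coefficients $g = \sum_{i=1}^n \alpha_i K(x_i,\cdot)$, and diagonalize $\frac{1}{n}K(\bm X,\bm X) = U \hat\Sigma U^\tau$. Introducing the reparametrization $\beta = \sqrt{n\hat\Sigma}\, U^\tau \alpha$, one computes $\|g\|_{\mathcal{H}}^2 = \|\beta\|^2$, $\|g\|_n^2 = \sum_j \hat\lambda_j \beta_j^2$, and $\sum_i w_i g(x_i) = \sqrt{n}\sum_j u_j \sqrt{\hat\lambda_j}\,\beta_j$ where $u = U^\tau w$; since $U$ is orthogonal the components of $u$ satisfy $\mathbb{E}_w u_j^2 = 1$.

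With $\hat\mu_j := \max(1,\hat\lambda_j/t^2)$ and $\hat{\mathcal{E}}(s) := \{\beta : \sum_j \hat\mu_j \beta_j^2 \le s\}$, a direct check gives the sandwich
$\hat{\mathcal{E}}(1) \subset \{\|\beta\|^2\le 1,\ \sum_j \hat\lambda_j\beta_j^2\le t^2\} \subset \hat{\mathcal{E}}(2)$,
so the sup defining $\hat T(t)$ is pinched between the suprema over $\hat{\mathcal{E}}(1)$ and $\hat{\mathcal{E}}(2)$. The closed-form sup over an ellipsoid followed by taking $\mathbb{E}_w$ yields the empirical analogue of Lemma \ref{lemma:42}, namely
\begin{equation*}
n^2 \hat R_K^2(t) \;\le\; \mathbb{E}_w \hat T(t)^2 \;\le\; 2 n^2 \hat R_K^2(t).
\end{equation*}
Inequality \eqref{eqn:example7_Koltchinskii} then drops out of Jensen's inequality: $\hat Q_n(t) = \frac{1}{n}\mathbb{E}_w \hat T(t) \le \frac{1}{n}\sqrt{\mathbb{E}_w \hat T(t)^2} \le \sqrt{2}\,\hat R_K(t)$, valid pointwise in the sample.

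For the lower bound \eqref{eqn:example7_Koltchinskii_lower}, I plan to invoke the Paley--Zygmund type inequality for Rademacher suprema (Theorem~43 of \cite{Mendelson_Geometric_2002}) conditionally on $(x_1,\dots,x_n)$. The statement is purely about a Rademacher process indexed by a set of vectors in $\mathbb{R}^n$ and does not distinguish whether those vectors arose from a deterministic or random mechanism, so it extends verbatim to the empirical setting and produces an absolute constant $c > 0$ with $(\mathbb{E}_w \hat T(t))^2 \ge c\,\mathbb{E}_w \hat T(t)^2$. Combining with the two-sided empirical Lemma~42 bound above yields $\hat Q_n(t) \ge \sqrt{c}\,\hat R_K(t)$ almost surely.

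The main obstacle is the hypothesis check for the Paley--Zygmund step: the condition $t^2 \ge 1/n$ is precisely what is needed to ensure that the Rademacher process is not so degenerate that the Kahane--Khintchine-type comparison of $L^1$ and $L^2$ moments of the sup collapses. I would verify this by noting that $\hat R_K(t)^2 = \frac{1}{n}\sum_j \min(\hat\lambda_j,t^2) \ge \frac{t^2}{n}\cdot |\{j : \hat\lambda_j \ge t^2\}| \vee \frac{1}{n}\sum_j \hat\lambda_j\wedge t^2$, so that $n\hat R_K^2(t)$ stays bounded away from zero at scale $t$, and then tracking the constant through Mendelson's argument to confirm it is absolute. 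All the ellipsoid computations are $t$-free, so this is the only place where the threshold $t^2\ge 1/n$ is actually used.
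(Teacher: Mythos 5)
Your proof is correct and follows essentially the same route as the paper's: a conditional-on-the-data repetition of Mendelson's ellipsoid computation giving $n^2\widehat R_{K}^2(t)\le \mathbb{E}_w \widehat T(t)^2\le 2n^2\widehat R_{K}^2(t)$ (the paper phrases this through the empirical integral operator $\hat C_n$ and its eigenfunctions, see Lemma~\ref{lemma:eigenvalue_of_cn} and Lemmas~\ref{lemma:men_empirical_upper}, \ref{lemma:men_empirical_lower}, rather than your kernel-matrix/representer coordinates, which avoids the projection-onto-the-span step you should make explicit), followed by the $L^1$--$L^2$ moment equivalence of Theorem~43 in \cite{Mendelson_Geometric_2002}. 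One caution: the paper does not invoke that theorem verbatim but re-proves it conditionally (Lemma~\ref{claim:theorem_43_men_sample_version}, via Massart's concentration inequality and Mendelson's Lemma~45 giving $\mathbb{E}_w\sup \ge c\,t\sqrt{n}$) precisely to certify the constant is absolute, and that is where $t^2\ge 1/n$ actually enters -- it lets the additive $x/\sqrt{n}$ deviation term be absorbed into $x\,\mathbb{E}_w\sup$ -- rather than through the lower bound on $n\widehat R_{K}^2(t)$ that you sketch.
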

\begin{remark}
We notice that \cite{Koltchinskii_Local_2006} claimed that (\ref{eqn:example7_Koltchinskii}) and (\ref{eqn:example7_Koltchinskii_lower}) held without proving it, and \cite{bartlett2005local} only gave the proof of the upper bound of $\widehat Q_{n}(t)$.
\end{remark}
\proof Introduce the operator $\hat{C}_n$ on $\mathcal{H}$ defined by
$$
\left(\hat{C}_n f\right)(x)=\frac{1}{n} \sum_{i=1}^n f\left(X_i\right) K\left(X_i, x\right),
$$
then we have the following lemma:
\begin{lemma}\label{lemma:eigenvalue_of_cn}
    The $n$ largest eigenvalues of $\hat{C}_n$ are $\widehat \lambda_{1}\geq ...\geq \widehat \lambda_{n}$, and the remaining eigenvalues of $\hat{C}_n$ are zero.
    \proof Deferred to the end of this subsection.
\end{lemma}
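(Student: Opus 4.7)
The plan is to identify $\hat{C}_n$ as $\frac{1}{n} S_n^* S_n$ for a suitable sampling operator $S_n$, identify $\frac{1}{n} K(\bm{X}, \bm{X})$ as $\frac{1}{n} S_n S_n^*$, and then invoke the classical fact that $A^*A$ and $AA^*$ share exactly the same nonzero spectrum with the same multiplicities. The compactness/finite-rank structure then forces all of the remaining eigenvalues of $\hat{C}_n$ to vanish.

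More precisely, first I would introduce the sampling operator $S_n : \mathcal{H} \to \mathbb{R}^n$ defined by $S_n f = (f(X_1), \ldots, f(X_n))^{\tau}$. By the reproducing property, $f(X_i) = \langle f, K(X_i, \cdot)\rangle_{\mathcal{H}}$, so $S_n$ is bounded and its Hilbert-space adjoint $S_n^* : \mathbb{R}^n \to \mathcal{H}$ acts by $S_n^* v = \sum_{i=1}^n v_i K(X_i, \cdot)$. A direct computation then shows
\begin{equation*}
\left(\tfrac{1}{n} S_n^* S_n f\right)(x) = \tfrac{1}{n} \sum_{i=1}^n f(X_i) K(X_i, x) = (\hat{C}_n f)(x),
\end{equation*}
so $\hat{C}_n = \tfrac{1}{n} S_n^* S_n$ as operators on $\mathcal{H}$. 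On the other hand, $(S_n S_n^* v)_i = \sum_j v_j K(X_j, X_i)$, which means $\tfrac{1}{n} S_n S_n^* = \tfrac{1}{n} K(\bm{X}, \bm{X})$ as operators on $\mathbb{R}^n$.

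Next, I would invoke the standard spectral correspondence: for any bounded operator $A$ between Hilbert spaces, $A^*A$ and $AA^*$ have the same nonzero eigenvalues with the same (finite) multiplicities, since if $A^*A u = \mu u$ with $\mu \neq 0$, then $Au \neq 0$ and $(AA^*)(Au) = \mu (Au)$, providing an explicit bijection between the corresponding eigenspaces. Applying this with $A = \tfrac{1}{\sqrt{n}} S_n$ shows that the nonzero eigenvalues of $\hat{C}_n$ coincide, with multiplicities, with those of $\tfrac{1}{n} K(\bm{X}, \bm{X})$, namely the $\widehat{\lambda}_i$ that are strictly positive.

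Finally, to deal with the zero part of the spectrum, I would note that the range of $\hat{C}_n = \tfrac{1}{n} S_n^* S_n$ is contained in the range of $S_n^*$, which is spanned by the $n$ elements $\{K(X_i, \cdot)\}_{i=1}^n$. Thus $\hat{C}_n$ has rank at most $n$, so at most $n$ of its eigenvalues are nonzero; combining with the previous step, the $n$ largest eigenvalues of $\hat{C}_n$ (ordered non-increasingly and counted with multiplicity) are exactly $\widehat{\lambda}_1 \geq \cdots \geq \widehat{\lambda}_n$, and all remaining eigenvalues vanish. The only mild technical point is to justify in a clean way that $\hat{C}_n$ is a self-adjoint compact positive operator on $\mathcal{H}$ so that its eigenvalue sequence is well-defined; this follows from writing $\hat{C}_n = \tfrac{1}{n} S_n^* S_n$ with $S_n$ bounded of finite rank, which makes $\hat{C}_n$ self-adjoint, positive semidefinite, and of finite rank, hence in particular compact.
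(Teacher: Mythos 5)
Your proposal is correct and is essentially the paper's argument in abstract dress: the paper proves the same eigenvalue correspondence concretely, mapping an eigenfunction $f$ of $\hat{C}_n$ to the vector $(f(X_1),\ldots,f(X_n))$ (i.e.\ applying $S_n$) and an eigenvector $v$ of $\tfrac{1}{n}K(\bm{X},\bm{X})$ to $\sum_i v_i K(X_i,\cdot)$ (i.e.\ applying $S_n^*$), then notes the rank-$\leq n$ structure forces the remaining eigenvalues to be zero. Packaging this as $\hat{C}_n=\tfrac{1}{n}S_n^*S_n$ versus $\tfrac{1}{n}K(\bm{X},\bm{X})=\tfrac{1}{n}S_nS_n^*$ and citing the $A^*A$/$AA^*$ spectral correspondence is the same bijection, with the added benefit of making multiplicities and compactness explicit.
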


Note that $\hat C_{n}$ is an operator with rank $\leq n$. Thus it takes 0 as its eigenvalue with infinite multiplicity. For notation simplicity, let $\left(\hat{\lambda}_i\right)_{i=1}^\infty$ denote the eigenvalues of $\hat{C}_n$, arranged in non-increasing order. 
Let $\left(\hat \phi_i\right)_{i \geq 1}$ be an orthonormal basis of $\mathcal{H}$ of eigen-functions of $\hat{C}_n$ (such that $\hat \phi_i$ is associated with $\hat{\lambda}_i$ ). Since $\hat{\lambda}_i = 0$ when $i>n$, the choice of $\left(\hat \phi_i\right)_{i \geq 1}$ is not unique.
For any $f \in \mathcal{H}$, we have the following decomposition:
\begin{equation}\label{eqn:men_empirical_decomp}
    \begin{aligned}
f
&=\sum_{i \geq 1} \left\langle f, \hat \phi_i\right\rangle_{\mathcal H} \hat \phi_i \\
\hat{C}_n f
&=\sum_{i \geq 1} \hat{\lambda}_i\left\langle f, \hat \phi_i\right\rangle_{\mathcal H} \hat \phi_i 
    \end{aligned}
\end{equation}

We need the following three lemmas:

\begin{lemma}\label{lemma:men_empirical_upper}
    For any $t>0$, we have
\begin{align}\label{eqn:lemma:men_empirical_upper}
   \left(\bbE_w \widehat Z_{n}(w,t)\right)^{2} \leq \frac{2}{n}\sum_{j} \min\{\widehat \lambda_{j},t^{2}\}.
\end{align}
\proof Deferred to the end of this subsection.
\end{lemma}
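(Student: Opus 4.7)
\textbf{Proof plan for Lemma \ref{lemma:men_empirical_upper}.} The strategy mirrors the ellipsoid argument used for the population version in Lemma \ref{lemma:42}, but carried out with respect to the eigen-structure of the empirical kernel operator $\hat C_n$ rather than the population integral operator $T_K$. The key observation is that for any $f\in\mathcal H$ we may write $\frac{1}{n}\sum_i w_i f(x_i)=\langle f,g_w\rangle_{\mathcal H}$, where $g_w=\frac{1}{n}\sum_i w_i K(x_i,\cdot)$, thanks to the reproducing property. This converts $\widehat Z_n(w,t)$ into the support functional of the constraint set $\{f\in\mathcal B:\|f\|_n\le t\}$ evaluated at $g_w$.

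The plan is to expand both $f$ and $g_w$ in the orthonormal basis $\{\hat\phi_i\}_{i\ge 1}$ of $\mathcal H$ furnished by Lemma \ref{lemma:eigenvalue_of_cn}. Writing $a_i=\langle f,\hat\phi_i\rangle_{\mathcal H}$ and $b_i=\langle g_w,\hat\phi_i\rangle_{\mathcal H}$, and using the identities $\|f\|_{\mathcal H}^2=\sum_i a_i^2$ together with
\[
\|f\|_n^2 \;=\; \langle f,\hat C_n f\rangle_{\mathcal H} \;=\; \sum_i \hat\lambda_i a_i^2,
\]
the feasible set becomes $A(t):=\{a\in\ell^2:\sum_i a_i^2\le 1,\ \sum_i\hat\lambda_i a_i^2\le t^2\}$. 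Introducing weights $\mu_i:=\max\{1,\hat\lambda_i/t^2\}$, the inequality $\sum_i\mu_i a_i^2\le \sum_i a_i^2+t^{-2}\sum_i\hat\lambda_i a_i^2\le 2$ shows $A(t)\subset\{\sum_i\mu_i a_i^2\le 2\}$. Applying the ellipsoid version of Cauchy--Schwarz then yields
\[
\widehat Z_n(w,t)^2 \;=\; \sup_{a\in A(t)}\Big(\sum_i a_i b_i\Big)^2 \;\le\; 2\sum_i \frac{b_i^2}{\mu_i} \;=\; 2\sum_i \min\!\big\{1,\tfrac{t^2}{\hat\lambda_i}\big\}\,b_i^2.
\]

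Next, I will compute $\mathbb E_w b_i^2$. Since $b_i=\frac{1}{n}\sum_k w_k\langle K(x_k,\cdot),\hat\phi_i\rangle_{\mathcal H}=\frac{1}{n}\sum_k w_k\hat\phi_i(x_k)$ and the $w_k$ are independent Rademacher, $\mathbb E_w b_i^2=\frac{1}{n^2}\sum_k\hat\phi_i(x_k)^2=\frac{1}{n}\|\hat\phi_i\|_n^2$, and $\|\hat\phi_i\|_n^2=\langle\hat\phi_i,\hat C_n\hat\phi_i\rangle_{\mathcal H}=\hat\lambda_i$ by the eigen-equation $\hat C_n\hat\phi_i=\hat\lambda_i\hat\phi_i$ and the $\mathcal H$-orthonormality of $\{\hat\phi_i\}$. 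Thus $\mathbb E_w b_i^2=\hat\lambda_i/n$, and the previous display gives
\[
\mathbb E_w\,\widehat Z_n(w,t)^2 \;\le\; \frac{2}{n}\sum_i \min\{\hat\lambda_i,t^2\}\cdot\frac{\hat\lambda_i}{\hat\lambda_i\vee t^2}\cdot(\hat\lambda_i\vee t^2)/\hat\lambda_i \;=\; \frac{2}{n}\sum_i \min\{\hat\lambda_i,t^2\}.
\]
Finally, Jensen's inequality $(\mathbb E_w\widehat Z_n(w,t))^2\le \mathbb E_w\widehat Z_n(w,t)^2$ closes the argument; note the sum is effectively finite since $\hat\lambda_i=0$ for $i>n$ by Lemma \ref{lemma:eigenvalue_of_cn}.

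The only mildly subtle step is the ellipsoid containment $A(t)\subset\{a:\sum_i\mu_i a_i^2\le 2\}$ with the right choice of weights $\mu_i$; once that is in place everything else is a direct calculation. There is no probabilistic concentration needed here — only linearity of $\mathbb E_w w_k w_\ell=\delta_{k\ell}$ — so I do not anticipate a significant obstacle.
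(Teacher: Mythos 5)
Your proof is correct. Every step checks out: the translation of the constraint set into $A(t)=\{a:\sum_i a_i^2\le 1,\ \sum_i\hat\lambda_i a_i^2\le t^2\}$ via $\|f\|_n^2=\langle f,\hat C_n f\rangle_{\mathcal H}=\sum_i\hat\lambda_i a_i^2$, the containment $A(t)\subset\{\sum_i\mu_i a_i^2\le 2\}$ with $\mu_i=\max\{1,\hat\lambda_i/t^2\}$ (using $\max\{x,y\}\le x+y$), the weighted Cauchy--Schwarz giving $\widehat Z_n(w,t)^2\le 2\sum_i b_i^2/\mu_i$, the computation $\mathbb{E}_w b_i^2=\|\hat\phi_i\|_n^2/n=\hat\lambda_i/n$, and the final Jensen step; note also that $b_i=0$ whenever $\hat\lambda_i=0$, so the infinite sum causes no trouble and the bound reduces to $\frac{2}{n}\sum_{j\le n}\min\{\hat\lambda_j,t^2\}$ as required. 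Your route is, however, not the one the paper takes for this lemma: you transport the Mendelson-style ellipsoid/second-moment argument that the paper uses for the \emph{population} bound (Lemma \ref{lemma:42}) to the empirical operator $\hat C_n$, bounding $\mathbb{E}_w\widehat Z_n(w,t)^2$ in one stroke and applying Jensen once at the end. The paper instead fixes a truncation level $h\le n$, splits the inner product $\langle f,\sum_i w_iK(X_i,\cdot)\rangle_{\mathcal H}$ into the head (controlled by the empirical-norm constraint with weights $1/\hat\lambda_j$) and the tail (controlled by the $\mathcal H$-norm constraint), applies Jensen inside each square root, and then uses $\min_h\{t^2h+\sum_{j>h}\hat\lambda_j\}=\sum_{j\le n}\min\{\hat\lambda_j,t^2\}$ (valid since the $\hat\lambda_j$ are non-increasing) together with $(x+y)^2\le 2(x^2+y^2)$. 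Both arguments give the same constant $2$; yours is slightly cleaner, avoids the optimization over $h$, and in fact yields the stronger second-moment bound $\mathbb{E}_w\widehat Z_n(w,t)^2\le\frac{2}{n}\sum_j\min\{\hat\lambda_j,t^2\}$, while the paper's head/tail form makes the finite-rank structure and the role of the eigenvalue ordering explicit. One cosmetic point: the display in which you pass from $2\sum_i\min\{1,t^2/\hat\lambda_i\}\,\mathbb{E}_w b_i^2$ to $\frac{2}{n}\sum_i\min\{\hat\lambda_i,t^2\}$ contains redundant cancelling factors; it would read better as the single identity $\min\{1,t^2/\hat\lambda_i\}\cdot\hat\lambda_i=\min\{\hat\lambda_i,t^2\}$.
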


\begin{lemma}[Theorem 43 in \cite{Mendelson_Geometric_2002}]\label{claim:theorem_43_men_sample_version}
There exists an absolute constant $c$ such that for any $t^{2}\geq \frac{1}{n}$, one has
\begin{align}\label{rademacher_popu_lemma_244}
    \left(\bbE_w \widehat Z_{n}(w,t)\right)^{2} \geq c \bbE_w\left(\widehat Z_{n}(w,t)\right)^{2}
\end{align}
   \proof Deferred to the end of this subsection.
\end{lemma}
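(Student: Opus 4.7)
The plan is to derive Lemma \ref{claim:theorem_43_men_sample_version} as a direct consequence of the Kahane--Khintchine inequality for Banach-space valued Rademacher sums, applied conditionally on the sample $x_1,\dots,x_n$.

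First, I would fix the sample and define
\[
F_t(X):=\bigl\{(g(x_1),\dots,g(x_n))\in\mathbb{R}^n:g\in\calB,\ \|g\|_n\le t\bigr\}.
\]
This set is symmetric and convex, since $\calB$ is symmetric and convex in $\mathcal{H}$ and the constraint $\|g\|_n\le t$ is also symmetric and convex in $g$. From the definition of $\widehat Z_n(w,t)$, we can write
\[
\widehat Z_n(w,t)=\tfrac{1}{n}\sup_{v\in F_t(X)}\langle w,v\rangle,
\]
which is the value at $w$ of the support function of $F_t(X)/n$. Because of the symmetry of $F_t(X)$, this support function is a seminorm $\|\cdot\|_\star$ on $\mathbb{R}^n$, so conditional on $X$,
\[
\widehat Z_n(w,t)=\Bigl\|\sum_{i=1}^n w_i e_i\Bigr\|_\star
\]
is the norm of a Banach-space valued Rademacher sum in the finite-dimensional normed space $B:=(\mathbb{R}^n/\ker\|\cdot\|_\star,\,\|\cdot\|_\star)$.

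The next step is to invoke the Kahane--Khintchine inequality: there exists an absolute constant $C$ such that for any Banach space $B$ and any $y_1,\dots,y_n\in B$,
\[
\Bigl(\bbE\Bigl\|\sum_i w_i y_i\Bigr\|_B^2\Bigr)^{1/2}\le C\,\bbE\Bigl\|\sum_i w_i y_i\Bigr\|_B.
\]
Specialised to the above setting, this gives, pointwise in $X$,
\[
\bbE_w\bigl[\widehat Z_n(w,t)^2\bigr]\le C^2\bigl(\bbE_w[\widehat Z_n(w,t)]\bigr)^2,
\]
which is exactly \eqref{rademacher_popu_lemma_244} with $c=1/C^2$. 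Since $C$ is absolute and the bound is pointwise in $X$, no further averaging is required.

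The role of the hypothesis $t^2\ge 1/n$ is not to be used inside the argument itself---Kahane--Khintchine imposes no such constraint---but rather to ensure the inequality is informative downstream: this is precisely the regime in which the empirical critical radius $\widehat\varepsilon_n$ from \eqref{eqn:def_empirical_mendelson_complexity} lives, so it is exactly where the lemma is deployed. The only technical subtlety is the possibility that $F_t(X)$ fails to be of full dimension, so that $\|\cdot\|_\star$ is only a seminorm; this is handled by passing to the quotient $\mathbb{R}^n/\ker\|\cdot\|_\star$, where Kahane--Khintchine applies verbatim. I therefore do not anticipate any substantive obstacle beyond a careful verification of the reduction to the standard Rademacher-norm inequality.
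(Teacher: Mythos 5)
Your proof is correct, but it takes a genuinely different route from the paper. The paper follows the proof of Theorem 43 in Mendelson (2002): conditionally on the sample it applies a Talagrand/Massart-type deviation inequality to the Rademacher supremum, combines it with the lower bound $\bbE_w \widehat Z_n(w,t)\geq ct/\sqrt n$ (Lemma 45 there, which is where the absolute-constant bookkeeping via Milman--Schechtman enters), deduces an exponential tail for $\widehat Z_n(w,t)/\bbE_w\widehat Z_n(w,t)$, and converts that tail into the second-moment comparison via Lemma 44 of the same reference; the hypothesis $t^2\geq 1/n$ is used precisely to absorb the $x/\sqrt n$ term of the concentration bound into the $t\sqrt x$ term. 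Your argument instead observes that, conditionally on $X$, $\widehat Z_n(w,t)$ is the value at $\sum_i w_i e_i$ of the support function of the symmetric convex bounded set $\{(g(x_1),\dots,g(x_n)):g\in\calB,\ \|g\|_n\le t\}/n$ (bounded by Assumption \ref{assu:trace_class}, and a seminorm by the symmetry the paper itself records in (\ref{eqn:76_sym_hat_Z})), and then invokes Kahane--Khintchine for Rademacher sums in (semi)normed spaces to get $\bbE_w[\widehat Z_n(w,t)^2]\le C^2(\bbE_w\widehat Z_n(w,t))^2$ pointwise in $X$. This is shorter, avoids the auxiliary Lemmas 44--45 and the concentration step entirely, yields an explicit absolute constant (one may take $c=1/2$ by the optimal Kahane constant of Lata{\l}a--Oleszkiewicz), and shows the restriction $t^2\ge 1/n$ is not needed for this particular inequality --- it only matters for the companion lower bound $\widehat{\mathcal Q}_n(t)\gtrsim\widehat{\mathcal R}_K(t)$ in Lemma \ref{theorem:example7_Koltchinskii}, where Lemma \ref{lemma:men_empirical_lower} enters. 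The only points to state carefully are separability/measurability of the supremum (available since $\calH$ is separable) and that Kahane--Khintchine applies to the seminorm, e.g.\ after passing to the quotient as you indicate; with those remarks your proof is complete and, if anything, more robust than the paper's.
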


\begin{lemma}\label{lemma:men_empirical_lower}
    For any $t>0$, we have
\begin{align}\label{rademacher_popu_lemma_245}
   \bbE_w\left(\widehat Z_{n}(w,t)\right)^{2} \geq \frac{1}{n}\sum_{j} \min\{\widehat \lambda_{j},t^{2}\}.
\end{align}
\proof Deferred to the end of this subsection.
\end{lemma}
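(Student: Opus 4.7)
The plan is to mirror the argument used for the population version in Lemma \ref{lemma:42}, replacing the $L^2$-orthonormal basis $\{\phi_j\}$ by the eigen-basis $\{\hat\phi_j\}$ of the empirical integral operator $\hat C_n$ (as described in the decomposition \eqref{eqn:men_empirical_decomp}). The key identity that powers the empirical analogue is
\begin{equation*}
\|g\|_n^2 \;=\; \frac{1}{n}\sum_{i=1}^n g(x_i)^2 \;=\; \langle g,\hat C_n g\rangle_{\mathcal H},
\end{equation*}
so if we write $g = \sum_j \beta_j \hat\phi_j$ with $\{\hat\phi_j\}$ an $\mathcal H$-orthonormal eigenbasis of $\hat C_n$, then $\|g\|_{\mathcal H}^2 = \sum_j \beta_j^2$ and $\|g\|_n^2 = \sum_j \hat\lambda_j \beta_j^2$. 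Consequently the feasible set $\{g\in\mathcal B : \|g\|_n\le t\}$ is parameterized by the intersection of two ellipsoids $\{\sum\beta_j^2\le 1,\ \sum\hat\lambda_j\beta_j^2 \le t^2\}$.

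First I would set $\mu_j := \big(\min\{1,\,t^2/\hat\lambda_j\}\big)^{-1} = \max\{1,\,\hat\lambda_j/t^2\}$ (with the convention that terms corresponding to $\hat\lambda_j=0$ drop out, since then $\hat\phi_j(x_i)=0$ for every $i$ and contribute nothing to anything below). Define $\hat{\mathcal E}(s) := \{\beta\in\ell^2:\sum_j\mu_j\beta_j^2\le s\}$. Then for $\beta\in\hat{\mathcal E}(1)$ we have $\sum \beta_j^2\le\sum\mu_j\beta_j^2\le 1$ and $\sum\hat\lambda_j\beta_j^2 = t^2\sum(\hat\lambda_j/t^2)\beta_j^2 \le t^2\sum\mu_j\beta_j^2\le t^2$, so $i(\hat{\mathcal E}(1))$ lies inside the feasible set defining $\widehat Z_n(w,t)$.

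Next, enlarging the supremum over this smaller ellipsoid and applying the standard Cauchy--Schwarz/dual-norm computation for an ellipsoid yields
\begin{equation*}
n\,\widehat Z_n(w,t)\;\ge\;\sup_{\beta\in\hat{\mathcal E}(1)}\Big|\sum_j\beta_j\Big(\sum_{i=1}^n w_i\hat\phi_j(x_i)\Big)\Big| \;=\;\sqrt{\sum_j\frac{1}{\mu_j}\Big(\sum_{i=1}^n w_i\hat\phi_j(x_i)\Big)^2}.
\end{equation*}
Squaring and taking expectation with respect to the Rademacher variables, independence gives $\mathbb E_w(\sum_i w_i\hat\phi_j(x_i))^2 = \sum_i\hat\phi_j(x_i)^2 = n\|\hat\phi_j\|_n^2$. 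The eigen-relation $\hat C_n\hat\phi_j=\hat\lambda_j\hat\phi_j$ together with the identity $\|\hat\phi_j\|_n^2 = \langle\hat\phi_j,\hat C_n\hat\phi_j\rangle_{\mathcal H}=\hat\lambda_j\|\hat\phi_j\|_{\mathcal H}^2=\hat\lambda_j$ turns this into $\mathbb E_w(\sum_i w_i\hat\phi_j(x_i))^2 = n\hat\lambda_j$. Therefore
\begin{equation*}
n^2\,\mathbb E_w\big(\widehat Z_n(w,t)\big)^2 \;\ge\; \sum_j \frac{n\hat\lambda_j}{\mu_j} \;=\; n\sum_j \hat\lambda_j\min\{1,\,t^2/\hat\lambda_j\} \;=\; n\sum_j\min\{\hat\lambda_j,\,t^2\},
\end{equation*}
which, after dividing by $n^2$, is exactly \eqref{rademacher_popu_lemma_245}.

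There is no serious analytic obstacle; the only point requiring care is the bookkeeping for indices $j>n$ where $\hat\lambda_j=0$ (here $\hat\phi_j$ vanishes on the sample, so every term in every sum above is zero and the proof is unaffected) and the justification that the sup over the ellipsoid $\hat{\mathcal E}(1)$ attains the claimed Cauchy--Schwarz value in $\ell^2$ (which follows by approximating with finitely many coordinates since only at most $n$ of the $\mu_j^{-1}$ factors are positive when combined with $\hat\phi_j(x_i)\neq 0$). The main conceptual step, as in the population proof, is the observation that among the two constraints defining the feasible set, replacing them by the single ellipsoid constraint $\sum\mu_j\beta_j^2\le 1$ is both an inclusion (giving a lower bound) and tight up to a factor of $2$ (used for the matching upper bound in Lemma \ref{lemma:men_empirical_upper}).
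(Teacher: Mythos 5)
Your proposal is correct and follows essentially the same route as the paper's own proof: parameterize the feasible set via the $\mathcal H$-orthonormal eigenbasis $\{\hat\phi_j\}$ of $\hat C_n$, restrict the supremum to the single ellipsoid $\{\sum_j \hat\mu_j\beta_j^2\le 1\}$ with $\hat\mu_j=(\min\{1,t^2/\hat\lambda_j\})^{-1}$, evaluate the dual-norm supremum exactly, and use $\mathbb{E}_w\bigl(\sum_i w_i\hat\phi_j(x_i)\bigr)^2=n\hat\lambda_j$ to conclude. The only differences are cosmetic (your explicit handling of the $\hat\lambda_j=0$ indices, which the paper leaves implicit).
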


Note that $\widehat Q_{n}(t)=\bbE_w \widehat Z_{n}(w,t)$. 
For any $t>0$, we have
\begin{align}
  \widehat Q_{n}(t)
  \overset{(\ref{eqn:lemma:men_empirical_upper})}{\leq} \sqrt{2}\widehat R_{K}(t). 
\end{align}
Furthermore, 
for any $t^2 \geq 1/n$, we have the following holds for some absolute constant $c$:
\begin{align}
  c \widehat R_{K}(t)
  \overset{(\ref{rademacher_popu_lemma_245})}{\leq}
  c \left[\bbE_w\left(\widehat Z_{n}(w,t)\right)^{2}\right]^{1/2}
  \overset{(\ref{rademacher_popu_lemma_244})}{\leq}
  \widehat Q_{n}(t). 
\end{align}
\hfill $\square$

\noindent \proofname ~of Lemma \ref{lemma:eigenvalue_of_cn}: For any $f, g \in \mathcal H$, we have
$$
\left\langle g, \hat{C}_n f\right\rangle_{\mathcal H}=\frac{1}{n} \sum_{i=1}^n f\left(X_i\right) g\left(X_i\right),
$$
and $\left\langle f, \hat{C}_n f\right\rangle_{\mathcal H}= \|f\|_n^2$, implying that $\hat{C}_n$ is positive semi-definite.
Suppose that $f$ is an eigenfunction of $\hat{C}_n$ with eigenvalue $\lambda$. Then for all $i$,
$$
\lambda f\left(X_i\right)=\left(\hat{C}_n f\right)\left(X_i\right)=\frac{1}{n} \sum_{j=1}^n f\left(X_j\right) K\left(X_j, X_i\right) .
$$
Thus, the vector $\left(f\left(X_1\right), \ldots, f\left(X_n\right)\right)$ is either zero (which implies $\hat{C}_n f=0$ and hence $\lambda=0$ ) or is an eigenvector of $\frac{1}{n}K(\boldsymbol X, \boldsymbol X)$ with eigenvalue $\lambda$. 
Conversely, if $\frac{1}{n}K(\boldsymbol X, \boldsymbol X) v=\lambda v$ for some vector $v$, then
$$
\hat{C}_n\left(\sum_{i=1}^n v_i K\left(X_i, \cdot\right)\right)=\frac{1}{n} \sum_{i, j=1}^n v_i K\left(X_i, X_j\right) K\left(X_j, \cdot\right)=\frac{\lambda}{n} \sum_{j=1}^n v_j K\left(X_j, \cdot\right) .
$$
Thus, the eigenvalues of $\frac{1}{n}K(\boldsymbol X, \boldsymbol X)$ are the same as the $n$ largest eigenvalues of $\hat{C}_n$, and the remaining eigenvalues of $\hat{C}_n$ are zero. 
\hfill $\square$

\vspace{10pt}

\noindent \proofname ~of Lemma \ref{lemma:men_empirical_upper}:
Fix $0 \leq h \leq n$.
For any $f \in \mathcal{H}$ satisfying $\|f\|_{\mathcal H} \leq 1$ and 
\begin{equation}\label{eqn:men_empirical_14}
    \begin{aligned}
\|f\|_n^2 &=\left\langle f, \hat{C}_n f\right\rangle_{\mathcal H}
=\sum_{i \geq 1} \hat{\lambda}_i\left\langle f, \hat \phi_i\right\rangle_{\mathcal H}^2 \leq t^2, 
    \end{aligned}
\end{equation}
we have
\begin{equation}
\begin{aligned}
\sum_{i=1}^n w_i f\left(X_i\right)
= &\ \left\langle f, \sum_{i=1}^n w_i K\left(X_i, \cdot\right)\right\rangle_{\mathcal H} \\
= &\ \left\langle\sum_{j=1}^h \sqrt{\hat{\lambda}_j}\left\langle f, \hat \phi_j\right\rangle_{\mathcal H} \hat \phi_j, \sum_{j=1}^h \frac{1}{\sqrt{\hat{\lambda}_j}}\left\langle\sum_{i=1}^n w_i K\left(X_i, \cdot\right), \hat \phi_j\right\rangle_{\mathcal H} \hat \phi_j\right\rangle_{\mathcal H} \\
&\ +\left\langle f, \sum_{j>h}\left\langle\sum_{i=1}^n w_i K\left(X_i, \cdot\right), \hat \phi_j\right\rangle_{\mathcal H} \hat \phi_j\right\rangle_{\mathcal H}\\
\leq &\ \sqrt{t^2 \cdot \sum_{j=1}^h \frac{1}{\hat{\lambda}_j}\left\langle\sum_{i=1}^n w_i K\left(X_i, \cdot\right), \hat\phi_j\right\rangle_{\mathcal H}^2} 
 +\sqrt{ 1^2 \cdot \sum_{j>h}\left\langle\sum_{i=1}^n w_i K\left(X_i, \cdot\right), \hat\phi_j\right\rangle_{\mathcal H}^2}.
\end{aligned}
\end{equation}

By Jensen's inequality that $\mathbb E \sqrt{Z} \leq \sqrt{\mathbb E Z}$, we have
\begin{equation}\label{eqn:men_empirical_15}
\begin{aligned}
n\bbE_w \widehat Z_{n}(w,t) = &\
\mathbb{E}_w \sup _{\substack{f \in {\mathcal B} \\\|f\|_n \leq t}}\left|\sum_{i=1}^n w_i f\left(x_i\right)\right|\\
\leq &\ t \mathbb{E}_w \sqrt{\sum_{j=1}^h \frac{1}{\hat{\lambda}_j}\left\langle\sum_{i=1}^n w_i K\left(X_i, \cdot\right), \hat\phi_j\right\rangle_{\mathcal H}^2}  +\mathbb{E}_w \sqrt{ \sum_{j>h}\left\langle\sum_{i=1}^n w_i K\left(X_i, \cdot\right), \hat\phi_j\right\rangle_{\mathcal H}^2}\\
\leq &\ t \sqrt{\sum_{j=1}^h \frac{1}{\hat{\lambda}_j} \mathbb{E}_w \left\langle\sum_{i=1}^n w_i K\left(X_i, \cdot\right), \hat\phi_j\right\rangle_{\mathcal H}^2}  +\sqrt{ \sum_{j>h} \mathbb{E}_w \left\langle\sum_{i=1}^n w_i K\left(X_i, \cdot\right), \hat\phi_j\right\rangle_{\mathcal H}^2}\\
= &\ t \sqrt{\sum_{j=1}^h \frac{1}{\hat{\lambda}_j} \mathbf{H}_j}  +\sqrt{ \sum_{j>h} \mathbf{H}_j},
\end{aligned}
\end{equation}
where $\mathbf{H}_j = \mathbb{E}_w \left\langle\sum_{i=1}^n w_i K\left(X_i, \cdot\right), \hat\phi_j\right\rangle_{\mathcal H}^2$.

For any $j \geq 1$, we have
\begin{equation}\label{eqn:men_empirical_16}
\begin{aligned}
\mathbf{H}_j 
& =\mathbb{E}_w \sum_{i, \ell=1}^n w_i w_{\ell}\left\langle K\left(X_i, \cdot\right), \hat \phi_j\right\rangle_{\mathcal H} \left\langle K\left(X_l, \cdot\right), \hat\phi_j\right\rangle_{\mathcal H} \\
& = \sum_{i=1}^n\left\langle K\left(X_i, \cdot\right), \hat \phi_j\right\rangle_{\mathcal H}^2
= \sum_{i=1}^n \phi_j(X_i)^2 = n \| \phi_j \|_n^2
\\
& \overset{(\ref{eqn:men_empirical_14})}{=} n\left\langle \hat \phi_j, \hat C_n \hat \phi_j\right\rangle_{\mathcal H}\\
& = n\hat \lambda_j.
\end{aligned}
\end{equation}

Combining (\ref{eqn:men_empirical_15}) and (\ref{eqn:men_empirical_16}) we have the upper bound of $\widehat{\mathcal{Q}}_n(t)$:
\begin{equation}
\begin{aligned}
\left(\bbE_w \widehat Z_{n}(w,t)\right)^{2}
& \leq \frac{1}{n^2} \min_{h \leq n} \left( t\sqrt{nh}  + \sqrt{n \sum_{j>h} \hat \lambda_j  }\right)^2\\
& \leq \frac{2}{n} \min_{h \leq n} \left( t^2 h  +  \sum_{j>h} \hat \lambda_j  \right)\\
& = \frac{2}{n} \sum_{j \leq n} \min\{\hat \lambda_j, t^2  \}.
\end{aligned}
\end{equation}
\hfill $\square$

\vspace{10pt}

\noindent \proofname ~of Lemma \ref{claim:theorem_43_men_sample_version}:
This proof is essentially borrowed from the proof of Lemma 43 in \cite{Mendelson_Geometric_2002}, for the self-consistency of the article, we show that all "absolute  constants" mentioned in the proof of Lemma 43 in \cite{Mendelson_Geometric_2002} are indeed absolute  constants.

Set $R=n^{-1 / 2} \sup _{\substack{f \in {\mathcal B}  \\ \|f\|_n \leq t}}\left|\sum_{i=1}^n w_i f\left(X_i\right)\right|$. Denote $\sigma^2=n t^2$, we apply (4.1) in \cite{Mendelson_Geometric_2002}, for the random variable
\begin{equation}
\begin{aligned}
Z=\sup _{\substack{f \in {\mathcal B} \\ \|f\|_n \leq t}}\left|\sum_{i=1}^n w_i f\left(X_i\right)-\mathbb{E}_w \sum_{i=1}^n w_i f\left(X_i\right)\right|=\sqrt{n} R ,
\end{aligned}
\end{equation}
and with probability larger than $1-e^{-x}$ we have
\begin{equation}\label{eqn:men_43_eqn_17}
\begin{aligned}
\frac{1}{\sqrt{n}} \sup _{\substack{f \in {\mathcal B} \\\|f\|_n \leq t}}\left|\sum_{i=1}^n w_i f\left(X_i\right)\right| \leq 2 \mathbb{E}_w \frac{1}{\sqrt{n}} \sup _{\substack{f \in {\mathcal B} \\\|f\|_n \leq t}}\left|\sum_{i=1}^n w_i f\left(X_i\right)\right|+C\left(t \sqrt{x}+\frac{x}{\sqrt{n}}\right),
\end{aligned}
\end{equation}
and from Theorem 3 of \cite{10.1214/aop/1019160263}, we know that $C$ can be taken as $45.7$.

From Lemma 45 of \cite{Mendelson_Geometric_2002}, we have
\begin{equation}\label{eqn:men_43_eqn_18}
\begin{aligned}
c t \leq n^{-1 / 2} \mathbb{E}_w \sup _{\substack{f \in {\mathcal B} \\\|f\|_n \leq t}}\left|\sum_{i=1}^n w_i f\left(X_i\right)\right|,
\end{aligned}
\end{equation}
and from Section 9.2 in \cite{milman2009asymptotic} we know that $c$ is an absolute constant.

From (\ref{eqn:men_43_eqn_17}) and (\ref{eqn:men_43_eqn_18}), with probability larger than $1-e^{-x}$ we have
\begin{equation}
\begin{aligned}
R & \leq 2 \mathbb E_w R + C\left(t \sqrt{x}+\frac{x}{\sqrt{n}}\right)\\
& \leq c_1 x \mathbb E_w R,
\end{aligned}
\end{equation}
where $c_1$ is an absolute constant. Hence, $\mathbb{P}\{R \geq m \mathbb{E}_w R\} \leq e^{-c_2 m}$, where $c_2$ is an absolute constant, and $m$ is an integer. Using Lemma 44 in \cite{Mendelson_Geometric_2002}, we get the desired result. 
\hfill $\square$

\vspace{10pt}

\noindent \proofname ~of Lemma \ref{lemma:men_empirical_lower}:
Since $\{\hat \phi_j\}$ is a basis of $\mathcal H$, we have a natural isomorphism $\widehat i:  \ell^{2} \to \mathcal H$ given by
\begin{align}
    b=(b_{1}, b_{2},\cdots) \mapsto \sum_{j}b_{j}\hat\phi_{j}(\cdot).
\end{align}
Denote $\calF(t)=\{f \in {\mathcal B}  \mid \|f\|_{n}\leq t\}$. Since there exists $b\in \ell^{2}$ such that $f(x)=\sum_{j}b_{j}\hat\phi_{j}(x)$, we know that $\calF(t)=i\left(\{\sum b_{j}^{2}\leq 1 \text{ and } \sum b_{j}^{2} \hat \lambda_{j}\leq t^{2} \}\right)$.

Let $\widehat{\mathcal{E}} = \left\{b \mid \sum_{i=1}^{\infty} \hat \mu_i b_i^2 \leq 1\right\}$, where $\hat \mu_i=\left(\min \left\{1, t^2 / \hat \lambda_i\right\}\right)^{-1}$.
Then $\widehat  i\left(\widehat{\mathcal{E}}\right)\subset \calF(t)$. Thus we have
\begin{equation}
\begin{aligned}
\bbE_w\left(\widehat Z_{n}(w,t)\right)^{2}
&\geq
\mathbb{E}_w \sup _{f \in \widehat{\mathcal{E}}}\left|\frac{1}{n}\sum_{i=1}^n w_i f\left(X_i\right)\right|^2\\
&=
\mathbb{E}_w \sup _{\{\hat \beta_i\mid \sum\hat\mu_i \hat\beta_i^2 \leq 1\}}\left|\frac{1}{n}\sum_{i=1}^n w_i \sum_{j=1}^\infty \hat\beta_j \hat \phi_j\left(X_i\right)\right|^2\\
&= \frac{1}{n^2} \cdot \mathbb{E}_w \sum_{j=1}^{\infty} \frac{1}{\hat\mu_j} \left(\sum_{i=1}^n w_i \hat\phi_j\left(X_i\right)\right)^2\\
& \overset{(\ref{eqn:men_empirical_16})}{=} \frac{1}{n} \cdot  \sum_{j=1}^{\infty} \frac{\hat \lambda_j}{\hat\mu_j}
 = \frac{1}{n} \sum_{i=1}^{n} \min \left\{\hat \lambda_i, t^2 \right\}.
\end{aligned}
\end{equation}
\hfill $\square$

\subsection{Concentration bounds}

The following quadratic concentration inequality is introduced in \cite{Wright_bound_1973}.

\begin{lemma}[The main theorem in \cite{Wright_bound_1973}]\label{lemma:bound_in_Wright}
Suppose we have $\boldsymbol{e}_1, \cdots, \boldsymbol{e}_n \sim_{i.i.d.} N(0, \sigma^2)$. 
For any matrix $A=\left\{a_{i j}\right\}_{i, j=1}^n$, denote $Q=\sum_{i, j=1}^n a_{i j}\boldsymbol{e}_i \boldsymbol{e}_j$, then we have 
\begin{equation}
\begin{aligned}
\mathbb{P}[|Q-\mathbb{E}[Q]| \geq \delta] \leq \exp \left(-\mathfrak{c}_1 \min \left\{\frac{\delta}{\|A\|_{\mathrm{op}}}, \frac{\delta^2}{\|A\|_{\mathrm{F}}^2}\right\}\right) \quad \text { for all } \delta>0,
\end{aligned}
\end{equation}
where $\mathfrak{c}_1$ is a constant only depending on $\sigma$, and $\left(\|A\|_{\mathrm{op}},\|A\|_{\mathrm{F}}\right)$ are (respectively) the operator and Frobenius norms of the matrix $A$.
\end{lemma}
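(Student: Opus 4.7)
The plan is to establish this Hanson--Wright type bound via the standard MGF route, carefully tracking how the Gaussian variance $\sigma^2$ enters the final constant $\mathfrak{c}_1$. First, I would symmetrize: replace $A$ by its symmetric part $\tilde{A}=(A+A^{\mathsf{T}})/2$, noting that $\boldsymbol{e}^{\mathsf{T}} A \boldsymbol{e} = \boldsymbol{e}^{\mathsf{T}} \tilde{A} \boldsymbol{e}$ pointwise, and that $\|\tilde{A}\|_{\mathrm{op}} \leq \|A\|_{\mathrm{op}}$ and $\|\tilde{A}\|_{\mathrm{F}} \leq \|A\|_{\mathrm{F}}$. Then I would diagonalize $\tilde{A} = U D U^{\mathsf{T}}$ with $D=\operatorname{diag}(d_1,\ldots,d_n)$, and exploit rotational invariance of the isotropic Gaussian: $U^{\mathsf{T}} \boldsymbol{e} \stackrel{d}{=} \boldsymbol{e}$. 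This reduces the problem to controlling
\[
Q - \mathbb{E}[Q] \stackrel{d}{=} \sigma^2 \sum_{i=1}^n d_i (g_i^2 - 1),
\]
where $g_1,\ldots,g_n$ are i.i.d.\ standard normal.

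Next I would compute the moment generating function. For a single standard Gaussian $g$ and $|s|<1/2$,
\[
\mathbb{E}\bigl[\exp(s(g^2-1))\bigr] = \frac{e^{-s}}{\sqrt{1-2s}},
\]
and a Taylor expansion (or the inequality $-\log(1-2s) \leq 2s + 4s^2/(1-2|s|)$) gives $\mathbb{E}[\exp(s(g^2-1))] \leq \exp(2s^2)$ whenever $|s|\leq 1/4$. Applying this with $s = \lambda \sigma^2 d_i$ and multiplying over $i$, for any $\lambda$ with $\lambda \sigma^2 \|D\|_\infty \leq 1/4$,
\[
\mathbb{E}\bigl[\exp(\lambda(Q-\mathbb{E}[Q]))\bigr] \leq \exp\!\Bigl(2\lambda^2 \sigma^4 \sum_i d_i^2\Bigr) = \exp\!\bigl(2\lambda^2 \sigma^4 \|\tilde A\|_{\mathrm{F}}^2\bigr).
\]

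Then I would apply the Chernoff bound $\mathbb{P}(Q-\mathbb{E}[Q]\geq \delta) \leq \exp(-\lambda \delta + 2\lambda^2\sigma^4 \|\tilde A\|_{\mathrm{F}}^2)$ and optimize $\lambda$ subject to the constraint $\lambda \sigma^2 \|A\|_{\mathrm{op}} \leq 1/4$. The unconstrained optimum $\lambda^\star = \delta/(4\sigma^4 \|A\|_{\mathrm{F}}^2)$ yields the sub-Gaussian regime $\exp\bigl(-\delta^2/(8\sigma^4 \|A\|_{\mathrm{F}}^2)\bigr)$; if that $\lambda^\star$ violates the constraint, the boundary choice $\lambda = 1/(4\sigma^2 \|A\|_{\mathrm{op}})$ yields the sub-exponential regime $\exp\bigl(-\delta/(8\sigma^2 \|A\|_{\mathrm{op}})\bigr)$. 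The two regimes combine into the stated two-tailed bound after repeating the argument with $-A$ in place of $A$ and taking a union bound, giving $\mathfrak{c}_1 = c/\max(\sigma^2,\sigma^4)$ for a small absolute constant $c$.

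The only genuinely delicate step is the MGF estimate and its domain of validity: one must keep the constraint $|s|\leq 1/4$ uniform in $i$, which is what forces the two-regime behavior and is also the source of the $\sigma$-dependence in $\mathfrak{c}_1$. Everything else (symmetrization, diagonalization, Chernoff, union bound over the two signs) is routine. Since this is a restatement of a classical result from \cite{Wright_bound_1973}, one could equivalently just cite it, but the above sketch confirms the stated form with the constant $\mathfrak{c}_1$ depending only on $\sigma$, as claimed in the lemma.
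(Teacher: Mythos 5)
The paper does not prove this lemma at all: it is imported verbatim as the main theorem of the cited reference \cite{Wright_bound_1973}, and the authors only use it as a black box (e.g.\ in bounding $\mathbf{V}_{t}$ and $B_{\widehat T}$). Your sketch therefore supplies a proof where the paper has none, and it is the standard modern Hanson--Wright argument specialized to Gaussians: symmetrization to $\tilde A=(A+A^{\mathsf T})/2$, rotation invariance to reduce to $\sigma^2\sum_i d_i(g_i^2-1)$, the chi-square MGF bound $\mathbb{E}\exp\bigl(s(g^2-1)\bigr)\le \exp(2s^2)$ for $|s|\le 1/4$, and a constrained Chernoff optimization producing the sub-Gaussian/sub-exponential dichotomy. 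All the quantitative steps check out: $\|\tilde A\|_{\mathrm{op}}\le\|A\|_{\mathrm{op}}$, $\|\tilde A\|_{\mathrm F}\le\|A\|_{\mathrm F}$, the cumulant estimate on $|s|\le 1/4$, and the boundary choice $\lambda=1/(4\sigma^2\|A\|_{\mathrm{op}})$ indeed gives exponent at most $-\delta/(8\sigma^2\|A\|_{\mathrm{op}})$ in the regime $\delta>\sigma^2\|A\|_{\mathrm F}^2/\|A\|_{\mathrm{op}}$, so $\mathfrak{c}_1$ of order $1/(8\max\{\sigma^2,\sigma^4\})$ depends only on $\sigma$, exactly as the lemma requires. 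Compared with simply citing Wright (1973), your route is more elementary and self-contained, at the cost of Gaussianity (the cited theorem covers more general independent summands), which is all this paper needs since the noise is assumed Gaussian.

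The one step you gloss over is the final union bound: combining the two one-sided tails gives $\mathbb{P}\bigl[|Q-\mathbb{E}[Q]|\ge\delta\bigr]\le 2\exp(-c\,x)$ with $x=\min\{\delta/\|A\|_{\mathrm{op}},\,\delta^2/\|A\|_{\mathrm F}^2\}$, and the prefactor $2$ cannot be silently absorbed into the exponent for all $\delta>0$, since $\exp(-c_1 x)<1$ while $2\exp(-cx)$ may exceed $1$ when $x$ is small; making the stated prefactor-free form rigorous needs either a restriction to $x$ bounded below or a separate (anti-concentration type) argument in the small-deviation regime. This blemish is inherited from the lemma as stated in the paper (the standard Hanson--Wright bound carries the factor $2$), so it is a cosmetic mismatch with the target statement rather than a flaw in your argument, but you should either keep the factor $2$ or note explicitly how the constant is adjusted.
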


\begin{lemma}[Theorem 9 in \cite{tao_concentration_2010}]\label{lemma:talagrand_thn_9}
    Let $w_1, \ldots, w_n$ be independent random variables with $\left|w_i\right| \leq 1$ for all $1 \leq i \leq n$. Let $G: \mathbb{R}^n \rightarrow \mathbb{R}$ be a 1-Lipschitz convex function. Then for any $\delta>0$ one has
\begin{equation}
\begin{aligned}
\mathbb{P}\left(|G(w_1, \ldots, w_n)-\mathbb{E} G(w_1, \ldots, w_n)| \geq \delta \right) \leq C_1 \exp \left(-C_2 \delta^2\right)
\end{aligned}
\end{equation}
for some absolute constants $C_1, C_2>0$.
\end{lemma}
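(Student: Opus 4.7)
The plan is to establish this via Talagrand's convex distance inequality, which is the standard route for concentration of convex Lipschitz functionals of bounded independent random variables. I would split the argument into three pieces: (i) concentration of $G(w)$ around a median $M$, (ii) control of the gap between the median $M$ and the mean $\mathbb{E}G$, and (iii) combining the two via the triangle inequality.

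For step (i), I would pick a median $M$ of $G(w_1,\ldots,w_n)$ and look at the convex level set $A = \{x \in [-1,1]^n : G(x) \leq M\}$, which is convex because $G$ is convex and satisfies $\mathbb{P}(w \in A) \geq 1/2$ by the definition of the median. The key geometric observation is that because $G$ is convex and $1$-Lipschitz, for any $y \in [-1,1]^n$ we have
\begin{equation*}
G(y) - M \;\leq\; \inf_{x \in A} \bigl[G(x) + \|y - x\|_2 \bigr] - M \;\leq\; d_2(y, A),
\end{equation*}
where $d_2(y, A)$ is the Euclidean distance from $y$ to $A$. Then I would invoke Talagrand's convex distance inequality: for independent random variables $w_i$ bounded in $[-1,1]$ and any set $A \subset [-1,1]^n$, one has $\mathbb{E}\bigl[\exp\bigl(d_T(w, A)^2/4\bigr)\bigr] \leq 1/\mathbb{P}(w \in A)$, where $d_T$ is Talagrand's convex distance. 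When $A$ is convex, $d_T(w, A)$ dominates the Euclidean distance $d_2(w, A)$ up to an absolute constant (since one can test against the optimal unit vector $\alpha$), yielding
\begin{equation*}
\mathbb{P}\bigl(G(w) - M \geq \delta\bigr) \leq \mathbb{P}\bigl(d_2(w, A) \geq \delta\bigr) \leq C_1 \exp(-C_2 \delta^2).
\end{equation*}
For the lower tail, the same argument applied to the convex set $A' = \{x \in [-1,1]^n : G(x) \geq M\}$ (using that $-G$ is concave but the level set is still handled by a symmetric convex-distance argument, or by noting $\mathbb{P}(G(w) \leq M) \geq 1/2$ and using Lipschitz control from $A$) gives the matching lower bound.

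For step (ii), integrating the median tail bound yields $\mathbb{E}|G(w) - M| \leq \int_0^\infty C_1 e^{-C_2 t^2} \, dt = C_3$, and hence by Jensen, $|\mathbb{E}G(w) - M| \leq C_3$. For step (iii), I combine: whenever $\delta \geq 2C_3$, $\mathbb{P}(|G(w) - \mathbb{E}G(w)| \geq \delta) \leq \mathbb{P}(|G(w) - M| \geq \delta - C_3) \leq C_1 \exp(-C_2 (\delta - C_3)^2) \leq C_1' \exp(-C_2' \delta^2)$; for $\delta < 2C_3$ the inequality is trivially satisfied by enlarging $C_1$ so that $C_1 \exp(-4 C_2 C_3^2) \geq 1$. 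Adjusting the absolute constants gives the stated form.

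The main obstacle is Talagrand's convex distance inequality itself, which is highly nontrivial and usually proved by induction on dimension combined with an elegant ``extremal point'' argument (or alternatively via a log-Sobolev / Herbst-type argument). I would treat this as a black-box citation (it is precisely Theorem 6.6 in Talagrand's original paper, and reproduced as Theorem 4 in the cited Tao notes); the remaining work is the geometric reduction from the convex Lipschitz functional $G$ to the convex distance, which is routine. A secondary subtlety is handling both tails symmetrically and cleaning up the passage from median to mean, but both are standard.
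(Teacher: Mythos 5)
The paper does not prove this lemma at all: it is imported verbatim as Theorem 9 of Tao's concentration-of-measure notes (Talagrand's inequality for convex Lipschitz functions of bounded independent variables), so your proposal supplies a proof where the paper supplies only a citation. Your outline is the standard derivation from Talagrand's convex distance inequality, and its main steps are sound: the Lipschitz bound $G(y)-M \le d_2(y,A)$ (note this step uses only the Lipschitz property, not convexity of $G$), the comparison $d_2(y,A)\le 2\, d_T(y,A)$ for a convex set $A\subset[-1,1]^n$ (the factor $2$ coming from $|y_i-x_i|\le 2\cdot\mathbf{1}\{x_i\ne y_i\}$, harmlessly absorbed into the absolute constants), concentration about a median, and the median-to-mean adjustment by integrating the tail.

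The one genuine slip is the lower tail. The set $A'=\{x\in[-1,1]^n : G(x)\ge M\}$ is a superlevel set of a convex function and is in general not convex, so "the same argument applied to $A'$" does not go through: the convexity of the set is precisely what is needed to dominate the Euclidean distance by the convex distance. The standard repair, which your hedged alternative gestures at but does not state, is: fix $\delta>0$ and apply the convex distance inequality to the convex sublevel set $B_\delta=\{G\le M-\delta\}$; by the Lipschitz property every $y$ with $G(y)\ge M$ satisfies $d_2(y,B_\delta)\ge\delta$, hence $d_T(y,B_\delta)\ge\delta/2$, and since $\mathbb{P}\bigl(G(w)\ge M\bigr)\ge 1/2$, Talagrand's inequality forces $\mathbb{P}\bigl(G(w)\le M-\delta\bigr)\le 2e^{-\delta^2/16}$. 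With that correction, the remaining bookkeeping — bounding $|\mathbb{E}G(w)-M|$ by a constant and enlarging $C_1,C_2$ to pass from median to mean — is routine and correct.
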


\begin{lemma}[Theorem 14.1 in \cite{wainwright2019high}]\label{lemma:wainwright14_1}
    Let $\calB$ be the unit ball of the RKHS $\calH$. Let $\delta_n$ be any positive solution of the inequality
$$
Q_n(\delta) 
\leq
\frac{\sqrt{2}\delta^2}{2e\sigma},
$$
where $Q_n$ is defined in Lemma \ref{lemma:talagrand_inequ_rademacher}.
Then there exist absolute constants $C_1$, $C_2$, and $C_3$, such that for any $\varepsilon \geq \delta_n$, we have
$$
\left|\|f\|_n^2-\|f\|_{L^2}^2\right| \leq \frac{\|f\|_{L^2}^2}{2}+ C_1 \varepsilon^2 \quad \text { for all } f \in \calB,
$$
with probability at least $1-C_2 e^{-C_3 n\varepsilon^2}$. 
\end{lemma}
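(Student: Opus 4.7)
The plan is a dyadic peeling argument over $L^2$-shells of $\calB$, combined with Talagrand's concentration inequality and the star-shape (sub-root) property of $Q_n$.

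First, I would partition the unit ball into $\calA_0 = \{f \in \calB : \|f\|_{L^2} \leq \varepsilon\}$ and, for $k \geq 1$, dyadic shells $\calA_k = \{f \in \calB : 2^{k-1}\varepsilon < \|f\|_{L^2} \leq 2^k \varepsilon\}$. On each shell I aim to prove that, with high probability,
\[
\sup_{f \in \calA_k} \bigl|\|f\|_n^2 - \|f\|_{L^2}^2\bigr| \;\le\; \tfrac14 (2^k\varepsilon)^2 + C\,\varepsilon^2.
\]
For $k \geq 1$ this suffices since $\|f\|_{L^2}^2 \ge (2^{k-1}\varepsilon)^2 = \tfrac14 (2^k\varepsilon)^2$ on $\calA_k$, so the shell bound immediately yields the target inequality $|\|f\|_n^2-\|f\|_{L^2}^2| \le \|f\|_{L^2}^2/2 + C\varepsilon^2$. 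On $\calA_0$ it reduces to an $O(\varepsilon^2)$ bound, handled by the same tool.

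To obtain the shell bound, I would apply standard empirical-process machinery to $\calG_k := \{f^2 - \mathbb{E}f^2 : f \in \calA_k\}$. By symmetrization and the Ledoux--Talagrand contraction inequality for the $2\sqrt{\kappa}$-Lipschitz map $u \mapsto u^2$ on $[-\sqrt{\kappa},\sqrt{\kappa}]$ (using $\|f\|_\infty \le \sqrt{\kappa}\|f\|_{\mathcal H} \le \sqrt{\kappa}$ from Assumption \ref{assu:trace_class}), one gets $\mathbb{E}\sup_{f \in \calA_k} |\|f\|_n^2 - \|f\|_{L^2}^2| \le C_0\, Q_n(2^k \varepsilon)$. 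Next I exploit that $\calB$ is convex and symmetric around $0$, so $r \mapsto Q_n(r)/r$ is non-increasing; hence for $r = 2^k\varepsilon \ge \varepsilon \ge \delta_n$,
\[
Q_n(2^k\varepsilon) \;\le\; \frac{2^k\varepsilon}{\delta_n} Q_n(\delta_n) \;\le\; \frac{\sqrt{2}\,2^k\varepsilon\cdot\delta_n}{2e\sigma} \;\le\; C_1\, 2^k\varepsilon^2.
\]
Functions in $\calG_k$ satisfy $\|g\|_\infty \le 2\kappa$ and $\operatorname{Var}(g) \le \|f\|_\infty^2\|f\|_{L^2}^2 \le \kappa(2^k\varepsilon)^2$. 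Plugging these into Bousquet's version of Talagrand's inequality with deviation level $t = \alpha\, n\, 4^k\varepsilon^2$ (for a sufficiently small absolute $\alpha$), both $\sqrt{\operatorname{Var}\cdot t/n}$ and $\|g\|_\infty t/n$ are $O(4^k\varepsilon^2)$, which combined with the expectation bound delivers the shell inequality with probability at least $1 - \exp(-C_3 n\, 4^k\varepsilon^2)$.

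Finally, a union bound over $k \ge 0$ gives total failure probability at most $\sum_{k\ge 0} \exp(-C_3 n\, 4^k\varepsilon^2) \le C_2 \exp(-C_3 n\varepsilon^2)$, yielding the claim. The main obstacle is bookkeeping of constants to ensure they are genuinely absolute: the contraction step introduces a $\sqrt{\kappa}$ factor, the Bousquet constants and the choice of $\alpha$ must combine so that the dominant $\tfrac12\|f\|_{L^2}^2$ term on the right actually absorbs the Rademacher contribution on every shell, and the geometric decay $4^k$ in the exponents of the union bound must survive all intermediate constants. A secondary delicacy is verifying the star-shape/sub-root property for $Q_n$ under only the hypothesis that $\calB$ is a Hilbert-space unit ball, which is where symmetry of $\calB$ around $0$ is essential.
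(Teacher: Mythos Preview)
The paper does not supply its own proof of this lemma: it is simply quoted as Theorem~14.1 of \cite{wainwright2019high} in the ``Assisting Lemmas'' section, with no argument given. Your proposal is precisely the standard proof of that theorem---dyadic peeling on $L^2$-shells, symmetrization plus Ledoux--Talagrand contraction to bound the expectation by the localized Rademacher complexity, the sub-root property $r\mapsto Q_n(r)/r$ nonincreasing (valid because $\calB$ is star-shaped around $0$), Bousquet's form of Talagrand's inequality on each shell, and a geometric union bound. So your plan matches the cited source essentially line for line.

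One minor remark on your flagged obstacle: the $\sqrt{\kappa}$ that the contraction step picks up is not actually a problem in this paper's conventions. Assumption~\ref{assu:trace_class} fixes $\kappa$ as an absolute constant, and the paper's notation section explicitly allows the symbols $C_1,C_2,C_3$ to absorb such dependence; indeed, when the paper invokes this lemma in the proof of Lemma~\ref{corollary:norm_diff} it reintroduces the $\kappa$ factor explicitly on the right-hand side. So the bookkeeping you worry about is real but already accounted for by the paper's standing assumptions.
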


\begin{lemma}[Theorem 3 in \cite{10.1214/aop/1019160263}]\label{lemma:thm3_mas}
Consider $n$ independent random variables $x_1, \cdots, x_n$ sampling from $\rho_{\calX}$ on $\calX$. For any $t >0$, let $\calF$ be some countable family of real-valued measurable functions in $L^2$ spaces, such that $\|f\|_{L^2} \leq t$ and $\|f\|_{\infty} \leq 1$ for every $f \in \calF$. Let $Z$ denote
$
\sup _{f \in \mathcal{F}}\left|\sum_{i=1}^n f\left(x_i\right)\right|
$.
Let
$
\sigma^2 :=nt^2 \geq \sup _{f \in \calF} \sum_{i=1}^n \operatorname{Var}\left(f\left(x_i\right)\right)
$,
then, for any positive real number $\delta$,
\begin{equation}
\begin{aligned}
\mathbb{P} \left(Z \geq \frac{3}{2} \mathbb{E}[Z]+2\sigma \sqrt{2 \delta}+66.5 \delta \right) \leq \exp \{-\delta\}.
\end{aligned}
\end{equation}
Moreover, one also has
\begin{equation}
\begin{aligned}
\mathbb{P}\left(Z \leq \frac{1}{2} \mathbb{E}[Z]-\sigma \sqrt{10.8 \delta}- 88.9 \delta\right) \leq \exp \{-\delta\}.
\end{aligned}
\end{equation}
\end{lemma}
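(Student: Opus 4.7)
The plan is to prove this via the entropy (modified log-Sobolev) method for functions of independent random variables, essentially following the route pioneered by Ledoux and refined by Massart. First I would reduce to the case where $\mathcal{F} = \{f_1, \ldots, f_N\}$ is finite: since $\mathcal{F}$ is countable by hypothesis, monotone convergence lets us pass the tail bounds for the increasing sequence $Z_N = \max_{j \leq N}|\sum_{i=1}^n f_j(x_i)|$ to the limit, provided the right-hand-side constants are uniform in $N$ (which they will be, since they depend only on $t$ and $\sigma^2$). Replacing $|\cdot|$ by a $\max$ over $\{f_j, -f_j\}$, I may assume $Z = \max_j S_{f_j}$ with $S_f = \sum_i f(x_i)$.

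The core step is a Herbst-type argument. Set $\phi(\lambda) = \log \mathbb{E} \exp\bigl(\lambda(Z-\mathbb{E} Z)\bigr)$. I would apply the modified log-Sobolev inequality for independent coordinates: if $Z_i'$ denotes the functional with $x_i$ replaced by an independent copy $x_i'$, and $\psi(u) = e^u - u - 1$, then
\begin{equation*}
\lambda \mathbb{E}[Z e^{\lambda Z}] - \mathbb{E}[e^{\lambda Z}] \log \mathbb{E}[e^{\lambda Z}] \;\leq\; \sum_{i=1}^n \mathbb{E}\bigl[e^{\lambda Z}\, \psi\bigl(-\lambda(Z - Z_i')\bigr)\bigr].
\end{equation*}
The key point is that when $Z$ is achieved at $f = f_{j^\star}$ and $Z_i'$ at possibly a different $f = f_{j^{\star\star}}$, one has $Z - Z_i' \leq f_{j^\star}(x_i) - f_{j^\star}(x_i')$, so the increment is bounded by $2\|f\|_\infty \leq 2$ and, more importantly, the conditional second moment satisfies $\sum_i \mathbb{E}[(f_{j^\star}(x_i) - f_{j^\star}(x_i'))^2 \mid x_1,\ldots,x_n] \leq 2\sum_i \mathrm{Var}(f_{j^\star}(x_i)) \leq 2\sigma^2$. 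Plugging the elementary bound $\psi(-u) \leq u^2/2$ for $u \geq 0$ into the log-Sobolev inequality and using these two estimates produces a differential inequality for $\phi$ of Bernstein type, roughly
\begin{equation*}
\lambda \phi'(\lambda) - \phi(\lambda) \;\leq\; \alpha \lambda^2 \sigma^2 + \beta \lambda \mathbb{E}[Z] + \gamma \lambda^2,
\end{equation*}
on an interval $\lambda \in [0, \lambda_0)$ with $\lambda_0$ determined by the boundedness.

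Integrating this differential inequality (via the substitution $\phi(\lambda)/\lambda$) yields an explicit bound on $\phi(\lambda)$, and then Markov's inequality $\mathbb{P}(Z \geq \mathbb{E} Z + r) \leq \exp(-\lambda r + \phi(\lambda))$ followed by optimization in $\lambda$ gives the upper-tail inequality. For the lower tail one repeats the argument with $-Z$, but the log-Sobolev side now requires the complementary bound $\psi(u) \leq u^2/(2(1-u/3))$, producing the slightly worse constants $\sqrt{10.8}$ and $88.9$. The hard part, and the only place where real work is needed, is tracking the constants: the qualitative concentration is immediate from the entropy method, but obtaining the explicit values $3/2$, $2\sqrt{2}$, $66.5$ (upper) and $1/2$, $\sqrt{10.8}$, $88.9$ (lower) demands careful bookkeeping at every step — sharpening $\psi$ on the relevant range, choosing the optimal $\lambda$ inside $(0,\lambda_0)$ rather than at the boundary, and balancing the contributions of $\sigma^2$ and of $\mathbb{E}[Z]$ in the final Legendre transform. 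This constant-optimization, rather than the overall strategy, is where I would expect to spend most of the effort.
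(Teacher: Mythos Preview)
The paper does not prove this lemma; it is simply cited as Theorem~3 from Massart's paper \cite{10.1214/aop/1019160263} and used as a black box. Your sketch via the modified log-Sobolev/entropy method is essentially the route Massart himself takes in that reference, so the proposal is correct in spirit, but there is no in-paper proof to compare against.
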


\begin{remark}
From Corollary 3.6 of \cite{steinwart2012mercer} and Assumption \ref{assu:trace_class}, we know that the RKHS $\calH$ as well as its unit ball $\calB$ are separable. Therefore, there exists a countable dense subset $\calF \subset \calB$. One can show that
\begin{equation}
\begin{aligned}
\sup _{\substack{f \in {\mathcal F}  \\\|f\|_{L^2} \leq t}}\left|\sum_{i=1}^n f\left(x_i\right)\right|
=
\sup _{\substack{f \in {\mathcal B}  \\\|f\|_{L^2} \leq t}}\left|\sum_{i=1}^n f\left(x_i\right)\right|,
\end{aligned}
\end{equation}
and hence results in Lemma \ref{lemma:thm3_mas} still hold when $\calF$ is replaced by the set $\{f \in {\mathcal B} \mid \|f\|_{L^2} \leq t\}$.
\end{remark}

\end{document}